\documentclass[11pt,letterpaper]{article}

\usepackage[
  letterpaper,
  left=1in,
  right=1in,
  top=1in,
  bottom=0.8in,
  headheight=14pt,
  headsep=10pt,
  footskip=30pt
]{geometry}

\usepackage[
  protrusion=true,
  expansion=false
]{microtype}

\usepackage{amsmath}
\usepackage{amssymb}
\usepackage{amsthm}
\usepackage{mathtools}
\usepackage{bm}
\usepackage{setspace}
\usepackage{etoolbox}

\AtBeginEnvironment{algorithmic}{\setstretch{1.25}}

\newtheoremstyle{bayatiplain}
  {6pt}
  {6pt}
  {\itshape}
  {}
  {\scshape}
  {.}
  {0.5em}
  {}

\theoremstyle{bayatiplain}
\newtheorem{theorem}{Theorem}
\newtheorem{lemma}[theorem]{Lemma}
\newtheorem{proposition}[theorem]{Proposition}
\newtheorem{corollary}[theorem]{Corollary}

\theoremstyle{definition}
\newtheorem{assumption}[theorem]{Assumption}

\theoremstyle{remark}
\newtheorem{remark}[theorem]{Remark}

\usepackage{algorithm}
\usepackage{algorithmic}
\usepackage{tikz}
\usepackage{graphicx}
\usepackage{booktabs}
\usepackage{multicol}
\usepackage{multirow}
\usepackage{float}
\usepackage{longtable}
\usepackage{makecell}
\usepackage{pdflscape}
\usepackage{placeins}
\usepackage[table]{xcolor}
\usepackage{caption}
\usepackage{subcaption}

\graphicspath{{Figures/}{./Figures/}}
\DeclareGraphicsExtensions{.pdf,.png,.jpg,.jpeg}

\usepackage{titlesec}

\titleformat{\section}
  {\normalfont\large\bfseries}
  {\thesection.}
  {0.65em}
  {}

\titleformat{\subsection}
  {\normalfont\normalsize\bfseries}
  {\thesubsection.}
  {0.65em}
  {}

\titleformat{\subsubsection}
  {\normalfont\normalsize\bfseries}
  {\thesubsubsection.}
  {0.65em}
  {}

\titleformat{\paragraph}[runin]
  {\normalfont\itshape}
  {}
  {0pt}
  {}

\titlespacing*{\section}
  {0pt}
  {2.0ex plus 0.4ex minus 0.2ex}
  {0.8ex plus 0.1ex}

\titlespacing*{\subsection}
  {0pt}
  {1.3ex plus 0.3ex minus 0.15ex}
  {0.55ex plus 0.1ex}

\titlespacing*{\subsubsection}
  {0pt}
  {0.9ex plus 0.2ex minus 0.1ex}
  {0.4ex}

\titlespacing*{\paragraph}
  {0pt}
  {0pt}
  {0.6em}

\usepackage[authoryear,round]{natbib}

\bibpunct{(}{)}{;}{a}{,}{,}

\usepackage{xurl}
\usepackage[hypertexnames=false]{hyperref}
\hypersetup{
  colorlinks=true,
  linkcolor=black,
  citecolor=black,
  filecolor=black,
  urlcolor=black,
  pdftitle={The Greedy Advantage in Finite-Horizon Bandits},
  pdfauthor={Kai Zhou, Michael Lingzhi Li, and Kai Wang}
}

\usepackage{fancyhdr}
\fancypagestyle{plain}{%
  \fancyhf{}
  \fancyfoot[C]{\small\thepage}

}

\allowdisplaybreaks
\DeclareCaptionFont{captionstretch}{\setstretch{1.25}}
\begin{document}
%%%%%%%%%%%%%%%%%%%%%%%%%%%%%%%%%%%%%%%%%%%%%%%%%%%%%%%%%%%%%%%%%%%%%%%%%%

\thispagestyle{plain}

%%%%%%%%%%%%%%%%%%%%%%%%%%%%%%%%%%%%%%%%%%%%%%%%%%%%%%%%%%%%%%%%%%%%%%%%%%
%% Title, authors, abstract, and keywords
%%%%%%%%%%%%%%%%%%%%%%%%%%%%%%%%%%%%%%%%%%%%%%%%%%%%%%%%%%%%%%%%%%%%%%%%%%

\begin{center}
\vspace*{0.20in}
{\LARGE The Greedy Advantage in Finite-Horizon Bandits\par}
\vspace{1.25em}

{\large Kai Zhou\par}
{\small Tsinghua University, \href{mailto:zhouk23@mails.tsinghua.edu.cn}{zhouk23@mails.tsinghua.edu.cn}\par}
\vspace{0.55em}

{\large Michael Lingzhi Li\par}
{\small Harvard Business School, \href{mailto:mili@hbs.edu}{mili@hbs.edu}\par}
\vspace{0.55em}

{\large Kai Wang\par}
{\small Tsinghua University, \href{mailto:cwangkai@tsinghua.edu.cn}{cwangkai@tsinghua.edu.cn}\par}
\end{center}

\vspace{1.2em}

\begin{quote}
\small
% \noindent\textbf{Abstract.}\enspace
Organizations increasingly rely on sequential experimentation to improve decision-making. While the multi-armed bandit literature has developed algorithms with strong asymptotic regret guarantees, many practical applications operate over finite and externally imposed horizons. Motivated by the finite-horizon setting, we develop a class of regularized greedy algorithms for multi-armed Bernoulli bandits. We derive the first finite-horizon regret envelopes for regularized greedy bandits, showing that finite-horizon regret decomposes into transient exploration costs and a suboptimal convergence term that decays exponentially with the regularization strength. This characterization yields principled calibration rules for the regularization parameters and, as a limiting case, sharper regret guarantees for the classical greedy policy. Across extensive numerical experiments, calibrated regularized greedy policies consistently match or outperform state-of-the-art algorithms. These results suggest that regularized greedy policies can provide an effective approach for finite-horizon bandit problems.

\medskip
\noindent\textit{Key words:} multi-armed bandits, regularized greedy algorithms, regret guarantees
\end{quote}

\vspace{0.5em}
\noindent\rule{\textwidth}{0.4pt}
\vspace{0.5em}

\section{Introduction}\label{sec:Intro}

Organizations increasingly use sequential experimentation to improve decision-making. For example, pharmaceutical companies use adaptive trial designs to learn about treatment performance while patients are still being enrolled, and digital platforms test recommendations and interface designs by allocating user traffic adaptively. In all these settings, the dilemma is: each action generates information that can improve future decisions, but each action also affects current performance.

Multi-armed bandits provide a natural mathematical model for this trade-off \citep{robbins1952some}. A decision-maker repeatedly selects among competing actions, observes stochastic rewards, and updates future choices based on the accumulated data. The classical bandit literature has long established that greedy policies can converge to suboptimal actions and incur regret that is linear in $T$ relative to an oracle \citep{lai1985asymptotically,den2013simultaneously,keskin2014dynamic,bastani2020mostly}. In response, the literature has developed a rich class of algorithms with strong asymptotic regret guarantees, including upper confidence bound policies \citep{agrawal1995sample,auer2002finite,garivier2011kl}, Thompson sampling \citep{thompson1933likelihood,russo2018tutorial}, and extensions to other settings \citep{slivkins2011contextual,agarwal2012contextual}. %\citep{lu2010contextual,slivkins2011contextual,agarwal2012contextual,lattimore2020bandit}.
These policies aim to balance exploration and exploitation to achieve sublinear regret as the horizon $T \to \infty$.

However, the operational environments that motivate bandit algorithms often impose \emph{finite and externally imposed horizons}. In online experimentation, the horizon may be determined by the duration of a product launch. In clinical trials, it is often bounded by operational constraints on trial length. In these settings, the relevant objective is to perform well within the finite window in which decisions actually matter.

This observation changes the principle of algorithmic design. Greedy policies are asymptotically vulnerable because noise in the observed rewards can induce permanent commitment to a suboptimal arm, leading to linear regret that eventually dominates any policy with sublinear regret as $T\to\infty$. Over a finite operational horizon, however, the relevant comparison is not determined by asymptotic rates alone. Exploration-based policies incur sampling costs throughout the horizon, while greedy-like policies incur large regret only on sample paths that lead to suboptimal convergence. The relative performance of these algorithms therefore depends on the probability and regret contribution of such suboptimal-convergence paths. The central question is whether this finite-horizon perspective can change the relative attractiveness of different policies.

This paper answers the question in the affirmative. Many asymptotically optimal bandit algorithms aim to drive the probability of permanent convergence to a suboptimal arm to zero as $T\to\infty$, thus achieving sublinear regret. We argue that this asymptotic objective does not necessarily lead to the best finite-horizon performance. Driving the probability of suboptimal convergence to zero requires continued exploration, and this exploration is itself costly within a finite horizon.

Instead, we seek policies that explicitly balance the cost of continued exploration against the cost of occasional suboptimal convergence. The objective is no longer to eliminate suboptimal convergence asymptotically, but to make its probability sufficiently small that its finite-horizon contribution is outweighed by the exploration cost it avoids.

This principle motivates a class of regularized greedy algorithms. Rather than relying on continued exploration to avoid suboptimal convergence, these algorithms retain the greedy rule while using regularization to reduce the probability of premature commitment. The regularization parameters determine how aggressively the algorithm abandons competing arms, and thus balance the transient cost of delayed abandonment against the probability of convergence to a suboptimal arm.

We formalize the above argument in the context of multi-armed Bernoulli bandits. Bernoulli rewards capture operational experimentation settings in which actions generate binary outcomes, such as conversion or non-conversion, treatment response or non-response, and success or failure relative to a predefined operational target. In this setting, the regularized greedy algorithm pulls each arm once and then repeatedly selects the arm with the largest regularized empirical mean,
\[
\widehat p_i(t)=\frac{S_i(t)+\alpha}{N_i(t)+\beta},
\qquad i=1,\ldots,K,
\]
where $S_i(t)$ and $N_i(t)$ denote the cumulative successes and pulls of arm $i$ at each time period $t=1,\ldots,T$, while $(\alpha,\beta)$ encode pseudo-successes and pseudo-trials. The pure greedy algorithm corresponds to $\alpha=\beta=0$. Positive regularization increases the amount of evidence required before an arm is abandoned, reducing the probability of premature suboptimal convergence while preserving the greedy structure of the policy.

To quantify how regularization changes the trade-off between exploration and suboptimal convergence, we study the absorbing probabilities of the greedy dynamics. Let $Q_i$ denote the event that the algorithm permanently converges to arm $i$, and index the arms so that $p_1\ge p_2\ge\cdots\ge p_K$. On $Q_i$ with $i \geq 2$ and $p_i < p_1$, the algorithm spends a finite transient period sampling the remaining arms before allocating asymptotically all future pulls to arm $i$, thereby incurring linear regret at rate $\delta_i=p_1-p_i$. Finite-horizon regret therefore decomposes into the linear component
\[
    R_{\mathrm{linear}}
    =
    T\sum_{\substack{i=2\\p_i<p_1}}^{K}\delta_i\mathbb P(Q_i),
\]
together with finite transient abandonment costs $R_{\mathrm{transient}}$.

Our main result derives analytical finite-horizon expressions for both regret components that characterize how regularization affects performance over a fixed decision horizon. The analysis considers a fixed horizon $T$ while the regularization strength grows, yielding the envelope
\[
    R_{\mathrm{linear}}(1-o(1))-o(1)
    \le
    R(T)
    \le
    (R_{\mathrm{linear}}+
    R_{\mathrm{transient}})(1+o(1)),
\]
where the small-$o$ terms are taken with respect to the regularization strength rather than the horizon. These expressions reveal that regularization affects the two terms in opposite directions. Increasing the regularization strength suppresses $R_{\mathrm{linear}}$ exponentially by reducing the probability of suboptimal convergence, while increasing $R_{\mathrm{transient}}$ by delaying abandonment. The finite-horizon design problem therefore reduces to choosing $(\alpha,\beta)$ to balance an exponentially decreasing suboptimal-convergence term against an increasing transient term.

We then propose calibration rules for $(\alpha,\beta)$ under different levels of information, ranging from oracle settings in which the problem instance is known, to fully adaptive implementations requiring no knowledge of either the horizon or the arm means. Across extensive numerical experiments, calibrated regularized greedy policies consistently match or outperform state-of-the-art bandit algorithms, including Thompson Sampling~\citep{thompson1933likelihood}, OGI (Optimistic Gittins Indices; \citealp{Farias2022OGI}), and IDS (Information-Directed Sampling, \citealp{russo2014ids}). These results demonstrate that appropriately calibrated greedy policies are competitive for finite-horizon bandit problems, combining ease of implementation with computational tractability.

\subsection{Literature Review}
%The multi-armed bandit literature studies the exploration-exploitation trade-off in sequential decision-making problems. 
Classical multi-armed bandit work established asymptotic regret lower bounds and developed policies that attain sublinear regret, including upper confidence bound algorithms, Thompson sampling, and their many extensions \citep{robbins1952some,lai1985asymptotically,agrawal1995sample,auer2002finite,garivier2011kl,thompson1933likelihood,russo2018tutorial,lattimore2020bandit}. Most of this literature evaluates performance through asymptotic regret as the horizon grows large. A related line of work studies finite-horizon and limited-adaptivity settings through Bayesian dynamic programming, Gittins-index methods, explore-then-commit policies, batched bandits, and experimental designs that balance online performance against other objectives such as inference or tail risk \citep{gittins1979bandit,nino2011computing,russo2014ids, garivier2016explore,perchet2016batched,jin2021double,Farias2022OGI, simchilevi2025experimental,simchilevi2025safety,fan2025fragility}. This literature recognizes that exploration is itself costly over finite horizons. Our analysis focuses on a different way of controlling this tradeoff. The policy remains greedy with respect to a regularized estimator, so exploration is not imposed through confidence bounds, randomization, or prescribed exploration phases. Instead, regularization changes the learning dynamics by reducing the probability of suboptimal absorption while increasing the transient cost of abandoning competing arms.

Our work is particularly related to the literature on greedy bandits. Pure greedy policies eliminate exploration costs but are vulnerable to permanent commitment to a suboptimal arm. In dynamic pricing settings, \citet{den2013simultaneously} and \citet{keskin2014dynamic,keskin2017chasing} showed that greedy learning can converge to suboptimal decisions with positive probability and therefore incur linear regret. In bandits with correlated arm rewards, \citet{mersereau2009structured} showed that greedy policies can exploit cross-arm learning to achieve logarithmic cumulative Bayes risk. In contextual settings, \citet{bastani2020mostly} identified conditions under which greedy learning remains asymptotically optimal, while \citet{slivkins2025greedy} characterized broader conditions governing the success and failure of greedy algorithms. Recent work has also emphasized that greedy or near-greedy policies can be effective when structural features of the environment generate sufficient learning or reduce the cost of initial exploration \citep{cao2025collaborative,bayati2026coldstart}. These papers identify costly exploration and suboptimal convergence as central issues in sequential learning. We complement this literature by explicitly characterizing the absorbing probabilities of suboptimal arms and studying how regularization suppresses their aggregate regret contribution.

The regularized greedy policies studied in this paper select actions according to the estimator $(S_i+\alpha)/(N_i+\beta)$. 
Estimators of this form have appeared in prior work through optimistic initialization heuristics, pseudo-observations, and posterior means under Beta priors \citep{gittins1979bandit,liu2015prior,sutton2018reinforcement}. Existing work, however, primarily employs these estimators as algorithmic or Bayesian modeling devices rather than analyzing their finite-horizon behavior. 
%To the best of our knowledge, this paper provides the first finite-horizon analytical characterization of regularized greedy policies for $K$-armed Bernoulli bandits. The analysis characterizes suboptimal absorbing probabilities, derives two-sided regret envelopes, substantially sharpens existing guarantees for the classical greedy policy as a limiting case, and yields closed-form calibration rules that perform strongly across a broad range of finite-horizon settings.

\subsection{Our Contributions}
Overall, this paper makes two major contributions.

First, the paper develops a finite-horizon theory for regularized greedy $K$-armed Bernoulli bandits (Section~\ref{sec:Method}). Motivated by the trade-off between continued exploration and occasional suboptimal convergence, we characterize finite-horizon regret with two components: a linear term arising from convergence to a suboptimal arm and a transient term arising from delayed abandonment. To the best of our knowledge, this is the first finite-horizon theory for regularized greedy bandits. The analysis develops new tools combining absorption analysis, score-minimum reductions, boundary-crossing arguments for drifted random walks, Lundberg-root change-of-measure estimates, and local Stieltjes replacement arguments to characterize the probability of convergence to each suboptimal arm. This yields regret envelopes, explicit calibration rules for the regularization parameters, and, as a limiting case, sharper regret guarantees for the classical greedy algorithm. The analytical tools may also be useful for studying other finite-horizon sequential learning problems.

Second, the paper demonstrates that the proposed algorithms perform strongly across a broad range of finite-horizon settings (Section~\ref{sec:calibration}). We derive calibration procedures for $(\alpha,\beta)$ under varying levels of problem information, ranging from oracle settings to fully adaptive ones requiring no knowledge of either the horizon or the arm means.  Extensive numerical experiments show that calibrated regularized greedy policies consistently match or outperform modern state-of-the-art bandit algorithms, including Thompson sampling, OGI, and IDS, across a wide range of instances and horizons. In all, these results establish regularized greedy as an efficient, practical and theoretically grounded approach to finite-horizon implementations.

Section~\ref{sec:conclusion} concludes the paper, and the electronic companion provides all proofs.

\section{Bounding Regret for Bernoulli Bandits}\label{sec:Method}

\subsection{Problem Setup} \label{subsec:setup}

We consider a $K$-armed Bernoulli bandit over a finite horizon $T$, where $K\ge 2$ is fixed and $T\ge K$. The arms have unknown success probabilities $p_i\in[\epsilon_{\mathrm{p}},1-\epsilon_{\mathrm{p}}]$ for some fixed $\epsilon_{\mathrm{p}}\in(0,1/2)$, indexed without loss of generality so that $p_1\ge p_2\ge\cdots\ge p_K$ and $p_1>p_K$. Conditional on selecting arm $i\in\{1,\ldots,K\}$, the decision-maker observes a binary reward at each time period $t=1,\ldots,T$:
\[
X_t\sim \mathrm{Bernoulli}(p_i).
\]

Bernoulli bandits arise naturally in a range of operational settings involving sequential allocation with binary feedback. In healthcare operations, a hospital system may sequentially allocate patients across treatment protocols and observe binary clinical outcomes such as recovery or deterioration. In facility deployment problems, an organization may open facilities sequentially and observe binary indicators of operational success, such as whether a facility achieves a target utilization threshold. In online experimentation and recommendation systems, the decision-maker allocates traffic across alternatives and observes binary engagement outcomes such as clicks or purchases. In all such settings, the decision-maker repeatedly allocates resources under uncertainty and updates future decisions using binary feedback from prior allocations.

At each time period \(t=1,\ldots,T\), the decision-maker selects an arm \(A_t\in\{1,\ldots,K\}\) using a policy based on the available history and then observes a reward \(X_t\sim\mathrm{Bernoulli}(p_{A_t})\) independently across time, conditional on the selected actions.

We define the action and reward histories through time \(t\) as
\[
    A_{1:t}=(A_1,\ldots,A_t),
    \qquad
    X_{1:t}=(X_1,\ldots,X_t),
\]
and let \(H_t=(A_{1:t},X_{1:t})\) denote the information available after period \(t\). A valid, possibly randomized, policy is therefore a sequence of history-dependent decision rules
\[
    \pi_t: H_{t-1}\to\mathcal S_K,
    \qquad
    \mathcal S_K
    :=
    \left\{
    u\in[0,1]^K:\sum_{i=1}^{K}u_i=1
    \right\},
\]
and \(\pi_{t,i}(H_{t-1})\) denotes the probability of selecting arm \(i\) at time \(t\) conditional on the history \(H_{t-1}\).

The decision-maker seeks a policy minimizing cumulative regret over the finite horizon $T$. For each arm \(i=1,\ldots,K\), define
\[
    N_i(t)=\sum_{s=1}^t\mathbf 1\{A_s=i\},
    \qquad
    S_i(t)=\sum_{s=1}^t X_s\mathbf 1\{A_s=i\},
\]
as the cumulative number of pulls and cumulative number of successes through time \(t\), respectively. 
Since arms are indexed so that \(p_1\ge p_2\ge\cdots\ge p_K\), define the regret gaps $\delta_i:=p_1-p_i\ge 0$ for $i=1,\cdots,K$, so that  \(\delta_1=0\). The cumulative expected regret of policy \(\pi\) over horizon \(T\) is
\[
% \label{eq:setup_regret_def}
    R_{\pi}(T)
    =
    \mathbb E_{\pi}\left[
    \sum_{t=1}^{T}(p_1-p_{A_t})
    \right]
    =
    \sum_{i=2}^{K}\delta_i\,\mathbb E_{\pi}[N_i(T)],
\]
where \(\mathbb E_{\pi}[\cdot]\) denotes expectation taken with respect to the probability measure induced by policy \(\pi\) and the Bernoulli reward realizations.

\subsection{Regularized Greedy Policies}
\label{subsec:algorithm}

We study a class $\Pi$ of regularized greedy policies. Each policy $\pi_{\alpha,\beta}\in\Pi$ is indexed by two parameters $(\alpha,\beta)$. After an initialization step, policy $\pi_{\alpha,\beta}$ assigns each arm $i=1,\ldots,K$ the score
\[
    \widehat p_i(t)
    =
    \frac{S_i(t)+\alpha}{N_i(t)+\beta},
\]
and selects an arm with maximal score, breaking ties uniformly at random. Algorithm~\ref{alg:regularized_greedy} gives the full policy specification. The classical greedy policy corresponds to \(\alpha=\beta=0\).

The parameters $(\alpha,\beta)$ regularize the empirical comparison that determines which arms remain competitive. When $\beta>0$ and $N_i(t)>0$, the score can be written as
\[
    \widehat p_i(t)
    =
    \frac{N_i(t)}{N_i(t)+\beta}\cdot \frac{S_i(t)}{N_i(t)}
    +
    \frac{\beta}{N_i(t)+\beta}\cdot \frac{\alpha}{\beta},
    \qquad i=1,\ldots,K.
\]
Thus the regularized score is a weighted average of the empirical mean and the baseline value $\alpha/\beta$, with weight on the baseline decreasing as arm $i$ is sampled. The ratio $\alpha/\beta$ determines the level toward which the score is initially pulled, while $\beta$ determines how quickly the regularization vanishes. Along any history in which an arm continues to be sampled, the regularization effect disappears asymptotically and the score converges to the empirical mean.

The next section characterizes how the choice of $(\alpha,\beta)$ affects finite-horizon regret through the transient exploration cost and the probability of eventual convergence to a suboptimal arm.
\begin{algorithm}[ht]
\caption{Regularized Greedy Policy}
\label{alg:regularized_greedy}
\small
\begin{algorithmic}[1]
\STATE \textbf{Input:} Horizon \(T\), number of arms \(K\), regularization parameters \((\alpha,\beta)\).
\STATE \textbf{Initialize:} Pull each arm once during periods \(t=1,\ldots,K\). Observe rewards \(X_1,\ldots,X_K\), set \(A_t=t\) for \(t=1,\ldots,K\), and initialize
\[
    N_i(K)=1,
    \qquad
    S_i(K)=X_i,
    \qquad
    i=1,\ldots,K.
\]

\FOR{\(t=K+1,\ldots,T\)}
    \STATE Compute the regularized estimates
    \[
        \widehat p_i(t-1)
        =
        \frac{S_i(t-1)+\alpha}{N_i(t-1)+\beta},
        \qquad
        i=1,\ldots,K.
    \]
    \STATE Let
    \[
        \mathcal G_t
        :=
        \arg\max_{1\le i\le K}\widehat p_i(t-1)
    \]
    be the set of current score maximizers.
    \STATE Select \(A_t\) uniformly at random from \(\mathcal G_t\).
    \STATE Observe \(X_t\sim\mathrm{Bernoulli}(p_{A_t})\).
    \STATE Update
    \[
        N_i(t)=N_i(t-1)+\mathbf{1}\{A_t=i\},
        \qquad
        S_i(t)=S_i(t-1)+X_t\mathbf{1}\{A_t=i\},
        \qquad
        i=1,\ldots,K.
    \]
\ENDFOR
\end{algorithmic}
\end{algorithm}

Throughout the analysis, we impose the feasibility condition
\[
    \alpha\ge0,
    \qquad
    \beta\ge0,
    \qquad
    \alpha \ge p_1\beta.
\]
The pure-greedy policy corresponds to \(\alpha=\beta=0\).

\subsection{A Closed-Form Two-Sided Regret Envelope} \label{subsec:main_results}

We now derive our main analytical characterization of the regret $R_{\pi_{\alpha,\beta}}(T)$. For convenience, define
\[
    \Delta_i:=\alpha-p_i\beta,
    \qquad
    \sigma_i^2:=p_i(1-p_i),
    \qquad
    \lambda_i:=\frac{2\Delta_i}{\sigma_i^2},
    \qquad
    i=1,\ldots,K.
\]
For $1\le i<j\le K$, let
\[
    \delta_{ij}:=p_i-p_j,
\]
and recall that $\delta_i=\delta_{1i}=p_1-p_i$ denotes the regret gap of arm $i$. The quantity $\Delta_i$ represents the effective regularization margin of arm $i$, while $\lambda_i$ is the corresponding regularization-adjusted rate parameter that appears in the absorbing-probability characterization below.

The regret decomposition developed in this paper consists of two components: a linear term $R_{\text{linear}}$, which scales with $T$ and captures regret from convergence to a suboptimal arm, and a transient term $R_{\text{transient}}$, which is independent of $T$ and captures the regret incurred during learning. Although this decomposition holds without additional assumptions, obtaining explicit expressions for the two components is challenging because of the discrete Bernoulli observations and the nonlinear regularization induced by $(\alpha,\beta)$. We therefore study a large-regularization asymptotic regime in which the horizon remains fixed while the regularization scale grows. Unlike the classical asymptotic regime with $T\to\infty$, this regime preserves both regret components and permits explicit characterization of the score-minimum and boundary-crossing probabilities. Throughout the remainder of the paper, we consider the following asymptotic regime.
\begin{assumption}[Asymptotic regime]
\label{ass:asymptotic}
As $\Delta_1\to\infty$, the regularization scale satisfies
\[
    \Delta_i=\Theta(\Delta_1),
    \qquad
    \beta=O(\Delta_1),
    \qquad
    i=1,\ldots,K.
\]
Moreover, for a common \(\kappa\in(1/2, 1)\), every nonzero pairwise gap satisfies
\[
    p_i-p_j
    =
    \Theta(\Delta_1^{-\kappa}),
    \qquad
    1\le i<j\le K \text{ such that }p_i>p_j.
\]
\end{assumption}
The asymptotics are taken with respect to the regularization scale $\Delta_1$, while the horizon $T$ remains fixed throughout the analysis. The conditions $\Delta_i=\Theta(\Delta_1)$ and $\beta=O(\Delta_1)$ ensure that the regularized scores of all arms grow on the same asymptotic scale.

The assumption $p_i-p_j=\Theta(\Delta_1^{-\kappa})$ specifies how distinct arm means evolve as the regularization increases. This scaling ensures that both the transient component and the linear component of the regret admit explicit asymptotic characterizations within the same asymptotic regime. Consequently, the resulting asymptotic analysis yields explicit regret expressions while retaining the finite-horizon regret decomposition. Section~\ref{subsec:empirical_tightness} further demonstrates that the resulting regret envelopes remain accurate well beyond the asymptotic regime, including the moderate regularization levels used in our numerical experiments.

Given this asymptotic regime, we now present the principal result of the paper. Theorem \ref{thm:main_sandwich} characterizes the finite-horizon regret of the regularized policies with $\alpha>0$ through upper and lower envelopes, and then derives the corresponding finite-horizon regret characterization for the classical greedy policy. The proof roadmap of the theorem will be discussed in Section \ref{subsec:roadmap}.

\begin{theorem}[Two-sided finite-horizon regret envelope]
\label{thm:main_sandwich}
Consider a $K$-armed Bernoulli bandit with success probabilities satisfying $1-\epsilon_{\mathrm p} \ge p_1\ge p_2\ge\cdots\ge p_K \ge\epsilon_{\mathrm p}$ and $p_1>p_K$. Let $\pi_{\alpha,\beta}$ denote the regularized greedy policy with parameters $(\alpha,\beta)$.

\begin{enumerate}
    \item \textbf{Regularized greedy algorithm $(\alpha>0)$.} Suppose $\alpha\ge p_1\beta\ge0$ and Assumption~\ref{ass:asymptotic} holds. For $i=2,\ldots,K$ with $p_i < p_1$, define the closed-form approximation to the absorption probability of suboptimal arm $i$ by
    \[
    \widetilde{\mathbb P}(Q_i):=\sum_{m=i}^{K}\frac{1}{m}\exp\left\{-\sum_{h=1}^{m}\lambda_h\delta_{hm}\right\}\left[1-\exp\left\{-\left(\sum_{h=1}^{m}\lambda_h\right)\delta_{m,m+1}\right\}\right],
    \]
where \(\delta_{hh}:=0\), \(\delta_{K,K+1}:=\infty\), and the second exponential in the bracket is interpreted as zero when \(m=K\).
Define
\[R_{\mathrm{linear}}(T):=T\sum_{\substack{i=2\\p_i<p_1}}^{K}\delta_i\widetilde{\mathbb P}(Q_i), \qquad
    R_{\mathrm{transient}}:=\mathcal C_\Delta:=(K-1)\Delta_1.
    \]
    Then, for every finite horizon $T$,
    \[
    R_{\mathrm{linear}}(T)(1-o(1))-o(1)\le R_{\pi_{\alpha,\beta}}(T)\le (R_{\mathrm{linear}}(T)+R_{\mathrm{transient}})(1+o(1)).
    \]

\item \textbf{Pure greedy algorithm $(\alpha=\beta=0)$.} For the classical greedy policy, define
    \[
    \underline R_{\mathrm{linear}}^{\mathrm{pg}}(T):=T\sum_{\substack{i=2\\p_i<p_1}}^{K}\delta_i\underline{\mathbb P}_{\mathrm{pg}}(Q_i),\qquad \overline R_{\mathrm{linear}}^{\mathrm{pg}}(T):=T\sum_{\substack{i=2\\p_i<p_1}}^{K}\delta_i\overline{\mathbb P}_{\mathrm{pg}}(Q_i),
    \]
    and
    \[
    \underline R_{\mathrm{transient}}^{\mathrm{pg}}(T):=\sum_{i=1}^{K} \sum_{\substack{j \neq i\\p_j>p_i}} (p_j-p_i)\overline{\mathcal U}_{j|i,T}^{\mathrm{pg}},\qquad \overline R_{\mathrm{transient}}^{\mathrm{pg}}(T):=\sum_{i=1}^{K} \sum_{\substack{j \neq i\\p_j<p_i}} (p_i-p_j)\overline{\mathcal U}_{j|i,T}^{\mathrm{pg}}.
    \]
    Then
    \[
    \underline R_{\mathrm{linear}}^{\mathrm{pg}}(T)-\underline R_{\mathrm{transient}}^{\mathrm{pg}}(T)\le R_{\pi_{0,0}}(T)\le \overline R_{\mathrm{linear}}^{\mathrm{pg}}(T)+\overline R_{\mathrm{transient}}^{\mathrm{pg}}(T),
    \]
    where $\underline{\mathbb P}_{\mathrm{pg}}(Q_i)$, $\overline{\mathbb P}_{\mathrm{pg}}(Q_i)$, and $\overline{\mathcal U}_{j|i,T}^{\mathrm{pg}}$ are the explicit branchwise quantities defined in Appendix~\ref{app:pure_greedy_regime}.
\end{enumerate}
\end{theorem}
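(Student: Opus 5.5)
The plan is to decompose regret by the terminal behaviour of the greedy dynamics. Since $R_{\pi}(T)=\sum_{i\ge2}\delta_i\mathbb E[N_i(T)]$, I will partition the sample space according to the arm that eventually receives all pulls. To make $Q_i$ well defined for a finite horizon, I will couple the policy with an infinite-horizon run on the same reward tables. For Bernoulli arms with $\Delta_i\ge0$, every arm's score, once the arm is sampled forever, converges to $p_i$, while abandoned arms keep frozen scores. Hence exactly one "absorbing" arm is pulled infinitely often almost surely, and the events $Q_1,\dots,Q_K$ partition the space.

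On $Q_i$ with $p_i<p_1$, I will write $N_i(T)=T-\sum_{j\ne i}N_j(T)$. This gives
\[
\mathbb E[N_i(T)\mathbf 1_{Q_i}]\ge T\,\mathbb P(Q_i)-\sum_{j\ne i}\mathbb E[N_j(\infty)\mathbf 1_{Q_i}],
\]
and $\mathbb E[N_i(T)\mathbf 1_{Q_i}]\le T\,\mathbb P(Q_i)$. Arms that are pulled only transiently contribute the transient term. The key transient estimate is that, on the event that arm $j$ is abandoned in favour of an arm $i$, the number of pulls of $j$ is governed by the first time the random walk $S_j(n)-p_in+\alpha-p_i\beta$ drops below a threshold. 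This walk has drift $-\delta_{ij}$ relative to the frozen competitor score and starts at height $\Theta(\Delta_1)$.

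Under Assumption~\ref{ass:asymptotic}, the gaps satisfy $\delta=\Theta(\Delta_1^{-\kappa})$ with $\kappa<1$, so $\Delta\delta\to\infty$. A Wald identity then gives expected crossing time $\approx\Delta_j/\delta_{ij}$, and hence weighted regret $\delta_j\cdot\Delta_j/\delta_{ij}$. Summing over the $K-1$ abandoned arms and using $\Delta_j/\Delta_1\to1$ at the relevant order yields $(K-1)\Delta_1(1+o(1))$ as an upper envelope. Since $T$ is fixed, I may also cap every count by $T$, which supplies the $o(1)$ slack in the lower bound.

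The heart of the argument is the asymptotic $\mathbb P(Q_i)=\widetilde{\mathbb P}(Q_i)(1+o(1))$. I will order the competition by a "score-minimum" reduction. At the moment the top $m$ arms are still competitive, the process loses arm $h$ only if the score walk of arm $h$ falls below the common level. Arm $m$ becomes the survivor (the $m$-th term in the sum) when all arms $h<m$ dip below the level sustained by arm $m$. For each $h$, this requires an upward boundary crossing against drift $\delta_{hm}$ in variance units $\sigma_h^2$.

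To estimate these crossings I will use a Lundberg exponential change of measure. The root $\theta_h$ of $\log\mathbb E e^{\theta(X-p_m)}=0$ satisfies $\theta_h\approx2\delta_{hm}/\sigma_h^2$, giving crossing probability $\exp\{-\lambda_h\delta_{hm}(1+o(1))\}$ once the scale $\Delta_h$ is inserted. Lattice overshoot is negligible because $\delta\to0$.

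By independence of the arms' reward tables, the product over $h$ gives $\exp\{-\sum_h\lambda_h\delta_{hm}\}$. The bracket factor $1-\exp\{-(\sum_h\lambda_h)\delta_{m,m+1}\}$ is the probability that arm $m+1$ is not itself revived, computed by the same Lundberg bound applied to the pooled minimum. The factor $1/m$ arises from symmetry: after the joint crossing, each of the $m$ tied arms is equally likely to be the absorbing one, and the local Stieltjes replacement shows that the absorbing-arm identity is asymptotically uniform. Summing over $m\ge i$ gives $\widetilde{\mathbb P}(Q_i)$.

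I expect this joint-crossing step to be the main obstacle. Making the product form rigorous, and justifying the uniform $1/m$ split, is delicate, because the minimum of dependent running scores must be replaced by independent walks with $o(1)$ relative error. I would first try a sandwich: bound the stopped scores between two continuous-time Brownian-approximation boundaries, each of which yields the exact formula. The absolute $-o(1)$ in the lower bound is what absorbs the residual error in this sandwich.

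For the pure greedy part, $\lambda$ is undefined, so I would not take limits. Instead I would use the same $Q_i$ decomposition with explicit branchwise bounds computed from the initialization outcomes (the $2^K$ first-round patterns) in Appendix~\ref{app:pure_greedy_regime}. The terms $\overline{\mathcal U}^{\mathrm{pg}}_{j|i,T}$ bound the transient pulls of $j$ on $Q_i$, truncated at $T$. Plugging these into the identities above gives the stated two-sided bound directly. The sign split in the two transient sums reflects whether the transient pulls increase or decrease the regret relative to $T\delta_i$.
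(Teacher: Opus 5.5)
\paragraph{Verdict: genuine gap in Part~1.} Your outer skeleton matches the paper's: almost-sure absorption, the partition by the events $Q_i$, the regret decomposition, a Wald estimate of order $\Delta_1/\delta$ for transient pulls, and Lundberg roots with $\theta\approx 2\ell/\sigma^2$. The step that carries Part~1, $\mathbb P(Q_i)=\widetilde{\mathbb P}(Q_i)(1+o(1))$, is not proved. Your plan for it addresses a difficulty that is not present and overlooks the one that is.

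\emph{The missing idea is the exact score-minimum sandwich.} Using the reward tables you already introduce, let $M_j:=\inf_{n\ge1}(S_j(n)+\alpha)/(n+\beta)$ be arm $j$'s potential score minimum. On $Q_i$ the winner eventually reaches the pull count at which $M_i$ is attained. If some $M_j>M_i$, arm $j$'s score would stay strictly above arm $i$'s frozen score forever, contradicting absorption. Hence $\{M_i>\max_{j\neq i}M_j\}\subseteq Q_i\subseteq\{M_i\ge\max_{j\neq i}M_j\}$.

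The $M_j$ are exactly independent, so there are no dependent running scores to decouple and no Brownian sandwich is needed. With $L_j:=p_j-M_j$, $\mathbb P(Q_i)$ lies between $\int_{[0,p_i)}\prod_{j\neq i}\mathbb P(L_j>\ell+p_j-p_i)\,dF_i(\ell)$ and the same integral with $\ge$. Each factor is an exact one-arm crossing probability, and optional stopping gives $e^{-\theta(b+p-\ell)}\le\mathbb P(L>\ell)\le e^{-\theta b}$.

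\emph{The real difficulty is precision.} An estimate $\exp\{-\lambda_h\delta_{hm}(1+o(1))\}$ is only logarithmically accurate. Since $\lambda_h\delta_{hm}=\Theta(\Delta_1^{1-\kappa})\to\infty$, it gives no relative $(1+o(1))$ error. The paper proves $\theta_i(\ell)b_i(\ell)=\lambda_i\ell+O(\Delta_1\ell^2)$ uniformly for $\ell\le\Delta_1^{-\kappa}$, an additive error $O(\Delta_1^{1-2\kappa})$; this is where $\kappa>1/2$ is used. It combines this with super-polynomial suppression of $L_i$ beyond $\Delta_1^{-\kappa}$ to replace $dF_i$ by $\lambda_i e^{-\lambda_i\ell}\,d\ell$ inside the integral.

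Splitting at the breakpoints $\ell=\delta_{im}$ then yields the formula. The $m$-th summand is arm $i$ winning with $M_i\in(p_{m+1},p_m]$, not arm $m$ surviving. Its prefactor is $\lambda_i/\sum_{h\le m}\lambda_h$, which tends to $1/m$ because $\lambda_h/\lambda_i=1+O(\Delta_1^{-\kappa})$ as the gaps vanish, not because of a symmetry among tied arms.

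\paragraph{The transient step has the same defect.} The first time arm $j$'s score drops below $p_i$ does not bound $N_j$, since the winner's current score can sit below $p_i$. The valid pathwise bound (for $p_i>p_j$) is strict crossing below the random floor $M_i=p_i-L_i$. Its comparison walk has drift $\delta_{ij}-L_i$, which vanishes at $L_i=\delta_{ij}$. The paper therefore:
\begin{itemize}
\item integrates Wald and tilted-Wald bounds against the winner-weighted law of $L_i$, localized at scale $\Delta_1^{-1}$;
\item caps the singular layer $|L_i-\delta_{ij}|\le\Delta_1^{-1-\xi}$ and the tie event $\{M_j=M_i\}$ by $T$;
\item shows both have probability $o(\mathbb P(Q_j))$, so their cost is absorbed into $R_{\mathrm{linear}}(T)\,o(1)$.
\end{itemize}

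Capping counts by $T$ also does not supply the lower-bound slack. The paper never absorbs $T$ into an $o$-term, and $T\sum_i\mathbb P(Q_i)(p_j-p_i)$ is of the same order as $R_{\mathrm{linear}}(T)$. What works is $(p_j-p_i)\,\mathbb E[\min\{T,N_j\}\mid Q_i]\le\Delta_i(1+o(1))+T(p_j-p_i)\,o(1)$, combined with $\mathbb P(Q_i)=O(\Delta_1^{-\infty})$ for suboptimal $i$. If $T$ is literally frozen while $\Delta_1\to\infty$, then $R(T)\le T\delta_K\to0$ and $R_{\mathrm{linear}}(T)=O(\Delta_1^{-\infty})$, so the envelope is vacuous; the content is the $T$-uniform version.

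\paragraph{Smaller points.} Your absorption argument only separates arms with distinct means. Ties are allowed by the hypotheses and need the paper's nonincreasing-handover-level argument.

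For Part~2, pointing to the appendix quantities does not show they are bounds. That requires the initial-success-set bifurcation: all-zero renewal, immediate singleton absorption, and the fact that each multi-success branch is exactly a warm-started $(1,1)$-regularized process on the successful arms, to which the score-minimum sandwich and root envelopes apply. It also requires the centered identity $T\delta_i+\sum_{j\ne i}(p_i-p_j)N_j(T)$ on each $Q_i$, which is what produces the sign-split transient sums exactly as stated.
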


Theorem~\ref{thm:main_sandwich} establishes upper and lower bounds for the finite-horizon regret of the regularized greedy policy. The regret is decomposed into two explicit terms. The first,
\[
R_{\mathrm{linear}}(T)=T\sum_{\substack{i=2\\p_i<p_1}}^{K}\delta_i\widetilde{\mathbb P}(Q_i),
\]
is the regret contributed by eventual convergence to suboptimal arms. The second, $R_{\mathrm{transient}}$, is the finite cost incurred before the remaining arms are abandoned.

The role of regularization is visible directly from the expression for $\widetilde{\mathbb P}(Q_i)$. Each term is a weighted sum of exponentials whose exponent is proportional to the rate parameters $\lambda_i=2\Delta_i/\sigma_i^2$. Increasing the regularization margin therefore increases $\lambda_i$ linearly, causing the probabilities of suboptimal convergence to decay exponentially. At the same time, larger regularization delays abandonment and increases $R_{\mathrm{transient}}$. The theorem therefore quantifies the finite-horizon trade-off between an exponentially decreasing linear-regret component and an increasing transient component through upper and lower bounds. The theorem also gives the corresponding finite-horizon regret characterization for the classical greedy policy. 

The remainder of this section first demonstrates that the first-order regret envelopes accurately characterize finite-horizon regret through numerical experiments. We then outline the main ideas of the proof of the theorem, with its complete technical arguments deferred to the online appendix. Section~\ref{sec:calibration} then uses the regret decomposition to derive calibration rules for $(\alpha,\beta)$ that balance the linear and transient regret components.

\subsection{Empirical Tightness of the Regret Envelope}
\label{subsec:empirical_tightness}

To illustrate the performance of these bounds, we examine the finite-horizon accuracy of the regret envelope in Theorem~\ref{thm:main_sandwich} for representative \(K\)-armed Bernoulli bandits. For each displayed value of \(K\), we generate one random instance with arm means sampled independently from \(\mathrm{Uniform}[0.01,0.99]\). Every Monte Carlo curve is averaged over \(N=10{,}000\) replications, and the horizon is \(T=5{,}000\). Since the policy first pulls each arm once, all figures display only the region \(T\ge K\).

\paragraph{Regularized greedy.}

Figure~\ref{fig:rt_holistic} compares the regret envelope of Theorem~\ref{thm:main_sandwich} with realized regret under three representative regularization choices. The left column considers the two-arm case $(K=2)$, and the right column considers the ten-arm case $(K=10)$. The first two rows vary $\alpha$ with $\beta=0$, while the final row considers balanced regularization with $\alpha=\beta$.

Across all six configurations, the regret envelope closely matches the realized regret. The agreement remains strong even for moderate regularization levels and randomly generated arm means, suggesting that the asymptotic approximation remains accurate well beyond the asymptotic regime of Theorem~\ref{thm:main_sandwich}. In a few panels, the asymptotic upper envelope lies slightly below the Monte Carlo curve; this small discrepancy reflects finite-regularization effects from the omitted \(o(1)\) remainders.

Although the regret appears to plateau over the displayed horizon, it is not constant. Theorem~\ref{thm:main_sandwich} shows that the regret remains asymptotically linear in $T$ for any fixed regularization parameters. The apparent saturation arises because the linear coefficient is exponentially small in the regularization level, so over practical horizons the linear growth is dominated by the transient term. Increasing the regularization further reduces this coefficient by suppressing the probability of permanent commitment to suboptimal arms, causing the realized regret to appear nearly flat even though its asymptotic growth remains linear.

\begin{figure}[ht]
\centering
\captionsetup[subfigure]{font=small}
\begin{subfigure}[t]{0.48\linewidth}
\includegraphics[width=\linewidth]{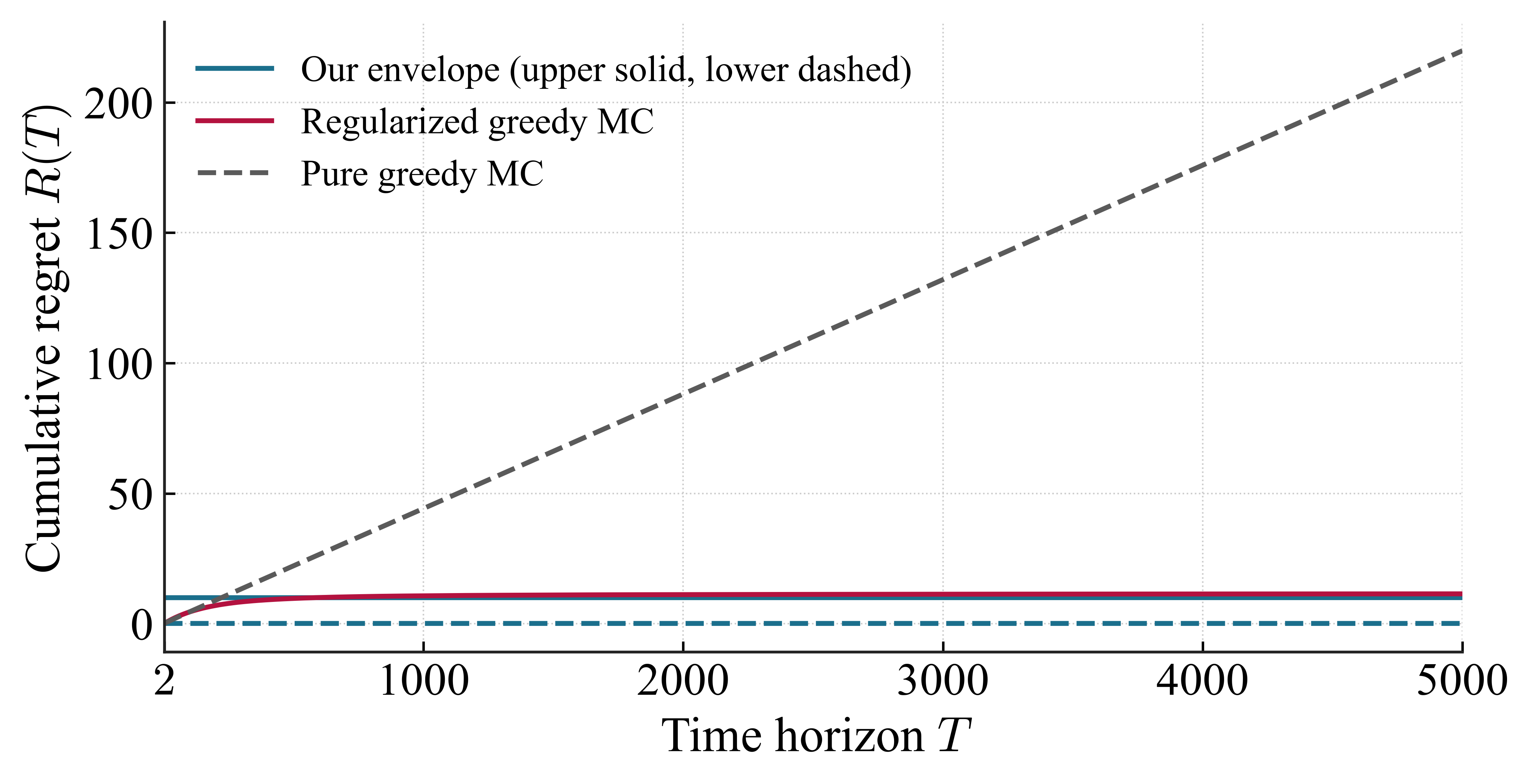}
\caption{\(K=2,\;(\alpha,\beta)=(10,0)\)}
\label{fig:rt_reg_K2_a10_b0}
\end{subfigure}
\hfill
\begin{subfigure}[t]{0.48\linewidth}
\includegraphics[width=\linewidth]{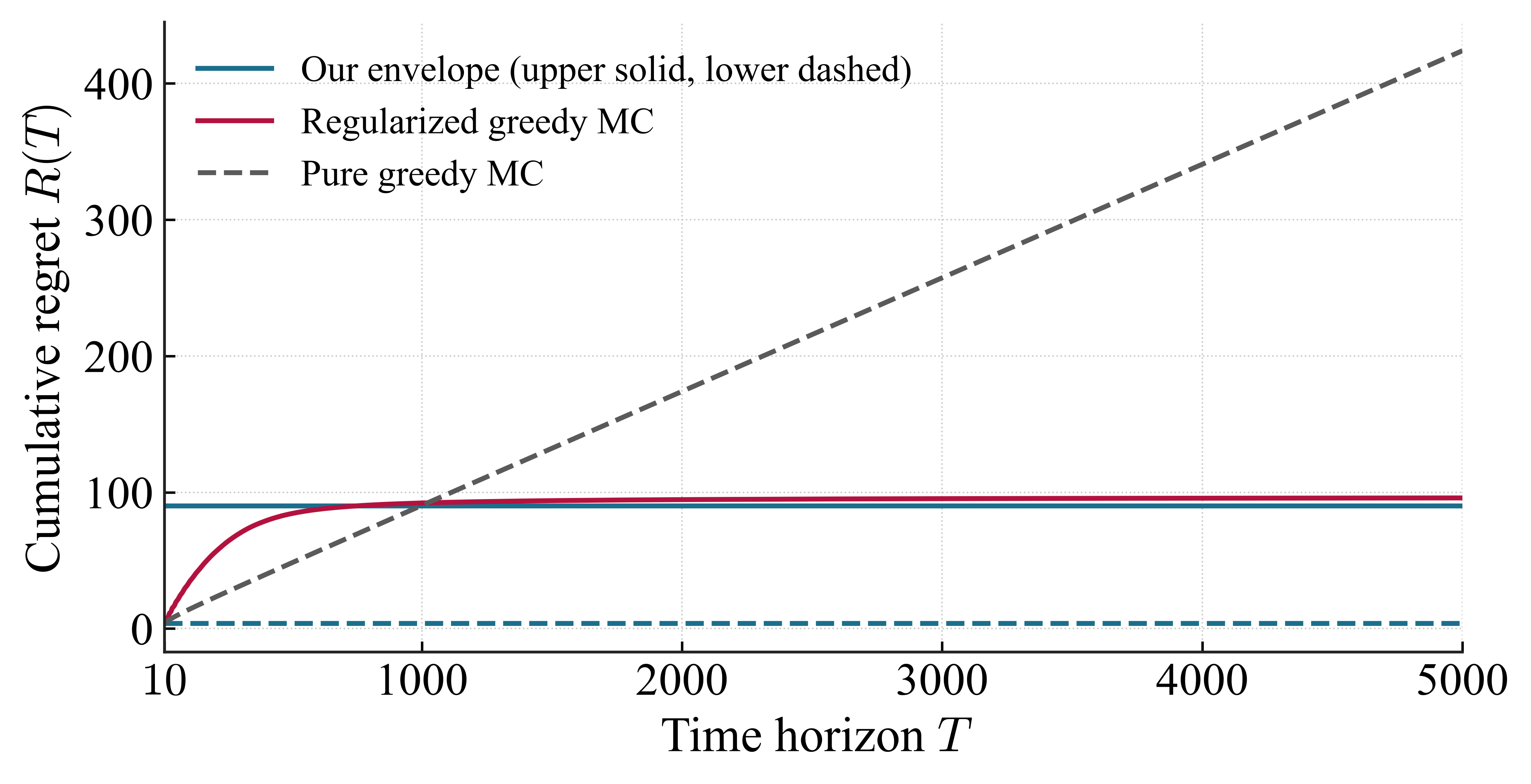}
\caption{\(K=10,\;(\alpha,\beta)=(10,0)\)}
\label{fig:rt_reg_K10_a10_b0}
\end{subfigure}

\vspace{0.5em}

\begin{subfigure}[t]{0.48\linewidth}
\includegraphics[width=\linewidth]{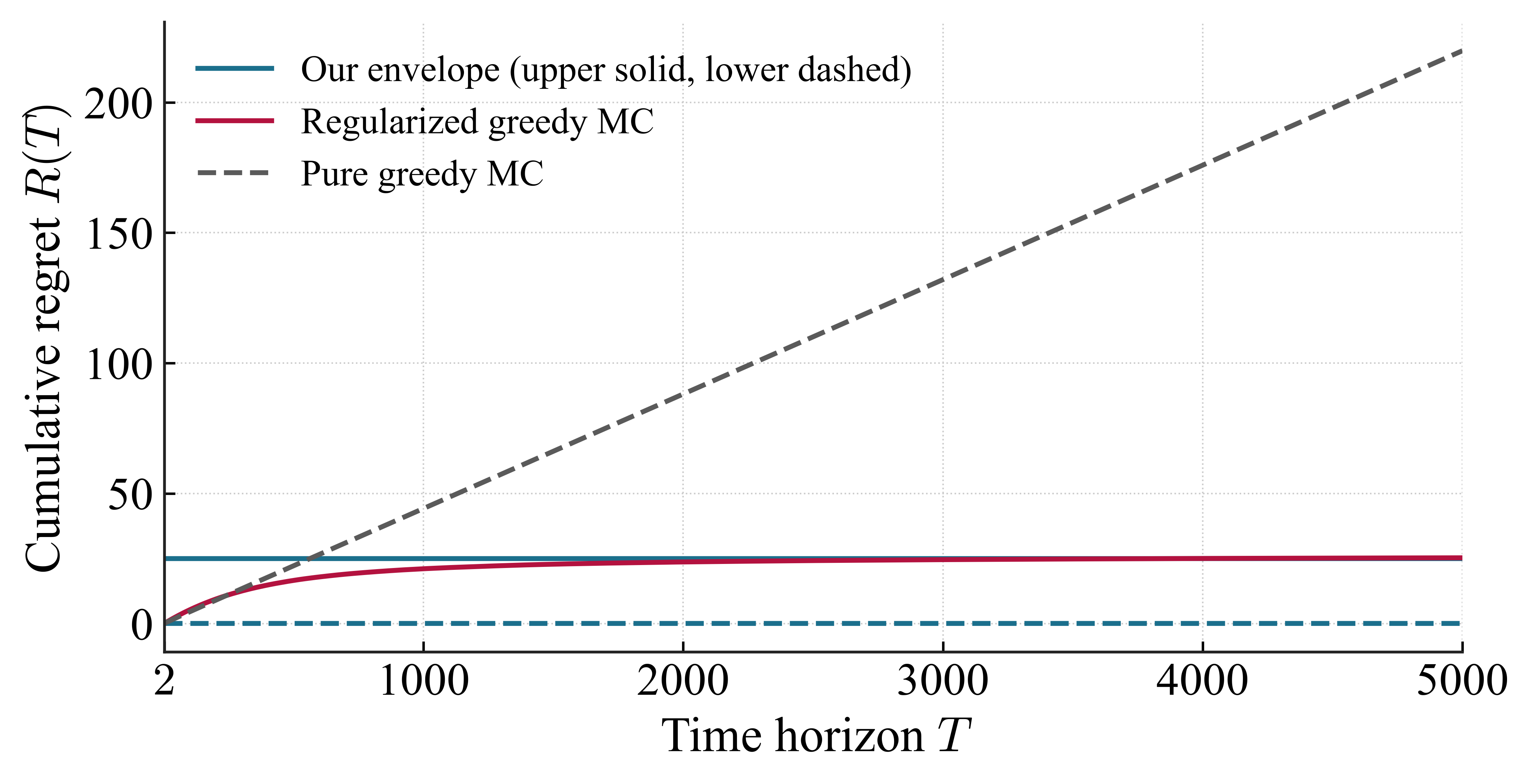}
\caption{\(K=2,\;(\alpha,\beta)=(25,0)\)}
\label{fig:rt_reg_K2_a25_b0}
\end{subfigure}
\hfill
\begin{subfigure}[t]{0.48\linewidth}
\includegraphics[width=\linewidth]{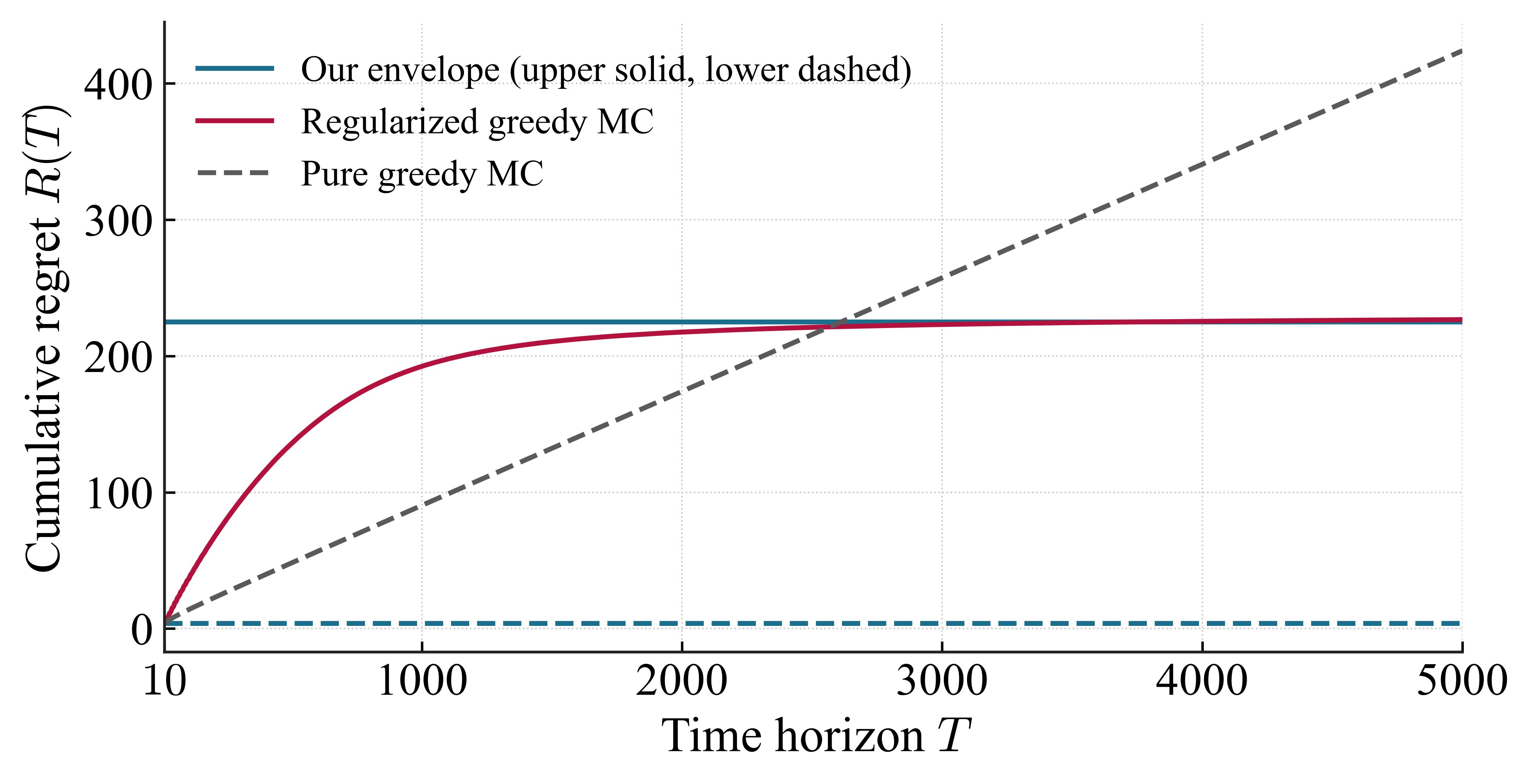}
\caption{\(K=10,\;(\alpha,\beta)=(25,0)\)}
\label{fig:rt_reg_K10_a25_b0}
\end{subfigure}

\vspace{0.5em}

\begin{subfigure}[t]{0.48\linewidth}
\includegraphics[width=\linewidth]{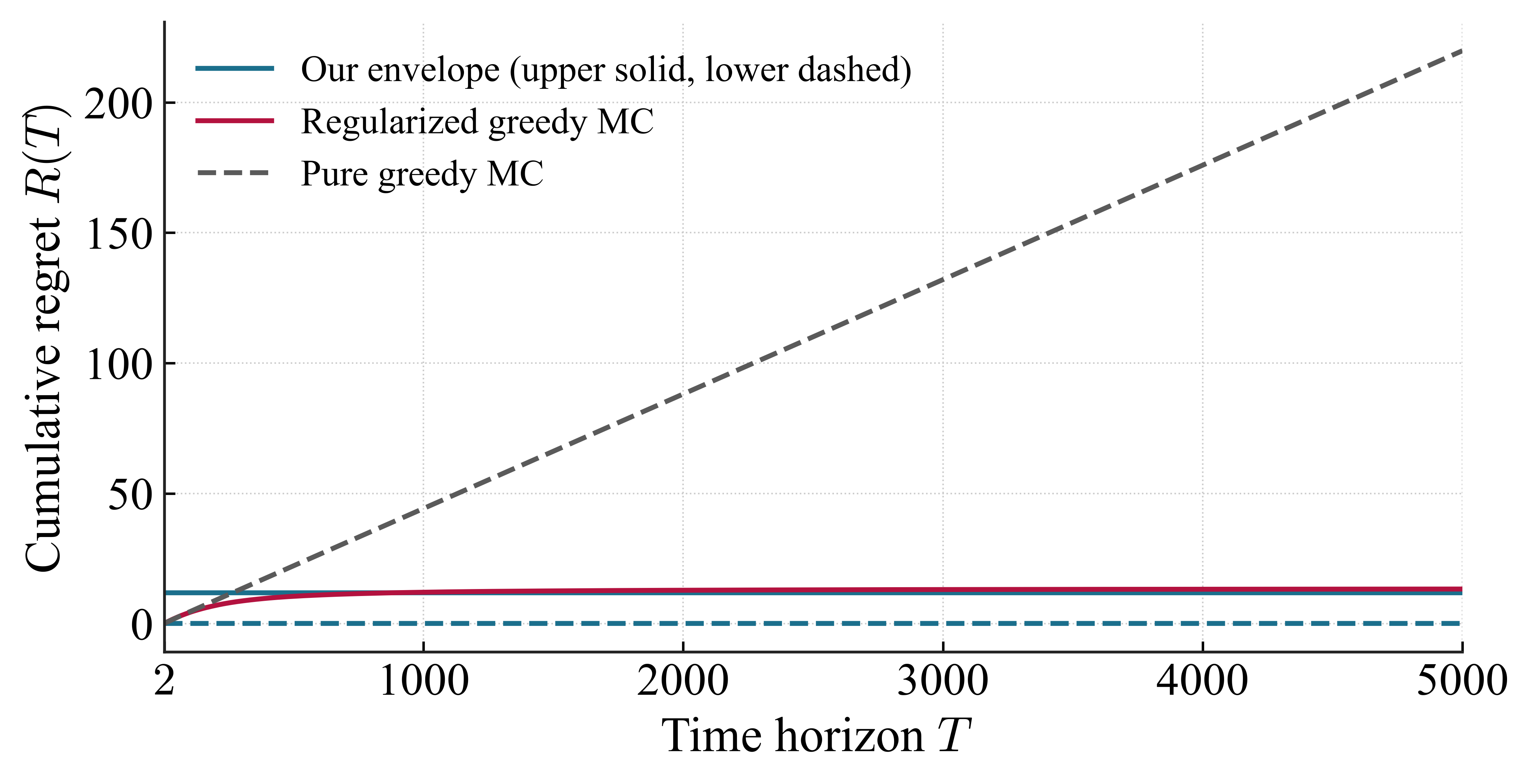}
\caption{\(K=2,\;(\alpha,\beta)=(25,25)\)}
\label{fig:rt_reg_K2_a25_b25}
\end{subfigure}
\hfill
\begin{subfigure}[t]{0.48\linewidth}
\includegraphics[width=\linewidth]{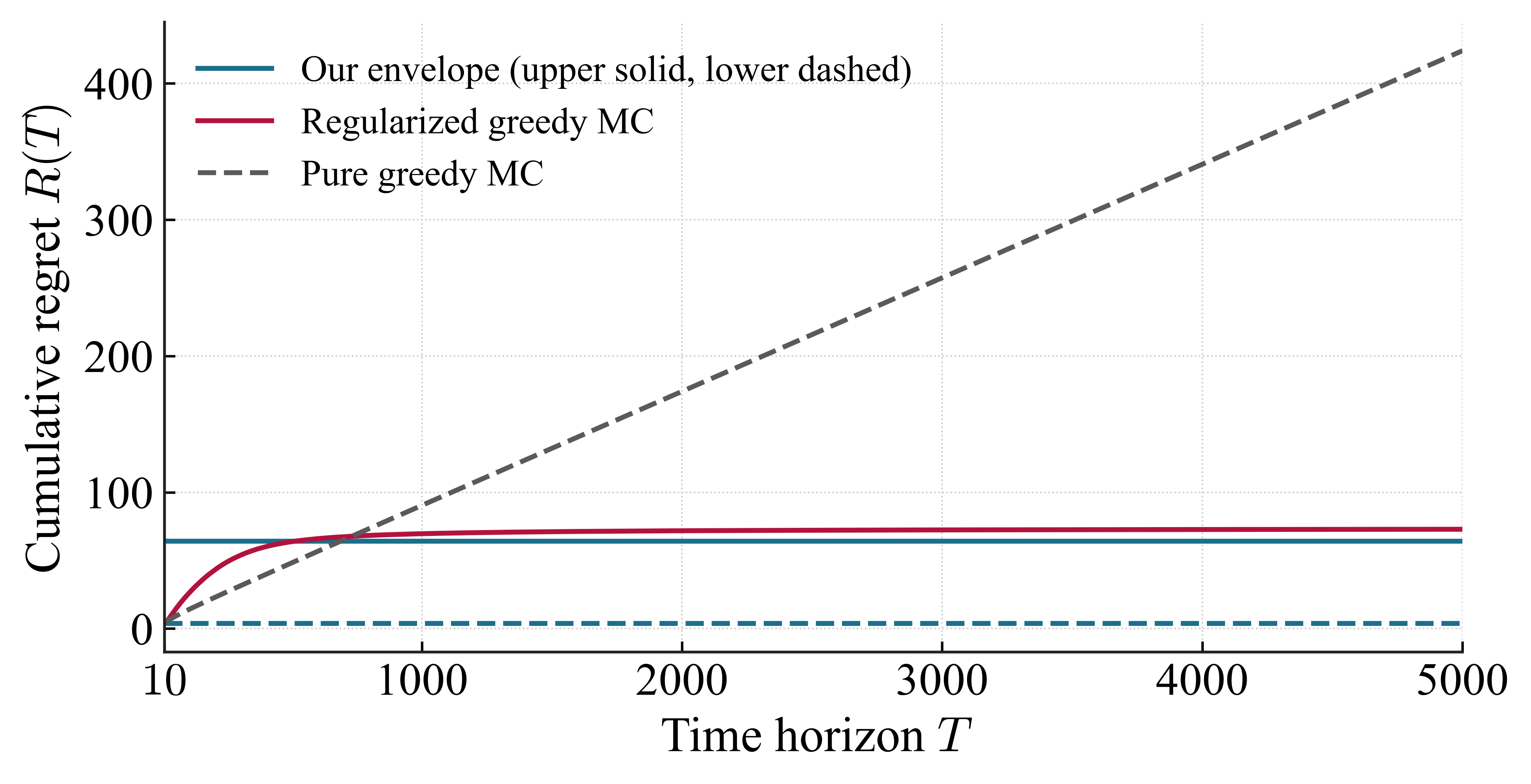}
\caption{\(K=10,\;(\alpha,\beta)=(25,25)\)}
\label{fig:rt_reg_K10_a25_b25}
\end{subfigure}

\caption{Finite-horizon accuracy of the regularized-greedy regret envelope in Theorem~\ref{thm:main_sandwich}. Each panel compares Monte Carlo (MC) regret, the analytical two-sided envelope, and the pure-greedy reference for the displayed \(K\) and \((\alpha,\beta)\).}
\label{fig:rt_holistic}
\end{figure}

\paragraph{Pure greedy.}

We next consider the special case $\alpha=\beta=0$. This regime permits a direct comparison with the existing analytical literature because several regret characterizations are available for the classical greedy policy. We compare Part~2 of Theorem~\ref{thm:main_sandwich} with the upper bound of \citet{jedor2021greedy}, the singleton-failure lower benchmark motivated by their greedy failure example, and the Bayesian greedy formula of \citet{bayati2020unreasonable}. We note that the analysis of \citet{bayati2020unreasonable} is derived under a many-arm asymptotic requiring $K\ge 30\log T/c_0$ (e.g., $K\ge256$ when $T=5{,}000$ and $c_0=1$), whereas our experiments consider substantially smaller values of $K$. We therefore include their result as an analytical point of reference rather than a benchmark. 

Figure~\ref{fig:pg_grid} reports the comparison for $K\in\{2,5,8,10\}$. Across all configurations, the branchwise characterization of Theorem~\ref{thm:main_sandwich} yields substantially tighter regret bounds than the existing analytical benchmarks. The upper envelope closely tracks the realized regret, while the lower envelope substantially improves upon the previously available analytical lower benchmark.

Figures~\ref{fig:rt_holistic} and~\ref{fig:pg_grid} together show that the regret envelopes of Theorem~\ref{thm:main_sandwich} remain accurate across a broad range of regularization strengths and numbers of arms. In the regularized regime, the envelopes accurately characterize both the magnitude of finite-horizon regret and the effect of regularization. In the pure-greedy regime, the same analytical framework yields substantially sharper regret bounds than the existing literature.

\begin{figure}[ht]
\centering
\captionsetup[subfigure]{font=small}
\begin{subfigure}[t]{0.48\linewidth}
\includegraphics[width=\linewidth]{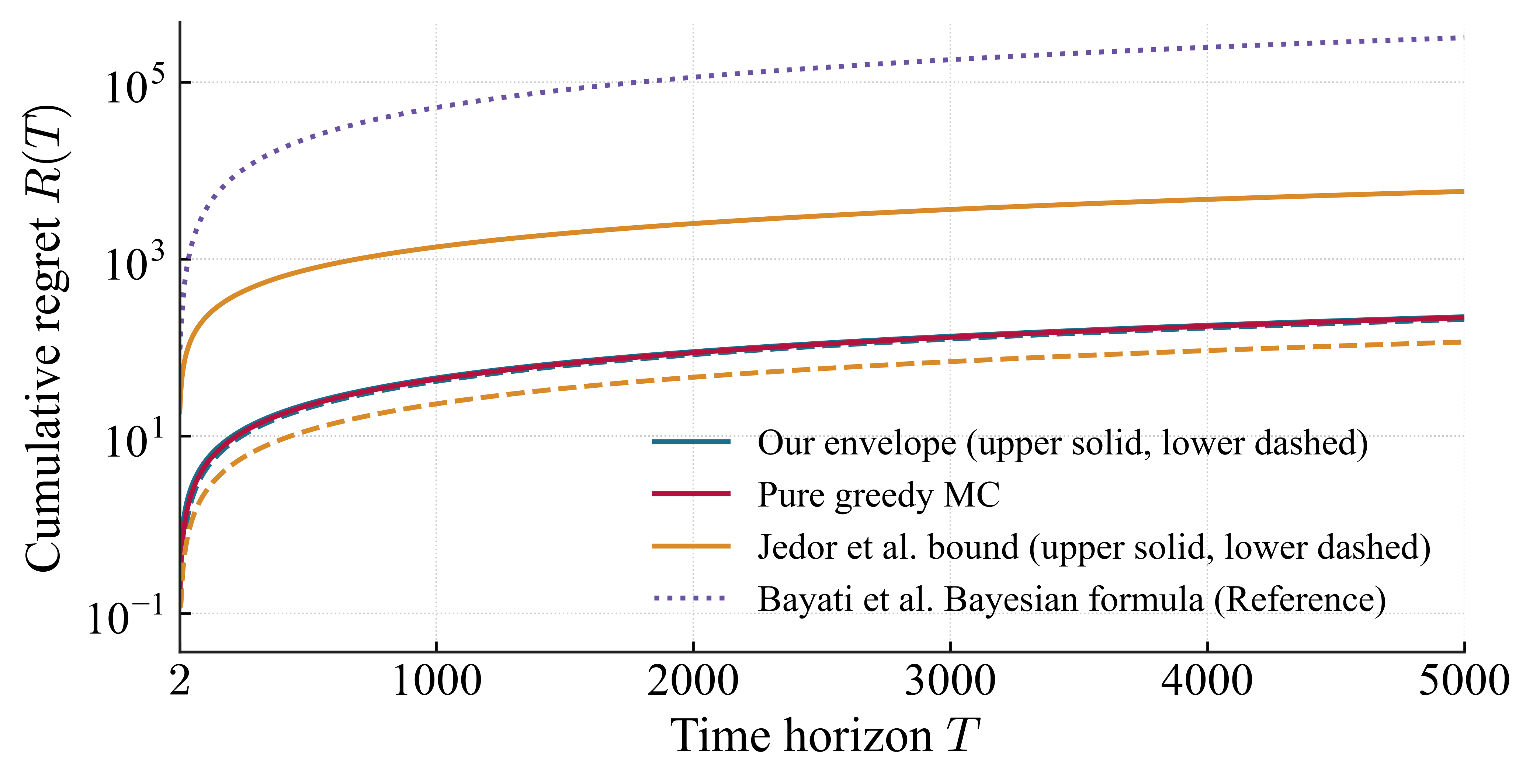}
\caption{\(K=2\)}
\label{fig:pg_K2}
\end{subfigure}
\hfill
\begin{subfigure}[t]{0.48\linewidth}
\includegraphics[width=\linewidth]{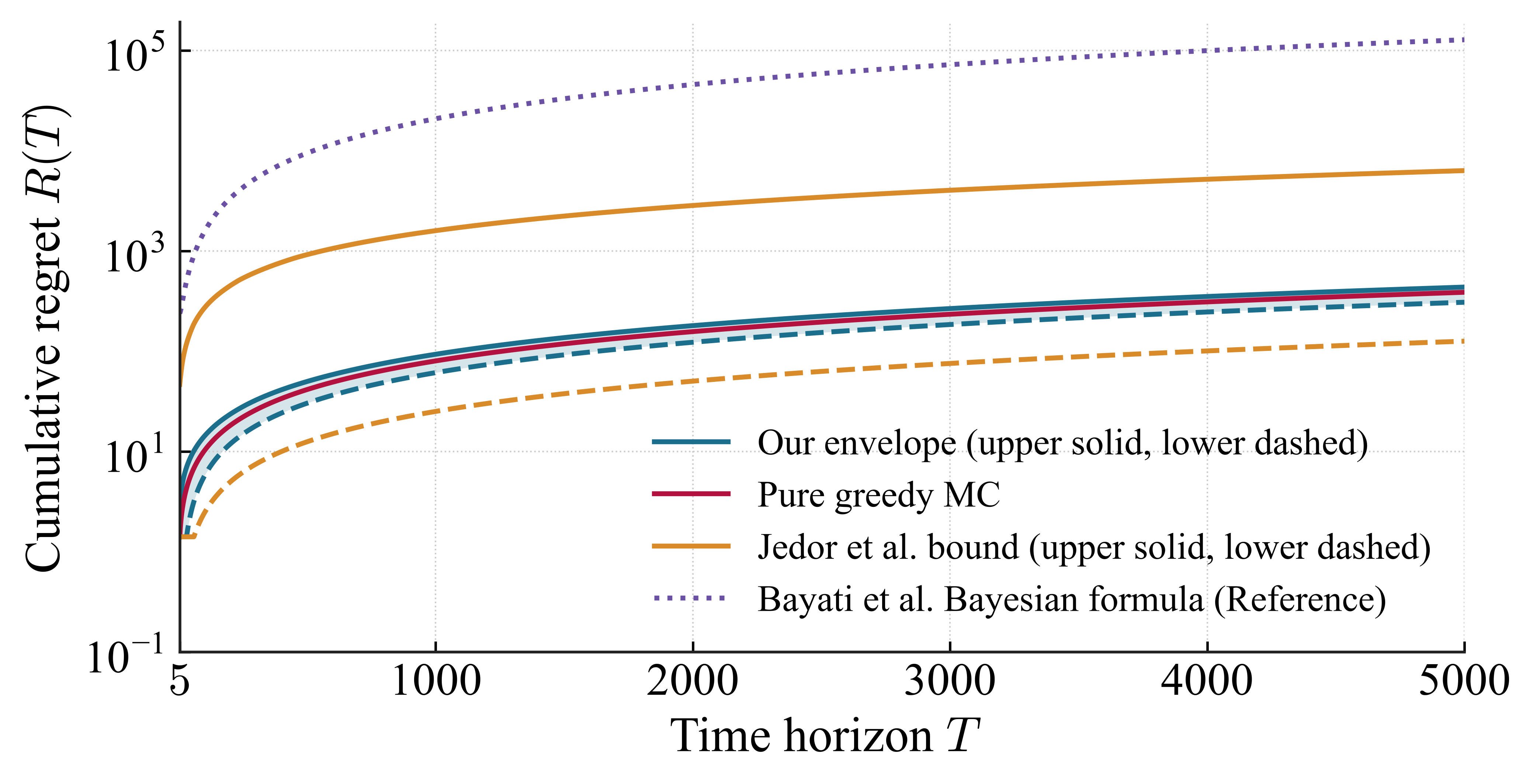}
\caption{\(K=5\)}
\label{fig:pg_K5}
\end{subfigure}

\vspace{0.5em}

\begin{subfigure}[t]{0.48\linewidth}
\includegraphics[width=\linewidth]{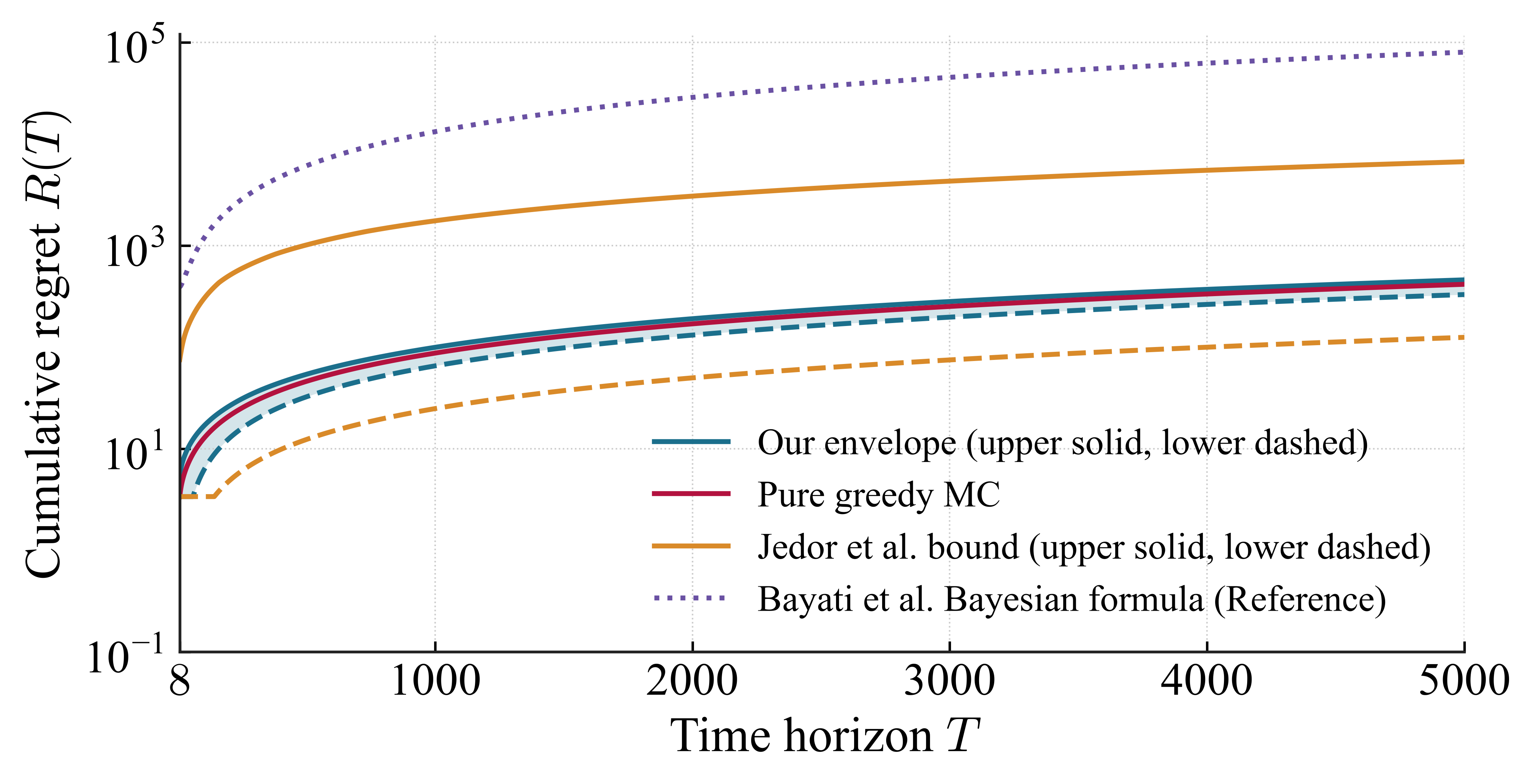}
\caption{\(K=8\)}
\label{fig:pg_K8}
\end{subfigure}
\hfill
\begin{subfigure}[t]{0.48\linewidth}
\includegraphics[width=\linewidth]{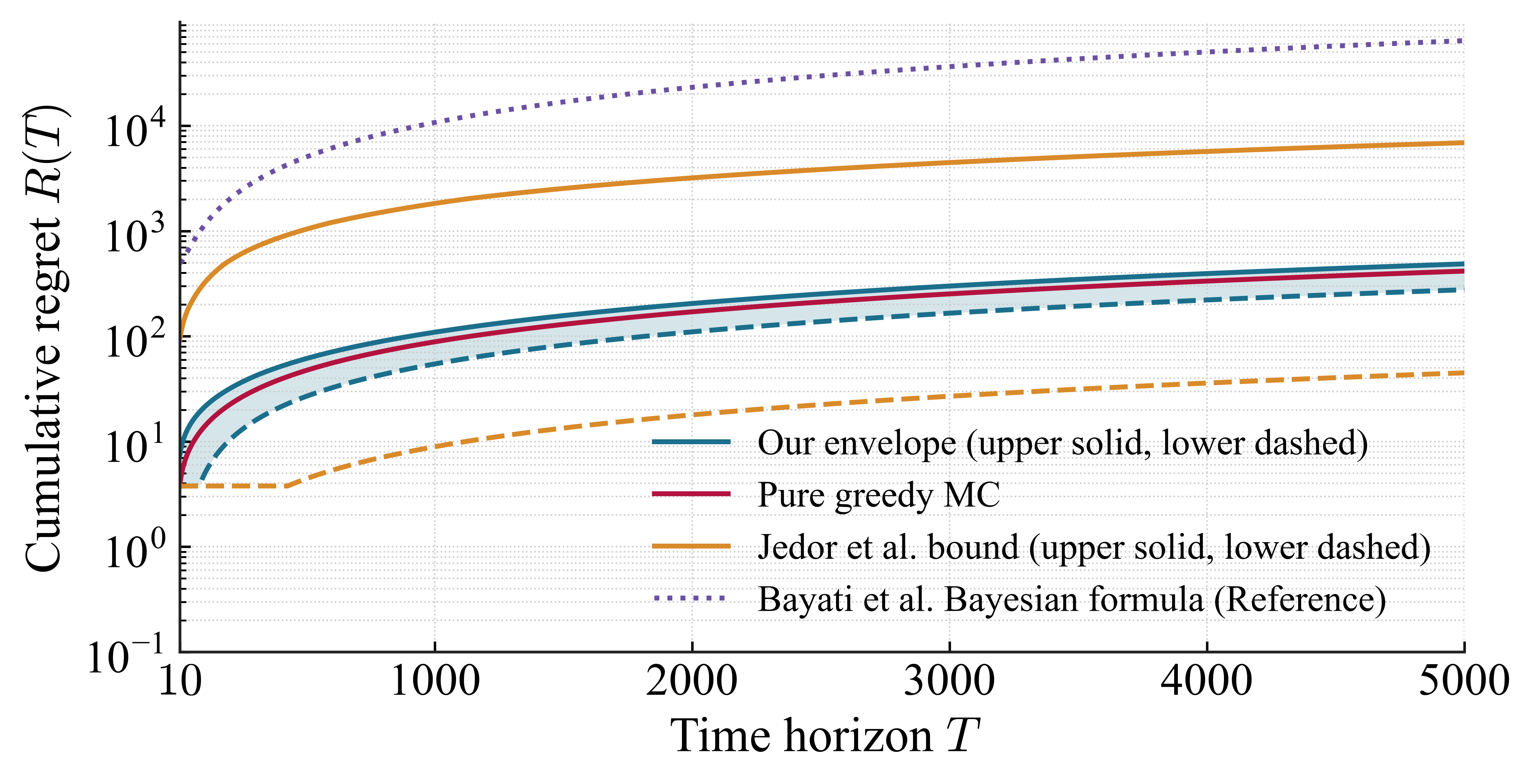}
\caption{\(K=10\)}
\label{fig:pg_K10}
\end{subfigure}

\caption{Pure-greedy regret envelope from Part~2 of Theorem~\ref{thm:main_sandwich}. Each panel compares MC regret, our branchwise envelope, and existing analytical benchmarks for the displayed \(K\).}
\label{fig:pg_grid}
\end{figure}

We next outline the proof of Theorem~\ref{thm:main_sandwich}. The analysis proceeds by characterizing the probabilities of absorption into each arm together with the transient regret accumulated before the remaining arms are abandoned. It shows how the finite-horizon regret decomposition reduces to these two quantities.

\subsection{Proof Roadmap: Reduction to Absorbing Probabilities and Conditional Abandonment Costs}
\label{subsec:roadmap}

We recall that the regret of the regularized greedy policy can be written as
\[
R(T)=\sum_{i=2}^{K}\delta_i\,\mathbb E[N_i(T)].
\]
The proof begins by separating the pulls of suboptimal arms into two sources. The first source is suboptimal absorption. If the policy eventually commits to a suboptimal arm, it incurs regret at rate $\delta_i$ for the remainder of the horizon. The second source is transient sampling: arms that are eventually abandoned may still be pulled a finite number of times before leaving the competition.

Our first step is to show that the greedy trajectory almost surely absorbs into a single arm (Lemma~\ref{lem:topological_collapse_karm}). For each arm $i$, define
\[
Q_i:=\{\exists\,\widetilde T<\infty\text{ such that }A_t=i\text{ for all }t\ge\widetilde T\},\qquad i=1,\ldots,K,
\]
where $\widetilde T$ denotes the (path-dependent) absorption time. The events $Q_1,\ldots,Q_K$ form an almost-sure partition of the sample space. Conditional on $Q_i$, define the capped transient pulls of arm $j\neq i$ by
\[
\mathcal U_{j|i,T}:=\mathbb E\!\left[\min\{T,N_j\}\,\middle|\,Q_i\right],
\]
where $N_j$ is the total number of pulls of arm $j$ over the infinite trajectory. Terms conditioned on probability-zero events are interpreted as zero.

Conditioning on the absorbing partition yields the regret sandwich
\begin{equation}
\label{eq:main_karm_sandwich_symbolic}
\begin{aligned}
& T\sum_{\substack{i=2\\p_i<p_1}}^{K}\delta_i\,\mathbb P(Q_i)
-
\sum_{i=1}^{K}\mathbb P(Q_i)
\sum_{\substack{j\neq i\\p_j>p_i}}
(p_j-p_i)\,
\mathcal U_{j|i,T} \\
\leq &
R(T) \\
\leq &
T\sum_{\substack{i=2\\p_i<p_1}}^{K}\delta_i\,\mathbb P(Q_i)
+
\sum_{i=1}^{K}\mathbb P(Q_i)
\sum_{\substack{j\neq i\\p_j<p_i}}
(p_i-p_j)\,
\mathcal U_{j|i,T}.
\end{aligned}
\end{equation}

The leading term is the linear absorption regret,
\[
R_{\mathrm{linear}}(T)
=
T\sum_{\substack{i=2\\p_i<p_1}}^{K}\delta_i\,\mathbb P(Q_i),
\]
while the remaining terms capture the transient abandonment cost. The remainder of the proof thus reduces to two tasks: characterizing the absorbing probabilities $\mathbb P(Q_i)$ and bounding the conditional abandonment costs $\mathcal U_{j|i,T}$. Section~\ref{subsubsec:main_absorbing_probability_evaluation} provides the main ingredients in our characterization of  $\mathbb P(Q_i)$, and Section~\ref{subsubsec:main_abandonment_and_synthesis} establishes the corresponding conditional abandonment costs $\mathcal U_{j|i,T}$.

\begin{figure}[ht]
\centering
\captionsetup{justification=justified, singlelinecheck=false}
\resizebox{\linewidth}{!}{%
\begin{tikzpicture}[
  every node/.style={font=\small},
  topnode/.style={
    rectangle, draw, very thick,
    minimum height=0.9cm, minimum width=5.6cm,
    fill=blue!10, align=center, font=\small\bfseries
  },
  midnode/.style={
    rectangle, draw, thick,
    minimum height=0.95cm, minimum width=5.0cm,
    align=center, font=\small
  },
  botnode/.style={
    rectangle, draw,
    minimum height=0.9cm, minimum width=5.2cm,
    align=center, font=\footnotesize
  }
]
\node[topnode] (main) at (0,5.4)
{Theorem~\ref{thm:main_sandwich}\\[2pt]
Two-sided regret envelope};

\node[midnode] (prob) at (-5.7,3.1)
{Absorbing probabilities\\[2pt]
$\mathbb P(Q_i)$};

\node[midnode] (cost) at (0,3.1)
{Conditional abandonment costs\\[2pt]
$\mathcal U_{j|i,T}$};

\node[midnode] (pure) at (5.7,3.1)
{Pure-greedy branch decomposition\\[2pt]
Appendix~\ref{app:pure_greedy_regime}};

\node[botnode] (score) at (-5.7,0.9)
{Absorption and score-minimum reduction\\[2pt]
Appendices~\ref{app:absorption}--\ref{app:score_minimum_reduction}};

\node[botnode] (lundberg) at (0,0.9)
{Lundberg-root crossing estimates\\[2pt]
Appendices~\ref{app:cgf_drawdown_geometry}--\ref{app:lundberg_overshoot_bounds}};

\node[botnode] (abandon) at (5.7,0.9)
{Conditional abandonment analysis\\[2pt]
Appendix~\ref{app:conditional_abandonment_costs}};

\draw[->, thick] (prob.north) -- (main.south);
\draw[->, thick] (cost.north) -- (main.south);
\draw[->, thick] (pure.north) -- (main.south);

\draw[->] (score.north) -- (prob.south);
\draw[->] (lundberg.north) -- (prob.south);
\draw[->] (lundberg.north) -- (cost.south);
\draw[->] (abandon.north) -- (cost.south);

\end{tikzpicture}%
}
\caption{Dependency structure of the proof of Theorem~\ref{thm:main_sandwich}. Arrows read ``provides input to''. The regularized proof combines the absorbing-probability analysis with conditional abandonment-cost bounds. The pure-greedy part is handled separately through the initialization-branch decomposition.}
\label{fig:proof_roadmap}
\end{figure}

\subsubsection{Absorbing Probabilities: Score Minima and Exponential Evaluation.}
\label{subsubsec:main_absorbing_probability_evaluation}

We first characterize the absorbing probabilities $\mathbb P(Q_i)$ that determine the linear regret component $R_{\mathrm{linear}}(T)$. While absorption is defined through the entire adaptive trajectory of the greedy policy, it admits an equivalent characterization through the minimum regularized score attained by each arm. This reduction converts the adaptive multi-arm process into a collection of independent one-arm objects, which can then be analyzed explicitly.

For each arm $i$, let $S_i(n)$ denote the cumulative successes along an independent potential reward stream after $n$ pulls, and define
\[
M_i:=\inf_{n\ge1}\frac{S_i(n)+\alpha}{n+\beta},\qquad L_i:=p_i-M_i.
\]
Here $M_i$ is the minimum regularized score attained by arm $i$, while $L_i$ is the corresponding drawdown below the arm mean. For each $x\in(0,p_i)$, define the associated one-arm boundary
\[
b_i(x):=\alpha-(p_i-x)\beta=\Delta_i+\beta x.
\]

The following theorem shows that these score minima determine the absorbing events and characterizes their tail probabilities.

\begin{theorem}[Score-minimum reduction and one-arm tail envelope]
\label{thm:main_score_minimum_reduction}
For every $i=1,\ldots,K$, the absorbing event $Q_i$ satisfies
\[
\left\{L_j>L_i+p_j-p_i,\ \forall j\neq i\right\}
\subseteq
Q_i
\subseteq
\left\{L_j\ge L_i+p_j-p_i,\ \forall j\neq i\right\}.
\]
Moreover, for every $x\in(0,p_i)$, the strict and weak one-arm tails satisfy
\[
\exp\left\{-\theta_i(x)\bigl[b_i(x)+p_i-x\bigr]\right\}
\le
\mathbb P(L_i>x)
\le
\mathbb P(L_i\ge x)
\le
\exp\left\{-\theta_i(x)b_i(x)\right\},
\]
where $\theta_i(x)>0$ is the unique positive Lundberg root satisfying
\[
(1-p_i)\exp\{\theta_i(x)(p_i-x)\}
+
p_i \exp\{-\theta_i(x)(1-p_i+x)\}
=
1.
\]
\end{theorem}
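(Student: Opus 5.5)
The plan has two independent parts. The inclusions follow from a deterministic comparison of score minima, carried out on a reward-table construction. The tail envelope is a standard Cramér--Lundberg argument for a negatively drifted random walk with bounded increments.

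\emph{Inclusions.} Realize the bandit via independent reward streams: arm $i$ has an i.i.d.\ Bernoulli$(p_i)$ sequence, and its $n$-th pull reveals the $n$-th entry. Write $s_i(n)=(S_i(n)+\alpha)/(n+\beta)$, so $M_i=\inf_{n\ge1}s_i(n)$. Since $L_j>L_i+p_j-p_i$ is equivalent to $M_j<M_i$, both inclusions become statements about score minima.

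For $Q_i\subseteq\{M_j\le M_i\ \forall j\}$, fix a path in $Q_i$, so arm $i$ is pulled infinitely often. For every $n\ge1$, at the time of its $(n+1)$-st pull, arm $i$ has score $s_i(n)$ and is a maximizer. Hence $s_i(n)\ge$ the current score of each $j\ne i$, and that score is some $s_j(m)$ with $m\ge1$, so it is at least $M_j$. Taking the infimum over $n$ gives $M_i\ge M_j$.

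For the reverse inclusion, suppose $M_j<M_i$ for all $j\ne i$. Then by the inclusion just proved, no $Q_k$ with $k\ne i$ can occur. Lemma~\ref{lem:topological_collapse_karm} says that $Q_1,\dots,Q_K$ partition the sample space almost surely, so $Q_i$ holds. The inclusion is therefore up to a null set, which is harmless for probabilities. Ties at equality are exactly what separate the strict and weak versions.

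\emph{Tail envelope.} Fix $x\in(0,p_i)$, set $c=p_i-x$, and let $Z_k=c-Y_k$ with $Y_k\sim$ Bernoulli$(p_i)$. Define $W_n=\sum_{k\le n}Z_k$, which has drift $-x<0$. Then
\[
L_i\ge x \iff \exists\, n\ge1:\ S_i(n)+\alpha\le c(n+\beta) \iff \sup_{n\ge1}W_n\ge \alpha-c\beta=b_i(x),
\]
and the strict event $L_i>x$ corresponds to $\sup_n W_n>b_i(x)$. Because the infimum is over $n\ge1$, there is a small care point here. If $L_i>x$ then some $n$ gives $s_i(n)<p_i-x$ strictly, since either the infimum is attained or it equals $\lim s_i(n)=p_i$, which is not below $p_i-x$.

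The defining equation for $\theta_i(x)$ is exactly $\mathbb E e^{\theta Z_1}=1$. Since $\mathbb E Z_1<0$, $Z_1$ takes both signs, and the moment generating function is convex, there is a unique positive root $\theta=\theta_i(x)$. Then $e^{\theta W_n}$ is a nonnegative mean-one martingale.

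For the upper bound, Ville's maximal inequality gives $\mathbb P(\sup_n W_n\ge b)\le e^{-\theta b}$ with $b=b_i(x)$. If $b\le0$ the bound is trivial.

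For the lower bound, let $\tau=\inf\{n:W_n>b\}$. Optional stopping at $\tau\wedge n$ and letting $n\to\infty$ gives
\[
\mathbb E\!\left[e^{\theta W_\tau};\tau<\infty\right]=1 .
\]
The limit is justified because on $\{\tau>n\}$ we have $e^{\theta W_n}\le e^{\theta b}$ and $W_n\to-\infty$ a.s., so dominated convergence applies. The increments satisfy $Z_k\le c$, so the overshoot obeys $W_\tau\le b+c$. Therefore $\mathbb P(\tau<\infty)\ge e^{-\theta(b+p_i-x)}$. This is the strict tail $\mathbb P(L_i>x)$ when $b\ge0$, and it is trivial when $b<0$, where the event has probability at least $\mathbb P(Y_1=0)$ (this case needs only a brief separate check). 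The middle inequality $\mathbb P(L_i>x)\le\mathbb P(L_i\ge x)$ is immediate.

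\emph{Main obstacle.} The delicate step is the score-minimum reduction. Three points need care: the comparison must use only scores at $n\ge1$, consistent with initialization; the frozen scores of abandoned arms must be bounded below by $M_j$; and the reverse inclusion must be routed through the almost-sure absorption lemma rather than argued directly. By contrast, the Lundberg bounds are routine once the events are rewritten as boundary crossings of $W_n$.
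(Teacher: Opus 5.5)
Your proof is correct and follows essentially the paper's route. The shared ingredients are potential reward streams, the inclusion $Q_i\subseteq\{M_j\le M_i\ \forall j\neq i\}$ together with the almost-sure absorption partition for the reverse inclusion, and the exponential martingale at the Lundberg root with overshoot at most $p_i-x$. You make two harmless streamlinings: you take the infimum over all re-selection times of arm $i$ instead of arguing by contradiction from the index where $M_i$ is attained, and you obtain the upper tail from Ville's maximal inequality instead of the paper's exact first-passage identity.

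One aside is wrong but moot. For $b<0$ the lower bound is not ``trivial'': if $b<-(p_i-x)$ it exceeds $1$ and fails, and $\mathbb P(Y_1=0)=1-p_i\le e^{-\theta(p_i-x)}$ is in any case too small to certify it. However, $b_i(x)=\Delta_i+\beta x\ge 0$ under the standing condition $\alpha\ge p_1\beta$, $\beta\ge 0$, so that case never arises. Relatedly, the attainment argument you flag is what the weak equivalence for $\{L_i\ge x\}$ needs; the strict one follows directly from the definition of an infimum.
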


The proof of Theorem~\ref{thm:main_score_minimum_reduction} is given in Appendices~\ref{app:score_minimum_reduction}--\ref{app:lundberg_overshoot_bounds}. The theorem consists of two steps. The first reduces the adaptive absorbing event to a comparison of the independent score minima $L_1,\ldots,L_K$, eliminating the need to analyze the full greedy trajectory directly. The second characterizes the distribution of each score minimum through a one-arm boundary-crossing problem. Specifically, the event $\{L_i>x\}$ is equivalent to a Bernoulli random walk crossing the boundary $b_i(x)$, whose probability is governed by the Lundberg exponent $\theta_i(x)$.

To illustrate the main idea, we give an informal derivation of the absorbing-probability formula; the rigorous approximation arguments are deferred to the appendix. Under the large-regularization regime, the relevant score minima lie near the lower endpoint $x=0$. As $x\downarrow0$,
\[
\theta_i(x)=\frac{2}{p_i(1-p_i)}x+O(x^2),\qquad
b_i(x)=\Delta_i+\beta x,
\]
so
\[
\theta_i(x)b_i(x)=\lambda_i x+O(\Delta_i x^2+\beta x^2),\qquad
\lambda_i:=\frac{2\Delta_i}{p_i(1-p_i)}.
\]
Consequently,
\[
\mathbb P(L_i>x)=\exp\{-\lambda_i x\}[1+o(1)],
\]
so each score minimum is locally exponential with rate $\lambda_i$. Larger regularization margins therefore make large score drawdowns exponentially less likely.

For arm $i$ to absorb, its score minimum must be sufficiently small, while the score minima of the competing arms must exceed the shifted levels $L_i+p_j-p_i$. Conditioning on $L_i=\ell$ therefore gives the Stieltjes representation
\[
\mathbb P(Q_i)\approx\int_{[0,p_i)}\prod_{j\neq i}\mathbb P\!\left(L_j\ge\ell+p_j-p_i\right)\,dF_i(\ell),
\]
where $F_i$ denotes the distribution of $L_i$. Under large regularization, the integral is concentrated on $\ell=O(\Delta_i^{-1})$, allowing the local exponential approximation above to be substituted into the integrand. This reduces the absorbing probability to the reference integral
\[
\lambda_i\int_0^\infty
\exp\left\{-\lambda_i\ell-\sum_{j\neq i}\lambda_j(\ell+p_j-p_i)_+\right\}\,d\ell.
\]
The positive-part terms partition the integral according to the ordered gaps $\delta_{ij}$. For $j<i$, the shift is always nonnegative and contributes the constant factor $\exp\{-\lambda_j\delta_{ji}\}$. For $j>i$, the term $(\ell-\delta_{ij})_+$ becomes active only after $\ell$ crosses the ordered gap $\delta_{ij}$. Evaluating the integral over these successive regions yields the following asymptotic absorbing-probability formula.

\begin{theorem}[Asymptotic absorbing-probability formula]
\label{thm:main_absorbing_probability_formula}
Suppose Assumption~\ref{ass:asymptotic} holds. For every suboptimal arm $i=2,\ldots,K$ with $p_i < p_1$,
\[\begin{aligned}
\mathbb P(Q_i)
&=
\sum_{m=i}^{K}\frac{1}{m}\exp\left\{-\sum_{h=1}^{m}\lambda_h\delta_{hm}\right\}\left[1-\exp\left\{-\left(\sum_{h=1}^{m}\lambda_h\right)\delta_{m,m+1}\right\}\right]
(1+o(1)),
\end{aligned}\]
where empty sums are interpreted as zero, $\delta_{hh}:=0$, and $\delta_{K,K+1}:=\infty$. For $m=K$, the second exponential in the bracket is interpreted as zero.
\end{theorem}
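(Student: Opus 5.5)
The proof starts from the score-minimum reduction of Theorem~\ref{thm:main_score_minimum_reduction}. By independence of the potential reward streams, it gives the sandwich
\[
\int_{[0,p_i)}\prod_{j\neq i}\mathbb P\bigl(L_j>\ell+p_j-p_i\bigr)\,dF_i(\ell)\;\le\;\mathbb P(Q_i)\;\le\;\int_{[0,p_i)}\prod_{j\neq i}\mathbb P\bigl(L_j\ge\ell+p_j-p_i\bigr)\,dF_i(\ell).
\]
For $j$ with $p_j<p_i$ and $\ell<\delta_{ij}$, the shifted level is negative, so the factor is one. Negative values of $L_i$ carry only mass $\mathbb P(L_i<0)$, which must be shown negligible; via the same Lundberg bound on the other side, it is $O(e^{-c\Delta_1})$.

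The plan is to show that both integrals equal the reference integral
\[
I_i:=\lambda_i\int_0^\infty \exp\Bigl\{-\lambda_i\ell-\sum_{j\neq i}\lambda_j(\ell+p_j-p_i)_+\Bigr\}\,d\ell
\]
up to a factor $1+o(1)$, and then to evaluate $I_i$ in closed form.

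\textbf{Step 1 (local exponential tails).} From the tail envelope of Theorem~\ref{thm:main_score_minimum_reduction}, I would expand the Lundberg root: $\theta_j(x)=\tfrac{2x}{\sigma_j^2}+O(x^2)$ uniformly for $p_j\in[\epsilon_{\mathrm p},1-\epsilon_{\mathrm p}]$. This gives $\theta_j(x)b_j(x)=\lambda_j x+O(\Delta_1x^2)$. The overshoot term $\theta_j(x)(p_j-x)=O(x)$ is $o(1)$. Hence, uniformly on $x\in[0,\eta_\Delta]$ with $\eta_\Delta:=\Delta_1^{-\gamma}$ and $\gamma\in(1/2,\kappa)$,
\[
\mathbb P(L_j\gtrless x)=e^{-\lambda_jx}\bigl(1+o(1)\bigr).
\]
The error is uniform because $\Delta_1\eta_\Delta^2\to0$. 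For $x\ge\eta_\Delta$, monotonicity gives the crude bound $\mathbb P(L_j\ge x)\le e^{-c\Delta_1\eta_\Delta}$.

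\textbf{Step 2 (local Stieltjes replacement).} I would split the $\ell$-integral at $\eta_\Delta$. The tail piece is at most $\mathbb P(L_i\ge\eta_\Delta)\le e^{-c\Delta_1^{1-\gamma}}$. The main term, by contrast, is at least of order $\exp\{-C\Delta_1^{1-\kappa}\}$, since every exponent $\lambda_h\delta_{hm}$ is $\Theta(\Delta_1^{1-\kappa})$. Because $\gamma<\kappa$, the tail piece is negligible relative to the main term.

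On $[0,\eta_\Delta]$, the integrand $g(\ell)=\prod_{j\ne i}\mathbb P(L_j\ge \ell+p_j-p_i)$ is monotone in $\ell$, and every shifted argument $\ell+p_j-p_i$ is at most $\eta_\Delta+O(\Delta_1^{-\kappa})$. Step 1 therefore applies to each factor (with $\gamma<\kappa$ the shifts stay inside the local window), and the factors for $j<i$ become $e^{-\lambda_j(\ell+\delta_{ji})}(1+o(1))$. For the measure $dF_i$, I would integrate by parts, writing $\int g\,dF_i=-\int g\,d\bar F_i$ with $\bar F_i(\ell)=\mathbb P(L_i>\ell)$. Replacing $\bar F_i$ by $e^{-\lambda_i\ell}(1+o(1))$ is legitimate because both $g$ and $\bar F_i$ are monotone and the relative errors are uniform. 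This gives a sandwich of Riemann–Stieltjes integrals between $(1\pm o(1))$ multiples of $I_i$.

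\textbf{Step 3 (evaluation).} Write $\Lambda_m:=\sum_{h\le m}\lambda_h$. I would partition $[0,\infty)$ into the intervals $[\delta_{i m},\delta_{i,m+1})$ for $m=i,\dots,K$, on which exactly arms $1,\dots,m$ are active. On such an interval the exponent equals
\[
-\Lambda_m\ell-\sum_{h\le m}\lambda_h(p_h-p_i),
\]
which can be rewritten as $-\Lambda_m(\ell-\delta_{im})-\sum_{h\le m}\lambda_h\delta_{hm}$. Integrating gives
\[
\frac{\lambda_i}{\Lambda_m}\,e^{-\sum_{h\le m}\lambda_h\delta_{hm}}\bigl[1-e^{-\Lambda_m\delta_{m,m+1}}\bigr].
\]
Finally, $\lambda_i/\Lambda_m=1/m\,(1+o(1))$, because the $\lambda_h$ differ only through $\Delta_h=\alpha-p_h\beta$ and $\sigma_h^2$. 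These differ by relative amounts $O(\beta\Delta_1^{-\kappa}/\Delta_1)+O(\Delta_1^{-\kappa})=o(1)$. This matches the stated formula.

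The main obstacle I expect is making the replacements in Steps 1 and 2 hold in \emph{relative} error simultaneously with exponents of size $\Delta_1^{1-\kappa}\to\infty$. The error $O(\Delta_1 x^2)$ evaluated at $x\sim\Delta_1^{-\kappa}$ is $\Delta_1^{1-2\kappa}=o(1)$ only because $\kappa>1/2$; this is precisely where that assumption enters. The same point requires some care with bracket terms that are nearly cancelling, and with the replacement $\lambda_i/\Lambda_m\approx1/m$ inside exponentially small summands. For the latter, I would keep the exact $\lambda$'s through the evaluation and only compare prefactors at the end.
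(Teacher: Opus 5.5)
Your route is the paper's. It uses the winner sandwich $\mathcal W_i^{>}\subseteq Q_i\subseteq\mathcal W_i^{\ge}$ written as product-tail Stieltjes integrals against $dF_i$, and the local envelope $\overline F_{j,\circ}(x)=e^{-\lambda_j x}\bigl[1+O(\Delta_1x^2+x)\bigr]$ from the Lundberg-root expansion (this is where $\kappa>1/2$ enters). It then localizes near the lower endpoint, evaluates piecewise over the ordered gaps, and uses $\lambda_i/\Lambda_m=\tfrac1m(1+o(1))$ only at the end. Your rewriting of the exponent as $-\Lambda_m(\ell-\delta_{im})-\sum_{h\le m}\lambda_h\delta_{hm}$ is a tidier path to the paper's $A_{i,m},B_{i,m}$ identities.

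The one real variation is the window, and it is valid. With $\eta_\Delta=\Delta_1^{-\gamma}$, $\gamma\in(1/2,\kappa)$, the crude bound $\mathbb P(L_i>\eta_\Delta)\le e^{-c\Delta_1^{1-\gamma}}$ beats the floor $I_i\ge c\,e^{-C\Delta_1^{1-\kappa}}$. The same comparison disposes of the reference integral beyond $\eta_\Delta$. The paper instead takes $\eta_\Delta=\Delta_1^{-\kappa}$ and keeps the better-arm factors $\prod_{p_r>p_i}\overline F_{r,\circ}(p_r-p_i)$ in the tail bound. It matches them against a lower bound on the reference integral carrying the same factor, so the tail step needs only $\Delta_1\eta_\Delta\to\infty$ and the window can sit at the gap scale. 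Your version is simpler but gives a cruder uniform error $O(\Delta_1^{1-2\gamma})$ rather than $O(\Delta_1^{1-2\kappa})$.

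The step that does not go through as you justify it is the replacement of $dF_i$ in Step~2. Monotonicity of $g$ and $\overline F_i$, plus a uniform relative error $a_\Delta=o(1)$ on $\overline F_i$, is not enough. After integrating by parts, $\int_{[0,\eta_\Delta]}g\,dF_i=g(0)-g(\eta_\Delta)\overline F_i(\eta_\Delta)-\int\overline F_i\,|dg|$ is a difference of nearly cancelling terms. Swapping $\overline F_i$ for $e^{-\lambda_i x}$ costs up to $a_\Delta\bigl[g(\eta_\Delta)e^{-\lambda_i\eta_\Delta}+\int e^{-\lambda_i x}|dg|\bigr]$, and monotonicity bounds this only by $a_\Delta g(0)$. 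If $g$ dropped to zero at some $x_0$ with $\lambda_i x_0\ll a_\Delta$, the target $J=\int_0^{\eta_\Delta}g\,\lambda_ie^{-\lambda_ix}\,dx\approx g(0)\lambda_ix_0$ would be swamped.

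What you need is the paper's weighted-variation condition: $g$ may vary only on the scale $\lambda_i^{-1}$, so the error is $O(a_\Delta J)$. The large constant exponents like $\sum_{j<i}\lambda_j\delta_{ji}$ factor out of both sides and are not the real difficulty. Do it in two moves:
\begin{enumerate}
\item Replace the exact $g$ pointwise by $g^{\mathrm{ref}}(x)=\exp\{-\sum_{j\ne i}\lambda_j(x+p_j-p_i)_+\}$. This is harmless because $dF_i\ge0$.
\item Integrate the absolutely continuous $g^{\mathrm{ref}}$ by parts. Using $|(g^{\mathrm{ref}})'|\le(\sum_{j\ne i}\lambda_j)\,g^{\mathrm{ref}}$ gives $\int_0^{\eta_\Delta}|(g^{\mathrm{ref}})'|e^{-\lambda_ix}\,dx\le(\sum_{j\ne i}\lambda_j/\lambda_i)J=O(J)$, since all $\lambda_j=\Theta(\Delta_1)$. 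The endpoint term is at most $J$.
\end{enumerate}
This order also avoids jump-matching terms in Stieltjes integration by parts. Each $L_r$ takes only countably many values, so the exact $g$ and $F_i$ are pure-jump functions that may share atoms.

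Finally, the negative part of $L_i$ you set aside is empty: $L_i>0$ almost surely. The walk $S_i(n)-p_in$ oscillates (law of the iterated logarithm) and eventually falls below $-\Delta_i$, which forces $\widehat p_i(n)<p_i$. A Lundberg bound at level $0$ would not have been available anyway, since the drift vanishes there and $\theta_i(x)\downarrow0$.
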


The proof is given in Appendix~\ref{app:asymptotic_absorbing_probability_formula}. Substituting these absorbing probabilities into the decomposition of Section~\ref{subsec:roadmap} yields the linear regret contribution
\[
R_{\mathrm{linear}}(T)
=
T\sum_{\substack{i=2\\p_i<p_1}}^{K}\delta_i\,\widetilde{\mathbb P}(Q_i),
\]
where $\widetilde{\mathbb P}(Q_i)$ denotes the asymptotic expression in Theorem~\ref{thm:main_absorbing_probability_formula}. The remaining task is therefore to characterize the transient abandonment cost, which determines the second component of the finite-horizon regret envelope.

\subsubsection{Conditional Abandonment Costs and Regret Synthesis.}
\label{subsubsec:main_abandonment_and_synthesis}

It remains to characterize the transient abandonment cost appearing in the regret decomposition of Section~\ref{subsec:roadmap}. Conditional on the absorbing event $Q_i$, the absorbing arm determines the score level that every remaining arm must eventually cross before it is abandoned. Once the absorbing arm is fixed, the evolution of each nonabsorbing arm thus decouples and reduces to a collection of one-arm first-passage problems.

Relative to the realized score minimum of the absorbing arm, the comparison walk for arm $j$ has drift
\[
p_i-p_j-\ell,
\]
where $\ell$ denotes the realized centered score minimum of the absorbing arm. When $p_i>p_j$, this drift changes sign at $\ell=\delta_{ij}$, separating the analysis into positive- and negative-drift regimes. When $p_j>p_i$, the drift remains negative throughout. When \(p_i=p_j\), the finite-horizon cap directly gives \(\mathcal U_{j|i,T}\le T\). Applying the one-arm boundary-crossing analysis to these two regimes yields the following conditional abandonment bounds.

\begin{theorem}[Conditional abandonment envelope]
\label{thm:main_conditional_abandonment_envelope}
Under Assumption~\ref{ass:asymptotic}, for every \(i\neq j\),
\[
% \label{eq:main_conditional_abandonment_piecewise}
\mathcal U_{j|i,T}
\le
\begin{cases}
\displaystyle
\frac{\Delta_i}{\delta_{ij}}
\left(1+O(\Delta_1^{\kappa-1})\right)
+
T\,o\!\left(\dfrac{\mathbb P(Q_j)}{\mathbb P(Q_i)}\right),
&
p_i>p_j,
\\[3ex]
T, 
&
p_i=p_j,
\\[3ex]
\displaystyle
\frac{\Delta_i}{\delta_{ji}}
\left(1+o(1)\right)
+
T\,o(1),
&
p_i<p_j.
\end{cases}
\]
Moreover, on the optimal branches \(Q_i,\, i \in [K]\) with \(p_i=p_1\),
\[
\sum_{\substack{i=1\\p_i=p_1}}^K \mathbb P(Q_i) \sum_{\substack{j \neq i \\ p_j<p_i}} (p_i-p_j) \mathcal{U}_{j|i,T} \leq \mathcal{C}_\Delta(1+o(1)) +  R_{\mathrm{linear}}(T) o(1)
\]
\end{theorem}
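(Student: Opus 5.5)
We will bound $\mathcal U_{j|i,T}$ by conditioning on $Q_i$ and on the realized centered score minimum $L_i=\ell$ of the absorbing arm, reducing each non-absorbing arm to a one-arm first-passage problem. By Theorem~\ref{thm:main_score_minimum_reduction}, on $Q_i$ arm $j$ must be abandoned, i.e.\ its score must fall below $M_i=p_i-\ell$. Write the score condition $(S_j(n)+\alpha)/(n+\beta)\ge p_i-\ell$ as
\[
W_j(n):=S_j(n)-(p_i-\ell)n \ge -\alpha+(p_i-\ell)\beta .
\]
The pulls of arm $j$ are bounded by the last $n$ at which this holds. Thus $N_j\le \tau_j(\ell)$, the last exit time of a Bernoulli walk with drift $p_j-p_i+\ell$ from the half-line above the level $-b$. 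Here $b=\Delta_i+\beta\ell$ up to $O(\beta\delta)$ corrections, since $\alpha-(p_i-\ell)\beta=\Delta_i+\ell\beta$. The case $p_i=p_j$ is immediate from the cap $\min\{T,N_j\}\le T$.

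\textbf{Case $p_i>p_j$.} Split the $\ell$-integral at the sign change $\ell=\delta_{ij}$.

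For $\ell<\delta_{ij}(1-\eta)$, with a slowly vanishing $\eta$, the drift $-(\delta_{ij}-\ell)$ is negative. Wald's identity plus an overshoot bound (overshoots are at most 1) gives $\mathbb E[\tau_j\mid\ell]\le (b+1)/(\delta_{ij}-\ell)$. The last-exit correction is controlled by the Lundberg-root estimates of Appendices~\ref{app:cgf_drawdown_geometry}--\ref{app:lundberg_overshoot_bounds}, since the probability of returning above the level after first passage is geometric. Integrating against $dF_i(\ell\mid Q_i)$, which by the local exponential approximation concentrates on $\ell=O(\Delta_1^{-1})\ll\delta_{ij}=\Theta(\Delta_1^{-\kappa})$, gives
\[
\frac{\Delta_i}{\delta_{ij}}\bigl(1+O(\ell/\delta_{ij})+O(\beta\ell/\Delta_i)+O(1/\Delta_i)\bigr)=\frac{\Delta_i}{\delta_{ij}}\bigl(1+O(\Delta_1^{\kappa-1})\bigr).
\]
For $\ell\ge\delta_{ij}(1-\eta)$, we bound $\min\{T,N_j\}\le T$. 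We then show that
\[
\mathbb P(L_i\ge\delta_{ij}(1-\eta),\,Q_i)=o(\mathbb P(Q_j)).
\]
The event requires $L_j\ge L_i-\delta_{ij}$, and comparing the exponential tails of Theorem~\ref{thm:main_score_minimum_reduction} shows this mass is dominated by configurations that, after swapping roles, lie in $Q_j$. Dividing by $\mathbb P(Q_i)$ yields the $T\,o(\mathbb P(Q_j)/\mathbb P(Q_i))$ term.

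\textbf{Case $p_i<p_j$.} The drift $p_j-p_i+\ell>0$ is always of the wrong sign for abandonment, but on $Q_i$ the walk of $j$ must still drop below the level. Conditioning on $Q_i$ forces a large downward excursion, which is an exponentially tilted walk. Under the change of measure with Lundberg root $\theta$ (Appendix~\ref{app:lundberg_overshoot_bounds}), the conditioned walk has the reflected drift $-(p_j-p_i)(1+o(1))$. The same Wald argument then gives $\Delta_i/\delta_{ji}\,(1+o(1))$. Paths where this conditioned approximation fails carry conditional probability $o(1)$ and are bounded by $T$.

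\textbf{Aggregate bound.} On optimal branches with $p_i=p_1$, multiply the first case by $(p_i-p_j)\mathbb P(Q_i)$. The factor $\delta_{ij}$ cancels, giving $\Delta_i(1+o(1))$ per $j$; there are at most $K-1$ values of $j$, and $\sum_{i:p_i=p_1}\mathbb P(Q_i)\le1$. Since $\Delta_i\le\Delta_1$ for $p_i=p_1$ (indeed $\Delta_i=\Delta_1$), the sum is at most $(K-1)\Delta_1(1+o(1))=\mathcal C_\Delta(1+o(1))$. The remainders combine to $\sum_j\delta_{1j}T\,o(\mathbb P(Q_j))=R_{\mathrm{linear}}(T)\,o(1)$, using Theorem~\ref{thm:main_absorbing_probability_formula}.

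The main obstacle is the tail-region estimate $\mathbb P(L_i\ge\delta_{ij}(1-\eta),Q_i)=o(\mathbb P(Q_j))$. It needs a uniform local Stieltjes comparison near the sign-change point, where the exponential approximation is least precise. I would attempt it first via the reference integral with $(\ell-\delta_{ij})_+$ terms, choosing $\eta\to0$ slower than $\Delta_1^{\kappa-1}$.
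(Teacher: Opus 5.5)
\paragraph{Where the proof breaks.} The gap is in the case $p_i>p_j$. There you cap $\min\{T,N_j\}$ by $T$ on $\{\ell\ge\delta_{ij}(1-\eta)\}$ and claim $\mathbb P(L_i\ge\delta_{ij}(1-\eta),\,Q_i)=o(\mathbb P(Q_j))$. That estimate is false. Already for $K=2$, $i=1$, $j=2$, the reference integral gives
\[
\mathbb P(Q_1,\,L_1\ge\delta_{12})\approx\int_{\delta_{12}}^{\infty}\lambda_1e^{-\lambda_1\ell-\lambda_2(\ell-\delta_{12})}\,d\ell=\frac{\lambda_1}{\lambda_1+\lambda_2}\,e^{-\lambda_1\delta_{12}},
\]
which is of the same order as $\mathbb P(Q_2)\approx\frac{\lambda_2}{\lambda_1+\lambda_2}e^{-\lambda_1\delta_{12}}$. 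Once the absorbing arm's score has dipped below $p_j$, arms $i$ and $j$ win with comparable probability, so no role-swapping comparison can make this mass $o(\mathbb P(Q_j))$.

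Lowering the threshold to $\delta_{ij}(1-\eta)$ adds mass of order $(e^{\lambda_i\delta_{ij}\eta}-1)e^{-\lambda_i\delta_{ij}}$. Your choice $\eta\gg\Delta_1^{\kappa-1}$ means $\lambda_i\delta_{ij}\eta\to\infty$, so this piece is even exponentially larger than $\mathbb P(Q_j)$. At best you get $T\,O(\mathbb P(Q_j)/\mathbb P(Q_i))$. The optimal-branch remainder then becomes $R_{\mathrm{linear}}(T)\,O(1)$ instead of $R_{\mathrm{linear}}(T)\,o(1)$, which destroys the leading-order upper envelope of Theorem~\ref{thm:main_sandwich}.

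\paragraph{What is missing.} You need a $T$-free bound on both sides of the interface, so that only a layer $|\ell-\delta_{ij}|\le\omega_\Delta$ with $\omega_\Delta=o(\Delta_1^{-1})$ is capped by $T$ (the paper uses $\omega_\Delta=\Delta_1^{-1-\xi}$). That layer has mass $o(\mathbb P(Q_j))$ precisely because $\lambda_i\omega_\Delta\to0$, which is the opposite of your scaling for $\eta$.

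Below the interface, Wald with $\delta_{ij}-\ell\ge\omega_\Delta$ bounds the expected crossing time by $C\Delta_1/\omega_\Delta$.

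Above the interface, your last-exit time is useless: for $\ell>\delta_{ij}$ the walk $W_j$ has positive drift, so its last exit is a.s.\ infinite. You need instead the pathwise fact that arm $j$ is frozen forever once its score is \emph{strictly} below $M_i$, because arm $i$'s score never drops below $M_i$. Hence $N_j\le\tau^{>}_{j|i}(L_i)$ on $\{M_j<M_i\}$. The tilted Lundberg--Wald bound then gives $\mathbb E[\tau^{>}\mathbf 1\{\tau^{>}<\infty\}]\le e^{-\theta_j(y)b_i(\ell)}(b_i(\ell)+p_i-\ell)/v_j(y)$ with $y=\ell-\delta_{ij}$. In the Stieltjes integral for $\mathbb E[\min\{T,N_j\}\mathbf 1_{Q_i}]$, arm $j$ enters only through this crossing time and not through a winner factor. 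So the inequality $e^{-\theta_j(y)b_i(\ell)}\le\overline F_{j,>}(y)/q_j$ restores the full winner weight.

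After normalizing by $\mathbb P(\mathcal W_i^{>})$, both neighbouring regions contribute $O(\Delta_1/\omega_\Delta)\,e^{-c\Delta_1\delta_{ij}}=O(\Delta_1^{-\infty})$, with no factor $T$. The upper endpoint $\ell>p_i-\varepsilon$, where $v_j\to0$, is the exception: it has mass $e^{-c\Delta_1}=o(\mathbb P(Q_j))$ and may simply be capped.

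The price of strict first passage is the pairwise tie event $\mathcal W_i^{\ge}\cap\{M_j=M_i\}$. You must show separately that it is $o(\mathbb P(Q_j))$, using that strict and weak tails agree to relative error $O(\Delta_1^{1-2\kappa})$.

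The same first-passage reformulation is what turns your conditioned-walk heuristic for $p_i<p_j$ into a proof. Your bulk computation and the aggregation over optimal branches are otherwise sound.
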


The proof of Theorem~\ref{thm:main_conditional_abandonment_envelope} appears in Appendix~\ref{app:conditional_abandonment_costs}. There we integrate the pointwise first-passage bounds against the conditional distribution of the absorbing score minimum. The same endpoint localization used in the absorbing-probability analysis shows that only the lower-endpoint behavior of the absorbing score minimum contributes at first order, which yields the explicit conditional abandonment bounds in the theorem. We treat the optimal branches $Q_i$ with \(p_i=p_1\) separately because they contribute to the transient term at first order.

With both the absorbing probabilities and conditional abandonment costs characterized, only the synthesis step remains. Appendix~\ref{app:cumulative_regret_envelope} combines Theorems~\ref{thm:main_absorbing_probability_formula} and~\ref{thm:main_conditional_abandonment_envelope} to establish our main result (Theorem~\ref{thm:main_sandwich}). The absorbing probabilities determine the linear regret component $R_{\mathrm{linear}}(T)$, while the conditional abandonment bounds determine the transient component $R_{\mathrm{transient}}$.

The same regret decomposition also yields the pure-greedy result. Its analysis follows the same structure but requires a different initialization-branch decomposition, and is deferred to Appendix~\ref{app:pure_greedy_regime}.

\section{Horizon-Calibrated Greedy Policies}
\label{sec:calibration}

Theorem~\ref{thm:main_sandwich} characterizes the finite-horizon regret of regularized greedy as a function of the prior $(\alpha,\beta)$. The natural question is how the prior should be chosen. The regret bound shows that increasing the prior reduces suboptimal absorption but increases transient regret, so the prior is selected by optimizing this finite-horizon trade-off.

We first study an oracle setting in which the horizon and arm means are known. The resulting oracle policy serves as a benchmark and motivates the practical algorithms developed later for settings in which these quantities are unknown.

\subsection{Oracle Calibration}
\label{subsec:oracle}

Suppose the horizon $T$ and the arm means $p_1 \geq \cdots \geq p_K$ are known. We calibrate the regularization parameters by minimizing the leading-order regret envelope of Theorem~\ref{thm:main_sandwich}. Dropping the $o(1)$ remainder, define
\begin{equation}
\label{eq:cal_main_bound_def}
\overline R(T;\alpha,\beta)=T\sum_{\substack{i=2\\ p_i<p_1}}^{K}\delta_i\,\widetilde{\mathbb P}(Q_i)+(K-1)\Delta_1,
\end{equation}
where $\delta_i=p_1-p_i$ and $\widetilde{\mathbb P}(Q_i)$ denotes the asymptotic absorbing probability in Theorem~\ref{thm:main_sandwich}. The first term is the linear regret contribution, while the second is the transient abandonment cost.

Write $\beta=\zeta\alpha$, where $0\le\zeta<1/p_1$. Under this parameterization, the implied baseline value is $1/\zeta$. For $m=2,\ldots,K$, define
\[
\varrho_m(\zeta)=\sum_{h=1}^{m}\frac{2(1-p_h\zeta)}{p_h(1-p_h)}(p_h-p_m),\qquad
\gamma_m=\frac{\sum_{i=2, p_i<p_1}^{m}(p_1-p_i)}{m}-\frac{\sum_{i=2, p_i<p_1}^{m-1}(p_1-p_i)}{m-1},
\]
where the second sum is interpreted as zero when $m=2$. Substituting the asymptotic absorbing probabilities into~\eqref{eq:cal_main_bound_def} gives
\begin{equation}
\label{eq:cal_main_fixed_tilt_objective}
\overline R(T;\alpha,\zeta)
=
T\sum_{m=2}^{K}\gamma_m \exp\{-\alpha\varrho_m(\zeta)\}
+
(K-1)\alpha(1-p_1\zeta).
\end{equation}

The following proposition characterizes the optimal calibration.

\begin{proposition}[Horizon calibration with known gaps]
\label{prop:cal_main_fixed_tilt_certificate}
Consider
\[
\min_{\alpha\ge0,\;0\le\zeta<1/p_1}\overline R(T;\alpha,\zeta).
\]
For each fixed \(0\le\zeta<1/p_1\), define
\(T_0(\zeta):=(K-1)(1-p_1\zeta)/\sum_{m=2, p_m<p_1}^{K}\gamma_m\varrho_m(\zeta)\)
and
\(\varrho_{\min}(\zeta):=\min_{2\le m\le K, p_m<p_1}\varrho_m(\zeta)\).
The optimizer over \(\alpha\ge0\) is
\[
\alpha^\star(T,\zeta)=
\begin{cases}
0, & T\le T_0(\zeta),\\[1ex]
\text{the unique positive solution of }\displaystyle \\
\sum_{m=2}^{K}\gamma_m\varrho_m(\zeta)\exp\{-\alpha\varrho_m(\zeta)\}
=
\frac{(K-1)(1-p_1\zeta)}{T}, & T>T_0(\zeta).
\end{cases}
\]
The optimization over $\zeta$ attains its infimum at the upper boundary:
\[
\inf_{0\le\zeta<1/p_1}
\min_{\alpha\ge0}\overline R(T;\alpha,\zeta)
=
\lim_{\zeta\uparrow 1/p_1}
\min_{\alpha\ge0}\overline R(T;\alpha,\zeta).
\]
Moreover, for every fixed \(0\le\zeta<1/p_1\) and every \(T>T_0(\zeta)\),
\[
\overline R(T;\alpha^\star(T,\zeta),\zeta)
\le
\frac{(K-1)(1-p_1\zeta)}
{\varrho_{\min}(\zeta)}
\left[
1+
\log\left(
\frac{T}{T_0(\zeta)}
\right)
\right].
\]
\end{proposition}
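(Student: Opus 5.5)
The plan is to reduce the problem to one-dimensional convex minimization in $\alpha$ for fixed $\zeta$, then handle $\zeta$ by a change of variables that isolates the transient cost. Throughout, write $c(\zeta):=(K-1)(1-p_1\zeta)>0$ and $f_\zeta(\alpha):=\overline R(T;\alpha,\zeta)$ as in \eqref{eq:cal_main_fixed_tilt_objective}.

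\textbf{Step 1: signs of the coefficients.} First, $\gamma_m\ge0$ for all $m$. Indeed, $\gamma_m$ is the change in the running average of the nondecreasing sequence of gaps $\delta_i\mathbf 1\{p_i<p_1\}$ when the $m$-th term is appended. Appending a term at least as large as all previous ones cannot lower the average. Also, $\gamma_m=0$ whenever $p_m=p_1$, since then all gaps up to $m$ vanish. For $p_m<p_1$ we have $\varrho_m(\zeta)>0$, because every summand is nonnegative and the $h=1$ summand is strictly positive when $\zeta<1/p_1$. Hence all terms with $p_m=p_1$ drop out, and $f_\zeta$ is a nonnegative combination of decaying exponentials plus the linear term $c(\zeta)\alpha$.

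\textbf{Step 2: fixed $\zeta$.} It follows that $f_\zeta$ is convex and $f_\zeta'(\alpha)=c(\zeta)-T\sum_m\gamma_m\varrho_m e^{-\alpha\varrho_m}$ is nondecreasing in $\alpha$. It is strictly increasing unless all relevant $\gamma_m$ vanish, a degenerate case in which $\alpha^\star=0$ trivially. Moreover $f_\zeta'(\alpha)\to c(\zeta)>0$ as $\alpha\to\infty$. If $T\le T_0(\zeta)$, then $f_\zeta'(0)\ge0$, so $f_\zeta'\ge0$ on $[0,\infty)$ and $\alpha^\star=0$. Otherwise $f_\zeta'(0)<0$, and by continuity and strict monotonicity there is a unique positive root, which is the stated first-order equation.

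\textbf{Step 3: the bound.} At $\alpha^\star$, multiply the first-order equation by $1/\varrho_{\min}$ and use $\varrho_m\ge\varrho_{\min}$ on the surviving terms. This gives
\[
T\sum_m\gamma_m e^{-\alpha^\star\varrho_m}\le \frac{T}{\varrho_{\min}}\sum_m\gamma_m\varrho_m e^{-\alpha^\star\varrho_m}=\frac{c(\zeta)}{\varrho_{\min}}.
\]
Next, $\sum_m\gamma_m\varrho_m e^{-\alpha\varrho_m}\le e^{-\alpha\varrho_{\min}}\sum_m\gamma_m\varrho_m=e^{-\alpha\varrho_{\min}}c(\zeta)/T_0$. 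Evaluating at $\alpha^\star$ yields $c/T\le e^{-\alpha^\star\varrho_{\min}}c/T_0$, that is, $\alpha^\star\le\log(T/T_0)/\varrho_{\min}$. Adding $c\,\alpha^\star$ to the linear part gives the claimed inequality.

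\textbf{Step 4: optimizing over $\zeta$.} Substitute $\alpha'=\alpha(1-p_1\zeta)=\Delta_1$, which is a bijection of $[0,\infty)$ for fixed $\zeta$. The transient term becomes $(K-1)\alpha'$, independent of $\zeta$. The exponent becomes
\[
\alpha\varrho_m=\alpha'\sum_h \frac{2(p_h-p_m)}{p_h(1-p_h)}\cdot\frac{1-p_h\zeta}{1-p_1\zeta}.
\]
The ratio $(1-p_h\zeta)/(1-p_1\zeta)$ has derivative in $\zeta$ equal to $(p_1-p_h)/(1-p_1\zeta)^2\ge0$. So for every fixed $\alpha'$, each exponent is nondecreasing in $\zeta$, and since $\gamma_m\ge0$, the objective is nonincreasing in $\zeta$. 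Taking the minimum over $\alpha'$ preserves monotonicity. A nonincreasing function on $[0,1/p_1)$ has infimum equal to its limit at the right endpoint, which gives the stated identity.

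I expect the main subtlety to be Step 1, namely verifying $\gamma_m\ge0$ including the indicator restrictions with ties at $p_1$, since convexity, uniqueness of the root, and the $\zeta$-monotonicity all rest on it. I also expect care to be needed in Step 4, where one must work in the $\alpha'$ parametrization rather than $\alpha$: in the original variables the transient term depends on $\zeta$, which would obscure the monotonicity.
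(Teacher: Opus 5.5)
Your proposal is correct and follows the paper's proof essentially step for step: nonnegativity of $\gamma_m$ (with $\gamma_m>0$ exactly when $p_m<p_1$), and strict convexity in $\alpha$, with the sign of the derivative at zero deciding between $\alpha^\star=0$ and the interior first-order condition. It also uses the same two inequalities giving $\alpha^\star\le\log(T/T_0(\zeta))/\varrho_{\min}(\zeta)$ and the logarithmic bound, and the same reparametrization $\Delta_1=\alpha(1-p_1\zeta)$ with $\partial_\zeta[(1-p_h\zeta)/(1-p_1\zeta)]=(p_1-p_h)/(1-p_1\zeta)^2\ge0$. The one difference is in Step 4, where you minimize a family that is pointwise nonincreasing in $\zeta$ instead of invoking the envelope theorem at an interior optimizer as the paper does; this is slightly cleaner, since it needs neither an interior optimum nor differentiability of $\Delta_1^\star(T,\zeta)$.
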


The proof is given in Appendix~\ref{app:calibration_proofs}. Proposition~\ref{prop:cal_main_fixed_tilt_certificate} completely characterizes the oracle regularization. When $T\le T_0(\zeta)$, regularization cannot improve the regret upper bound, so the optimizer is the classical greedy policy with $\alpha=0$. When $T>T_0(\zeta)$, the optimal regularization is the unique solution of the scalar equation in Proposition~\ref{prop:cal_main_fixed_tilt_certificate}, balancing the exponentially decreasing absorption term against the increasing transient cost.

The proposition also shows that the optimal tilt satisfies $\zeta^\star(T)\uparrow1/p_1$. Since $\zeta=1/p_1$ lies outside the admissible region of our asymptotic analysis, we instead use
\[
\zeta_{\epsilon^\circ}=\frac1{p_1}-\epsilon^\circ,
\]
where $\epsilon^\circ>0$ is fixed. Throughout the numerical experiments, we set $\epsilon^\circ=0.2$. The resulting oracle calibration is
\begin{equation}
\label{eq:cal_main_near_boundary_pair}
\alpha_{\epsilon^\circ}^\star(T)\in\arg\min_{\alpha\ge0}\overline R(T;\alpha,\zeta_{\epsilon^\circ}),\qquad
\beta_{\epsilon^\circ}^\star(T)=\zeta_{\epsilon^\circ}\alpha_{\epsilon^\circ}^\star(T).
\end{equation}

Algorithm~\ref{alg:oracle} summarizes the resulting oracle policy. Throughout this section, let $(\alpha,\beta)=\operatorname{Cal}(T,\widehat p)$ denote the calibrated pair obtained from Proposition~\ref{prop:cal_main_fixed_tilt_certificate} using design horizon $T$ and ordered design vector $\widehat p$. When all components of $\widehat p$ are equal, we set $\operatorname{Cal}(T,\widehat p)=(0,0)$. In this case, $\gamma_m=0$ for every $m=2,\ldots,K$, so the objective in \eqref{eq:cal_main_fixed_tilt_objective} is strictly increasing in $\alpha$ for every feasible $\zeta$ and is minimized at zero regularization.

Finally, Proposition~\ref{prop:cal_main_fixed_tilt_certificate} implies that, for every fixed problem instance and every fixed backoff $\epsilon^\circ>0$, recalibrating to the design horizon $T$ yields an upper bound of order
\[
1+\log\!\left(\frac{T}{T_0(\zeta_{\epsilon^\circ})}\right)
\]
for the regret certificate~\eqref{eq:cal_main_bound_def}. Thus, the oracle calibration increases only logarithmically with the horizon under the finite-horizon regret envelope.

\begin{algorithm}[H]
\caption{Oracle}
\label{alg:oracle}
\small
\begin{algorithmic}[1]
\STATE \textbf{Input:} mean multiset $\{p_1,\ldots,p_K\}$; horizon $T$.
\STATE Sort the mean values in decreasing order to obtain $\tilde p$.
\STATE Set $(\alpha,\beta)=\operatorname{Cal}(T,\tilde p)$.
\STATE Pull each arm once.
\FOR{$t=K+1,\ldots,T$}
    \STATE Pull $A_t\in\arg\max_i (S_i+\alpha)/(N_i+\beta)$.
    \STATE Observe $X_t$ and update $S_{A_t}$ and $N_{A_t}$.
\ENDFOR
\end{algorithmic}
\end{algorithm}

\subsection{Operationalizing the Calibration}
\label{subsec:operational}

The oracle policy assumes that the arm means are known. In most real-world settings, however, they are unknown and must be estimated online. We therefore replace the unknown means by online estimates while retaining the calibration rule from the previous subsection. We consider two settings depending on whether the horizon $T$ is known. 

\paragraph{Horizon-Aware.}
Suppose the horizon $T$ is known but the arm means are not. We replace the unknown means by the Jeffreys-smoothed estimates
\[
\tilde p_i(t)=\frac{S_i(t)+1/2}{N_i(t)+1},
\]
sort them to obtain the design vector $\tilde p(t)$, and calibrate $(\alpha,\beta)$ using the true horizon $T$. Following the standard doubling technique, the calibration is recomputed only when $\max_i N_i(t)$ has doubled since the previous recalibration. Between recalibrations, the same pair $(\alpha,\beta)$ is used.
\begin{algorithm}[ht]
\caption{Horizon-Aware}
\label{alg:horizon_aware}
\small
\begin{algorithmic}[1]
\STATE \textbf{Input:} horizon $T$.
\STATE Pull each arm once.
\STATE Form $\tilde p_i=(S_i+1/2)/(N_i+1)$ and sort to obtain $\tilde p$.
\STATE Set $(\alpha,\beta)=\operatorname{Cal}(T,\tilde p)$ and $\mathcal I_{\mathrm{cal}}=\max_i N_i$.
\FOR{$t=K+1,\ldots,T$}
    \IF{$\max_i N_i\ge 2\mathcal I_{\mathrm{cal}}$}
        \STATE Form $\tilde p_i=(S_i+1/2)/(N_i+1)$ and sort to obtain $\tilde p$.
        \STATE Set $(\alpha,\beta)=\operatorname{Cal}(T,\tilde p)$ and $\mathcal I_{\mathrm{cal}}=\max_i N_i$.
    \ENDIF
    \STATE Pull $A_t\in\arg\max_i (S_i+\alpha)/(N_i+\beta)$.
    \STATE Observe $X_t$ and update $S_{A_t}$ and $N_{A_t}$.
\ENDFOR
\end{algorithmic}
\end{algorithm}

\paragraph{Fully Adaptive.}
Suppose neither the horizon nor the arm means are known. We estimate the arm means as in the horizon-aware policy. To remove the dependence on the unknown horizon, we follow a standard anytime-style construction from the bandit literature and replace $T$ by the growing design horizon
\[
\widehat T_t=\max \left\{\varphi K,\left\lceil t \sqrt{\log(e+t)}\right\rceil\right\},
\]
where $\varphi$ is a fixed coefficient. At each recalibration, we apply the same calibration rule with $T$ replaced by $\widehat T_t$. Recalibration follows the same doubling schedule as in the horizon-aware policy.

\begin{algorithm}[ht]
\caption{Fully Adaptive}
\label{alg:fully_adaptive}
\small
\begin{algorithmic}[1]
\STATE \textbf{Input:} coefficient $\varphi$.
\STATE Pull each arm once.
\STATE Set $\widehat T=\max\left\{\varphi K,\left\lceil K\sqrt{\log(e+K)}\right\rceil\right\}$.
\STATE Form $\tilde p_i=(S_i+1/2)/(N_i+1)$ and sort to obtain $\tilde p$.
\STATE Set $(\alpha,\beta)=\operatorname{Cal}(\widehat T,\tilde p)$ and $\mathcal I_{\mathrm{cal}}=\max_i N_i$.
\FOR{$t=K+1,K+2,\ldots$}
    \IF{$\max_i N_i\ge 2\mathcal I_{\mathrm{cal}}$}
        \STATE Set $\widehat T=\max\left\{\varphi K,\left\lceil (t-1)\sqrt{\log(e+t-1)}\right\rceil\right\}$.
        \STATE Form $\tilde p_i=(S_i+1/2)/(N_i+1)$ and sort to obtain $\tilde p$.
        \STATE Set $(\alpha,\beta)=\operatorname{Cal}(\widehat T,\tilde p)$ and $\mathcal I_{\mathrm{cal}}=\max_i N_i$.
    \ENDIF
    \STATE Pull $A_t\in\arg\max_i (S_i+\alpha)/(N_i+\beta)$.
    \STATE Observe $X_t$ and update $S_{A_t}$ and $N_{A_t}$.
\ENDFOR
\end{algorithmic}
\end{algorithm}

\subsection{Numerical Experiments}
\label{subsec:numerical}

The previous sections developed horizon-calibrated regularized greedy policies under three information settings. We now evaluate their empirical performance.

\paragraph{Experimental Design.}
\label{subsubsec:exp_design}

We consider $K\in\{2,5\}$ and horizon-per-arm values $T/K\in\{600,\allowbreak 800,\allowbreak 1{,}000,\allowbreak 1{,}200,\allowbreak 1{,}400,\allowbreak 1{,}600\}$. We evaluate the proposed policies under two arm-mean environments, one uniform and one scaled Poisson-generated. For each configuration, reported quantities are averaged over $M=100$ randomly generated Bernoulli bandit instances and $N=5{,}000$ independent replications per instance. In the uniform environment, arm means are drawn independently from $\mathrm{Uniform}[0.01,0.99]$; in the scaled Poisson environment, arm means are drawn independently from a scaled $\operatorname{Poisson}(10)/30$ distribution restricted to $[0.01,0.99]$. In both environments, the arm means are sorted in decreasing order. All policies are evaluated on the same instance ensemble under common random numbers with master seed $20260630$. Oracle, Horizon-Aware, and Fully Adaptive use $\epsilon^\circ=0.2$, bisection accuracy $\varepsilon_\alpha=10^{-3}$, and $\varphi=500$. Mean regret and mean running time are reported across instances, and configurations exceeding the 24-hour time limit are marked as unfinished.
\paragraph{Benchmarks.} We compare against thirteen standard policies covering the main classes of exploration strategies used in the bandit literature. Each algorithm is initialized by pulling each arm once, with ties in any $\arg\max$ broken uniformly at random: 
\begin{itemize}
\item \textbf{Pure Greedy}~\citep{jedor2021greedy}, which selects $A_t=\arg\max_i S_i/N_i$ and coincides with our policy at $\alpha=0$.
\item \textbf{$\epsilon$-Greedy (Fixed)}~\citep{kuleshov2014algorithms}, which selects $A_t=\arg\max_i S_i/N_i$ with probability $1-\epsilon$ and a uniformly random arm otherwise, with $\epsilon=0.1$.
\item \textbf{$\epsilon$-Greedy (Decay)}~\citep{kuleshov2014algorithms}, which applies the same rule as $\epsilon$-Greedy (Fixed) with $\epsilon$ replaced by the decaying rate $\epsilon_t=\min(1,c/t)$ and $c=1$.
\item \textbf{UCB1} (Upper Confidence Bound 1; \citealp{auer2002finite}), which selects $A_t=\arg\max_i S_i/N_i+\sqrt{c\log t/N_i}$ with $c=2$.
\item \textbf{KL-UCB}~\citep{olivier2013KLUCB}, which selects $A_t=\arg\max_i\max\{q\in[S_i/N_i,1]:N_i\,d(S_i/N_i,q)\le\log t+c\log\log t\}$ with $c=3$, where $d(x,y)=x\log\tfrac{x}{y}+(1-x)\log\tfrac{1-x}{1-y}$ is the Kullback-Leibler divergence.
\item \textbf{MOSS} (the Minimax Optimal Strategy in the Stochastic case; \citealp{audibert2009minimax}), which selects $A_t=\arg\max_i S_i/N_i+\sqrt{\max(\log(T/(KN_i)),0)/N_i}$.
\item \textbf{Thompson Sampling}~\citep{thompson1933likelihood}, which draws $\theta_i\sim\mathrm{Beta}(1+S_i,1+N_i-S_i)$ from a $\mathrm{Beta}(1,1)$ prior and selects $A_t=\arg\max_i\theta_i$.
\item \textbf{BayesUCB}~\citep{kaufmann2012BayesUCB}, which uses the same $\mathrm{Beta}(1,1)$ prior and selects $A_t=\arg\max_i Q\big(1-1/t;\mathrm{Beta}(1+S_i,1+N_i-S_i)\big)$, where $Q(\cdot\,;\cdot)$ is the posterior quantile function.
\item \textbf{IDS} (Information-Directed Sampling; \citealp{russo2014ids}), which uses a $\mathrm{Beta}(1,1)$ prior, computes for each arm the posterior instantaneous regret $\Delta_i(t)=\mathbb E_t[R_{t,A^\ast}-R_{t,i}]$ and information gain $g_i(t)=I_t(A^\ast;Y_{t,i})$, and samples from a distribution $\pi_t$ minimizing the information ratio $(\sum_i \pi_i\Delta_i(t))^2/(\sum_i \pi_i g_i(t))$, with the optimizer implemented by searching over two-arm mixtures.
\item \textbf{OGI} (Optimistic Gittins Indices; \citealp{Farias2022OGI}), which uses a tractable optimistic approximation to the Gittins index with a time-varying discount factor and is implemented in its one-step form with a \(\mathrm{Beta}(1,1)\) prior by setting \(a_i=1+S_i(t)\), \(b_i=1+N_i(t)-S_i(t)\), and \(\gamma_t=1-1/t\) at time \(t\), computing the index \(v_{i,t}\) as the solution \(\lambda\in[0,1]\) of \(\lambda=\frac{a_i}{a_i+b_i}[1-\gamma_t F_{\mathrm{Beta}(a_i+1,b_i)}(\lambda)]+\gamma_t\lambda F_{\mathrm{Beta}(a_i,b_i)}(\lambda)\), and selecting \(A_t=\arg\max_i v_{i,t}\).
\item \textbf{IRS.FH} (Information Relaxation Sampling with finite-horizon penalty; \citealp{Min2024TSIRP}), which modifies Thompson sampling by using a finite-horizon posterior predictive score and, at time \(t\), sets the remaining horizon to \(\tau_t=T-t+1\), draws \(\widetilde p_i\sim\mathrm{Beta}(1+S_i(t),1+N_i(t)-S_i(t))\) and \(\widetilde R_i\sim\mathrm{Binomial}(\tau_t-1,\widetilde p_i)\) for each arm \(i\), computes \(\widetilde m_{i,t}^{\mathrm{FH}}=(1+S_i(t)+\widetilde R_i)/(2+N_i(t)+\tau_t-1)\), and selects \(A_t=\arg\max_i \widetilde m_{i,t}^{\mathrm{FH}}\).
\item \textbf{ETC} (Explore-then-Commit; \citealp{colton1963etc,garivier2016explore}), which first explores in round-robin until each arm has \(m_{\mathrm{ETC}}=\min\left\{\left\lfloor T/K\right\rfloor,\,\left\lceil \left(T/K\right)^{2/3}\right\rceil\right\}\) pulls, and then commits to \(A_t=\arg\max_i S_i/N_i\) for the remaining periods.
\item  \textbf{DETC} (Double Explore-then-Commit; \citealp{jin2021double}), which first pulls every arm until each has \(m_1=\lceil\sqrt{\log T}\rceil\) pulls, sets \(i^{(0)}=\arg\max_i S_i/N_i\), pulls \(i^{(0)}\) for \(m_2=\lceil(\log T)^2\rceil\) additional pulls, then checks each arm \(j\neq i^{(0)}\) with at most \(m_2\) additional pulls, and finally commits to \(A_t=i^{(0)}\) if the check succeeds, or otherwise pulls every arm for another \(m_2\) rounds and commits to \(A_t=\arg\max_i S_i^{\mathrm{fb}}/N_i^{\mathrm{fb}}\) based on this fallback sample.
\end{itemize}

\paragraph{Performance metrics.}
We report mean regret over the ensemble of instances and replications. Throughout this section, the strongest benchmark denotes the classical benchmark with the lowest mean regret in the corresponding configuration. Let $R_{\mathrm{best}}$ denote its mean regret. For each calibrated policy, we report the relative regret $(R_{\mathrm{ours}}-R_{\mathrm{best}})/R_{\mathrm{best}}$, where negative values indicate lower regret than the strongest benchmark. Because Oracle uses the true arm means, it serves only as a reference. We therefore also report the relative regret increase of Horizon-Aware and Fully Adaptive over Oracle, $(R_{\mathrm{ours}}-R_{\mathrm{Oracle}})/R_{\mathrm{Oracle}}$.

We also report average wall-clock time per instance. Let $C_{\mathrm{best}}$ denote the average running time of the strongest benchmark and $C_{\mathrm{ours}}$ that of the calibrated policy. The reported relative time difference is $(C_{\mathrm{ours}}-C_{\mathrm{best}})/C_{\mathrm{best}}$, where negative values indicate lower running time than the strongest benchmark.

\paragraph{Computing environment.}
All experiments used CPU-only computation. Each policy–configuration pair was run as a separate job on a server with two Intel Xeon 6258R processors, 56 CPU cores, and 192 GB of memory. Within each job, instance-level simulations were parallelized by assigning one CPU core to each task.

\subsubsection{Numerical Results.}
\label{subsubsec:exp_results}

Tables~\ref{tab:holistic_uniform_K2}--\ref{tab:holistic_poisson_K5} report the performance of the proposed policies under the uniform and scaled Poisson-generated arm-mean distributions over the values of $K$ and $T/K$. For readability, we omit confidence intervals from the tables. All highlighted comparisons below are statistically significant at the $1\%$ level. Horizon-Aware is the strongest implementable policy in all $24$ configurations and reduces mean regret by $13.5\%$ on average relative to the strongest completed benchmark.
% Fully Adaptive remains competitive without knowledge of either the time horizon or the arm means. It outperforms the strongest completed benchmark in all $24$ configurations and reduces mean regret by $10.2\%$ on average. 
Fully Adaptive exhibits a similar pattern. It also outperforms the strongest completed benchmark in every configuration and reduces mean regret by $10.2\%$ on average, demonstrating the broad competitiveness of the anytime calibration. Both policies reduce mean running time by at least $98.8\%$ relative to the corresponding strongest benchmarks. 

Across all configurations, the strongest benchmark in terms of regret is either IDS or OGI. Both are substantially more computationally intensive in practice: IDS requires information-directed calculations, whereas OGI requires index optimization. We also observe that OGI and IRS.FH perform better under the uniform arm-mean distribution, consistent with the fact that the uniform distribution coincides with the prior specification assumed by these policies.

The empirical behavior closely follows the finite-horizon regret decomposition developed in Section~\ref{sec:Method}. The proposed policies calibrate regularization to balance transient abandonment costs against the probability of suboptimal convergence over a finite horizon. The empirical improvements indicate that this tradeoff is a principal determinant of finite-horizon performance.

The comparison with existing horizon-aware algorithms further illustrates this point. MOSS, IRS.FH, ETC, and DETC already incorporate knowledge of the horizon, yet Horizon-Aware remains the strongest implementable policy across all tested configurations. The improvement therefore does not come from incorporating horizon information itself. Rather, it comes from using the finite-horizon regret envelope to determine the amount of regularization warranted by the horizon.

The benefit of calibration tends to increase with the horizon because reducing the probability of suboptimal absorption has the largest cumulative effect on regret. As the number of arms increases, the improvement becomes more moderate, consistent with the finite-arm approximation becoming less accurate in larger problems. This pattern suggests that further gains are more likely to come from sharper, albeit more complex, finite-arm approximations. Nevertheless, the proposed policies remain competitive with the strongest benchmark algorithms throughout the experimental design.

Finally, Oracle Calibration provides a useful reference for assessing the proposed methodology. As expected, it achieves the lowest regret in every configuration. Horizon-Aware and Fully Adaptive nevertheless remain close to this oracle benchmark, trailing Oracle by only $7.08\%$ and $11.1\%$ on average, respectively. This result demonstrates that most of the benefit of oracle calibration can be retained without prior knowledge of the underlying problem instance.

% Tighter spacing around floating tables
\setlength{\textfloatsep}{6pt plus 2pt minus 2pt}
\setlength{\floatsep}{6pt plus 2pt minus 2pt}
\setlength{\intextsep}{6pt plus 2pt minus 2pt}

% Allow LaTeX to place more floats on one page
\setcounter{topnumber}{5}
\setcounter{bottomnumber}{5}
\setcounter{totalnumber}{8}
\renewcommand{\topfraction}{0.95}
\renewcommand{\bottomfraction}{0.90}
\renewcommand{\textfraction}{0.05}
\renewcommand{\floatpagefraction}{0.80}
\newcommand{\ltpct}[1]{\ensuremath{<#1\%}}

\begin{table}[ht]
\centering
\setlength{\fboxsep}{1.5pt}
\caption{Performance comparison for $K=2$ under uniformly distributed arm means. Each entry averages over $M=100$ Bernoulli bandit instances and $N=5{,}000$ replications per instance. Time reports mean seconds per instance. Rows labeled ``vs. strongest benchmark'' report percentage differences in regret and running time relative to the lowest-regret completed benchmark, while rows labeled ``vs. Oracle'' report relative regret increases over Oracle. Negative values indicate improvements. Boldface identifies the lowest-regret non-Oracle policy, boxes identify the strongest completed benchmark, slashes indicate unfinished configurations, and double dashes indicate inapplicable entries. Running-time differences at or below $-99.9\%$ are reported as $-99.9\%$.}
\label{tab:holistic_uniform_K2}
\fontsize{8.0pt}{9.2pt}\selectfont
\setlength{\tabcolsep}{1.8pt}
\renewcommand{\arraystretch}{1.04}
\begin{tabular*}{\linewidth}{@{\extracolsep{\fill}}l rr|rr|rr|rr|rr|rr@{}}
\toprule
& \multicolumn{2}{c}{$T/K=600$}
& \multicolumn{2}{c}{$T/K=800$}
& \multicolumn{2}{c}{$T/K=1{,}000$}
& \multicolumn{2}{c}{$T/K=1{,}200$}
& \multicolumn{2}{c}{$T/K=1{,}400$}
& \multicolumn{2}{c}{$T/K=1{,}600$} \\
\cmidrule(lr){2-3}
\cmidrule(lr){4-5}
\cmidrule(lr){6-7}
\cmidrule(lr){8-9}
\cmidrule(lr){10-11}
\cmidrule(lr){12-13}
Policy & Mean & Time & Mean & Time & Mean & Time & Mean & Time & Mean & Time & Mean & Time \\
\midrule
Horizon-Aware
& \textbf{4.11}
& 2.81
& \textbf{4.43}
& 3.44
& \textbf{4.71}
& 3.92
& \textbf{4.98}
& 4.60
& \textbf{5.19}
& 5.30
& \textbf{5.39}
& 5.67 \\
\quad vs. strongest benchmark
& \textbf{-13.7\%}
& -99.9\%
& \textbf{-15.3\%}
& -99.9\%
& \textbf{-16.3\%}
& -99.9\%
& \textbf{-16.7\%}
& -99.9\%
& \textbf{-17.4\%}
& -99.9\%
& \textbf{-17.9\%}
& -99.9\% \\
\quad vs. Oracle
& +4.39\%
& --
& +4.25\%
& --
& +5.10\%
& --
& +6.11\%
& --
& +6.89\%
& --
& +7.55\%
& -- \\
Fully Adaptive
& 4.16
& 2.64
& 4.49
& 3.21
& 4.78
& 3.73
& 5.07
& 4.61
& 5.31
& 5.34
& 5.54
& 5.65 \\
\quad vs. strongest benchmark
& -12.6\%
& -99.9\%
& -14.2\%
& -99.9\%
& -15.1\%
& -99.9\%
& -15.1\%
& -99.9\%
& -15.4\%
& -99.9\%
& -15.6\%
& -99.9\% \\
\quad vs. Oracle
& +5.68\%
& --
& +5.65\%
& --
& +6.66\%
& --
& +8.16\%
& --
& +9.45\%
& --
& +10.5\%
& -- \\

\midrule
Pure Greedy
& 42.5
& 0.946
& 56.5
& 1.23
& 70.4
& 1.40
& 84.4
& 1.69
& 98.4
& 1.96
& 112
& 2.23 \\
$\epsilon$-Greedy (Fixed)
& 23.4
& 1.07
& 30.0
& 1.23
& 36.6
& 1.54
& 43.1
& 1.83
& 49.6
& 2.51
& 56.1
& 2.44 \\
$\epsilon$-Greedy (Decay)
& 18.6
& 1.05
& 23.5
& 1.25
& 28.2
& 1.56
& 32.9
& 1.89
& 37.4
& 2.16
& 41.9
& 2.48 \\
UCB1
& 21.4
& 1.00
& 24.4
& 1.20
& 26.9
& 1.47
& 29.0
& 1.82
& 30.9
& 2.04
& 32.6
& 2.34 \\
KL-UCB
& 10.6
& 9.52
& 11.8
& 12.4
& 12.8
& 15.7
& 13.7
& 18.5
& 14.4
& 21.3
& 15.1
& 24.4 \\
MOSS
& 10.1
& 1.12
& 11.1
& 1.34
& 11.9
& 1.69
& 12.6
& 2.00
& 13.1
& 2.31
& 13.7
& 2.65 \\
Thompson Sampling
& 6.04
& 1.98
& 6.55
& 2.45
& 6.96
& 3.11
& 7.32
& 3.74
& 7.63
& 4.43
& 7.92
& 4.91 \\
BayesUCB
& 5.62
& 14.6
& 6.24
& 19.3
& 6.75
& 24.1
& 7.21
& 29.3
& 7.61
& 35.7
& 7.98
& 39.4 \\
IDS
& \ensuremath{\boxed{4.75}}
& 6{,}211
& \ensuremath{\boxed{5.23}}
& 8{,}692
& \ensuremath{\boxed{5.63}}
& 10{,}800
& \ensuremath{\boxed{5.97}}
& 13{,}244
& \ensuremath{\boxed{6.28}}
& 15{,}640
& \ensuremath{\boxed{6.56}}
& 17{,}854 \\
OGI
& 4.78
& 179
& 5.27
& 245
& 5.68
& 314
& 6.04
& 381
& 6.37
& 454
& 6.66
& 522 \\
IRS.FH
& 5.71
& 3.61
& 6.20
& 4.57
& 6.62
& 5.64
& 6.99
& 6.71
& 7.29
& 7.78
& 7.58
& 8.82 \\
ETC
& 25.2
& 0.318
& 30.2
& 0.311
& 34.6
& 0.384
& 38.9
& 0.463
& 43.2
& 0.551
& 46.9
& 0.624 \\
DETC
& 25.0
& 0.518
& 27.5
& 0.594
& 29.4
& 0.710
& 31.2
& 0.824
& 32.9
& 0.943
& 34.4
& 1.10 \\

\midrule
Oracle
& 3.93
& 0.846
& 4.25
& 1.22
& 4.48
& 1.50
& 4.69
& 1.69
& 4.86
& 1.98
& 5.01
& 2.22 \\
\quad vs. strongest benchmark
& -17.3\%
& -99.9\%
& -18.8\%
& -99.9\%
& -20.4\%
& -99.9\%
& -21.5\%
& -99.9\%
& -22.7\%
& -99.9\%
& -23.6\%
& -99.9\% \\
\bottomrule
\end{tabular*}
\end{table}

\begin{table}[ht]
\centering
\setlength{\fboxsep}{1.5pt}
\caption{Performance comparison for $K=5$ under the uniform arm-mean distribution. The format is the same as Table~\ref{tab:holistic_uniform_K2}.}
\label{tab:holistic_uniform_K5}
\fontsize{8.0pt}{9.2pt}\selectfont
\setlength{\tabcolsep}{1.8pt}
\renewcommand{\arraystretch}{1.04}
\begin{tabular*}{\linewidth}{@{\extracolsep{\fill}}l rr|rr|rr|rr|rr|rr@{}}
\toprule
& \multicolumn{2}{c}{$T/K=600$}
& \multicolumn{2}{c}{$T/K=800$}
& \multicolumn{2}{c}{$T/K=1{,}000$}
& \multicolumn{2}{c}{$T/K=1{,}200$}
& \multicolumn{2}{c}{$T/K=1{,}400$}
& \multicolumn{2}{c}{$T/K=1{,}600$} \\
\cmidrule(lr){2-3}
\cmidrule(lr){4-5}
\cmidrule(lr){6-7}
\cmidrule(lr){8-9}
\cmidrule(lr){10-11}
\cmidrule(lr){12-13}
Policy & Mean & Time & Mean & Time & Mean & Time & Mean & Time & Mean & Time & Mean & Time \\
\midrule
Horizon-Aware
& \textbf{17.2}
& 9.32
& \textbf{18.9}
& 10.7
& \textbf{20.1}
& 13.7
& \textbf{21.3}
& 15.1
& \textbf{22.4}
& 17.1
& \textbf{23.3}
& 18.2 \\
\quad vs. strongest benchmark
& \textbf{-10.6\%}
& -98.9\%
& \textbf{-10.7\%}
& -99.1\%
& \textbf{-11.2\%}
& -99.1\%
& \textbf{-11.2\%}
& -99.2\%
& \textbf{-10.4\%}
& -99.2\%
& \textbf{-10.5\%}
& -99.2\% \\
\quad vs. Oracle
& +5.51\%
& --
& +6.84\%
& --
& +7.84\%
& --
& +8.44\%
& --
& +9.77\%
& --
& +10.4\%
& -- \\
Fully Adaptive
& 18.0
& 9.78
& 19.5
& 10.9
& 21.7
& 13.3
& 22.9
& 14.7
& 24.0
& 16.5
& 25.0
& 17.5 \\
\quad vs. strongest benchmark
& -6.77\%
& -98.8\%
& -7.97\%
& -99.1\%
& -4.26\%
& -99.1\%
& -4.30\%
& -99.2\%
& -4.04\%
& -99.2\%
& -3.67\%
& -99.3\% \\
\quad vs. Oracle
& +10.0\%
& --
& +10.1\%
& --
& +16.3\%
& --
& +16.8\%
& --
& +17.6\%
& --
& +18.8\%
& -- \\

\midrule
Pure Greedy
& 132
& 2.82
& 175
& 3.76
& 218
& 4.71
& 261
& 5.63
& 304
& 6.55
& 347
& 7.54 \\
$\epsilon$-Greedy (Fixed)
& 113
& 2.99
& 146
& 4.04
& 178
& 4.93
& 211
& 5.99
& 243
& 7.19
& 275
& 7.93 \\
$\epsilon$-Greedy (Decay)
& 90.6
& 3.04
& 117
& 4.09
& 143
& 5.09
& 169
& 6.15
& 195
& 7.19
& 220
& 8.08 \\
UCB1
& 114
& 3.10
& 129
& 4.11
& 141
& 5.21
& 151
& 6.26
& 160
& 7.22
& 168
& 8.03 \\
KL-UCB
& 47.0
& 55.9
& 51.6
& 72.0
& 55.3
& 89.5
& 58.3
& 106
& 61.0
& 125
& 63.4
& 146 \\
MOSS
& 40.8
& 3.68
& 44.4
& 4.91
& 47.3
& 6.39
& 49.8
& 7.47
& 51.9
& 8.61
& 53.8
& 10.1 \\
Thompson Sampling
& 24.1
& 8.40
& 25.8
& 11.2
& 27.1
& 14.1
& 28.2
& 16.8
& 29.2
& 20.9
& 30.1
& 22.4 \\
BayesUCB
& 23.3
& 91.1
& 25.6
& 122
& 27.4
& 156
& 28.9
& 187
& 30.3
& 221
& 31.5
& 252 \\
IDS
& 19.5
& 37{,}062
& 21.3
& 49{,}177
& 22.7
& 61{,}642
& 24.0
& 74{,}120
& /
& /
& /
& / \\
OGI
& \ensuremath{\boxed{19.3}}
& 847
& \ensuremath{\boxed{21.1}}
& 1{,}154
& \ensuremath{\boxed{22.7}}
& 1{,}468
& \ensuremath{\boxed{23.9}}
& 1{,}788
& \ensuremath{\boxed{25.0}}
& 2{,}081
& \ensuremath{\boxed{26.0}}
& 2{,}413 \\
IRS.FH
& 23.0
& 17.6
& 24.7
& 23.3
& 26.1
& 28.6
& 27.2
& 34.2
& 28.2
& 40.2
& 29.1
& 45.5 \\
ETC
& 128
& 0.590
& 154
& 0.801
& 176
& 0.988
& 198
& 1.18
& 220
& 1.49
& 238
& 1.64 \\
DETC
& 193
& 1.34
& 207
& 1.68
& 222
& 1.99
& 233
& 2.34
& 244
& 2.78
& 253
& 3.02 \\

\midrule
Oracle
& 16.3
& 2.77
& 17.7
& 3.77
& 18.6
& 4.65
& 19.6
& 5.59
& 20.4
& 6.58
& 21.1
& 7.38 \\
\quad vs. strongest benchmark
& -15.3\%
& -99.7\%
& -16.4\%
& -99.7\%
& -17.7\%
& -99.7\%
& -18.1\%
& -99.7\%
& -18.4\%
& -99.7\%
& -18.9\%
& -99.7\% \\
\bottomrule
\end{tabular*}
\end{table}

\begin{table}[ht]
\centering
\setlength{\fboxsep}{1.5pt}
\caption{Performance comparison for $K=2$ under the scaled Poisson-generated arm-mean distribution. The format is the same as Table~\ref{tab:holistic_uniform_K2}.}
\label{tab:holistic_poisson_K2}
\fontsize{8.0pt}{9.2pt}\selectfont
\setlength{\tabcolsep}{1.8pt}
\renewcommand{\arraystretch}{1.04}
\begin{tabular*}{\linewidth}{@{\extracolsep{\fill}}l rr|rr|rr|rr|rr|rr@{}}
\toprule
& \multicolumn{2}{c}{$T/K=600$}
& \multicolumn{2}{c}{$T/K=800$}
& \multicolumn{2}{c}{$T/K=1{,}000$}
& \multicolumn{2}{c}{$T/K=1{,}200$}
& \multicolumn{2}{c}{$T/K=1{,}400$}
& \multicolumn{2}{c}{$T/K=1{,}600$} \\
\cmidrule(lr){2-3}
\cmidrule(lr){4-5}
\cmidrule(lr){6-7}
\cmidrule(lr){8-9}
\cmidrule(lr){10-11}
\cmidrule(lr){12-13}
Policy & Mean & Time & Mean & Time & Mean & Time & Mean & Time & Mean & Time & Mean & Time \\
\midrule
Horizon-Aware
& \textbf{7.21}
& 4.14
& \textbf{7.95}
& 5.24
& \textbf{8.58}
& 5.67
& \textbf{9.10}
& 6.85
& \textbf{9.52}
& 7.83
& \textbf{9.97}
& 8.51 \\
\quad vs. strongest benchmark
& \textbf{-9.13\%}
& -99.9\%
& \textbf{-11.1\%}
& -99.9\%
& \textbf{-12.3\%}
& -99.9\%
& \textbf{-13.4\%}
& -99.9\%
& \textbf{-14.7\%}
& -99.9\%
& \textbf{-15.2\%}
& -99.9\% \\
\quad vs. Oracle
& +6.19\%
& --
& +6.40\%
& --
& +6.97\%
& --
& +7.22\%
& --
& +6.80\%
& --
& +7.56\%
& -- \\
Fully Adaptive
& 7.28
& 3.95
& 8.05
& 4.63
& 8.68
& 5.67
& 9.27
& 7.07
& 9.78
& 8.00
& 10.2
& 8.25 \\
\quad vs. strongest benchmark
& -8.27\%
& -99.9\%
& -9.97\%
& -99.9\%
& -11.2\%
& -99.9\%
& -11.8\%
& -99.9\%
& -12.4\%
& -99.9\%
& -12.8\%
& -99.9\% \\
\quad vs. Oracle
& +7.19\%
& --
& +7.74\%
& --
& +8.25\%
& --
& +9.16\%
& --
& +9.76\%
& --
& +10.5\%
& -- \\

\midrule
Pure Greedy
& 43.1
& 0.906
& 57.5
& 1.20
& 71.8
& 1.44
& 86.1
& 1.74
& 100
& 2.01
& 114
& 2.28 \\
$\epsilon$-Greedy (Fixed)
& 15.2
& 0.981
& 18.3
& 1.25
& 21.2
& 1.59
& 23.9
& 1.87
& 26.6
& 2.18
& 29.2
& 2.47 \\
$\epsilon$-Greedy (Decay)
& 27.3
& 0.960
& 35.5
& 1.28
& 43.5
& 1.59
& 51.5
& 1.92
& 59.4
& 2.21
& 67.2
& 2.51 \\
UCB1
& 22.8
& 0.942
& 27.3
& 1.20
& 31.1
& 1.49
& 34.6
& 1.80
& 37.7
& 2.08
& 40.6
& 2.37 \\
KL-UCB
& 15.2
& 9.42
& 17.6
& 12.6
& 19.7
& 15.3
& 21.5
& 18.7
& 23.1
& 21.8
& 24.6
& 24.6 \\
MOSS
& 13.0
& 1.07
& 14.7
& 1.35
& 16.1
& 1.71
& 17.4
& 2.01
& 18.5
& 2.35
& 19.4
& 2.70 \\
Thompson Sampling
& 9.62
& 1.95
& 10.7
& 2.48
& 11.7
& 3.18
& 12.5
& 3.75
& 13.2
& 4.37
& 13.8
& 4.96 \\
BayesUCB
& 9.36
& 13.9
& 10.7
& 18.3
& 11.8
& 22.9
& 12.7
& 27.2
& 13.6
& 32.0
& 14.4
& 36.4 \\
IDS
& \ensuremath{\boxed{7.93}}
& 6{,}806
& \ensuremath{\boxed{8.94}}
& 9{,}360
& \ensuremath{\boxed{9.78}}
& 11{,}634
& \ensuremath{\boxed{10.5}}
& 14{,}254
& \ensuremath{\boxed{11.2}}
& 16{,}783
& \ensuremath{\boxed{11.7}}
& 19{,}274 \\
OGI
& 8.13
& 202
& 9.19
& 278
& 10.1
& 357
& 10.8
& 438
& 11.5
& 521
& 12.2
& 602 \\
IRS.FH
& 8.91
& 3.48
& 9.99
& 4.44
& 10.9
& 5.54
& 11.6
& 6.52
& 12.3
& 7.66
& 13.0
& 8.72 \\
ETC
& 14.9
& 0.269
& 17.5
& 0.317
& 19.8
& 0.391
& 22.0
& 0.463
& 24.1
& 0.561
& 25.8
& 0.641 \\
DETC
& 20.4
& 0.480
& 23.9
& 0.600
& 27.2
& 0.722
& 30.2
& 0.840
& 33.0
& 0.950
& 35.7
& 1.10 \\

\midrule
Oracle
& 6.79
& 0.988
& 7.47
& 1.22
& 8.02
& 1.40
& 8.49
& 1.77
& 8.91
& 1.97
& 9.26
& 2.26 \\
\quad vs. strongest benchmark
& -14.4\%
& -99.9\%
& -16.4\%
& -99.9\%
& -18.0\%
& -99.9\%
& -19.2\%
& -99.9\%
& -20.2\%
& -99.9\%
& -21.1\%
& -99.9\% \\
\bottomrule
\end{tabular*}
\end{table}

\begin{table}[ht]
\centering
\setlength{\fboxsep}{1.5pt}
\caption{Performance comparison for $K=5$ under the scaled Poisson-generated arm-mean distribution. The format is the same as Table~\ref{tab:holistic_uniform_K2}.}
\label{tab:holistic_poisson_K5}
\fontsize{8.0pt}{9.2pt}\selectfont
\setlength{\tabcolsep}{1.8pt}
\renewcommand{\arraystretch}{1.04}
\begin{tabular*}{\linewidth}{@{\extracolsep{\fill}}l rr|rr|rr|rr|rr|rr@{}}
\toprule
& \multicolumn{2}{c}{$T/K=600$}
& \multicolumn{2}{c}{$T/K=800$}
& \multicolumn{2}{c}{$T/K=1{,}000$}
& \multicolumn{2}{c}{$T/K=1{,}200$}
& \multicolumn{2}{c}{$T/K=1{,}400$}
& \multicolumn{2}{c}{$T/K=1{,}600$} \\
\cmidrule(lr){2-3}
\cmidrule(lr){4-5}
\cmidrule(lr){6-7}
\cmidrule(lr){8-9}
\cmidrule(lr){10-11}
\cmidrule(lr){12-13}
Policy & Mean & Time & Mean & Time & Mean & Time & Mean & Time & Mean & Time & Mean & Time \\
\midrule
Horizon-Aware
& \textbf{32.3}
& 13.5
& \textbf{35.6}
& 15.4
& \textbf{38.3}
& 19.8
& \textbf{40.4}
& 22.4
& \textbf{42.6}
& 25.3
& \textbf{44.2}
& 25.9 \\
\quad vs. strongest benchmark
& \textbf{-7.06\%}
& -99.9\%
& \textbf{-7.72\%}
& -99.9\%
& \textbf{-8.40\%}
& -99.9\%
& \textbf{-20.5\%}
& -99.0\%
& \textbf{-20.7\%}
& -99.1\%
& \textbf{-21.5\%}
& -99.2\% \\
\quad vs. Oracle
& +7.43\%
& --
& +7.87\%
& --
& +7.67\%
& --
& +7.26\%
& --
& +7.92\%
& --
& +7.55\%
& -- \\
Fully Adaptive
& 33.6
& 13.7
& 36.6
& 15.7
& 40.6
& 19.4
& 43.1
& 21.6
& 45.0
& 24.8
& 46.7
& 24.9 \\
\quad vs. strongest benchmark
& -3.18\%
& -99.9\%
& -5.14\%
& -99.9\%
& -2.90\%
& -99.9\%
& -15.3\%
& -99.1\%
& -16.2\%
& -99.1\%
& -17.1\%
& -99.2\% \\
\quad vs. Oracle
& +11.9\%
& --
& +10.9\%
& --
& +14.1\%
& --
& +14.2\%
& --
& +14.0\%
& --
& +13.6\%
& -- \\

\midrule
Pure Greedy
& 197
& 2.99
& 262
& 3.89
& 327
& 5.13
& 392
& 5.84
& 457
& 6.88
& 522
& 7.89 \\
$\epsilon$-Greedy (Fixed)
& 73.5
& 3.05
& 89.8
& 4.10
& 105
& 5.14
& 120
& 6.12
& 135
& 7.20
& 149
& 8.20 \\
$\epsilon$-Greedy (Decay)
& 141
& 3.16
& 185
& 4.31
& 228
& 5.42
& 271
& 6.58
& 313
& 7.63
& 355
& 8.37 \\
UCB1
& 133
& 3.17
& 156
& 4.17
& 177
& 5.26
& 194
& 6.27
& 210
& 7.29
& 225
& 8.30 \\
KL-UCB
& 85.1
& 57.0
& 97.2
& 73.3
& 107
& 92.6
& 115
& 112
& 123
& 131
& 129
& 149 \\
MOSS
& 55.5
& 3.69
& 61.7
& 4.90
& 66.7
& 6.25
& 71.1
& 7.49
& 74.8
& 8.57
& 78.1
& 10.0 \\
Thompson Sampling
& 46.7
& 8.61
& 51.4
& 11.3
& 55.2
& 14.2
& 58.3
& 17.0
& 61.1
& 19.8
& 63.5
& 22.5 \\
BayesUCB
& 49.3
& 91.8
& 55.4
& 122
& 60.5
& 152
& 64.8
& 182
& 68.6
& 213
& 72.0
& 243 \\
IDS
& \ensuremath{\boxed{34.8}}
& 42{,}261
& \ensuremath{\boxed{38.6}}
& 55{,}733
& \ensuremath{\boxed{41.8}}
& 70{,}331
& /
& /
& /
& /
& /
& / \\
OGI
& 39.2
& 1{,}060
& 43.8
& 1{,}460
& 47.6
& 1{,}877
& \ensuremath{\boxed{50.8}}
& 2{,}278
& \ensuremath{\boxed{53.7}}
& 2{,}708
& \ensuremath{\boxed{56.3}}
& 3{,}127 \\
IRS.FH
& 43.9
& 17.2
& 48.4
& 22.9
& 52.1
& 28.1
& 55.2
& 33.3
& 57.8
& 38.6
& 60.4
& 44.1 \\
ETC
& 83.9
& 0.609
& 99.4
& 0.790
& 112
& 1.01
& 125
& 1.20
& 137
& 1.38
& 148
& 1.58 \\
DETC
& 120
& 1.30
& 137
& 1.73
& 154
& 1.92
& 170
& 2.25
& 184
& 2.58
& 198
& 2.97 \\

\midrule
Oracle
& 30.1
& 2.84
& 33.0
& 3.83
& 35.6
& 4.72
& 37.7
& 5.58
& 39.5
& 6.73
& 41.1
& 7.49 \\
\quad vs. strongest benchmark
& -13.5\%
& -99.9\%
& -14.5\%
& -99.9\%
& -14.9\%
& -99.9\%
& -25.8\%
& -99.8\%
& -26.5\%
& -99.8\%
& -27.0\%
& -99.8\% \\
\bottomrule
\end{tabular*}
\end{table}

\section{Conclusion}
\label{sec:conclusion}

This paper studies finite-horizon experimentation through a class of regularized greedy policies. We derive analytical expressions for the finite-horizon regret, characterize the probability of suboptimal convergence, and use these results to calibrate the regularization parameters under varying levels of information, ranging from oracle settings to fully adaptive implementations.

The analysis suggests a different perspective on finite-horizon bandit problems. Classical bandit algorithms are designed to achieve asymptotically optimal regret by continuing to explore as the horizon grows. Over finite operational horizons, however, the dominant source of regret is often not insufficient exploration, but premature commitment to a suboptimal arm. The resulting design problem is thus to balance the transient cost of delaying commitment against the probability of suboptimal convergence. Regularization provides a direct mechanism for controlling this trade-off.

The numerical experiments reinforce this characterization. Across a broad range of horizons and numbers of arms, the proposed policies consistently match or outperform state-of-the-art bandit algorithms, including Thompson Sampling, OGI, IDS and KL-UCB.

In general, these results suggest that simple, well-calibrated algorithms can compete with state-of-the-art bandit methods in finite-horizon settings. We hope this work motivates further study of simple algorithmic designs tailored to finite-horizon objectives.
%Several limitations remain. The analysis is confined to two arms with Bernoulli rewards. A natural extension is to more than two arms and to sub-Gaussian or otherwise general reward distributions. The calibration likewise admits refinement. At each level the prior is held just inside the feasibility boundary, because the optimal tilt lies exactly on that boundary and is itself inadmissible. This backoff is slightly suboptimal, so tuning the prior closer to the boundary at each level is a direct route to lower regret.

\section*{Code Availability}
A repository is available at \url{https://osf.io/96udf/overview?view_only=4ecd77023d5147a8bdca4be9e9528454}.

\bibliographystyle{apalike}
\bibliography{references}

\clearpage
\setcounter{page}{1}
\appendix

\begin{center}
{\Large\bfseries The Greedy Advantage in Finite-Horizon Bandits\par}
\vspace{0.5em}
{\large\bfseries Electronic Companion\par}
\end{center}

\vspace{1.2em}

%%%%%%%%%%%%%%%%%%%%%%%%%%%%%%%%%%%%%%%%%%%%%%%%%%%%%%%%%%%%%%%%%%%%%%%%%%
%% Preliminary electronic-companion numbering:
%% (EC.1), (EC.2), ...
%%%%%%%%%%%%%%%%%%%%%%%%%%%%%%%%%%%%%%%%%%%%%%%%%%%%%%%%%%%%%%%%%%%%%%%%%%

\setcounter{equation}{0}
\renewcommand{\theequation}{EC.\arabic{equation}}
\renewcommand{\theHequation}{EC.\arabic{equation}}

Throughout the appendices, arms are indexed without loss of generality so that
\begin{equation}
\label{eq:app_ordered_means}
1-\epsilon_{\mathrm p}
\ge
p_1
\ge
p_2
\ge
\cdots
\ge
p_K
\ge
\epsilon_{\mathrm p},
\qquad
p_1>p_K,
\end{equation}
where \(\epsilon_{\mathrm p}\in(0,1/2)\) is an arbitrarily small but fixed constant. Thus the arm means remain in a fixed compact subset of \((0,1)\), ties are permitted, and \(p_1>p_K\) ensures that at least one arm is suboptimal.

For \(1\le i<j\le K\), define the pairwise gap
\[
\delta_{ij}:=p_i-p_j\ge0.
\]
In particular, when arm \(1\) is compared with arm \(i\), write
\[
\delta_i:=\delta_{1i}=p_1-p_i,
\qquad
i=2,\ldots,K,
\]
and set \(\delta_1:=0\).

The regularization parameters satisfy
\[
\alpha\ge0,
\qquad
\beta\ge0,
\qquad
\Delta_i:=\alpha-p_i\beta\ge0,
\qquad
i=1,\ldots,K.
\]
Under the ordering of arm means, the conditions
\(\Delta_i\ge0\) for all \(i\) are equivalent to
\(\alpha\ge p_1\beta\). The pure-greedy policy is included as the special case
\(\alpha=\beta=0\).

% % Appendix headings follow the reference paper: "Appendix A: Title".
% \titleformat{\section}
%   {\normalfont\large\bfseries}
%   {Appendix \thesection:}
%   {0.5em}
%   {}

%%%%%%%%%%%%%%%%%%%%%%%%%%%%%%%%%%%%%%%%%%%%%%%%%%%%%%%%%%%%%%%%%%%%%%%%%%
%% Switch to appendix-section numbering:
%% (A.1), (A.2), ...; Theorem A.1, Lemma A.2, ...
%%%%%%%%%%%%%%%%%%%%%%%%%%%%%%%%%%%%%%%%%%%%%%%%%%%%%%%%%%%%%%%%%%%%%%%%%%

\numberwithin{equation}{section}
\numberwithin{theorem}{section}

% Hyperref-safe appendix anchors
\renewcommand{\theHsection}{appendix.\Alph{section}}
\renewcommand{\theHsubsection}{appendix.\Alph{section}.\arabic{subsection}}
\renewcommand{\theHsubsubsection}{appendix.\Alph{section}.\arabic{subsection}.\arabic{subsubsection}}
\renewcommand{\theHequation}{appendix.\Alph{section}.\arabic{equation}}

\providecommand{\theHtheorem}{}
\renewcommand{\theHtheorem}{appendix.\Alph{section}.\arabic{theorem}}

\section{Absorption of Greedy Trajectories}
\label{app:absorption}

This section establishes the pathwise absorption structure of the greedy dynamics. We allow both regularized greedy policies and the pure-greedy policy. 
For any fixed parameter configuration, we prove that the process cannot keep switching among two or more arms indefinitely. More precisely, there exists an almost surely finite random time \(T_0\) and an almost surely unique random absorbing arm \(I_{\mathrm{abs}}\in\{1,\ldots,K\}\) such that
\[
% \label{eq:abs_eventual_commitment}
A_t=I_{\mathrm{abs}},
\qquad
\forall t\ge T_0 .
\]
This conclusion does not require all arm means to be distinct. Even when several arms have identical success probabilities, the handover process almost surely terminates, and the trajectory commits to one of the tied arms. The proof proceeds by ruling out pairwise entanglement: for any pair of arms \(i\neq j\), the event that both arms are sampled infinitely often has probability zero. If \(p_i\neq p_j\), this follows from concentration of the two scores around different limits. If \(p_i=p_j\), the argument uses repeated negative fluctuations and the monotonicity of handover thresholds to rule out perpetual cycling.

Consequently, the absorbing events
\begin{equation}
\label{eq:abs_events_def}
Q_i:=\{I_{\mathrm{abs}}=i\},
\qquad
i=1,\ldots,K,
\end{equation}
form an almost-sure partition of the sample space:
\[
% \label{eq:abs_partition}
\sum_{i=1}^K \mathbb P(Q_i)=1.
\]
For each arm \(i\), we also define the complementary absorbing event
\(
Q_i^c
:=
\bigcup_{j\neq i}Q_j
\). 
Equivalently, up to a null set,
\(
Q_i^c
=
\{I_{\mathrm{abs}}\neq i\}
\). 
This absorption result is the starting point for the score-minimum reduction in the next section: once the eventual winner is known to be a single absorbing arm, comparisons among the arms' potential score minima, equivalently their centered score-minimum variables, can be used to sandwich the absorbing event \(Q_i\) and its complement \(Q_i^c\).

\subsection{Topological Collapse to Absorbing States}
\label{app:topological_collapse}

We prove that the greedy trajectory almost surely commits to a single arm. This conclusion applies both to regularized greedy policies and to the pure-greedy policy. The proof does not identify which arm is selected; it only rules out the possibility that two or more arms are sampled infinitely often.

After \(n\ge1\) pulls of arm \(i\), define its score by
\[
% \label{eq:tc_score_def}
\widehat p_i(n)
:=
\frac{S_i(n)+\alpha}{n+\beta}.
\]
Recall that the algorithm first pulls each arm once. Thus all scores below are
well-defined on their own clocks \(n\ge1\). When \(\alpha=\beta=0\), this reduces to
\(
\widehat p_i(n)
=
{S_i(n)}/{n}
\).
At each decision time, the policy selects an arm with maximal current score, breaking ties uniformly at random.

For each arm \(i\), define the event
\(
E_i
:=
\{N_i(t)\to\infty\}
\). 
For each pair \(i\neq j\), define the pairwise entanglement event
\(
E_{ij}
:=
E_i\cap E_j
\). 
The event that at least two arms are sampled infinitely often is
\begin{equation}
\label{eq:tc_multi_entangle_union}
E_{\ge2}
:=
\left\{
\sum_{i=1}^K\mathbf 1_{E_i}\ge2
\right\}
=
\bigcup_{1\le i<j\le K}E_{ij}.
\end{equation}

\begin{lemma}[Topological collapse to absorbing states]
\label{lem:topological_collapse_karm}
For every fixed parameter configuration, there exist an almost surely finite random time \(T_0\) and an almost surely unique random arm \(I_{\mathrm{abs}}\in\{1,\ldots,K\}\) such that
\begin{equation}
\label{eq:tc_absorption_statement}
A_t=I_{\mathrm{abs}},
\qquad
\forall t\ge T_0 .
\end{equation}
Consequently, the absorbing events form an almost-sure partition:
\begin{equation}
\label{eq:tc_absorbing_partition}
\sum_{i=1}^{K}\mathbb P(Q_i)=1.
\end{equation}
\end{lemma}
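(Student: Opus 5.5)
The plan is to prove that two arms are pulled infinitely often only with probability zero, i.e. $\mathbb P(E_{ij})=0$ for every pair $i\neq j$, and then deduce the lemma. The deduction is short. Since $K$ is finite and infinitely many pulls are made, at least one arm satisfies $N_i(t)\to\infty$. By the union in \eqref{eq:tc_multi_entangle_union}, almost surely exactly one arm, which we call $I_{\mathrm{abs}}$, is pulled infinitely often. Every other arm is pulled only finitely often, so the last time any of them is pulled is finite; we take $T_0$ to be one plus that time. This gives \eqref{eq:tc_absorption_statement}, and \eqref{eq:tc_absorbing_partition} follows because the events $Q_i=\{I_{\mathrm{abs}}=i\}$ are disjoint and exhaust a set of probability one.

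Throughout we use the stack-of-rewards construction. Each arm $i$ carries an independent i.i.d. Bernoulli$(p_i)$ stream, and its score $\widehat p_i(n)=(S_i(n)+\alpha)/(n+\beta)$ depends only on its own stream and its own clock $n$. The key observation is this: if arm $j$ is pulled at time $t$, then $\widehat p_j(N_j(t-1))\ge \widehat p_i(N_i(t-1))$ for every $i$. On $E_i$, the strong law gives $\widehat p_i(n)\to p_i$, because the regularization terms $\alpha/n$ and $\beta/n$ vanish.

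\emph{Case $p_i\neq p_j$,} say $p_i>p_j$. On $E_{ij}$, arm $j$ is pulled at infinitely many times $t_k$. At those times both $N_i(t_k-1)\to\infty$ and $N_j(t_k-1)\to\infty$. Passing to the limit in the greedy inequality $\widehat p_j(N_j(t_k-1))\ge\widehat p_i(N_i(t_k-1))$ gives $p_j\ge p_i$, a contradiction. Hence $\mathbb P(E_{ij})=0$, outside the null set where the strong law fails.

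\emph{Case $p_i=p_j=:p$.} This is the main obstacle, since the limits coincide and concentration alone cannot separate the arms. The plan has two parts.

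First, a recurrence fact. The centered quantity $S_i(n)+\alpha-p(n+\beta)=(S_i(n)-pn)+\Delta_i$ is a nondegenerate mean-zero random walk shifted by a constant. It is recurrent, so it drops below $0$ infinitely often. Thus on $E_i$ the score of arm $i$ is strictly below $p$ for infinitely many clock values. The same reasoning shows it is strictly above $p$ infinitely often.

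Second, monotone handover thresholds. Each time the lead passes away from an arm, that arm's score freezes at its value on leaving. Consider the two tied arms after all arms with different means have stopped competing; by the first case only tied arms remain pulled infinitely often. An arm is re-entered only when the active arm's score falls below the re-entered arm's frozen score. Following the successive frozen levels, I aim to show that the levels at which handovers occur form a nonincreasing sequence. On $E_{ij}$ this sequence must converge to $p$, since both clocks diverge and both scores tend to $p$. A nonincreasing sequence with limit $p$ stays at or above $p$.

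The contradiction then comes from the first part. Infinitely often the active arm's score is strictly below $p$ while some other arm that is still pulled infinitely often has frozen score $\ge p$. At such a moment the policy must already have switched. The switch produces a new frozen level strictly below $p$, and this violates the monotonicity just established.

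I expect the difficulties to lie in two places. One is making the threshold monotonicity rigorous with more than two tied arms, uniform tie-breaking, and arms that drop out after finitely many pulls. For this I would first restrict to the almost surely finite time after which only arms pulled infinitely often remain. The other is handling the equality and tie events cleanly. If the direct monotonicity argument proves fragile, my fallback is a Borel–Cantelli argument. After each handover, with conditional probability bounded below uniformly, the newly active arm's walk drops below the frozen level and never recovers above the other arm's frozen score. I would justify this using the recurrence fact together with independence of the streams.
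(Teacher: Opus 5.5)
Your proposal is correct and follows essentially the same route as the paper. Both arguments rule out each pairwise entanglement event $E_{ij}$, use own-clock convergence of the scores when $p_i\neq p_j$, and, for tied means, combine three facts: a nonincreasing sequence of handover levels (the paper's $V_m$, the frozen score of the newly selected arm), its forced convergence to $p$, and infinitely many dips below $p$. The only substitution is that you get the dips from recurrence of the centered walk, where the paper uses the law of the iterated logarithm.

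The one step to state explicitly is the final contradiction, which has two cases. Since the monotone quantity is the re-entry level, you must note that an arm abandoned with frozen score below $p$ is re-entered later (it is infinitely sampled), giving a handover level below $p$. In the complementary case, where at the dip every other infinitely sampled arm is already below $p$, the next handover level is below $p$ directly.
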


\begin{proof}\leavevmode
We prove
\[
% \label{eq:tc_pair_zero_target}
\mathbb P(E_{ij})=0,
\qquad
1\le i<j\le K.
\]
Then $\mathbb{P}(E_{\geq 2})=0$ follows from the finite union bound applied to \eqref{eq:tc_multi_entangle_union}.

\textit{Step 1. Almost-sure score concentration on each arm's own clock.}

We first prove an arm-wise own-clock convergence statement. This step does not
assert that any particular arm is sampled infinitely often along the actual
trajectory. Instead, for each arm \(i\), we consider its potential score sequence
indexed by its own pull count \(n\). Later, on the event \(E_i=\{N_i(t)\to\infty\}\),
this own-clock convergence can be evaluated along the random subsequence
\(n=N_i(t)\). If \(E_i\) does not occur, no asymptotic score statement for arm
\(i\) along calendar time is needed.

For each arm \(i\), decompose the regularized score around its true mean:
\[
\widehat p_i(n)-p_i
=
\frac{S_i(n)+\alpha}{n+\beta}-p_i
=
\frac{S_i(n)-p_i n}{n+\beta}
+
\frac{\alpha-p_i\beta}{n+\beta}.
\]
Since \(\beta\ge0\),
\(
{1}/{(n+\beta)}
\le
{1}/{n}
\). 
Hence
\begin{equation}
\left|\widehat p_i(n)-p_i\right|
\le
\left|
\frac{S_i(n)-p_i n}{n+\beta}
\right|
+
\frac{|\alpha-p_i\beta|}{n+\beta}
\le
\left|
\frac{S_i(n)}{n}-p_i
\right|
+
\frac{|\alpha-p_i\beta|}{n+\beta}.
\label{eq:tc_score_deviation_bound}
\end{equation}

Fix an arbitrary \(\varepsilon>0\). Since \(\alpha\), \(\beta\), and \(p_i\) are fixed,
\(
% \label{eq:tc_bias_vanishes}
{|\alpha-p_i\beta|}/{(n+\beta)}
\longrightarrow 0
\). 
Therefore there exists \(n_{i,\varepsilon}<\infty\) such that, for all \(n\ge n_{i,\varepsilon}\),
\begin{equation}
\label{eq:tc_bias_eps_half}
\frac{|\alpha-p_i\beta|}{n+\beta}
\le
\frac{\varepsilon}{2}.
\end{equation}
Combining \eqref{eq:tc_score_deviation_bound} and \eqref{eq:tc_bias_eps_half}, for all \(n\ge n_{i,\varepsilon}\),
\[\begin{aligned}
\mathbb P\left(
\left|\widehat p_i(n)-p_i\right|>\varepsilon
\right)
&\le
\mathbb P\left(
\left|
\frac{S_i(n)}{n}-p_i
\right|
>
\frac{\varepsilon}{2}
\right).
% \label{eq:tc_reduce_to_empirical_mean}
\end{aligned}\]
By Hoeffding's inequality,
\[
% \label{eq:tc_hoeffding_empirical_mean}
\mathbb P\left(
\left|
\frac{S_i(n)}{n}-p_i
\right|
>
\frac{\varepsilon}{2}
\right)
\le
2\exp\left\{-\frac{n\varepsilon^2}{2}\right\}.
\]
The right-hand side is summable. Hence
\begin{equation}
\label{eq:tc_score_deviation_summable}
\sum_{n=1}^{\infty}
\mathbb P\left(
\left|\widehat p_i(n)-p_i\right|>\varepsilon
\right)
<\infty .
\end{equation}
Define the deviation event
\(
% \label{eq:tc_Ain_eps_def}
A_{i,n}^{\varepsilon}
:=
\left\{
\left|\widehat p_i(n)-p_i\right|>\varepsilon
\right\}
\). 
The event that this deviation occurs infinitely often is the limsup event
\(
% \label{eq:tc_limsup_deviation_event}
\limsup_{n\to\infty}A_{i,n}^{\varepsilon}
=
\bigcap_{N=1}^{\infty}
\bigcup_{n\ge N}
A_{i,n}^{\varepsilon}
\). 
By \eqref{eq:tc_score_deviation_summable} and the Borel--Cantelli lemma,
\(
% \label{eq:tc_borel_cantelli_limsup}
\mathbb P\left(
\limsup_{n\to\infty}A_{i,n}^{\varepsilon}
\right)
=
0
\). 
Equivalently,
\begin{equation}
\label{eq:tc_eventual_eps_control}
\mathbb P\left(
\exists N_{i,\varepsilon}<\infty
\text{ such that }
\left|\widehat p_i(n)-p_i\right|\le\varepsilon
\text{ for all }n\ge N_{i,\varepsilon}
\right)
=
1.
\end{equation}
Since \(\varepsilon>0\) is arbitrary, applying
\eqref{eq:tc_eventual_eps_control} to the countable sequence
\(\varepsilon_m=1/m\) gives, for each fixed arm \(i\),
\[
\widehat p_i(n)\to p_i
\qquad
\text{almost surely as }n\to\infty .
\]
Because \(K\) is finite, there exists a probability-one event
\(\Omega_{\mathrm c}\) on which
\begin{equation}
\label{eq:tc_score_convergence}
\widehat p_i(n)\to p_i
\qquad
\text{as }n\to\infty
\end{equation}
holds simultaneously for all arms \(i=1,\ldots,K\).

This is an own-clock statement. Consequently, on \(\Omega_{\mathrm c}\cap E_i\),
where \(N_i(t)\to\infty\), we may evaluate
\eqref{eq:tc_score_convergence} along the random own-clock sequence
\(n=N_i(t)\) and obtain
\begin{equation}
\label{eq:tc_score_convergence_on_Ei}
\widehat p_i(N_i(t))\to p_i
\qquad
\text{as }t\to\infty .
\end{equation}
Thus, on \(\Omega_{\mathrm c}\cap E_{ij}\), both
\(\widehat p_i(N_i(t))\to p_i\) and
\(\widehat p_j(N_j(t))\to p_j\). This is the only way Step~1 is used below.
The fact that at least one arm must be sampled infinitely often follows later
from the identity \(\sum_{i=1}^{K}N_i(t)=t\).

\textit{Step 2. Pairwise impossibility when \(p_i\neq p_j\).}

Consider any pair \(i<j\) with \(p_i>p_j\). On \(E_{ij}\cap\Omega_{\mathrm c}\), both \(N_i(t)\) and \(N_j(t)\) diverge. Let
\(
% \label{eq:tc_pair_eta_def}
\gamma_{ij}
:=
{(p_i-p_j)}/{3}
>0
\). 
By \eqref{eq:tc_score_convergence}, there exists a finite random time \(T_{ij}\) such that, for all \(t\ge T_{ij}\),
\[
\widehat p_i(N_i(t))
>
p_i-\gamma_{ij},
\quad
% \label{eq:tc_pair_high_score}
\widehat p_j(N_j(t))
<
p_j+\gamma_{ij} .
% \label{eq:tc_pair_low_score}
\]
Since
\(
% \label{eq:tc_pair_strict_separation}
p_i-\gamma_{ij}
>
p_j+\gamma_{ij}
\), 
we obtain
\[
% \label{eq:tc_pair_score_dominance}
\widehat p_i(N_i(t))
>
\widehat p_j(N_j(t)),
\qquad
\forall t\ge T_{ij}.
\]
Thus arm \(j\) cannot be selected after \(T_{ij}\), because arm \(i\) has a strictly larger score. This contradicts \(E_j\). Therefore,
\[
% \label{eq:tc_pair_unequal_impossible}
\mathbb P(E_{ij})=0
\qquad
\text{whenever }p_i\neq p_j .
\]

\textit{Step 3. Pairwise impossibility when \(p_i=p_j\).}

The intuition is simple. Suppose the policy keeps switching among arms with the
same mean \(p\). At each switch, the newly selected arm must have a score at
least as large as the currently active arm. But while an arm is not selected, its
score does not change. Hence the score level at which switches occur cannot
increase over time; the sequence of handover score levels is nonincreasing.

At the same time, each tied arm is sampled from a Bernoulli stream with mean
\(p\), so its score will eventually fluctuate below \(p\). If switching continued
forever, these downward fluctuations would eventually push the handover score
level below \(p\). Once this happens, the handover level cannot later rise back
above \(p\), because it is nonincreasing. But if all tied arms were sampled
infinitely often, their scores would converge back to \(p\), so the handover
score level would also have to approach \(p\). This contradiction rules out
perpetual switching among equal-mean arms.

Consider any pair \(i<j\) with \(p_i=p_j=:p\). We show that \(E_{ij}\cap\Omega_{\mathrm c}\) is impossible.

Assume, toward a contradiction, that
\(
% \label{eq:tc_pair_equal_entangle_assumption}
\omega\in E_{ij}\cap\Omega_{\mathrm c}
\). 
Let
\(
% \label{eq:tc_J_of_path_def}
\mathcal K_{\infty}
:=
\{k:N_k(t,\omega)\to\infty\}
\)
be the set of arms sampled infinitely often on this path. Since \(i,j\in\mathcal K_{\infty}\), we have
\(
% \label{eq:tc_J_size_at_least_two}
|\mathcal K_{\infty}|\ge2
\). 
By Step 2, no two arms in \(\mathcal K_{\infty}\) can have different means. Hence
\begin{equation}
\label{eq:tc_all_J_same_mean}
p_k=p,
\qquad
\forall k\in\mathcal K_{\infty} .
\end{equation}

After the last pull of every arm outside \(\mathcal K_{\infty}\), the selected
arm always belongs to \(\mathcal K_{\infty}\). Since
\(|\mathcal K_{\infty}|\ge2\), the selected arm must change infinitely many
times among arms in \(\mathcal K_{\infty}\). Let
\(
h_1<h_2<h_3<\cdots
\) 
be the successive decision times at which a new arm in
\(\mathcal K_{\infty}\) is selected after a different arm in
\(\mathcal K_{\infty}\) was selected previously. Let
\(
B_m:=A_{h_m}
\)
be the newly selected arm at the \(m\)-th handover.

The relevant score at a handover is the score immediately before the new arm is
pulled, because this is the score used by the greedy rule to select the arm.
Define
\[
V_m
:=
\widehat p_{B_m}\!\left(N_{B_m}(h_m-1)\right).
\]
Then, just before the decision at time \(h_m\), arm \(B_m\) is a maximizer among
the arms in \(\mathcal K_{\infty}\), and hence
\begin{equation}
\label{eq:tc_record_score_maximal}
V_m
=
\max_{k\in\mathcal K_{\infty}}
\widehat p_k\!\left(N_k(h_m-1)\right).
\end{equation}

During the interval from the decision at \(h_m\) up to just before the decision at \(h_{m+1}\), every arm other than \(B_m\) is passive. In particular, the next handover arm \(B_{m+1}\) is passive throughout this interval. Therefore, \(
N_{B_{m+1}}(h_{m+1}-1)
=
N_{B_{m+1}}(h_m-1)
\), 
and so
\[\begin{aligned}
V_{m+1}
&=
\widehat p_{B_{m+1}}\!\left(N_{B_{m+1}}(h_{m+1}-1)\right)
\\
&=
\widehat p_{B_{m+1}}\!\left(N_{B_{m+1}}(h_m-1)\right)
\\
&\le
\max_{k\in\mathcal K_{\infty}}
\widehat p_k\!\left(N_k(h_m-1)\right)
\\
&=
V_m .
\end{aligned}\]
Thus
\begin{equation}
\label{eq:tc_R_nonincreasing}
V_1\ge V_2\ge V_3\ge\cdots .
\end{equation}

For every \(k\in\mathcal K_{\infty}\), the law of the iterated logarithm gives infinitely many negative fluctuations:
\begin{equation}
\label{eq:tc_lil_negative}
S_k(n)-pn
<
-\frac12
\sqrt{2p(1-p)n\log\log n}
\qquad
\text{for infinitely many }n .
\end{equation}
Using
\[\begin{aligned}
\widehat p_k(n)-p
&=
\frac{S_k(n)-pn+\alpha-p\beta}{n+\beta},
% \label{eq:tc_equal_mean_score_centering}
\end{aligned}\]
and the fact that \(\alpha-p\beta\) is fixed, \eqref{eq:tc_lil_negative} implies
\begin{equation}
\label{eq:tc_each_arm_below_p_inf_often}
\widehat p_k(n)<p
\qquad
\text{for infinitely many }n .
\end{equation}

We claim that \(V_m<p\) for some finite \(m\). Suppose otherwise:
\begin{equation}
\label{eq:tc_R_never_below_p_assumption}
V_m\ge p,
\qquad
\forall m .
\end{equation}
For each arm \(k\in\mathcal K_{\infty}\), define its first below-\(p\) pull count by
\(
% \label{eq:tc_first_below_p_pull_count}
\nu_k
:=
\inf\left\{
n\ge1:
\widehat p_k(n)<p
\right\}
\). 
By \eqref{eq:tc_each_arm_below_p_inf_often},
\[
% \label{eq:tc_first_below_p_finite}
\nu_k<\infty,
\qquad
k\in\mathcal K_{\infty} .
\]

Consider the first time at which arm \(k\) reaches pull count \(\nu_k\). Immediately after that pull,
\(
% \label{eq:tc_arm_k_frozen_below_p}
\widehat p_k(\nu_k)<p
\). 
If at that time every other arm in \(\mathcal K_{\infty}\) also has score strictly below \(p\), then the maximal score among arms in \(\mathcal K_{\infty}\) is already below \(p\), and therefore \(V_m<p\) at the next handover, contradicting \eqref{eq:tc_R_never_below_p_assumption}. Otherwise, at least one arm in \(\mathcal K_{\infty}\) has score at least \(p\). In that case arm \(k\), whose score is now strictly below \(p\), cannot be selected again as long as \eqref{eq:tc_R_never_below_p_assumption} holds.

Thus, under \eqref{eq:tc_R_never_below_p_assumption}, once arm \(k\) reaches pull count \(\nu_k\), it can never be selected again. Indeed, after that pull its score is strictly below \(p\). If it were selected again at a later handover, then all competing arms in \(\mathcal K_{\infty}\) would have scores no larger than this strictly below-\(p\) frozen score, and hence the maximal handover score would be strictly below \(p\), contradicting \eqref{eq:tc_R_never_below_p_assumption}. Therefore arm \(k\) would be selected only finitely many times. Since \(k\in\mathcal K_{\infty}\) is arbitrary, this already contradicts the definition of \(\mathcal K_{\infty}\), under which every arm in \(\mathcal K_{\infty}\) is sampled infinitely often. Hence \eqref{eq:tc_R_never_below_p_assumption} is impossible.
Therefore, there exists \(m_0<\infty\) such that
\(
% \label{eq:tc_R_below_p_once}
V_{m_0}<p
\). 

Let
\(
% \label{eq:tc_epsilon_record_def}
\varepsilon
:=
p-V_{m_0}
>0
\). 
By \eqref{eq:tc_R_nonincreasing},
\begin{equation}
\label{eq:tc_R_below_p_forever}
V_m\le p-\varepsilon,
\qquad
\forall m\ge m_0 .
\end{equation}

On the other hand, since every arm in \(\mathcal K_{\infty}\) is sampled infinitely often and \(\mathcal K_{\infty}\) is finite,
\[
\min_{k\in\mathcal K_{\infty}}N_k(h_m-1)\to\infty .
\]
By \eqref{eq:tc_score_convergence} and \eqref{eq:tc_all_J_same_mean},
\[
\max_{k\in\mathcal K_{\infty}}
\left|
\widehat p_k\!\left(N_k(h_m-1)\right)-p
\right|
\to0 .
\]
Using \eqref{eq:tc_record_score_maximal}, we get
\(
% \label{eq:tc_R_converges_to_p}
V_m\to p
\). 
This contradicts \eqref{eq:tc_R_below_p_forever}. Therefore, 
\(
% \label{eq:tc_pair_equal_impossible}
\mathbb P(E_{ij})=0
\text{ whenever }p_i=p_j
\).

\textit{Step 4. Finite union over pairs and construction of the absorbing arm.}

Combining the unequal-mean and equal-mean cases, we have
\[
% \label{eq:tc_all_pairs_zero}
\mathbb P(E_{ij})=0,
\qquad
1\le i<j\le K .
\]
By \eqref{eq:tc_multi_entangle_union},
\begin{equation}
\mathbb P(E_{\ge2})
=
\mathbb P\left(
\bigcup_{1\le i<j\le K}E_{ij}
\right)
\le
\sum_{1\le i<j\le K}
\mathbb P(E_{ij})
=
0 .
\label{eq:tc_union_bound_zero}
\end{equation}
On the other hand, at least one arm must be sampled infinitely often. Indeed,
\(
% \label{eq:tc_total_pull_identity}
\sum_{i=1}^{K}N_i(t)=t,\, t\ge1
\), 
and therefore it is impossible that all \(N_i(t)\) remain bounded. Hence
\begin{equation}
\label{eq:tc_at_least_one_infinite}
\mathbb P\left(
\bigcup_{i=1}^{K}E_i
\right)
=1.
\end{equation}
Combining \eqref{eq:tc_union_bound_zero} and \eqref{eq:tc_at_least_one_infinite}, we obtain
\begin{equation}
\label{eq:tc_exactly_one_infinite}
\mathbb P\left(
\sum_{i=1}^{K}\mathbf 1_{E_i}=1
\right)
=1.
\end{equation}

On the probability-one event in \eqref{eq:tc_exactly_one_infinite}, let \(I_{\mathrm{abs}}\) be the unique arm satisfying \(E_{I_{\mathrm{abs}}}\). Every arm \(j\neq I_{\mathrm{abs}}\) has a finite last pull time. Define
\(
% \label{eq:tc_absorption_time_def}
T_0
:=
1+
\max_{j\neq I_{\mathrm{abs}}}
\sup\{t\ge1:A_t=j\}
\), 
with the convention that the supremum over an empty set is \(0\). Then \(T_0<\infty\) almost surely and
\(
% \label{eq:tc_absorption_after_T0}
A_t=I_{\mathrm{abs}},\, \forall t\ge T_0
\). 
This proves \eqref{eq:tc_absorption_statement}.

Finally, by~\eqref{eq:abs_events_def},
\(
Q_i:=\{I_{\mathrm{abs}}=i\},\,
i=1,\ldots,K
\). 
Since \(I_{\mathrm{abs}}\) is almost surely well-defined and unique, the events \(Q_1,\ldots,Q_K\) are disjoint up to null sets and exhaust the sample space up to a null set. Therefore
\[
% \label{eq:tc_partition_conclusion}
\sum_{i=1}^{K}\mathbb P(Q_i)=1.
\]
This proves \eqref{eq:tc_absorbing_partition}.

\end{proof}

\subsection{Regret Decomposition and Core Quantities}
\label{app:regret_decomposition_core_quantities}

We next translate the absorbing-state partition into a finite-horizon regret
decomposition. The purpose of this subsection is not yet to evaluate the absorbing
probabilities, but to identify the finite-horizon quantities that must be controlled in
the rest of the appendix.

For each arm \(i\), let
\[
% \label{eq:rd_time_dependent_count}
N_i(T)
:=
\sum_{t=1}^{T}\mathbf 1\{A_t=i\}
\]
be the number of pulls of arm \(i\) up to horizon \(T\). We also define the terminal
pull count
\[
% \label{eq:rd_terminal_count}
N_i
:=
\lim_{T\to\infty}N_i(T)
\in
\mathbb N\cup\{\infty\}.
\]
The distinction is important: \(N_i(T)\) is horizon-dependent, while \(N_i\) is the
total number of pulls over the infinite trajectory. Recall that \(Q_i\) denotes the
event that the process eventually absorbs into arm \(i\). On \(Q_i\), arm \(i\) is
pulled forever, while every arm \(j\neq i\) is eventually abandoned. Hence,
\[
N_j<\infty
\qquad
\text{on }Q_i,\quad j\neq i.
\]
For finite-horizon regret, the relevant abandoned-arm count is the capped terminal
count. For every \(T<\infty\),
\begin{equation}
\label{eq:rd_count_capped_domination}
N_i(T)
\le
\min\{T,N_i\},
\qquad
i=1,\ldots,K .
\end{equation}

Since arms are indexed so that \(p_1\ge p_2\ge\cdots\ge p_K\), the finite-horizon
regret is
\begin{equation}
R(T)
=
\mathbb E\left[
\sum_{t=1}^{T}(p_1-p_{A_t})
\right]
=
\sum_{i=1}^{K}\delta_i\,\mathbb E[N_i(T)]
=
\sum_{\substack{i=2\\p_i<p_1}}^{K}\delta_i\,\mathbb E[N_i(T)].
\label{eq:rd_regret_def}
\end{equation}

\begin{proposition}[Absorption-based regret skeleton]
\label{prop:rd_regret_skeleton}
For every finite horizon \(T\),
\begin{equation}
R(T)
\ge
T\sum_{\substack{i=2\\p_i<p_1}}^{K}\delta_i\,\mathbb P(Q_i)
-
\sum_{i=1}^{K}\mathbb P(Q_i)
\sum_{\substack{j\neq i\\p_j>p_i}}
(p_j-p_i)\,
\mathbb E[\min\{T,N_j\}\mid Q_i],
\label{eq:rd_regret_lower_skeleton}
\end{equation}
and
\begin{equation}
R(T)
\le
T\sum_{\substack{i=2\\p_i<p_1}}^{K}\delta_i\,\mathbb P(Q_i)
+
\sum_{i=1}^{K}\mathbb P(Q_i)
\sum_{\substack{j\neq i\\p_j<p_i}}
(p_i-p_j)\,
\mathbb E[\min\{T,N_j\}\mid Q_i].
\label{eq:rd_regret_upper_skeleton}
\end{equation}
Terms conditioned on events of probability zero are interpreted as zero
contributions.
\end{proposition}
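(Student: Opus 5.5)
The plan is to condition on the almost-sure absorbing partition of Lemma~\ref{lem:topological_collapse_karm} and decompose the instantaneous regret pathwise around the mean of the absorbing arm. Since $Q_1,\ldots,Q_K$ are disjoint up to null sets and $\sum_i\mathbb P(Q_i)=1$, I will write
\[
R(T)=\sum_{i=1}^{K}\mathbb P(Q_i)\,\mathbb E\!\left[\sum_{t=1}^{T}(p_1-p_{A_t})\,\middle|\,Q_i\right],
\]
dropping indices with $\mathbb P(Q_i)=0$ in line with the stated convention. Every conditional expectation here is finite, because the integrand is bounded by $T$.

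The key step is a deterministic identity on each sample path. Using $\sum_{j=1}^K N_j(T)=T$, I write $p_1-p_{A_t}=(p_1-p_i)+(p_i-p_{A_t})$, which gives
\[
\sum_{t=1}^{T}(p_1-p_{A_t})
=T\delta_i+\sum_{j\neq i}(p_i-p_j)\,N_j(T).
\]
This identity holds on every path, so it holds in particular on $Q_i$. Taking conditional expectations and summing against $\mathbb P(Q_i)$ turns the first term into $T\sum_i\delta_i\mathbb P(Q_i)$. Indices with $p_i=p_1$ have $\delta_i=0$, so this sum runs only over $i\ge2$ with $p_i<p_1$, which is exactly the leading term in both \eqref{eq:rd_regret_lower_skeleton} and \eqref{eq:rd_regret_upper_skeleton}.

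To handle the remaining sum, I split the indices $j\neq i$ according to the sign of $p_i-p_j$.
\begin{itemize}
\item Terms with $p_j=p_i$ vanish.
\item Terms with $p_j<p_i$ are nonnegative. I bound them above using $0\le N_j(T)\le\min\{T,N_j\}$ from \eqref{eq:rd_count_capped_domination}, and below by $0$.
\item Terms with $p_j>p_i$ are nonpositive. I bound them above by $0$, and below by $-(p_j-p_i)\min\{T,N_j\}$, again using \eqref{eq:rd_count_capped_domination}.
\end{itemize}
Taking conditional expectations given $Q_i$ and summing over $i$ yields the upper bound \eqref{eq:rd_regret_upper_skeleton} and the lower bound \eqref{eq:rd_regret_lower_skeleton}. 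The quantities $\mathbb E[\min\{T,N_j\}\mid Q_i]$ are well defined even though $N_j$ may be infinite off $Q_i$, because the cap makes them bounded by $T$.

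I expect no serious obstacle. The only care needed is bookkeeping:
\begin{itemize}
\item making sure the null set on which absorption fails contributes nothing, which follows from the almost-sure partition;
\item interpreting terms conditioned on events of probability zero as zero;
\item noting that the sign split is what makes the lower bound keep only pairs with $p_j>p_i$ and the upper bound keep only pairs with $p_j<p_i$.
\end{itemize}
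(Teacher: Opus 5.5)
Your proposal is correct and follows essentially the same route as the paper. The paper also conditions on the almost-sure absorbing partition, recenters the regret around $\delta_i$ using $\sum_j N_j(T)=T$, splits by the sign of $p_i-p_j$, and applies $N_j(T)\le\min\{T,N_j\}$; the only cosmetic difference is that you write the recentering identity pathwise before taking expectations, whereas the paper applies it directly to $\mathbb E[N_j(T)\mid Q_i]$.
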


\begin{proof}\leavevmode
Using the absorbing partition from
Lemma~\ref{lem:topological_collapse_karm},
\begin{equation}
\mathbb E[N_j(T)]
=
\sum_{i=1}^{K}
\mathbb P(Q_i)\,
\mathbb E[N_j(T)\mid Q_i].
\label{eq:rd_count_partition}
\end{equation}
Substituting \eqref{eq:rd_count_partition} into
\eqref{eq:rd_regret_def}, and recalling that \(\delta_i=0\) when \(p_i=p_1\), gives
\[
R(T)
=
\sum_{i=1}^{K}
\mathbb P(Q_i)
\sum_{j=1}^{K}
\delta_j\,
\mathbb E[N_j(T)\mid Q_i].
\]

Fix \(i\in[K]\). Since exactly one arm is pulled in every period,
\[
\sum_{j=1}^{K}N_j(T)=T,
\]
and hence
\[
\mathbb E[N_i(T)\mid Q_i]
=
T-
\sum_{j\neq i}
\mathbb E[N_j(T)\mid Q_i].
\]
It follows that
\begin{equation}
\sum_{j=1}^{K}
\delta_j\,
\mathbb E[N_j(T)\mid Q_i]
=
T\delta_i
+
\sum_{j\neq i}
(\delta_j-\delta_i)\,
\mathbb E[N_j(T)\mid Q_i].
\label{eq:rd_centered_branch_decomposition}
\end{equation}
Because
\[
\delta_j-\delta_i
=
(p_1-p_j)-(p_1-p_i)
=
p_i-p_j,
\]
substituting this identity into
\eqref{eq:rd_centered_branch_decomposition}, and then summing over the
absorbing partition, yields
\begin{equation}
R(T)
=
T\sum_{\substack{i=2\\p_i<p_1}}^{K}\delta_i\,\mathbb P(Q_i)
+
\sum_{i=1}^{K}\mathbb P(Q_i)
\sum_{j\neq i}
(p_i-p_j)\,
\mathbb E[N_j(T)\mid Q_i].
\label{eq:rd_regret_centered_identity}
\end{equation}

For each \(i\in[K]\), the centered correction admits the sign decomposition
\begin{equation}
\sum_{j\neq i}
(p_i-p_j)\,
\mathbb E[N_j(T)\mid Q_i]
=
\sum_{\substack{j\neq i\\p_j<p_i}}
(p_i-p_j)\,
\mathbb E[N_j(T)\mid Q_i]
-
\sum_{\substack{j\neq i\\p_j>p_i}}
(p_j-p_i)\,
\mathbb E[N_j(T)\mid Q_i].
\label{eq:rd_centered_sign_split}
\end{equation}

For the lower bound, discard the nonnegative first sum on the right-hand side
of \eqref{eq:rd_centered_sign_split}. Moreover, for \(j\neq i\), the
capped-count domination \eqref{eq:rd_count_capped_domination} gives
\[
N_j(T)\le\min\{T,N_j\}
\qquad\text{on }Q_i.
\]
Therefore,
\[
\sum_{j\neq i}
(p_i-p_j)\,
\mathbb E[N_j(T)\mid Q_i]
\ge
-
\sum_{\substack{j\neq i\\p_j>p_i}}
(p_j-p_i)\,
\mathbb E[\min\{T,N_j\}\mid Q_i].
\]
Substituting this inequality into
\eqref{eq:rd_regret_centered_identity} proves
\eqref{eq:rd_regret_lower_skeleton}.

For the upper bound, discard instead the nonpositive second sum on the
right-hand side of \eqref{eq:rd_centered_sign_split}. Using again
\eqref{eq:rd_count_capped_domination},
\[
\sum_{j\neq i}
(p_i-p_j)\,
\mathbb E[N_j(T)\mid Q_i]
\le
\sum_{\substack{j\neq i\\p_j<p_i}}
(p_i-p_j)\,
\mathbb E[\min\{T,N_j\}\mid Q_i].
\]
Substituting this inequality into
\eqref{eq:rd_regret_centered_identity} proves
\eqref{eq:rd_regret_upper_skeleton}.

\end{proof}

The decomposition above shows that the leading finite-horizon regret is governed by the
absorbing probabilities
\[
% \label{eq:rd_core_absorbing_probabilities}
\mathbb P(Q_i),
\qquad
i=2,\ldots,K,
\qquad
p_i<p_1,
\]
while the remaining terms are capped transient abandonment costs of the form
\[
% \label{eq:rd_core_transient_costs}
\mathbb E[\min\{T,N_j\}\mid Q_i],
\qquad
j\neq i.
\]
The subsequent sections therefore focus on the two core tasks:
\[\begin{aligned}
&\text{characterizing the absorbing probabilities } \mathbb P(Q_i),
\qquad i=2,\ldots,K,
\qquad
p_i<p_1,
\\
% \label{eq:rd_core_task_probabilities}
&\text{and controlling the capped abandonment costs }
\mathbb E[\min\{T,N_j\}\mid Q_i],
\qquad j\neq i.
% \label{eq:rd_core_task_costs}
\end{aligned}\]

\section{Score-Minimum Reduction and Boundary-Crossing Reformulation}
\label{app:score_minimum_reduction}

This section converts the absorbing events into static comparisons among arm-wise score minima. From this section through Appendix~\ref{app:cumulative_regret_envelope}, we focus on the regularized case \(\alpha>0\). The pure-greedy case \((\alpha,\beta)=(0,0)\) is treated separately in Appendix~\ref{app:pure_greedy_regime}. This restriction avoids a boundary phenomenon that occurs under pure greedy. When \(\alpha=\beta=0\), the score of an arm is the empirical mean \(S_i(n)/n\), which can equal zero whenever the arm has accumulated no successes. For example, if all arms receive a zero reward on their first pull, several arms start exactly at score zero, and further zero rewards may keep an arm at the same boundary for multiple pulls. Thus the relevant event is no longer a simple first arrival at a positive boundary; it may involve repeated contacts with the zero boundary. By contrast, when \(\alpha>0\), every regularized score \((S_i(n)+\alpha)/(n+\beta)\) is strictly positive. The score can become small, but it does not hit zero, so the one-arm score-minimum event can be represented cleanly as a first-passage boundary-crossing event. Therefore, we only treat the regularized regime here.

For each arm \(i\), we first define its potential score minimum \(M_i\) along an independent infinite reward stream, and we introduce the centered score-minimum variable \(L_i:=p_i-M_i\). The absorption result from Appendix~\ref{app:absorption} allows us to compare eventual winners through these potential score minima: arm \(i\) can absorb only if its potential score minimum is no smaller than the competing arms' potential score minima, up to strict or weak tie conventions. This yields an event sandwich for \(Q_i\) in terms of inequalities involving \(M_i\), equivalently \(L_i\). We then show that, for a fixed centered level \(\ell_i\in(0,p_i)\), the one-arm event \(\{L_i>\ell_i\}\) is exactly a boundary-crossing event for a negative-drift Bernoulli random walk with boundary height \(b_i(\ell_i)=\Delta_i+\beta\ell_i\). This reformulation is the bridge from the absorbing-event skeleton to the Lundberg-root analysis developed in the following sections.

\subsection{Event Sandwich via Potential Score Minima}
\label{app:event_sandwich_score_minima}

We now define the static variables that determine the absorbing events. For each arm \(i\), let
\(
% \label{eq:sm_potential_stream_def}
X_{i,1},X_{i,2},\ldots
\)
be an independent infinite reward stream with
\(
% \label{eq:sm_stream_law}
X_{i,n}\sim\mathrm{Bernoulli}(p_i),\, n\ge1
\). 
Define the potential cumulative successes by
\[
% \label{eq:sm_potential_successes}
S_i(n)
:=
\sum_{r=1}^{n}X_{i,r},
\qquad
n\ge1,
\]
and recall that the potential regularized score after \(n\) pulls is
\(
% \label{eq:sm_potential_score}
\widehat p_i(n)
=
{(S_i(n)+\alpha)}/{(n+\beta)}
\). 
The potential score minimum of arm \(i\) is
\(
% \label{eq:sm_score_minimum_def}
M_i
:=
\inf_{n\ge1}\widehat p_i(n)
\). 
Although we write the definition in infimum form, this infimum is an attained minimum. Indeed, \(\widehat p_i(n)\to p_i\) almost surely, while the centered Bernoulli partial sums have arbitrarily large negative fluctuations along a subsequence; hence \(\widehat p_i(n)<p_i\) for some finite \(n\) almost surely. After that strict descent below \(p_i\), convergence back to \(p_i\) implies that only finitely many later indices can improve the minimum. Therefore there exists an almost surely finite random index \(n_i^\star\) such that
\(
% \label{eq:sm_score_minimum_attained}
M_i
=
\widehat p_i(n_i^\star)
\). 

We then define the centered score-minimum variable
\[
% \label{eq:sm_centered_score_minimum_def}
L_i
:=
p_i-M_i.
\]

Because \(\alpha>0\), the potential regularized score is strictly positive:
\(
% \label{eq:sm_score_positive}
\widehat p_i(n)>0,\,
n\ge1
\). 
Moreover, the preceding strict descent below \(p_i\) gives \(M_i<p_i\) almost surely.
Consequently,
\(
% \label{eq:sm_centered_variable_range}
0<L_i<p_i
\text{ almost surely on } Q_i
\). 

For each arm \(i\), define the strict and weak score-minimum winner events
\[
\mathcal W_i^{>}
:=
\left\{
M_i>\max_{j\neq i}M_j
\right\},
\qquad
% \label{eq:sm_strict_winner_event}
\mathcal W_i^{\ge}
:=
\left\{
M_i\ge\max_{j\neq i}M_j
\right\}.
% \label{eq:sm_weak_winner_event}
\]

\begin{lemma}[Absorbing-event sandwich by potential score minima]
\label{lem:sm_absorbing_event_sandwich}
For each arm \(i\),
\begin{equation}
\label{eq:sm_winner_sandwich_M}
\mathcal W_i^{>}
\subseteq
Q_i
\subseteq
\mathcal W_i^{\ge}.
\end{equation}
Equivalently, in centered score-minimum form,
\begin{equation}
\label{eq:sm_winner_sandwich_L}
\left\{
L_i-L_j<p_i-p_j,\ \forall j\neq i
\right\}
\subseteq
Q_i
\subseteq
\left\{
L_i-L_j\le p_i-p_j,\ \forall j\neq i
\right\}.
\end{equation}
Consequently, with \(Q_i^c=\bigcup_{j\neq i}Q_j\),
\begin{equation}
\label{eq:sm_complement_sandwich_M}
\left\{
\exists j\neq i:\ M_j>M_i
\right\}
\subseteq
Q_i^c
\subseteq
\left\{
\exists j\neq i:\ M_j\ge M_i
\right\}.
\end{equation}
\end{lemma}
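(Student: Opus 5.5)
We couple the adaptive trajectory with the potential streams in the standard way: the $n$-th pull of arm $i$ in the actual process reveals $X_{i,n}$. Then arm $i$'s score after its $n$-th pull is exactly $\widehat p_i(n)$ on its own clock, and scores of passive arms are frozen. By Lemma~\ref{lem:topological_collapse_karm}, almost surely exactly one arm $I_{\mathrm{abs}}$ is pulled infinitely often and every other arm $j$ has a finite terminal count $N_j$. The centered form \eqref{eq:sm_winner_sandwich_L} is immediate from $M_k=p_k-L_k$, since $M_i>M_j$ is equivalent to $L_i-L_j<p_i-p_j$. The complement statement \eqref{eq:sm_complement_sandwich_M} follows by taking complements in \eqref{eq:sm_winner_sandwich_M} and using $Q_i^c=\bigcup_{j\ne i}Q_j$ up to null sets. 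So the work is the two inclusions in \eqref{eq:sm_winner_sandwich_M}.

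\emph{Upper inclusion $Q_i\subseteq\mathcal W_i^{\ge}$.} On $Q_i$, fix $j\neq i$. Arm $j$ is frozen forever after its last pull at score $\widehat p_j(N_j)\ge M_j$. Arm $i$ is pulled infinitely often, so every own-clock index $n\ge 1$ is reached; in particular the minimizing index $n_i^\star$ is reached. If $n_i^\star=N_i(t)$ for some $t$ after $j$'s last pull, then at the next decision arm $i$ must still be a maximizer, because it is pulled at every later time after absorption. Hence $M_i=\widehat p_i(n_i^\star)\ge \widehat p_j(N_j)\ge M_j$. If instead $n_i^\star$ is reached before $j$'s last pull, I would use the greedy rule at the moment arm $i$ sits at score $M_i$. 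Either arm $i$ is selected next, or some other arm is selected. In the latter case arm $i$ is later reselected only once its frozen score $M_i$ is a maximum. Since $i$ is pulled infinitely often it is reselected, and at that reselection time every arm $j$ has score at most $M_i$. Any later pulls of $j$ must occur when $j$'s score is at least the active score of $i$, and after $j$'s final pull its frozen value $\widehat p_j(N_j)\ge M_j$ never again beats arm $i$. The cleanest route is a handover argument as in Step~3 of Lemma~\ref{lem:topological_collapse_karm}. I would show that for every arm $j$ pulled at some time after $i$ attains $M_i$, its score at the moment it is abandoned (the handover back) is $\le$ the score of $i$ at that moment. That score is $\ge M_i$ only if $j$ continues, so one sees that $j$ is abandoned at a score $\le$ the current active level. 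This yields $M_j\le \widehat p_j(\text{abandon})\le M_i$ up to the tie convention, which explains the weak inequality. I expect this bookkeeping to be the main obstacle: it must cover the case where $j$'s own minimum occurs before $i$ reaches $M_i$, and the possibility of ties broken at random.

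\emph{Lower inclusion $\mathcal W_i^{>}\subseteq Q_i$.} Suppose $M_i>M_j$ for all $j\ne i$, but some $k\neq i$ absorbs. Then arm $i$ is frozen after its last pull at a value $\ge M_i$. Arm $k$ is pulled infinitely often, so its own-clock score reaches $M_k<M_i$ at some finite time after $i$'s last pull; this holds because $n_k^\star$ is finite and arm $k$ runs through all indices. At that decision epoch arm $i$'s score strictly exceeds arm $k$'s score, so $k$ is not a maximizer and is not selected. This contradicts the fact that $A_t=k$ for all large $t$. Hence $k$ cannot absorb, and by the almost-sure partition, $Q_i$ holds. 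Note that this direction needs only the strictness and no tie analysis. All statements hold up to the null set excluded in Lemma~\ref{lem:topological_collapse_karm} and the attainment of the minima established above.
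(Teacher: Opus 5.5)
The reductions of \eqref{eq:sm_winner_sandwich_L} and \eqref{eq:sm_complement_sandwich_M} to \eqref{eq:sm_winner_sandwich_M} are fine. However, your upper inclusion $Q_i\subseteq\mathcal W_i^{\ge}$ is not actually established.

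Your second case is left as a handover argument that you yourself flag as the main obstacle, and as sketched it does not close. The ``current active level'' at which $j$ is abandoned is the score of $i$ at that moment. That score exceeds $M_i$ whenever $i$ has been pulled since attaining its minimum, so you do not get $\widehat p_j(\text{abandon})\le M_i$. Your first case also has a misstep: right after $j$'s last pull, other arms may still be active, so $i$ need not be a maximizer at the next decision. It is a maximizer only at its next selection.

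The missing observation is that every arm's current score, frozen or not, is one of its own-clock potential scores (with $N_j\ge1$ from initialization). It is therefore always $\ge M_j$. With this, no case split on the timing of $j$'s pulls is needed. On $Q_i$ the $(n_i^\star+1)$-th pull of $i$ occurs at some epoch $t$. At that epoch $i$'s score is still $M_i$ and $i$ is a maximizer, so $M_j\le\widehat p_j(N_j(t-1))\le M_i$ for every $j\ne i$, ties included. The paper states the same thing by contradiction: if $M_j>M_i$, then from the epoch at which $i$ sits at $M_i$ onward, arm $j$'s score stays $\ge M_j>M_i$. Hence $i$ is never a maximizer again, contradicting $Q_i$.

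This is exactly the mechanism you already use in your lower-inclusion argument. There, however, you restrict to ``a finite time after $i$'s last pull''. That restriction is unjustified, since $n_k^\star$ may be reached while $i$ is still being pulled. It is also unnecessary, since $i$'s score is $\ge M_i>M_k$ at every epoch.

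With the upper inclusion in hand for every arm, the paper gets the lower inclusion in two lines. If $M_i>\max_{j\ne i}M_j$ but $Q_r$ holds for some $r\ne i$, the upper inclusion applied to $r$ gives $M_i\le M_r$, a contradiction. Your direct argument for the lower inclusion is, once repaired, just the upper-inclusion argument applied to the absorbing arm $k$.
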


\begin{proof}\leavevmode
We first prove the weak necessary inclusion. Suppose \(Q_i\) occurs. Then arm
\(i\) is pulled infinitely often. Let \(n_i^\star<\infty\) be an index at which
the potential score minimum of arm \(i\) is attained:
\[
% \label{eq:sm_minimum_attaining_index_in_proof}
\widehat p_i(n_i^\star)=M_i .
\]
Since arm \(i\) is pulled infinitely often on \(Q_i\), the actual trajectory
eventually reaches this pull count. Let \(t_i^\star\) be the decision epoch
immediately after the \(n_i^\star\)-th pull of arm \(i\). At that epoch, the
current score of arm \(i\) is exactly \(M_i\).

We claim that no competing arm can have potential score minimum strictly above
\(M_i\). Suppose, toward a contradiction, that for some \(j\neq i\),
\(
% \label{eq:sm_competitor_minimum_above_Mi_contra}
M_j>M_i
\). 
At the epoch \(t_i^\star\), the current score of arm \(j\) is one of its potential scores, and therefore is at least its potential minimum \(M_j\). Hence
\(
% \label{eq:sm_competitor_current_score_above_Mi}
\widehat p_j(N_j(t_i^\star))
\ge
M_j
>
M_i
=
\widehat p_i(N_i(t_i^\star))
\). 
Moreover, after \(t_i^\star\), as long as arm \(i\) is not selected, its score remains frozen at \(M_i\). Arm \(j\)'s score, whether frozen or updated by future pulls, can never fall below \(M_j\), and hence can never fall below \(M_i\). Thus there is always at least one arm with score strictly larger than arm \(i\)'s frozen score \(M_i\). Consequently, arm \(i\) can never again be a maximizer of the greedy score after \(t_i^\star\).

This contradicts \(Q_i\), because on \(Q_i\) arm \(i\) is selected forever after
some finite time. Therefore,
\(
% \label{eq:sm_Qi_implies_all_competitor_minima_below}
M_j\le M_i,\,
j\neq i
\). 
Equivalently,
\begin{equation}
\label{eq:sm_Qi_implies_weak_winner}
Q_i\subseteq\mathcal W_i^{\ge}.
\end{equation}

We now prove the strict sufficient inclusion. Suppose
\begin{equation}
\label{eq:sm_strict_minimum_advantage}
M_i>\max_{j\neq i}M_j .
\end{equation}
If \(Q_i\) did not occur, then by the absorbing partition from
Lemma~\ref{lem:topological_collapse_karm}, there would exist some \(r\neq i\)
such that \(Q_r\) occurs. Applying the weak necessary inclusion already proved,
but now to the absorbing arm \(r\), gives
\(
% \label{eq:sm_Qr_implies_Mr_ge_Mi}
M_i\le M_r
\). 
This contradicts \eqref{eq:sm_strict_minimum_advantage}. Hence
\begin{equation}
\label{eq:sm_strict_winner_implies_Qi}
\mathcal W_i^{>}
\subseteq
Q_i .
\end{equation}
Combining \eqref{eq:sm_Qi_implies_weak_winner} and
\eqref{eq:sm_strict_winner_implies_Qi} proves
\eqref{eq:sm_winner_sandwich_M}.

It remains to rewrite the comparison in terms of \(L_i\). Since
\(
% \label{eq:sm_M_L_relation}
M_i=p_i-L_i
\), 
we have
\begin{equation}
M_i>M_j
\Longleftrightarrow
p_i-L_i>p_j-L_j
\Longleftrightarrow
L_i-L_j<p_i-p_j,
\label{eq:sm_strict_M_to_L}
\end{equation}
and similarly
\begin{equation}
M_i\ge M_j
\Longleftrightarrow
p_i-L_i\ge p_j-L_j
\Longleftrightarrow
L_i-L_j\le p_i-p_j.
\label{eq:sm_weak_M_to_L}
\end{equation}
Substituting \eqref{eq:sm_strict_M_to_L} and \eqref{eq:sm_weak_M_to_L} into
\eqref{eq:sm_winner_sandwich_M} proves \eqref{eq:sm_winner_sandwich_L}.

Finally, taking complements in \eqref{eq:sm_winner_sandwich_M} gives
\begin{equation}
\label{eq:sm_complement_from_sandwich}
(\mathcal W_i^{\ge})^c
\subseteq
Q_i^c
\subseteq
(\mathcal W_i^{>})^c.
\end{equation}
The two complement events are
\begin{align}
(\mathcal W_i^{\ge})^c
&=
\left\{
\exists j\neq i:\ M_j>M_i
\right\},
\label{eq:sm_complement_weak_event}\\
(\mathcal W_i^{>})^c
&=
\left\{
\exists j\neq i:\ M_j\ge M_i
\right\}.
\label{eq:sm_complement_strict_event}
\end{align}
Substituting \eqref{eq:sm_complement_weak_event} and
\eqref{eq:sm_complement_strict_event} into
\eqref{eq:sm_complement_from_sandwich} proves
\eqref{eq:sm_complement_sandwich_M}.

\end{proof}

\subsection{One-Arm Centered-Minimum Crossing and CGF Setup}
\label{app:one_arm_drawdown_crossing}

The event sandwich in Appendix~\ref{app:event_sandwich_score_minima} reduces the complement \(Q_i^c\) to comparisons among centered score-minimum variables. In particular,
\begin{equation}
\label{eq:bc_Qic_sandwich_recall}
\left\{
\exists j\neq i:\ L_i-L_j>p_i-p_j
\right\}
\subseteq
Q_i^c
\subseteq
\left\{
\exists j\neq i:\ L_i-L_j\ge p_i-p_j
\right\}.
\end{equation}
Thus, after conditioning on the competing variables \(L_j\), the relevant events reduce to one-arm inequalities of the form
\[
L_i
>
p_i-p_j+L_j,
\qquad
% \label{eq:bc_random_strict_level}
L_i
\ge
p_i-p_j+L_j.
% \label{eq:bc_random_weak_level}
\]
The purpose of this subsection is to rewrite these one-arm inequalities as boundary-crossing events. The strict and weak versions correspond to the same random walk and the same boundary, with only a strict--weak crossing convention. The difference will later be absorbed into a bounded discrete-overshoot correction.

Fix an arm \(i\in\{1,\ldots,K\}\) and a deterministic centered level
\(
% \label{eq:bc_level_domain}
\ell_i\in(0,p_i)
\). 
The nontrivial crossing regime is exactly this interior interval. If \(\ell_i\ge p_i\), then \(\{L_i>\ell_i\}\) and \(\{L_i\ge\ell_i\}\) are empty because \(L_i<p_i\). If \(\ell_i\leq0\), then the score threshold \(p_i-\ell_i\) is above the limiting score \(p_i\), and crossing is eventually automatic. Hence the Lundberg-root analysis is needed only for \(\ell_i\in(0,p_i)\).

Recall
\[
% \label{eq:bc_Li_def}
L_i
:=
p_i-\inf_{n\ge1}\widehat p_i(n).
\]
Although the score minimum is written as an infimum, it is attained in every crossing event considered here. Equivalently,
\begin{align}
\left\{
\inf_{n\ge1}\widehat p_i(n)\le p_i-\ell_i
\right\}
&=
\left\{
\exists n\ge1:
\widehat p_i(n)\le p_i-\ell_i
\right\}.
\label{eq:bc_infimum_attained_below_level}
\end{align}

Define
\(
% \label{eq:bc_qi_def}
q_i:=1-p_i
\), 
and recall
\(
% \label{eq:bc_Delta_i_def}
\Delta_i:=\alpha-p_i\beta
\). 
For the level \(\ell_i\), define the boundary height
\[
b_i(\ell_i)
:=
\alpha-(p_i-\ell_i)\beta
=
\Delta_i+\beta\ell_i.
% \label{eq:bc_boundary_height_def}
\]
Under \(\alpha>0\), \(\beta\ge0\), and \(\Delta_i\ge0\),
\(
% \label{eq:bc_boundary_positive}
b_i(\ell_i)>0
\). 

For the potential reward stream of arm \(i\), define the level-dependent increment
\[
% \label{eq:bc_increment_def}
Z_{i,r}(\ell_i)
:=
(p_i-\ell_i)-X_{i,r},
\qquad
r\ge1,
\]
and the associated random walk
\[
U_{i,n}(\ell_i)
:=
\sum_{r=1}^{n}Z_{i,r}(\ell_i)
=
(p_i-\ell_i)n-S_i(n),
\qquad
n\ge1,
% \label{eq:bc_random_walk_def}
\]
with
\(
% \label{eq:bc_random_walk_initial}
U_{i,0}(\ell_i):=0.
\)
The increment \(Z_{i,r}(\ell_i)\) takes the two values
\[
% \label{eq:bc_increment_support}
Z_{i,r}(\ell_i)
=
\begin{cases}
p_i-\ell_i, & X_{i,r}=0,\\
-(q_i+\ell_i), & X_{i,r}=1.
\end{cases}
\]
Its mean is
\(
\mathbb E[Z_{i,r}(\ell_i)]
=
(p_i-\ell_i)-p_i
=
-\ell_i<0.
% \label{eq:bc_increment_mean}
\)
Thus the crossing of the positive boundary \(b_i(\ell_i)\) is an upward large-deviation event for a negative-drift Bernoulli random walk.

\begin{lemma}[One-arm centered-minimum crossing]
\label{lem:bc_one_arm_crossing}
For every \(i\) and every \(\ell_i\in(0,p_i)\),
\begin{equation}
\label{eq:bc_strict_crossing_equivalence}
\{L_i>\ell_i\}
=
\left\{
\sup_{n\ge1}U_{i,n}(\ell_i)>b_i(\ell_i)
\right\}.
\end{equation}
Moreover,
\begin{equation}
\label{eq:bc_weak_crossing_equivalence}
\{L_i\ge\ell_i\}
=
\left\{
\sup_{n\ge1}U_{i,n}(\ell_i)\ge b_i(\ell_i)
\right\}.
\end{equation}
\end{lemma}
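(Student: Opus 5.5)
The argument is a pointwise algebraic rewriting of the score inequality on each own-clock index $n$, followed by care about the infimum versus an attained minimum. Fix $i$ and $\ell_i\in(0,p_i)$. For every $n\ge1$, since $n+\beta>0$, we have
\[
\widehat p_i(n)\le p_i-\ell_i
\iff S_i(n)+\alpha\le (p_i-\ell_i)(n+\beta)
\iff (p_i-\ell_i)n-S_i(n)\ge \alpha-(p_i-\ell_i)\beta ,
\]
that is, $\widehat p_i(n)\le p_i-\ell_i \iff U_{i,n}(\ell_i)\ge b_i(\ell_i)$. The same manipulation with strict inequalities gives $\widehat p_i(n)<p_i-\ell_i\iff U_{i,n}(\ell_i)>b_i(\ell_i)$. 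These two pathwise equivalences, valid simultaneously for all $n$, are the core of the proof.

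Next, $\{L_i\ge\ell_i\}=\{M_i\le p_i-\ell_i\}$ and $\{L_i>\ell_i\}=\{M_i<p_i-\ell_i\}$, so the two statements reduce to existential statements over $n$. For the strict event, $\{\inf_n\widehat p_i(n)<p_i-\ell_i\}=\{\exists n:\widehat p_i(n)<p_i-\ell_i\}$ holds for any sequence, with no attainment needed. By the strict pointwise equivalence, this equals $\{\exists n: U_{i,n}(\ell_i)>b_i(\ell_i)\}=\{\sup_n U_{i,n}(\ell_i)>b_i(\ell_i)\}$, which gives \eqref{eq:bc_strict_crossing_equivalence}.

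The weak version is the only delicate step. It requires $\{\inf_n\widehat p_i(n)\le p_i-\ell_i\}=\{\exists n:\widehat p_i(n)\le p_i-\ell_i\}$ and, on the walk side, $\{\sup_n U_{i,n}\ge b_i\}=\{\exists n: U_{i,n}\ge b_i\}$. Both follow from attainment, as recorded in \eqref{eq:bc_infimum_attained_below_level} and in Appendix~\ref{app:event_sandwich_score_minima}. Almost surely $\widehat p_i(n)\to p_i>p_i-\ell_i$, so any infimum at or below $p_i-\ell_i$ is approached along finitely many indices and is therefore attained. On the walk side, $U_{i,n}(\ell_i)/n\to-\ell_i<0$ almost surely, so $U_{i,n}\to-\infty$. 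Hence the supremum is a maximum over finitely many indices and is attained whenever it is at least $b_i>0$. Combining these with the weak pointwise equivalence yields \eqref{eq:bc_weak_crossing_equivalence}, with the identity understood up to the null set where the strong law fails. If exact equality of sets is wanted, one can instead restrict to the probability-one event $\Omega_{\mathrm c}$ from Lemma~\ref{lem:topological_collapse_karm}.
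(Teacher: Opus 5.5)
Your proposal is correct and follows essentially the same route as the paper: the pointwise rewriting $\widehat p_i(n)<p_i-\ell_i\iff U_{i,n}(\ell_i)>b_i(\ell_i)$ and its weak analogue, where the strict identity needs no attainment and the weak identity rests on attainment of the score minimum. You also note that the weak identity needs the walk supremum to be attained, which you justify by $U_{i,n}(\ell_i)\to-\infty$ almost surely, so the identity holds up to a null set; this makes explicit a step the paper's final ``combining'' leaves implicit.
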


\begin{proof}\leavevmode
For the strict event,
\begin{align}
\{L_i>\ell_i\}
&=
\left\{
p_i-\inf_{n\ge1}\widehat p_i(n)>\ell_i
\right\}
\nonumber\\
&=
\left\{
\inf_{n\ge1}\widehat p_i(n)<p_i-\ell_i
\right\}
\nonumber\\
&=
\left\{
\exists n\ge1:
\widehat p_i(n)<p_i-\ell_i
\right\}
\nonumber\\
&=
\left\{
\exists n\ge1:
\frac{S_i(n)+\alpha}{n+\beta}<p_i-\ell_i
\right\}.
\label{eq:bc_strict_event_score_form}
\end{align}
For every \(n\ge1\),
\begin{align}
\frac{S_i(n)+\alpha}{n+\beta}<p_i-\ell_i
&\Longleftrightarrow
S_i(n)+\alpha<(p_i-\ell_i)(n+\beta)
\nonumber\\
&\Longleftrightarrow
(p_i-\ell_i)n-S_i(n)>\alpha-(p_i-\ell_i)\beta
\nonumber\\
&\Longleftrightarrow
U_{i,n}(\ell_i)>b_i(\ell_i).
\label{eq:bc_strict_algebra}
\end{align}
Combining \eqref{eq:bc_strict_event_score_form} and \eqref{eq:bc_strict_algebra} gives \eqref{eq:bc_strict_crossing_equivalence}.

For the weak event, the attainment property \eqref{eq:bc_infimum_attained_below_level} gives
\begin{align}
\{L_i\ge\ell_i\}
&=
\left\{
p_i-\inf_{n\ge1}\widehat p_i(n)\ge\ell_i
\right\}
\nonumber\\
&=
\left\{
\inf_{n\ge1}\widehat p_i(n)\le p_i-\ell_i
\right\}
\nonumber\\
&=
\left\{
\exists n\ge1:
\widehat p_i(n)\le p_i-\ell_i
\right\}
\nonumber\\
&=
\left\{
\exists n\ge1:
\frac{S_i(n)+\alpha}{n+\beta}\le p_i-\ell_i
\right\}.
\label{eq:bc_weak_event_score_form}
\end{align}
For every \(n\ge1\),
\begin{align}
\frac{S_i(n)+\alpha}{n+\beta}\le p_i-\ell_i
&\Longleftrightarrow
S_i(n)+\alpha\le(p_i-\ell_i)(n+\beta)
\nonumber\\
&\Longleftrightarrow
(p_i-\ell_i)n-S_i(n)\ge\alpha-(p_i-\ell_i)\beta
\nonumber\\
&\Longleftrightarrow
U_{i,n}(\ell_i)\ge b_i(\ell_i).
\label{eq:bc_weak_algebra}
\end{align}
Combining \eqref{eq:bc_weak_event_score_form} and \eqref{eq:bc_weak_algebra} proves \eqref{eq:bc_weak_crossing_equivalence}.

\end{proof}

Define the strict and weak first-passage times
\[
\tau_i^{>}(\ell_i)
:=
\inf\left\{
n\ge1:
U_{i,n}(\ell_i)>b_i(\ell_i)
\right\},
\qquad
% \label{eq:bc_strict_tau_def}
\tau_i^{\ge}(\ell_i)
:=
\inf\left\{
n\ge1:
U_{i,n}(\ell_i)\ge b_i(\ell_i)
\right\}.
% \label{eq:bc_weak_tau_def}
\]
Lemma~\ref{lem:bc_one_arm_crossing} gives
\[
\{L_i>\ell_i\}
=
\{\tau_i^{>}(\ell_i)<\infty\},
\qquad
% \label{eq:bc_strict_tau_event}
\{L_i\ge\ell_i\}
=
\{\tau_i^{\ge}(\ell_i)<\infty\}.
% \label{eq:bc_weak_tau_event}
\]

The two first-passage conventions differ only at the boundary. For either convention, write \(\tau_i(\ell_i)\) for the corresponding first-passage time:
\(
% \label{eq:bc_generic_tau_convention}
\tau_i(\ell_i)
\in
\left\{
\tau_i^{>}(\ell_i),
\tau_i^{\ge}(\ell_i)
\right\}
\). 
On the crossing event \(\{\tau_i(\ell_i)<\infty\}\), define the overshoot
\begin{equation}
\label{eq:bc_overshoot_def}
r_i(\ell_i)
:=
U_{i,\tau_i(\ell_i)}(\ell_i)-b_i(\ell_i).
\end{equation}
Because an upward crossing can occur only through a single positive increment of size \(p_i-\ell_i\), the overshoot satisfies
\(
% \label{eq:bc_overshoot_range}
0
\le
r_i(\ell_i)
\le
p_i-\ell_i
\). 
Thus the terminal state at first passage can be written uniformly as
\begin{equation}
\label{eq:bc_terminal_state_unified}
U_{i,\tau_i(\ell_i)}(\ell_i)
=
b_i(\ell_i)+r_i(\ell_i),
\qquad
0\le r_i(\ell_i)\le p_i-\ell_i .
\end{equation}
More precisely, for strict crossing,
\[
% \label{eq:bc_strict_terminal_state}
\tau_i(\ell_i)=\tau_i^{>}(\ell_i)
\quad\Longrightarrow\quad
0<r_i(\ell_i)\le p_i-\ell_i ,
\]
whereas for weak crossing,
\[
% \label{eq:bc_weak_terminal_state}
\tau_i(\ell_i)=\tau_i^{\ge}(\ell_i)
\quad\Longrightarrow\quad
0\le r_i(\ell_i)<p_i-\ell_i .
\]
The common terminal representation \eqref{eq:bc_terminal_state_unified} is sufficient for the Lundberg estimates below. This bounded overshoot is the only correction separating the strict and weak sides of the event sandwich in \eqref{eq:bc_Qic_sandwich_recall}.

We now introduce the cumulant generating function (CGF) of the one-step increment. For \(\theta\in\mathbb R\), define
\[
\Lambda_i(\theta,\ell_i)
:=
\log
\mathbb E\left[
\exp\left\{
\theta Z_{i,1}(\ell_i)
\right\}
\right]
=
\log\left[
q_i \exp\{\theta(p_i-\ell_i)\}
+
p_i \exp\{-\theta(q_i+\ell_i)\}
\right].
% \label{eq:bc_cgf_def}
\]
At the origin,
\(
% \label{eq:bc_cgf_origin}
\Lambda_i(0,\ell_i)=0
\). 
Its first derivative at the origin is the negative drift:
\[
\partial_\theta\Lambda_i(0,\ell_i)
=
\mathbb E[Z_{i,1}(\ell_i)]
=
-\ell_i<0.
% \label{eq:bc_cgf_origin_derivative}
\]
The positive Lundberg root, when it exists, is denoted by
\begin{equation}
\label{eq:bc_lundberg_root_def}
\theta_i(\ell_i)>0,
\qquad
\Lambda_i(\theta_i(\ell_i),\ell_i)=0.
\end{equation}
The corresponding tilted drift is
\[
% \label{eq:bc_tilted_drift_def}
v_i(\ell_i)
:=
\partial_\theta\Lambda_i(\theta_i(\ell_i),\ell_i).
\]
The next section proves the existence and uniqueness of \(\theta_i(\ell_i)\), the strict positivity of \(v_i(\ell_i)\), and the endpoint behavior of both quantities.

\section{Analytic Properties of the Lundberg Root and Tilted Drift}
\label{app:cgf_drawdown_geometry}

We now characterize the one-arm crossing probability from
Appendix~\ref{app:one_arm_drawdown_crossing}. For any arm \(i\) and any level \(\ell_i\in(0,p_i)\), recall that
\[
\{L_i>\ell_i\}
=
\left\{
\sup_{n\ge1}U_{i,n}(\ell_i)>b_i(\ell_i)
\right\},
\qquad
b_i(\ell_i)=\Delta_i+\beta\ell_i .
\]
The probability of this event is governed by the exponential cost of pushing the
negative-drift walk \(U_{i,n}(\ell_i)\) above the boundary
\(b_i(\ell_i)\). We write this cost in root-based form as
\[
\mathbb P(L_i>\ell_i)
\approx
\exp\{-\theta_i(\ell_i)b_i(\ell_i)\}.
\]
The term \(\theta_i(\ell_i)\) is the Lundberg exponent. It is defined as the
positive solution of
\[
\Lambda_i(\theta,\ell_i)=0,
\qquad
\Lambda_i(\theta,\ell_i)
=
\log\left[
q_i \exp\{\theta(p_i-\ell_i)\}
+
p_i \exp\{-\theta(q_i+\ell_i)\}
\right].
\]
This equation is the martingale condition. Indeed, when
\(\theta=\theta_i(\ell_i)\), the process
\(\exp\{\theta U_{i,n}(\ell_i)\}\) has mean one at every step. Stopping this
martingale at the first crossing time gives the root-based crossing identity up
to the bounded overshoot at the boundary. Thus the exponent
\(\theta_i(\ell_i)b_i(\ell_i)\) is the primitive quantity behind the
one-arm drawdown tail.

The section establishes the properties of this exponent needed in the
asymptotic analysis. First, for each \(\ell_i\in(0,p_i)\), the equation
\(\Lambda_i(\theta,\ell_i)=0\) has a unique positive solution. The associated
tilted drift
\(
v_i(\ell_i)
=
\partial_\theta\Lambda_i(\theta_i(\ell_i),\ell_i)
\)
is strictly positive, which will later bound the time needed for the tilted walk
to reach the boundary. Second, as \(\ell_i\downarrow0\),
\[
\theta_i(\ell_i)
=
\frac{2}{p_i(1-p_i)}\ell_i+O(\ell_i^2),
\qquad
v_i(\ell_i)=\ell_i+O(\ell_i^2).
\]
Combining this expansion with \(b_i(\ell_i)=\Delta_i+\beta\ell_i\) gives
\[
\theta_i(\ell_i)b_i(\ell_i)
=
\frac{2\Delta_i}{p_i(1-p_i)}\ell_i
+
O(\Delta_i\ell_i^2+\beta\ell_i^2),
\]
which is the source of the local exponential envelope
\[
\mathbb P(L_i>\ell_i)
\approx
\exp\left\{
-\frac{2\Delta_i}{p_i(1-p_i)}\ell_i
\right\}.
\]
Finally, as \(\ell_i\uparrow p_i\), the positive jump size
\(p_i-\ell_i\) vanishes, the root diverges at scale
\((p_i-\ell_i)^{-1}\), and \(v_i(\ell_i)\asymp p_i-\ell_i\). These endpoint
estimates provide the root and drift controls used in the crossing envelopes,
the local Stieltjes replacement, and the later abandonment-cost bounds.

\subsection{Existence and Uniqueness of the Lundberg Root}
\label{app:root_existence_uniqueness}

We first prove that the positive Lundberg root introduced in \eqref{eq:bc_lundberg_root_def} is well-defined for every arm \(i\) and every level \(\ell_i\in(0,p_i)\). Recall that
\[
% \label{eq:root_cgf_recall}
\Lambda_i(\theta,\ell_i)
=
\log\left[
q_i \exp\{\theta(p_i-\ell_i)\}
+
p_i \exp\{-\theta(q_i+\ell_i)\}
\right].
\]

\begin{lemma}[Existence and uniqueness of the Lundberg root]
\label{lem:root_existence_uniqueness}
For every arm \(i\) and every \(\ell_i\in(0,p_i)\), the equation
\(\Lambda_i(\theta,\ell_i)=0\) has a unique positive solution
\(\theta_i(\ell_i)>0\).
\end{lemma}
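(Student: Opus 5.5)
The plan is to work with the moment generating function rather than its logarithm. Fix \(i\) and \(\ell_i\in(0,p_i)\), and define
\[
\phi(\theta):=\exp\{\Lambda_i(\theta,\ell_i)\}=q_i\exp\{\theta(p_i-\ell_i)\}+p_i\exp\{-\theta(q_i+\ell_i)\},\qquad \theta\in\mathbb R .
\]
Since \(\phi>0\), the equation \(\Lambda_i(\theta,\ell_i)=0\) is equivalent to \(\phi(\theta)=1\). It therefore suffices to show that \(\phi(\theta)=1\) has exactly one solution in \((0,\infty)\).

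\textbf{Step 1 (convexity).} Differentiating twice gives
\[
\phi''(\theta)=q_i(p_i-\ell_i)^2e^{\theta(p_i-\ell_i)}+p_i(q_i+\ell_i)^2e^{-\theta(q_i+\ell_i)}>0,
\]
because \(p_i,q_i\in(0,1)\) and \(q_i+\ell_i>0\). Hence \(\phi\) is strictly convex on \(\mathbb R\).

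\textbf{Step 2 (behaviour at the origin).} We have \(\phi(0)=q_i+p_i=1\). Moreover, \(\phi'(0)=q_i(p_i-\ell_i)-p_i(q_i+\ell_i)=-\ell_i<0\), so \(\phi(\theta)<1\) for all sufficiently small \(\theta>0\).

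\textbf{Step 3 (growth at infinity).} Because \(\ell_i<p_i\), the first exponent \(p_i-\ell_i\) is strictly positive. Therefore \(\phi(\theta)\ge q_ie^{\theta(p_i-\ell_i)}\to\infty\) as \(\theta\to\infty\).

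\textbf{Step 4 (conclusion).} By continuity, Steps 2 and 3 together with the intermediate value theorem give existence of some \(\theta^\ast>0\) with \(\phi(\theta^\ast)=1\).

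For uniqueness, suppose there were two positive roots \(0<\theta_a<\theta_b\). Then \(\phi\) would equal \(1\) at the three points \(0<\theta_a<\theta_b\). Strict convexity forces \(\phi(\theta_a)<\max\{\phi(0),\phi(\theta_b)\}=1\), which is a contradiction.

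Equivalently, \(\psi(\theta):=\Lambda_i(\theta,\ell_i)\) is strictly convex: it is the log of a sum of exponentials with distinct exponents, so \(\psi''\) is the variance of a nondegenerate two-point tilted law. Also \(\psi(0)=0\) and \(\psi'(0)=-\ell_i<0\), so \(\psi\) has at most one further zero on \((0,\infty)\), and one does exist. Defining \(\theta_i(\ell_i)\) as this unique zero proves the lemma.

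\textbf{Expected difficulty.} None of the steps is genuinely hard. The one point needing care is Step 3, which is where the hypothesis \(\ell_i<p_i\) enters. If \(\ell_i\ge p_i\), both exponents would be nonpositive, \(\phi\) would be nonincreasing on \((0,\infty)\), and no positive root would exist. This matches the remark in Appendix~\ref{app:one_arm_drawdown_crossing} that the crossing events are empty when \(\ell_i\ge p_i\).

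Strict positivity of the tilted drift \(v_i(\ell_i)\) can be read off the same picture and will be used later. Since \(\psi\) is strictly convex, vanishes at \(0\) and at \(\theta_i(\ell_i)\), and is negative in between, we get \(\psi'(\theta_i(\ell_i))>0\).
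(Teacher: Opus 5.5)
Your proof is correct and follows the paper's argument essentially step for step. You use the value at the origin and the negative slope $-\ell_i$ there, divergence as $\theta\to\infty$ via $\ell_i<p_i$, the intermediate value theorem for existence, and strict convexity to exclude a second positive root; the only difference is cosmetic, in that you check strict convexity for the moment generating function $\phi=e^{\Lambda_i}$ (immediate term by term) whereas the paper computes $\partial_\theta^2\Lambda_i$ directly using $(p_i-\ell_i)+(q_i+\ell_i)=1$.
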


\begin{proof}\leavevmode
Consider any \(i\) and any \(\ell_i\in(0,p_i)\). By definition,
\(
% \label{eq:root_cgf_at_zero}
\Lambda_i(0,\ell_i)=0
\). 
Moreover,
\[
\partial_\theta\Lambda_i(0,\ell_i)
=
q_i(p_i-\ell_i)-p_i(q_i+\ell_i)
=
-\ell_i
<0 .
% \label{eq:root_origin_negative_slope}
\]
Hence, by continuity, there exists \(\varepsilon>0\) such that
\(
% \label{eq:root_negative_near_origin}
\Lambda_i(\theta,\ell_i)<0,\,
0<\theta<\varepsilon
\). 

On the other hand, since \(p_i \in [\epsilon_{\mathrm{p}}, 1-\epsilon_{\mathrm{p}}]\), \(\epsilon_{\mathrm{p}}>0
\), and \(p_i-\ell_i>0\), the positive-jump term dominates as
\(\theta\to\infty\). More precisely,
\[\begin{aligned}
\Lambda_i(\theta,\ell_i)
&=
\log\left[
q_i \exp\{\theta(p_i-\ell_i)\}
+
p_i \exp\{-\theta(q_i+\ell_i)\}
\right]
\nonumber\\
&=
\theta(p_i-\ell_i)
+
\log\left[
q_i+p_i\exp\{-\theta\}
\right],
% \label{eq:root_cgf_infty_expansion}
\end{aligned}\]
where we used \(
% \label{eq:root_jump_sum_identity}
(p_i-\ell_i)+(q_i+\ell_i)=1
\). 
Therefore,
\(
% \label{eq:root_cgf_diverges}
\Lambda_i(\theta,\ell_i)\to\infty
\text{ as }\theta\to\infty
\). 
Since \(\theta\mapsto\Lambda_i(\theta,\ell_i)\) is continuous, the intermediate
value theorem implies that there exists at least one positive solution of
\(\Lambda_i(\theta,\ell_i)=0\).

It remains to prove uniqueness. Direct differentiation gives
\begin{align}
\partial_{\theta}^2 \Lambda_i(\theta,\ell_i)
&=
\frac{
q_i p_i
\exp\{\theta(p_i-\ell_i)\}
\exp\{-\theta(q_i+\ell_i)\}
\left[(p_i-\ell_i)+(q_i+\ell_i)\right]^2
}{
\left[
q_i \exp\{\theta(p_i-\ell_i)\}
+
p_i \exp\{-\theta(q_i+\ell_i)\}
\right]^2
}
\nonumber\\
&=
\frac{
q_i p_i \exp\{\theta(p_i-\ell_i)\}\exp\{-\theta(q_i+\ell_i)\}
}{
\left[
q_i \exp\{\theta(p_i-\ell_i)\}
+
p_i \exp\{-\theta(q_i+\ell_i)\}
\right]^2
}
>0 .
\label{eq:root_cgf_strict_convexity}
\end{align}
Thus \(\theta\mapsto\Lambda_i(\theta,\ell_i)\) is strictly convex.

Suppose, toward a contradiction, that there are two distinct positive roots
\(0<\theta_a<\theta_b\). Since
\(
% \label{eq:root_zero_and_positive_root}
\Lambda_i(0,\ell_i)=0,\,
\Lambda_i(\theta_b,\ell_i)=0
\), 
strict convexity implies that
\(
% \label{eq:root_strict_convex_negative_interval}
\Lambda_i(\theta,\ell_i)<0,\,
0<\theta<\theta_b
\). 
In particular,
\(
% \label{eq:root_contradiction_at_theta_a}
\Lambda_i(\theta_a,\ell_i)<0
\), 
which contradicts the assumption that \(\theta_a\) is a root. Therefore the positive root is unique. 
\end{proof}

Recall the tilted drift associated with the positive Lundberg root:
\(
% \label{eq:tilted_drift_recall}
v_i(\ell_i)
:=
\partial_\theta\Lambda_i(\theta_i(\ell_i),\ell_i)
\). 
The preceding convexity argument also implies that this drift is strictly
positive.

\begin{corollary}[Strict positivity of the tilted drift]
\label{cor:tilted_drift_positivity}
For every arm \(i\) and every \(\ell_i\in(0,p_i)\),
\[
% \label{eq:tilted_drift_positive}
v_i(\ell_i)>0 .
\]
\end{corollary}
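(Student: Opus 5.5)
The plan is to read the positivity of $v_i(\ell_i)$ off the strict convexity of $\theta\mapsto\Lambda_i(\theta,\ell_i)$ established in the proof of Lemma~\ref{lem:root_existence_uniqueness}, together with the two root values $\Lambda_i(0,\ell_i)=0$ and $\Lambda_i(\theta_i(\ell_i),\ell_i)=0$. Throughout, fix an arm $i$ and a level $\ell_i\in(0,p_i)$, and write $\theta^\ast:=\theta_i(\ell_i)>0$.

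\textbf{Step 1 (a strictly increasing derivative).} By \eqref{eq:root_cgf_strict_convexity}, $\partial_\theta^2\Lambda_i(\theta,\ell_i)>0$ for every $\theta\in\mathbb R$. Hence $\theta\mapsto\partial_\theta\Lambda_i(\theta,\ell_i)$ is strictly increasing.

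\textbf{Step 2 (mean value theorem).} Apply the mean value theorem on $[0,\theta^\ast]$. Since $\Lambda_i(0,\ell_i)=\Lambda_i(\theta^\ast,\ell_i)=0$, there is some $\xi\in(0,\theta^\ast)$ with $\partial_\theta\Lambda_i(\xi,\ell_i)=0$.

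\textbf{Step 3 (conclusion).} Because $\xi<\theta^\ast$ and the derivative is strictly increasing, we get
\[
v_i(\ell_i)=\partial_\theta\Lambda_i(\theta^\ast,\ell_i)>\partial_\theta\Lambda_i(\xi,\ell_i)=0 .
\]
The same conclusion also follows from the tangent-line inequality for a strictly convex function: $0=\Lambda_i(0,\ell_i)>\Lambda_i(\theta^\ast,\ell_i)-\theta^\ast v_i(\ell_i)=-\theta^\ast v_i(\ell_i)$, and dividing by $\theta^\ast>0$ gives $v_i(\ell_i)>0$.

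There is no real obstacle here. The only point needing care is that the inequality is strict, and strictness is guaranteed because the second derivative in \eqref{eq:root_cgf_strict_convexity} is positive everywhere, not just nonnegative.
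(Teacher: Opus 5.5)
Your proof is correct and is essentially the paper's argument: both use strict convexity of $\theta\mapsto\Lambda_i(\theta,\ell_i)$ together with the two zeros at $\theta=0$ and $\theta=\theta_i(\ell_i)$. The paper argues by contradiction (if $v_i(\ell_i)\le 0$, the derivative would be negative on $(0,\theta_i(\ell_i))$, so its integral could not vanish), while you apply Rolle's theorem directly; the two are equivalent.
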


\begin{proof}\leavevmode
By Lemma~\ref{lem:root_existence_uniqueness}, \(\theta_i(\ell_i)\) is the
unique positive root of \(\Lambda_i(\theta,\ell_i)=0\). Moreover,
\(
% \label{eq:tilted_drift_origin_slope}
\partial_\theta\Lambda_i(0,\ell_i)=-\ell_i<0
\), 
and \(\theta\mapsto\Lambda_i(\theta,\ell_i)\) is strictly convex by
\eqref{eq:root_cgf_strict_convexity}. Since
\begin{equation}
\label{eq:tilted_drift_two_zero_values}
\Lambda_i(0,\ell_i)
=
\Lambda_i(\theta_i(\ell_i),\ell_i)
=
0,
\end{equation}
the derivative at the positive root must be strictly positive. Indeed, if
\(
% \label{eq:tilted_drift_nonpositive_assumption}
\partial_\theta\Lambda_i(\theta_i(\ell_i),\ell_i)\le0
\), 
then strict convexity would imply
\(
% \label{eq:tilted_drift_derivative_negative_interval}
\partial_\theta\Lambda_i(\theta,\ell_i)<0,\,
0<\theta<\theta_i(\ell_i)
\), 
and consequently
\[
\Lambda_i(\theta_i(\ell_i),\ell_i)-\Lambda_i(0,\ell_i)
=
\int_0^{\theta_i(\ell_i)}
\partial_\theta\Lambda_i(\theta,\ell_i)\,d\theta
<0 ,
% \label{eq:tilted_drift_integral_contradiction}
\]
which contradicts \eqref{eq:tilted_drift_two_zero_values}. Therefore, 
\(
% \label{eq:tilted_drift_positive_conclusion}
v_i(\ell_i)
=
\partial_\theta\Lambda_i(\theta_i(\ell_i),\ell_i)
>0
\). 
\end{proof}

\subsection{Lower-Endpoint Behavior as \texorpdfstring{$\ell_i\downarrow0$}{elli down to 0}}
\label{app:lower_endpoint_theta_v}

We next analyze the behavior of \(\theta_i(\ell_i)\) and \(v_i(\ell_i)\) as the drawdown level approaches the lower endpoint. The key point is that the Lundberg equation can be rewritten as an inverse relation between \(\ell_i\) and \(\theta_i(\ell_i)\).

Separate the \(\ell_i\)-dependent linear term in the CGF:
\[
% \label{eq:lower_cgf_decomp}
\Lambda_i(\theta,\ell_i)
=
-\theta\ell_i+\Lambda_{i,0}(\theta),
\]
where
\begin{equation}
\label{eq:lower_Lambda0_def}
\Lambda_{i,0}(\theta)
:=
\theta p_i+\log\left(q_i+p_i \exp\{-\theta\}\right).
\end{equation}
Thus the root equation
\(
% \label{eq:lower_root_equation}
\Lambda_i(\theta_i(\ell_i),\ell_i)=0
\)
is equivalent to
\begin{equation}
\label{eq:lower_g_inverse_relation}
\ell_i
=
\frac{\Lambda_{i,0}(\theta_i(\ell_i))}
{\theta_i(\ell_i)}
=
g_i(\theta_i(\ell_i)),
\end{equation}
where, for \(\theta>0\),
\[
% \label{eq:lower_g_def}
g_i(\theta)
:=
\frac{\Lambda_{i,0}(\theta)}{\theta}.
\]

\begin{lemma}[Lower-endpoint equivalence between level and root]
\label{lem:lower_endpoint_root_equivalence}
The map \(g_i\) is continuous and strictly increasing from \((0,\infty)\) onto \((0,p_i)\). 
Consequently, \(\ell_i\mapsto\theta_i(\ell_i)\) is continuous and strictly increasing on \((0,p_i)\), and
\begin{equation}
\label{eq:lower_endpoint_equivalence}
\ell_i\downarrow0
\qquad
\Longleftrightarrow
\qquad
\theta_i(\ell_i)\downarrow0 .
\end{equation}
\end{lemma}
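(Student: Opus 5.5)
The plan is to study $g_i(\theta)=\Lambda_{i,0}(\theta)/\theta$ directly, using convexity of $\Lambda_{i,0}$, and then transfer the properties to $\theta_i(\cdot)$ through the inverse relation \eqref{eq:lower_g_inverse_relation}.

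\emph{Properties of $\Lambda_{i,0}$.} First record that $\Lambda_{i,0}(0)=\log(q_i+p_i)=0$. Next, $\Lambda_{i,0}'(\theta)=p_i-p_ie^{-\theta}/(q_i+p_ie^{-\theta})$, so $\Lambda_{i,0}'(0)=0$. Finally, $\Lambda_{i,0}''(\theta)>0$, since $\Lambda_{i,0}''=\partial_\theta^2\Lambda_i$ and the latter is computed in \eqref{eq:root_cgf_strict_convexity}. So $\Lambda_{i,0}$ is strictly convex with a minimum of $0$ at the origin. In particular $\Lambda_{i,0}(\theta)>0$ for $\theta>0$, which gives $g_i>0$ on $(0,\infty)$. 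Continuity of $g_i$ is clear.

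\emph{Monotonicity of $g_i$.} Compute $g_i'(\theta)=[\theta\Lambda_{i,0}'(\theta)-\Lambda_{i,0}(\theta)]/\theta^2$ and set $h(\theta):=\theta\Lambda_{i,0}'(\theta)-\Lambda_{i,0}(\theta)$. Then $h(0)=0$ and $h'(\theta)=\theta\Lambda_{i,0}''(\theta)>0$ for $\theta>0$, so $h>0$ there and $g_i$ is strictly increasing. Equivalently, this is the slope of the chord from the origin of a strictly convex function vanishing at $0$.

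\emph{Range of $g_i$.} As $\theta\downarrow0$, a Taylor expansion gives $\Lambda_{i,0}(\theta)=\tfrac12 p_iq_i\theta^2+O(\theta^3)$, hence $g_i(\theta)\to0$. As $\theta\to\infty$, $\log(q_i+p_ie^{-\theta})\to\log q_i$ is bounded, so $g_i(\theta)=p_i+O(1/\theta)\to p_i$. By monotonicity and continuity, $g_i$ is a homeomorphism from $(0,\infty)$ onto $(0,p_i)$.

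\emph{Transfer to $\theta_i$.} For $\ell_i\in(0,p_i)$, the positive root $\theta_i(\ell_i)$ of Lemma~\ref{lem:root_existence_uniqueness} satisfies $g_i(\theta_i(\ell_i))=\ell_i$. Conversely, any $\theta>0$ with $g_i(\theta)=\ell_i$ is a positive root of $\Lambda_i(\cdot,\ell_i)$, so $\theta_i=g_i^{-1}$ on $(0,p_i)$. The inverse of a continuous strictly increasing bijection between intervals is continuous and strictly increasing. This yields \eqref{eq:lower_endpoint_equivalence}, since $g_i^{-1}(\ell)\downarrow0$ as $\ell\downarrow0$ and $g_i(\theta)\downarrow0$ as $\theta\downarrow0$.

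The only step needing care is the sign of $h$; the rest is routine.
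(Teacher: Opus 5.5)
Your proof is correct and follows essentially the same route as the paper. Both arguments use the auxiliary function $h(\theta)=\theta\Lambda_{i,0}'(\theta)-\Lambda_{i,0}(\theta)$ with $h'=\theta\Lambda_{i,0}''>0$, compute the endpoint limits of $g_i$ (you use a Taylor expansion where the paper uses L'H\^opital's rule), and identify $\theta_i=g_i^{-1}$ through the inverse relation; your explicit check that every $\theta>0$ with $g_i(\theta)=\ell_i$ is a positive root is a small, welcome addition that the paper leaves implicit.
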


\begin{proof}\leavevmode
We verify the endpoint limits and monotonicity of \(g_i\). First,
\[
% \label{eq:lower_Lambda0_origin_values}
\Lambda_{i,0}(0)=0,
\qquad
\Lambda_{i,0}'(0)=0 .
\]
Hence, by L'Hopital's rule,
\begin{equation}
\label{eq:lower_g_limit_zero}
\lim_{\theta\downarrow0}g_i(\theta)
=
\lim_{\theta\downarrow0}\Lambda_{i,0}'(\theta)
=
0 .
\end{equation}

Next define
\(
% \label{eq:lower_h_def}
h_i(\theta)
:=
\theta\Lambda_{i,0}'(\theta)-\Lambda_{i,0}(\theta)
\). 
Then
\(
% \label{eq:lower_h_derivative}
h_i'(\theta)
=
\theta\Lambda_{i,0}''(\theta)
\). 
A direct calculation gives
\[
% \label{eq:lower_Lambda0_second_derivative}
\Lambda_{i,0}''(\theta)
=
\frac{p_iq_i \exp\{-\theta\}}
{\left(q_i+p_i \exp\{-\theta\}\right)^2}
>0,
\qquad
\theta>0 .
\]
Therefore,
\(
% \label{eq:lower_h_positive}
h_i'(\theta)>0,\,
\theta>0
\). 
Since \(h_i(0)=0\), it follows that
\(
% \label{eq:lower_h_strict_positive}
h_i(\theta)>0,\,
\theta>0
\). 
Consequently,
\begin{equation}
g_i'(\theta)
=
\frac{\theta\Lambda_{i,0}'(\theta)-\Lambda_{i,0}(\theta)}
{\theta^2}
=
\frac{h_i(\theta)}{\theta^2}
>0,
\qquad
\theta>0 .
\label{eq:lower_g_prime_positive}
\end{equation}
Thus \(g_i\) is strictly increasing.

Finally, from \eqref{eq:lower_Lambda0_def},
\(
g_i(\theta)
=
p_i+
\frac{1}{\theta}
\log\left(q_i+p_i \exp\{-\theta\}\right)
% \label{eq:lower_g_infty_form}
\). 
Since \(q_i\in [\epsilon_{\mathrm{p}}, 1-\epsilon_{\mathrm{p}}]\), \(\epsilon_{\mathrm{p}}>0\), 
\begin{equation}
\label{eq:lower_g_limit_infty}
\lim_{\theta\to\infty}g_i(\theta)
=
p_i .
\end{equation}
Combining \eqref{eq:lower_g_limit_zero}, \eqref{eq:lower_g_prime_positive}, and \eqref{eq:lower_g_limit_infty}, \(g_i\) is a continuous strictly increasing bijection from \((0,\infty)\) onto \((0,p_i)\). By \eqref{eq:lower_g_inverse_relation},
\(
% \label{eq:lower_theta_inverse}
\theta_i(\ell_i)=g_i^{-1}(\ell_i)
\). 
This proves the continuity and monotonicity of \(\theta_i(\ell_i)\), and the endpoint equivalence \eqref{eq:lower_endpoint_equivalence}. 
\end{proof}

We now use this inverse representation to compute the local expansion of the root and the tilted drift. Let
\[
% \label{eq:lower_sigma_def}
\sigma_i^2:=p_iq_i .
\]

\begin{lemma}[Local expansion of the root and tilted drift]
\label{lem:lower_endpoint_theta_v_expansion}
For every arm \(i=1, \ldots, K\), as \(\ell_i\downarrow0\),
\begin{equation}
\label{eq:lower_theta_expansion}
\theta_i(\ell_i)
=
\frac{2}{\sigma_i^2}\ell_i
+
\frac{4(1-2p_i)}{3\sigma_i^4}\ell_i^2
+
O(\ell_i^3),
\end{equation}
and
\begin{equation}
\label{eq:lower_v_expansion}
v_i(\ell_i)
=
\ell_i
-
\frac{2(1-2p_i)}{3\sigma_i^2}\ell_i^2
+
O(\ell_i^3).
\end{equation}
In particular,
\begin{equation}
\label{eq:lower_theta_v_first_order}
\theta_i(\ell_i)
=
\frac{2}{\sigma_i^2}\ell_i
+
O(\ell_i^2),
\qquad
v_i(\ell_i)
=
\ell_i
+
O(\ell_i^2).
\end{equation}
\end{lemma}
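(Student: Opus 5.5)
We work entirely through the inverse relation \eqref{eq:lower_g_inverse_relation}, $\ell_i=g_i(\theta_i(\ell_i))$, established in Lemma~\ref{lem:lower_endpoint_root_equivalence}. The plan is to Taylor-expand $g_i$ at $\theta=0$, invert the resulting power series, and then substitute the root expansion into $v_i=\partial_\theta\Lambda_i$.

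\emph{Step 1: expansion of $\Lambda_{i,0}$.} The function $\Lambda_{i,0}(\theta)=\theta p_i+\log(q_i+p_ie^{-\theta})$ is exactly the cumulant generating function of $p_i-X$ with $X\sim\mathrm{Bernoulli}(p_i)$. It is real-analytic near $0$, and its cumulants are
\[
\kappa_1=0,\qquad \kappa_2=\sigma_i^2,\qquad \kappa_3=-p_iq_i(q_i-p_i).
\]
The third cumulant of $-X$ is minus that of $X$, and the third cumulant of $X$ is $p_iq_i(1-2p_i)$. Writing $\kappa_3=-\sigma_i^2(1-2p_i)$, we get
\[
g_i(\theta)=\frac{\sigma_i^2}{2}\theta-\frac{\sigma_i^2(1-2p_i)}{6}\theta^2+O(\theta^3).
\]

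\emph{Step 2: inversion.} By Lemma~\ref{lem:lower_endpoint_root_equivalence}, $\theta_i(\ell_i)\downarrow0$ as $\ell_i\downarrow0$, and $g_i'(0)=\sigma_i^2/2>0$. The analytic inverse function theorem therefore gives a convergent expansion
\[
\theta_i(\ell_i)=a\ell_i+b\ell_i^2+O(\ell_i^3).
\]
Matching coefficients in $\ell_i=g_i(\theta_i(\ell_i))$:
\begin{itemize}
\item At order $\ell_i$: $\frac{\sigma_i^2}{2}a=1$, so $a=2/\sigma_i^2$.
\item At order $\ell_i^2$: $\frac{\sigma_i^2}{2}b-\frac{\sigma_i^2(1-2p_i)}{6}a^2=0$, so $b=\frac{(1-2p_i)}{3}a^2=\frac{4(1-2p_i)}{3\sigma_i^4}$.
\end{itemize}
This is \eqref{eq:lower_theta_expansion}.

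\emph{Step 3: tilted drift.} From the decomposition $\Lambda_i(\theta,\ell_i)=-\theta\ell_i+\Lambda_{i,0}(\theta)$,
\[
v_i=-\ell_i+\Lambda_{i,0}'(\theta_i),\qquad \Lambda_{i,0}'(\theta)=\sigma_i^2\theta+\tfrac{\kappa_3}{2}\theta^2+O(\theta^3).
\]
Substituting $\theta_i=a\ell_i+b\ell_i^2+O(\ell_i^3)$:
\begin{itemize}
\item The linear coefficient is $\sigma_i^2a=2$, so $v_i=-\ell_i+2\ell_i+\dots=\ell_i+\dots$.
\item The quadratic coefficient is $\sigma_i^2b+\frac{\kappa_3}{2}a^2=\frac{4(1-2p_i)}{3\sigma_i^2}-\frac{2(1-2p_i)}{\sigma_i^2}=-\frac{2(1-2p_i)}{3\sigma_i^2}$.
\end{itemize}
This is \eqref{eq:lower_v_expansion}. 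The first-order statements \eqref{eq:lower_theta_v_first_order} follow immediately.

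\emph{Main obstacle.} The content is routine, so the main risk is bookkeeping. The sign of $\kappa_3$ must be tracked carefully, since the increment is $p_i-X$ rather than $X$. The $O(\ell_i^3)$ remainders must also be justified. Analyticity handles the local remainder. Uniformity over $p_i\in[\epsilon_{\mathrm p},1-\epsilon_{\mathrm p}]$ follows by bounding the third derivative of $\Lambda_{i,0}$ on a fixed neighborhood of $0$, using the explicit formula for $\Lambda_{i,0}''$ from the previous lemma.
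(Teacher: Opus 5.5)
Your proposal is correct and follows essentially the same route as the paper. You expand $\Lambda_{i,0}$, and hence $g_i$, at the origin, invert via the analytic inverse function theorem with Lemma~\ref{lem:lower_endpoint_root_equivalence} identifying the local inverse with $\theta_i$, match coefficients, and substitute into $v_i=-\ell_i+\Lambda_{i,0}'(\theta_i)$; reading the coefficients off as cumulants of $p_i-X$ is just a quicker version of the paper's direct, $\sigma_i^2$-normalized derivative computation. One small bookkeeping fix: to get the $O(\theta^3)$ remainder in $g_i$ uniformly over $p_i\in[\epsilon_{\mathrm p},1-\epsilon_{\mathrm p}]$, you must bound the fourth derivative of $\Lambda_{i,0}$ near $0$, not the third, which is what the paper does.
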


\begin{proof}\leavevmode
For \(i\in\{1,\ldots,K\}\), by the standing interior
condition,
\begin{equation}
\label{eq:lower_variance_bounds}
\epsilon_{\mathrm p}(1-\epsilon_{\mathrm p})
\le
\sigma_i^2
=
p_i(1-p_i)
\le
\frac14.
\end{equation}
Thus \(\sigma_i^2\) is bounded away from both zero and infinity, and therefore
\[
\ell_i\downarrow0
\qquad
\Longleftrightarrow
\qquad
\frac{\ell_i}{\sigma_i^2}\downarrow0.
\]

The function \(\Lambda_{i,0}\) is real analytic in a neighborhood of \(\theta=0\). Its normalized derivatives at the origin satisfy
\[
\frac{\Lambda_{i,0}(0)}{\sigma_i^2}
=
\frac{\Lambda_{i,0}'(0)}{\sigma_i^2}
=
0,
\qquad
\frac{\Lambda_{i,0}''(0)}{\sigma_i^2}
=
1,
\qquad
\frac{\Lambda_{i,0}'''(0)}{\sigma_i^2}
=
-(1-2p_i),
\qquad
\frac{\Lambda_{i,0}^{(4)}(0)}{\sigma_i^2}
=
1-6\sigma_i^2.
\]
Because \(\Lambda_{i,0}^{(4)}(\theta)/\sigma_i^2\) is jointly continuous in \((\theta,p_i)\), the standing interior condition~\eqref{eq:app_ordered_means} ensures that it is uniformly bounded over all admissible \(p_i\) and all \(\theta\) in a sufficiently small neighborhood of zero. Hence
\begin{equation}
\label{eq:lower_Lambda0_expansion}
\frac{\Lambda_{i,0}(\theta)}{\sigma_i^2}
=
\frac12\theta^2
-
\frac{1-2p_i}{6}\theta^3
+
O(\theta^4),
\end{equation}
uniformly over all arms and admissible arm means. Recalling that \(g_i(\theta)=\Lambda_{i,0}(\theta)/\theta\), we obtain
\begin{equation}
\label{eq:lower_g_expansion}
\frac{g_i(\theta)}{\sigma_i^2}
=
\frac12\theta
-
\frac{1-2p_i}{6}\theta^2
+
O(\theta^3).
\end{equation}

Moreover,
\[
\left.
\frac{d}{d\theta}
\frac{g_i(\theta)}{\sigma_i^2}
\right|_{\theta=0}
=
\frac12
>0.
\]
Therefore, the analytic inverse function theorem applies at the origin. By
the inverse relation \eqref{eq:lower_g_inverse_relation},
\begin{equation}
\label{eq:lower_normalized_inverse_relation}
\frac{\ell_i}{\sigma_i^2}
=
\frac{
g_i\bigl(\theta_i(\ell_i)\bigr)
}{
\sigma_i^2
},
\end{equation}
and Lemma~\ref{lem:lower_endpoint_root_equivalence} ensures that the resulting
local inverse coincides with
\(\ell_i\mapsto\theta_i(\ell_i)\) for all sufficiently small
\(\ell_i>0\).

Write the local inverse expansion directly in powers of
\(\ell_i/\sigma_i^2\):
\begin{equation}
\label{eq:lower_theta_ansatz}
\theta_i(\ell_i)
=
d_{i,1}\frac{\ell_i}{\sigma_i^2}
+
d_{i,2}
\left(
\frac{\ell_i}{\sigma_i^2}
\right)^2
+
O\left(
\left(
\frac{\ell_i}{\sigma_i^2}
\right)^3
\right).
\end{equation}
Substituting \eqref{eq:lower_theta_ansatz} into
\eqref{eq:lower_normalized_inverse_relation} and using
\eqref{eq:lower_g_expansion} gives
\[\begin{aligned}
\frac{\ell_i}{\sigma_i^2}
=
\frac12
\left[
d_{i,1}\frac{\ell_i}{\sigma_i^2}
+
d_{i,2}
\left(
\frac{\ell_i}{\sigma_i^2}
\right)^2
\right]
-
\frac{1-2p_i}{6}
d_{i,1}^2
\left(
\frac{\ell_i}{\sigma_i^2}
\right)^2
+
O\left(
\left(
\frac{\ell_i}{\sigma_i^2}
\right)^3
\right).
\end{aligned}\]
Matching the first-order coefficient gives
\[
d_{i,1}=2.
\]
Matching the second-order coefficient gives
\[
\frac12d_{i,2}
-
\frac{1-2p_i}{6}d_{i,1}^2
=
0,
\]
and hence
\[
d_{i,2}
=
\frac{1-2p_i}{3}d_{i,1}^2
=
\frac{4(1-2p_i)}{3}.
\]
Consequently,
\[
\theta_i(\ell_i)
=
2\frac{\ell_i}{\sigma_i^2}
+
\frac{4(1-2p_i)}{3}
\left(
\frac{\ell_i}{\sigma_i^2}
\right)^2
+
O\left(
\left(
\frac{\ell_i}{\sigma_i^2}
\right)^3
\right).
\]
Using \eqref{eq:lower_variance_bounds}, this becomes
\[
\theta_i(\ell_i)
=
\frac{2}{\sigma_i^2}\ell_i
+
\frac{4(1-2p_i)}{3\sigma_i^4}\ell_i^2
+
O(\ell_i^3),
\]
which proves \eqref{eq:lower_theta_expansion}.
The derivative of \(g_i(\theta)/\sigma_i^2\) at the origin equals \(1/2\) for every arm, and the higher-order coefficients remain uniformly bounded under the standing interior condition~\eqref{eq:app_ordered_means}. Hence the inverse expansion and its remainder are uniform over all admissible arm means.

It remains to expand the tilted drift. By definition,
\begin{equation}
\label{eq:lower_v_identity}
v_i(\ell_i)
=
\partial_\theta\Lambda_i(\theta_i(\ell_i),\ell_i)
=
-\ell_i
+
\Lambda_{i,0}'(\theta_i(\ell_i)).
\end{equation}
Differentiating \eqref{eq:lower_Lambda0_expansion} gives
\[
% \label{eq:lower_Lambda0_prime_expansion}
\frac{\Lambda_{i,0}'(\theta)}{\sigma_i^2}
=
\theta
-
\frac{1-2p_i}{2}\theta^2
+
O(\theta^3).
\]
Dividing \eqref{eq:lower_v_identity} by \(\sigma_i^2\), we obtain
\[
% \label{eq:lower_v_normalized_identity}
\frac{v_i(\ell_i)}{\sigma_i^2}
=
-\frac{\ell_i}{\sigma_i^2}
+
\theta_i(\ell_i)
-
\frac{1-2p_i}{2}\theta_i^2(\ell_i)
+
O\bigl(\theta_i^3(\ell_i)\bigr).
\]
Using the expansion already obtained for \(\theta_i(\ell_i)\), we have
\[\begin{aligned}
\frac{v_i(\ell_i)}{\sigma_i^2}
&=
-\frac{\ell_i}{\sigma_i^2}
+
2\frac{\ell_i}{\sigma_i^2}
+
\frac{4(1-2p_i)}{3}
\left(
\frac{\ell_i}{\sigma_i^2}
\right)^2
-
\frac{1-2p_i}{2}
\left[
4
\left(
\frac{\ell_i}{\sigma_i^2}
\right)^2
\right]
+
O\left(
\left(
\frac{\ell_i}{\sigma_i^2}
\right)^3
\right)
\nonumber\\
&=
\frac{\ell_i}{\sigma_i^2}
-
\frac{2(1-2p_i)}{3}
\left(
\frac{\ell_i}{\sigma_i^2}
\right)^2
+
O\left(
\left(
\frac{\ell_i}{\sigma_i^2}
\right)^3
\right).
% \label{eq:lower_v_normalized_expansion}
\end{aligned}\]
Multiplying by \(\sigma_i^2\) and using
\eqref{eq:lower_variance_bounds} yields
\[
v_i(\ell_i)
=
\ell_i
-
\frac{2(1-2p_i)}{3\sigma_i^2}\ell_i^2
+
O(\ell_i^3),
\]
which proves \eqref{eq:lower_v_expansion}. The first-order statements in
\eqref{eq:lower_theta_v_first_order} follow immediately.

\end{proof}

\subsection{Upper-Endpoint Behavior as \texorpdfstring{$\ell_i\uparrow p_i$}{elli up to pi}}
\label{app:upper_endpoint_theta_v}

We now record the upper-endpoint behavior of the Lundberg root and the tilted drift. By Lemma~\ref{lem:lower_endpoint_root_equivalence}, the inverse relation
\(\ell_i=g_i(\theta_i(\ell_i))\) implies
\begin{equation}
\label{eq:upper_endpoint_equivalence_recall}
\ell_i\uparrow p_i
\qquad
\Longleftrightarrow
\qquad
\theta_i(\ell_i)\to\infty .
\end{equation}
Thus it remains to identify the precise divergence scale of \(\theta_i(\ell_i)\) and the corresponding scale of \(v_i(\ell_i)\).

\begin{lemma}[Upper-endpoint expansion of the Lundberg root]
\label{lem:upper_endpoint_theta}
As \(\ell_i\uparrow p_i\),
\begin{equation}
\label{eq:upper_theta_gap_product}
\theta_i(\ell_i)(p_i-\ell_i)
=
\log\frac{1}{q_i}
+
o(1).
\end{equation}
Equivalently,
\begin{equation}
\label{eq:upper_theta_expansion}
\theta_i(\ell_i)
=
\frac{\log(1/q_i)}{p_i-\ell_i}
+
o\left(
\frac{1}{p_i-\ell_i}
\right).
\end{equation}
In particular,
\begin{equation}
\label{eq:upper_theta_order}
\theta_i(\ell_i)
\asymp
\frac{1}{p_i-\ell_i}
\qquad
\text{as }\ell_i\uparrow p_i .
\end{equation}
\end{lemma}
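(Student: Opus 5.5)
Write \(\theta=\theta_i(\ell_i)\) and \(\varepsilon:=p_i-\ell_i\downarrow0\); by \eqref{eq:upper_endpoint_equivalence_recall}, \(\theta\to\infty\). The plan is to rewrite the root equation \(\Lambda_i(\theta,\ell_i)=0\) in a form where \(\theta\varepsilon\) appears explicitly. The expansion used in the existence proof (Lemma~\ref{lem:root_existence_uniqueness}) does exactly this:
\[
0=\Lambda_i(\theta,\ell_i)=\theta(p_i-\ell_i)+\log\bigl(q_i+p_i\exp\{-\theta\}\bigr).
\]
Hence, exactly,
\[
\theta_i(\ell_i)(p_i-\ell_i)=-\log\bigl(q_i+p_i\exp\{-\theta_i(\ell_i)\}\bigr)=\log\frac{1}{q_i+p_i\exp\{-\theta_i(\ell_i)\}}.
\]

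Since \(\theta_i(\ell_i)\to\infty\), the term \(p_i\exp\{-\theta_i(\ell_i)\}\) tends to zero. Because \(q_i\ge\epsilon_{\mathrm p}>0\), the map \(x\mapsto\log(1/(q_i+x))\) is continuous at \(x=0\). It follows that the right-hand side equals \(\log(1/q_i)+o(1)\), which is \eqref{eq:upper_theta_gap_product}.

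If a quantitative remainder is wanted, the mean value theorem gives
\[
0\le \log\frac{1}{q_i}-\theta(p_i-\ell_i)\le \frac{p_i}{q_i}e^{-\theta}.
\]

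To get \eqref{eq:upper_theta_expansion}, divide \eqref{eq:upper_theta_gap_product} by \(p_i-\ell_i\). For \eqref{eq:upper_theta_order}, note that \(q_i\le 1-\epsilon_{\mathrm p}<1\), so \(\log(1/q_i)\) lies in \([\log(1/(1-\epsilon_{\mathrm p})),\log(1/\epsilon_{\mathrm p})]\). This interval is bounded away from \(0\) and \(\infty\), so the product \(\theta(p_i-\ell_i)\) is eventually pinched between two positive constants.

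No step is a real obstacle. The only point to check is that the limit \(\theta\to\infty\) is justified, and it is, by Lemma~\ref{lem:lower_endpoint_root_equivalence} (\(g_i\) maps onto \((0,p_i)\) monotonically). The other ingredient is \(q_i>0\), which keeps the logarithm finite and nonzero.
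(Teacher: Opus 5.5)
Your proof is correct and takes essentially the paper's route: both use $\theta_i(\ell_i)\to\infty$ (from Lemma~\ref{lem:lower_endpoint_root_equivalence}) in the Lundberg equation, so the negative-jump term vanishes and $\theta_i(\ell_i)(p_i-\ell_i)\to\log(1/q_i)$. Your identity $\theta_i(\ell_i)(p_i-\ell_i)=-\log\bigl(q_i+p_i\exp\{-\theta_i(\ell_i)\}\bigr)$ is the paper's equation divided by $\exp\{\theta_i(\ell_i)(p_i-\ell_i)\}$. It has the small added benefit of being exact, which gives the explicit bound $0\le\log(1/q_i)-\theta_i(\ell_i)(p_i-\ell_i)\le (p_i/q_i)\exp\{-\theta_i(\ell_i)\}$.
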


\begin{proof}\leavevmode
By \eqref{eq:upper_endpoint_equivalence_recall}, \(\theta_i(\ell_i)\to\infty\) as \(\ell_i\uparrow p_i\). The Lundberg equation gives
\begin{equation}
\label{eq:upper_lundberg_balance}
q_i \exp\{\theta_i(\ell_i)(p_i-\ell_i)\}
+
p_i \exp\{-\theta_i(\ell_i)(q_i+\ell_i)\}
=
1 .
\end{equation}
Since \(q_i+\ell_i\to1\) and \(\theta_i(\ell_i)\to\infty\), we have
\[
% \label{eq:upper_negative_term_vanishes}
p_i \exp\{-\theta_i(\ell_i)(q_i+\ell_i)\}
\to0 .
\]
Therefore \eqref{eq:upper_lundberg_balance} implies
\[
% \label{eq:upper_positive_term_limit}
q_i \exp\{\theta_i(\ell_i)(p_i-\ell_i)\}
\to1 .
\]
Taking logarithms yields
\[
% \label{eq:upper_theta_gap_limit}
\theta_i(\ell_i)(p_i-\ell_i)
\to
\log\frac{1}{q_i}.
\]
This proves \eqref{eq:upper_theta_gap_product}. The expansion \eqref{eq:upper_theta_expansion} and the order estimate \eqref{eq:upper_theta_order} follow immediately.

\end{proof}

\begin{lemma}[Upper-endpoint expansion of the tilted drift]
\label{lem:upper_endpoint_v}
As \(\ell_i\uparrow p_i\),
\begin{equation}
\label{eq:upper_v_expansion}
v_i(\ell_i)
=
p_i-\ell_i
+
o(p_i-\ell_i).
\end{equation}
Equivalently,
\begin{equation}
\label{eq:upper_v_order}
v_i(\ell_i)
\sim
p_i-\ell_i .
\end{equation}
\end{lemma}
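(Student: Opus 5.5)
We prove the tilted-drift expansion by writing $v_i(\ell_i)$ explicitly and feeding in the root asymptotics of Lemma~\ref{lem:upper_endpoint_theta}. Differentiating the CGF gives
\[
v_i(\ell_i)
=
\frac{q_i(p_i-\ell_i)e^{\theta_i(p_i-\ell_i)}-p_i(q_i+\ell_i)e^{-\theta_i(q_i+\ell_i)}}{q_ie^{\theta_i(p_i-\ell_i)}+p_ie^{-\theta_i(q_i+\ell_i)}},
\qquad \theta_i=\theta_i(\ell_i).
\]
At the root the denominator equals $1$ by the Lundberg equation~\eqref{eq:upper_lundberg_balance}. Hence
\[
v_i(\ell_i)=(p_i-\ell_i)\,q_ie^{\theta_i(p_i-\ell_i)}-p_i(q_i+\ell_i)e^{-\theta_i(q_i+\ell_i)}.
\]
Since $(p_i-\ell_i)+(q_i+\ell_i)=1$, an equivalent form is $v_i=(p_i-\ell_i)-p_ie^{-\theta_i(q_i+\ell_i)}$. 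To see this, use $q_ie^{\theta_i(p_i-\ell_i)}=1-p_ie^{-\theta_i(q_i+\ell_i)}$ and expand:
\[
(p_i-\ell_i)\bigl[1-p_ie^{-\theta_i(q_i+\ell_i)}\bigr]-p_i(q_i+\ell_i)e^{-\theta_i(q_i+\ell_i)}
=(p_i-\ell_i)-p_ie^{-\theta_i(q_i+\ell_i)}.
\]

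It therefore suffices to show $p_ie^{-\theta_i(q_i+\ell_i)}=o(p_i-\ell_i)$ as $\ell_i\uparrow p_i$.

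\begin{itemize}
\item By Lemma~\ref{lem:upper_endpoint_theta}, $\theta_i(\ell_i)\ge c/(p_i-\ell_i)$ for some $c>0$ once $\ell_i$ is close to $p_i$; any $c<\log(1/q_i)$ works, and $\log(1/q_i)>0$ because $q_i\le 1-\epsilon_{\mathrm p}<1$.
\item Also $q_i+\ell_i\ge q_i\ge\epsilon_{\mathrm p}$.
\item With $u:=p_i-\ell_i\downarrow0$, these give $p_ie^{-\theta_i(q_i+\ell_i)}\le e^{-c\epsilon_{\mathrm p}/u}$.
\item Since $e^{-a/u}/u\to0$ for any $a>0$, this term is $o(u)$.
\end{itemize}

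This yields $v_i(\ell_i)=(p_i-\ell_i)+o(p_i-\ell_i)$, and the asymptotic equivalence~\eqref{eq:upper_v_order} follows at once.

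The only delicate step is the algebraic simplification at the root, namely using the Lundberg balance to make the denominator $1$ and to remove the positive-jump exponential. Once that is done, the exponentially small remainder follows directly from the divergence scale of $\theta_i$ established in Lemma~\ref{lem:upper_endpoint_theta}.
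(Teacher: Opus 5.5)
Your proposal is correct and follows the paper's proof essentially verbatim: you use the Lundberg balance to set the denominator to one, reduce to the exact identity $v_i(\ell_i)=(p_i-\ell_i)-p_ie^{-\theta_i(\ell_i)(q_i+\ell_i)}$, and then use the $1/(p_i-\ell_i)$ divergence scale from Lemma~\ref{lem:upper_endpoint_theta} to show the exponential term is $o(p_i-\ell_i)$. The only cosmetic difference is that you bound $q_i+\ell_i\ge\epsilon_{\mathrm p}$, whereas the paper uses $q_i+\ell_i\to1$ to get the factor $\tfrac12\log(1/q_i)$; both work equally well.
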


\begin{proof}\leavevmode
By definition,
\(
% \label{eq:upper_v_def_recall}
v_i(\ell_i)
=
\partial_\theta\Lambda_i(\theta_i(\ell_i),\ell_i)
\). 
Using the explicit form of \(\Lambda_i\), and using the Lundberg equation to set the denominator equal to one, we obtain
\begin{align}
v_i(\ell_i)
&=
q_i(p_i-\ell_i)\exp\{\theta_i(\ell_i)(p_i-\ell_i)\}
-
p_i(q_i+\ell_i)\exp\{-\theta_i(\ell_i)(q_i+\ell_i)\} .
\label{eq:upper_v_direct_formula}
\end{align}
The Lundberg equation also gives
\begin{equation}
\label{eq:upper_lundberg_rearranged}
q_i \exp\{\theta_i(\ell_i)(p_i-\ell_i)\}
=
1
-
p_i \exp\{-\theta_i(\ell_i)(q_i+\ell_i)\} .
\end{equation}
Substituting \eqref{eq:upper_lundberg_rearranged} into \eqref{eq:upper_v_direct_formula}, we get
\begin{align}
v_i(\ell_i)
&=
(p_i-\ell_i)
\left[
1
-
p_i \exp\{-\theta_i(\ell_i)(q_i+\ell_i)\}
\right]
-
p_i(q_i+\ell_i)\exp\{-\theta_i(\ell_i)(q_i+\ell_i)\}
\nonumber\\
&=
(p_i-\ell_i)
-
p_i
\left[
(p_i-\ell_i)+(q_i+\ell_i)
\right]
\exp\{-\theta_i(\ell_i)(q_i+\ell_i)\}
\nonumber\\
&=
(p_i-\ell_i)
-
p_i \exp\{-\theta_i(\ell_i)(q_i+\ell_i)\} .
\label{eq:upper_v_exact_identity}
\end{align}
By Lemma~\ref{lem:upper_endpoint_theta},
\[
% \label{eq:upper_theta_large_scale}
\theta_i(\ell_i)
=
\frac{\log(1/q_i)}{p_i-\ell_i}
+
o\left(
\frac{1}{p_i-\ell_i}
\right).
\]
Since \(\log(1/q_i)>0\) and $q_i + \ell_i \to 1$, for all \(\ell_i\) sufficiently close to \(p_i\),
\[
% \label{eq:upper_negative_exponent_lower_bound}
\theta_i(\ell_i)(q_i+\ell_i)
\ge
\frac{1}{2}
\frac{\log(1/q_i)}{p_i-\ell_i}.
\]
Therefore,
\[
% \label{eq:upper_exponential_ratio_bound}
0
\le
\frac{
\exp\{-\theta_i(\ell_i)(q_i+\ell_i)\}
}{
p_i-\ell_i
}
\le
\frac{
\exp\left\{
-\dfrac{\log(1/q_i)}{2(p_i-\ell_i)}
\right\}
}{
p_i-\ell_i
}.
\]
The right-hand side converges to zero because the exponential decay in \((p_i-\ell_i)^{-1}\) dominates the logarithmic divergence of \((p_i-\ell_i)^{-1}\). Equivalently,
\[
\log
\left[
\frac{
\exp\left\{
-\dfrac{\log(1/q_i)}{2(p_i-\ell_i)}
\right\}
}{
p_i-\ell_i
}
\right]
=
-\frac{\log(1/q_i)}{2(p_i-\ell_i)}
-\log(p_i-\ell_i)
\longrightarrow
-\infty .
% \label{eq:upper_exponential_ratio_log}
\]
Hence
\begin{equation}
\label{eq:upper_exponential_negligible}
\exp\{-\theta_i(\ell_i)(q_i+\ell_i)\}
=
o(p_i-\ell_i).
\end{equation}
Substituting \eqref{eq:upper_exponential_negligible} into \eqref{eq:upper_v_exact_identity} gives
\[
% \label{eq:upper_v_final}
v_i(\ell_i)
=
p_i-\ell_i
+
o(p_i-\ell_i).
\]
This proves \eqref{eq:upper_v_expansion} and \eqref{eq:upper_v_order}.

\end{proof}

Combining Lemmas~\ref{lem:lower_endpoint_theta_v_expansion} and \ref{lem:upper_endpoint_v}, together with the strict positivity and continuity of \(v_i\) on compact subintervals of \((0,p_i)\), yields the global endpoint order
\begin{equation}
\label{eq:global_tilted_drift_order}
v_i(\ell_i)
\asymp
\min\{\ell_i,p_i-\ell_i\},
\qquad
\ell_i\in(0,p_i).
\end{equation}

\section{Two-Sided Lundberg Bounds with Discrete Overshoot}
\label{app:lundberg_overshoot_bounds}

This section turns the root-based crossing exponent from
Appendix~\ref{app:cgf_drawdown_geometry} into the tail approximation used in the
suboptimal-convergence calculation. For each arm \(i\), the object is the
drawdown tail
\[
\overline F_{i,\circ}(\ell_i)
=
\mathbb P(L_i\circ \ell_i),
\qquad
\circ\in\{>,\ge\}.
\]
Appendix~\ref{app:cgf_drawdown_geometry} identifies the exponent
\(\theta_i(\ell_i)\). We now use the martingale associated with this exponent to
control the crossing probability itself.

At the first crossing time, the walk has reached the boundary
\(b_i(\ell_i)=\Delta_i+\beta\ell_i\) plus an overshoot
\(r_i(\ell_i)\). The overshoot is bounded because a crossing can occur only
through a single positive jump of size \(p_i-\ell_i\). Optional stopping
therefore gives the exact representation
\[
\overline F_{i,\circ}(\ell_i)
=
\frac{
\exp\{-\theta_i(\ell_i)b_i(\ell_i)\}
}{
\mathbb E\!\left[
\exp\{\theta_i(\ell_i)r_i(\ell_i)\}
\,\middle|\,
\tau_i^\circ(\ell_i)<\infty
\right]
}.
\]
The denominator is bounded between one and
\(\exp\{\theta_i(\ell_i)(p_i-\ell_i)\}\), which yields a two-sided
nonasymptotic envelope for both strict and weak drawdown tails.

We then specialize this envelope to the lower-endpoint regime that is relevant
under large regularization. Using
\[
\theta_i(\ell_i)
=
\frac{2}{p_i(1-p_i)}\ell_i+O(\ell_i^2),
\qquad
b_i(\ell_i)=\Delta_i+\beta\ell_i,
\]
we obtain, uniformly for \(0<\ell_i\le\eta_\Delta\),
\[
\overline F_{i,\circ}(\ell_i)
=
\exp\{-\lambda_i\ell_i\}
\left[1+o(1)\right],
\qquad
\lambda_i=\frac{2\Delta_i}{p_i(1-p_i)}.
\]
Outside this local window, the same root-based envelope gives a
super-polynomial tail bound. Thus the exact law of \(L_i\) can be localized near
zero and replaced, inside suitable Stieltjes integrals, by the exponential
reference measure with density \(\lambda_i \exp\{-\lambda_i x\}\). This replacement
is the input used in the subsequent product-tail and suboptimal-convergence
calculations.

\subsection{Root-Based One-Arm Drawdown Envelope}
\label{app:root_based_one_arm_drawdown_envelope}

For \(\circ\in\{>,\ge\}\), define
\[
% \label{eq:rb_tau_convention}
\tau_i^{\circ}(\ell_i)
:=
\begin{cases}
\tau_i^{>}(\ell_i), & \circ=>,\\
\tau_i^{\ge}(\ell_i), & \circ=\ge .
\end{cases}
\]
The corresponding drawdown tails are
\[
% \label{eq:rb_tail_def}
\overline F_{i,>}(\ell_i)
:=
\mathbb P(L_i>\ell_i),
\qquad
\overline F_{i,\ge}(\ell_i)
:=
\mathbb P(L_i\ge\ell_i).
\]
By Lemma~\ref{lem:bc_one_arm_crossing},
\begin{equation}
\label{eq:rb_tail_tau_equivalence}
\overline F_{i,\circ}(\ell_i)
=
\mathbb P(\tau_i^{\circ}(\ell_i)<\infty),
\qquad
\circ\in\{>,\ge\}.
\end{equation}

For the selected convention \(\circ\in\{>,\ge\}\), we use the terminal residual \(r_i(\ell_i)\) already defined in \eqref{eq:bc_overshoot_def}, with \(\tau_i(\ell_i)=\tau_i^{\circ}(\ell_i)\). Thus, on \(\{\tau_i^{\circ}(\ell_i)<\infty\}\), the unified terminal representation \eqref{eq:bc_terminal_state_unified} gives
\begin{equation}
\label{eq:rb_terminal_state_recall}
U_{i,\tau_i^{\circ}(\ell_i)}(\ell_i)
=
b_i(\ell_i)+r_i(\ell_i),
\qquad
0\le r_i(\ell_i)\le p_i-\ell_i .
\end{equation}
For strict crossing, \(0<r_i(\ell_i)\le p_i-\ell_i\), whereas for weak crossing, \(0\le r_i(\ell_i)<p_i-\ell_i\). The coarser bound in \eqref{eq:rb_terminal_state_recall} is sufficient for the Lundberg bounds below.

\begin{lemma}[Root-based first-passage identity]
\label{lem:root_based_first_passage_identity}
For every arm \(i\), every \(\ell_i\in(0,p_i)\), and every
\(\circ\in\{>,\ge\}\),
\begin{equation}
\label{eq:rb_first_passage_identity}
\overline F_{i,\circ}(\ell_i)
=
\frac{
\exp\{-\theta_i(\ell_i)b_i(\ell_i)\}
}{
\mathbb E\left[
\exp\{\theta_i(\ell_i)r_i(\ell_i)\}
\,\middle|\,
\tau_i^{\circ}(\ell_i)<\infty
\right]
}.
\end{equation}
Equivalently,
\begin{equation}
\label{eq:rb_exact_expectation_identity}
1
=
\mathbb E\left[
\exp\left\{
\theta_i(\ell_i)
\bigl(b_i(\ell_i)+r_i(\ell_i)\bigr)
\right\}
\mathbf 1\{\tau_i^{\circ}(\ell_i)<\infty\}
\right].
\end{equation}
\end{lemma}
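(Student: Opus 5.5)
The plan is the standard Wald--Lundberg change-of-measure argument: build the exponential martingale at the Lundberg root and stop it at the first crossing. Fix \(i\), \(\ell_i\in(0,p_i)\), and a convention \(\circ\). Write \(\theta=\theta_i(\ell_i)\), \(U_n=U_{i,n}(\ell_i)\), and \(\tau=\tau_i^{\circ}(\ell_i)\). By Lemma~\ref{lem:root_existence_uniqueness}, \(\Lambda_i(\theta,\ell_i)=0\). Hence \(W_n:=\exp\{\theta U_n\}\) is a mean-one product martingale with respect to the natural filtration of the stream \(X_{i,1},X_{i,2},\ldots\), since \(\mathbb E[\exp\{\theta Z_{i,r}(\ell_i)\}]=\exp\{\Lambda_i(\theta,\ell_i)\}=1\). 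The stopping time \(\tau\) may be infinite, so I first apply optional stopping at the bounded time \(\tau\wedge n\). This gives
\[
1=\mathbb E\bigl[W_{\tau}\mathbf 1\{\tau\le n\}\bigr]+\mathbb E\bigl[W_n\mathbf 1\{\tau>n\}\bigr].
\]

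Next I let \(n\to\infty\) in each term. On \(\{\tau\le n\}\), the terminal representation \eqref{eq:rb_terminal_state_recall} gives \(W_\tau=\exp\{\theta(b_i(\ell_i)+r_i(\ell_i))\}\). This is bounded by \(\exp\{\theta(b_i(\ell_i)+p_i-\ell_i)\}\). Monotone (or bounded) convergence then sends the first term to \(\mathbb E[\exp\{\theta(b_i+r_i)\}\mathbf 1\{\tau<\infty\}]\).

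The main obstacle is showing that the second term vanishes. On \(\{\tau>n\}\) we have \(U_n\le b_i(\ell_i)\) (strictly below for the weak convention), so \(W_n\mathbf 1\{\tau>n\}\le \exp\{\theta b_i(\ell_i)\}\), a deterministic bound. Moreover, \(U_n/n\to-\ell_i<0\) almost surely by the strong law, so \(W_n\to0\) almost surely. Dominated convergence therefore gives \(\mathbb E[W_n\mathbf 1\{\tau>n\}]\to0\). Passing to the limit proves \eqref{eq:rb_exact_expectation_identity}.

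To obtain \eqref{eq:rb_first_passage_identity}, I factor out \(\exp\{\theta b_i(\ell_i)\}\), which is deterministic, and write
\[
\mathbb E\bigl[\exp\{\theta r_i\}\mathbf 1\{\tau<\infty\}\bigr]=\mathbb P(\tau<\infty)\,\mathbb E\bigl[\exp\{\theta r_i\}\mid \tau<\infty\bigr].
\]
The conditional expectation is well defined as long as \(\mathbb P(\tau<\infty)>0\). This holds because the identity itself forces the indicator term to have positive expectation. Alternatively, an explicit path of consecutive failures crosses the boundary with positive probability. Rearranging and using \eqref{eq:rb_tail_tau_equivalence} to identify \(\mathbb P(\tau<\infty)=\overline F_{i,\circ}(\ell_i)\) yields the stated formula.
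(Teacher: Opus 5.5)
Your proposal is correct and follows essentially the same route as the paper: it builds the mean-one martingale \(\exp\{\theta U_n\}\) at the Lundberg root and applies optional stopping at \(\tau\wedge n\). It then uses monotone convergence for the crossed part, and dominated convergence (bound \(\exp\{\theta b_i(\ell_i)\}\) plus the strong law) for the uncrossed part, before factoring out \(\exp\{\theta b_i(\ell_i)\}\). Your explicit check that \(\mathbb P(\tau<\infty)>0\), so that the conditional expectation is well defined, is a small point the paper leaves implicit.
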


\begin{proof}\leavevmode
For every arm \(i\), every \(\ell_i\in(0,p_i)\), and \(\circ\in\{>,\ge\}\), to simplify notations inside the proof, write
\(
% \label{eq:rb_short_notation}
\theta_i:=\theta_i(\ell_i),\,
b_i:=b_i(\ell_i),\,
\tau_i:=\tau_i^{\circ}(\ell_i)
\).
Let
\[
% \label{eq:rb_filtration_def}
\mathcal F_{i,n}
:=
\sigma(X_{i,1},\ldots,X_{i,n}),
\qquad
n\ge0 .
\]
Define
\[
% \label{eq:rb_martingale_def}
\mathcal M_{i,n}
:=
\exp\left\{
\theta_i U_{i,n}(\ell_i)
-
n\Lambda_i(\theta_i,\ell_i)
\right\},
\qquad
\mathcal M_{i,0}=1 .
\]
Since \(\theta_i\) is the Lundberg root,
\(
% \label{eq:rb_root_zero}
\Lambda_i(\theta_i,\ell_i)=0
\). 
Therefore, 
\(
% \label{eq:rb_martingale_root_form}
\mathcal M_{i,n}
=
\exp\left\{
\theta_i U_{i,n}(\ell_i)
\right\}
\). 
Because
\(
% \label{eq:rb_walk_increment}
U_{i,n}(\ell_i)
=
U_{i,n-1}(\ell_i)+Z_{i,n}(\ell_i)
\), 
and \(Z_{i,n}(\ell_i)\) is independent of \(\mathcal F_{i,n-1}\), we have
\[\begin{aligned}
\mathbb E[\mathcal M_{i,n}\mid\mathcal F_{i,n-1}]
&=
\exp\left\{
\theta_i U_{i,n-1}(\ell_i)
-
(n-1)\Lambda_i(\theta_i,\ell_i)
\right\}
\times
\mathbb E\left[
\exp\left\{
\theta_i Z_{i,n}(\ell_i)
-
\Lambda_i(\theta_i,\ell_i)
\right\}
\right]
\nonumber\\
&=
\mathcal M_{i,n-1}.
% \label{eq:rb_martingale_property}
\end{aligned}\]
Thus \(\{\mathcal M_{i,n}\}_{n\ge0}\) is a positive mean-one martingale.

For a deterministic horizon \(N\), the stopped time \(\tau_i\wedge N\) is bounded. Optional stopping gives
\(
% \label{eq:rb_optional_stopping}
1
=
\mathbb E[\mathcal M_{i,\tau_i\wedge N}]
\). 
Splitting according to whether crossing has occurred by time \(N\),
\begin{equation}
1
=
\mathbb E\left[
\mathcal M_{i,\tau_i}\mathbf 1\{\tau_i\le N\}
\right]
+
\mathbb E\left[
\mathcal M_{i,N}\mathbf 1\{\tau_i>N\}
\right].
\label{eq:rb_stopped_split}
\end{equation}

Letting \(N\to\infty\), the first term converges by monotone convergence:
\begin{equation}
\label{eq:rb_first_term_limit}
\mathbb E\left[
\mathcal M_{i,\tau_i}\mathbf 1\{\tau_i\le N\}
\right]
\to
\mathbb E\left[
\mathcal M_{i,\tau_i}\mathbf 1\{\tau_i<\infty\}
\right].
\end{equation}
For the second term, on \(\{\tau_i>N\}\), the boundary has not yet been crossed, and hence
\(
% \label{eq:rb_no_crossing_bound}
U_{i,N}(\ell_i)\le b_i
\). 
Therefore, 
\[
% \label{eq:rb_martingale_domination}
0
\le
\mathcal M_{i,N}\mathbf 1\{\tau_i>N\}
\le
\exp\{\theta_i b_i\}.
\]
If \(\tau_i<\infty\), then \(\mathbf 1\{\tau_i>N\}=0\) for all sufficiently large \(N\). If \(\tau_i=\infty\), then by the strong law of large numbers and the negative drift of \(U_{i,n}(\ell_i)\),
\(
% \label{eq:rb_negative_drift_slln}
{U_{i,N}(\ell_i)}/{N}
\to
-\ell_i
\) a.s. 
Thus \(U_{i,N}(\ell_i)\to-\infty\), and consequently
\(
% \label{eq:rb_second_term_pointwise_zero}
\mathcal M_{i,N}\mathbf 1\{\tau_i>N\}
\to0
\) a.s. 
Dominated convergence gives
\begin{equation}
\label{eq:rb_second_term_limit_zero}
\mathbb E\left[
\mathcal M_{i,N}\mathbf 1\{\tau_i>N\}
\right]
\to0 .
\end{equation}
Combining \eqref{eq:rb_stopped_split}, \eqref{eq:rb_first_term_limit}, and \eqref{eq:rb_second_term_limit_zero}, we obtain
\begin{equation}
\label{eq:rb_limit_identity}
1
=
\mathbb E\left[
\mathcal M_{i,\tau_i}\mathbf 1\{\tau_i<\infty\}
\right].
\end{equation}

On \(\{\tau_i<\infty\}\), the terminal representation \eqref{eq:rb_terminal_state_recall} gives 
\(
% \label{eq:rb_martingale_at_crossing}
\mathcal M_{i,\tau_i}
=
\exp\left\{
\theta_i
\bigl(b_i+r_i(\ell_i)\bigr)
\right\}
\). 
Substituting this into \eqref{eq:rb_limit_identity} gives
\[
% \label{eq:rb_expectation_identity_expanded}
1
=
\mathbb E\left[
\exp\left\{
\theta_i
\bigl(b_i+r_i(\ell_i)\bigr)
\right\}
\mathbf 1\{\tau_i<\infty\}
\right],
\]
which is \eqref{eq:rb_exact_expectation_identity}. Factoring out \(\exp\{\theta_i b_i\}\), we get
\[
\begin{aligned}
1
&=
\exp\{\theta_i b_i\}
\mathbb E\left[
\exp\{\theta_i r_i(\ell_i)\}
\mathbf 1\{\tau_i<\infty\}
\right]
\nonumber\\
&=
\exp\{\theta_i b_i\}
\mathbb P(\tau_i<\infty)
\mathbb E\left[
\exp\{\theta_i r_i(\ell_i)\}
\,\middle|\,
\tau_i<\infty
\right].
% \label{eq:rb_factorized_identity}
\end{aligned}
\]
Using \eqref{eq:rb_tail_tau_equivalence} and rearranging proves \eqref{eq:rb_first_passage_identity}.

\end{proof}

\begin{corollary}[Root-based drawdown tail envelope]
\label{cor:root_based_drawdown_tail_envelope}
For every arm \(i\), every \(\ell_i\in(0,p_i)\), and every \(\circ\in\{>,\ge\}\),
\begin{equation}
\label{eq:rb_two_sided_tail_bound}
\exp\left\{
-\theta_i(\ell_i)\bigl[b_i(\ell_i)+p_i-\ell_i\bigr]
\right\}
\le
\overline F_{i,\circ}(\ell_i)
\le
\exp\left\{
-\theta_i(\ell_i)b_i(\ell_i)
\right\}.
\end{equation}
In particular, the coarser but sometimes more convenient bound
\begin{equation}
\label{eq:rb_two_sided_tail_bound_coarse}
\exp\left\{
-\theta_i(\ell_i)\bigl[b_i(\ell_i)+p_i\bigr]
\right\}
\le
\overline F_{i,\circ}(\ell_i)
\le
\exp\left\{
-\theta_i(\ell_i)b_i(\ell_i)
\right\}
\end{equation}
also holds.
\end{corollary}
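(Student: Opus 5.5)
The plan is to read both bounds directly off the exact first-passage identity of Lemma~\ref{lem:root_based_first_passage_identity}. By \eqref{eq:rb_first_passage_identity}, for either convention $\circ\in\{>,\ge\}$,
\[
\overline F_{i,\circ}(\ell_i)=\frac{\exp\{-\theta_i(\ell_i)b_i(\ell_i)\}}{D_{i,\circ}(\ell_i)},\qquad
D_{i,\circ}(\ell_i):=\mathbb E\!\left[\exp\{\theta_i(\ell_i)r_i(\ell_i)\}\,\middle|\,\tau_i^{\circ}(\ell_i)<\infty\right].
\]
Therefore it suffices to sandwich the denominator $D_{i,\circ}(\ell_i)$ between deterministic constants.

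First, some bookkeeping. The conditional expectation is only meaningful when $\mathbb P(\tau_i^\circ(\ell_i)<\infty)>0$. I will check this directly: a run of consecutive failures raises $U_{i,n}(\ell_i)$ by $p_i-\ell_i>0$ per step, so the finite boundary $b_i(\ell_i)$ is crossed with positive probability. Alternatively, \eqref{eq:rb_exact_expectation_identity} already forces the crossing event to have positive probability, since its left-hand side equals $1$.

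Next, the bounds on the denominator. By Lemma~\ref{lem:root_existence_uniqueness}, $\theta_i(\ell_i)>0$. By the terminal representation \eqref{eq:rb_terminal_state_recall}, on the crossing event the overshoot satisfies $0\le r_i(\ell_i)\le p_i-\ell_i$. Monotonicity of the exponential then gives, pointwise on the crossing event,
\[
1\le \exp\{\theta_i(\ell_i)r_i(\ell_i)\}\le \exp\{\theta_i(\ell_i)(p_i-\ell_i)\}.
\]
Taking conditional expectations yields $1\le D_{i,\circ}(\ell_i)\le \exp\{\theta_i(\ell_i)(p_i-\ell_i)\}$. Substituting into the identity produces \eqref{eq:rb_two_sided_tail_bound}: the lower bound of $1$ on $D$ gives the upper tail bound, and the upper bound on $D$ gives the lower tail bound.

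Finally, the coarse version \eqref{eq:rb_two_sided_tail_bound_coarse} follows because $\ell_i>0$ and $\theta_i(\ell_i)>0$ imply $\theta_i(\ell_i)(p_i-\ell_i)\le\theta_i(\ell_i)p_i$, which only weakens the lower bound.

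There is no real obstacle, since all the analytic work sits in Lemma~\ref{lem:root_based_first_passage_identity}. The one point needing care is that the overshoot bound $r_i\le p_i-\ell_i$ holds for both the strict and the weak crossing conventions. This holds because each crossing occurs through a single up-jump of size $p_i-\ell_i$ from a state at most $b_i(\ell_i)$, as recorded after \eqref{eq:bc_terminal_state_unified}.
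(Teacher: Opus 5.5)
Your proposal is correct and matches the paper's proof essentially step for step. You bound the denominator $\mathbb E[\exp\{\theta_i(\ell_i)r_i(\ell_i)\}\mid\tau_i^{\circ}(\ell_i)<\infty]$ between $1$ and $\exp\{\theta_i(\ell_i)(p_i-\ell_i)\}$ using the overshoot bound, substitute into the identity of Lemma~\ref{lem:root_based_first_passage_identity}, and use $p_i-\ell_i\le p_i$ for the coarse version; your added check that the crossing event has positive probability is a valid detail the paper leaves implicit.
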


\begin{proof}\leavevmode
By \eqref{eq:rb_terminal_state_recall} and $\theta_i(\ell_i) > 0$,
\[
% \label{eq:rb_exp_residual_bounds}
1
\le
\exp\{\theta_i(\ell_i)r_i(\ell_i)\}
\le
\exp\{\theta_i(\ell_i)(p_i-\ell_i)\}
\]
on \(\{\tau_i^{\circ}(\ell_i)<\infty\}\). Taking conditional expectations gives
\begin{equation}
\label{eq:rb_denominator_bounds}
1
\le
\mathbb E\left[
\exp\{\theta_i(\ell_i)r_i(\ell_i)\}
\,\middle|\,
\tau_i^{\circ}(\ell_i)<\infty
\right]
\le
\exp\{\theta_i(\ell_i)(p_i-\ell_i)\}.
\end{equation}
Substituting \eqref{eq:rb_denominator_bounds} into \eqref{eq:rb_first_passage_identity} proves \eqref{eq:rb_two_sided_tail_bound}. Since \(p_i-\ell_i\le p_i\), \eqref{eq:rb_two_sided_tail_bound_coarse} follows immediately.

\end{proof}

\subsection{Asymptotic Closed-Form Envelope Near the Lower Endpoint}
\label{app:asymptotic_closed_form_envelope}
We now specialize the root-based envelope to the asymptotic regime used
throughout the subsequent closed-form analysis.

\begin{assumption}[Asymptotic regime]
\label{ass:app_asymptotic_regime}
As \(\Delta_1\to\infty\), the regularization scale satisfies
\begin{equation}
\label{eq:app_asymptotic_regularization_scale}
\Delta_i
=
\Theta(\Delta_1),
\qquad
\beta
=
O(\Delta_1),
\qquad
i=1,\ldots,K.
\end{equation}
Moreover, for a common \(\kappa\in(1/2,1)\), every nonzero pairwise gap
satisfies
\begin{equation}
\label{eq:app_asymptotic_gap_scale}
p_i-p_j
=
\Theta(\Delta_1^{-\kappa}),
\qquad
1\le i<j\le K
\text{ such that }p_i>p_j.
\end{equation}
\end{assumption}

All asymptotic statements below are taken with respect to
\(\Delta_1\to\infty\), while \(K\) and the finite-horizon parameter \(T\)
remain fixed. Condition
\eqref{eq:app_asymptotic_regularization_scale} places the regularization
margins of all arms on the same asymptotic scale. Since
\(
\alpha
=
\Delta_1+p_1\beta
\), 
it also follows that
\(
\alpha
=
\Theta(\Delta_1)
\). 
Furthermore, the standing interior condition~\eqref{eq:app_ordered_means} implies
\[
\lambda_i
:=
\frac{2\Delta_i}{\sigma_i^2}
=
\frac{2\Delta_i}{p_i(1-p_i)}
=
\Theta(\Delta_1),
\qquad
i=1,\ldots,K.
\]

For the subsequent lower-endpoint analysis, define
\begin{equation}
\label{eq:app_asymptotic_eta_def}
\eta_\Delta
:=
\Delta_1^{-\kappa}.
\end{equation}
Because \(\kappa\in(1/2,1)\),
\[
\eta_\Delta\to0,
\qquad
\Delta_1\eta_\Delta\to\infty,
\qquad
\Delta_1\eta_\Delta^2\to0.
\]
Accordingly, every nonzero pairwise gap satisfies
\[
p_i-p_j
=
\Theta(\eta_\Delta),
\qquad
\Delta_1(p_i-p_j)\to\infty,
\qquad
\Delta_1(p_i-p_j)^2\to0.
\]

Since \(p_i\ge\epsilon_{\mathrm p}\) for every arm and
\(\eta_\Delta\to0\), we have \(\eta_\Delta<p_i\) for all
\(i=1,\ldots,K\) and all sufficiently large \(\Delta_1\). Define the
lower-endpoint bulk window and its complementary tail window by
\[
\mathcal B_i
:=
(0,\eta_\Delta],
\qquad
\mathcal T_i
:=
(\eta_\Delta,p_i),
\qquad
i=1,\ldots,K.
\]

\begin{lemma}[Closed-form lower-endpoint envelope]
\label{lem:asymptotic_closed_form_tail}
Under Assumption~\ref{ass:app_asymptotic_regime}, for every arm \(i=1,\ldots,K\) and every \(\circ\in\{>,\ge\}\), the lower-endpoint tail satisfies
\begin{equation}
\label{eq:acf_local_closed_form_tail_quantified}
\overline F_{i,\circ}(\ell_i)
=
\exp\{-\lambda_i\ell_i\}
\left[
1+O(\Delta_1^{1-2\kappa})
\right],
\qquad
\ell_i\in\mathcal B_i,
\end{equation}
uniformly over \(\ell_i\in\mathcal B_i\). Moreover, there exists a constant \(c_i>0\), depending only on \(p_i\), such that for all sufficiently large \(\Delta_1\),
\begin{equation}
\label{eq:acf_tail_suppression_quantified}
\sup_{\ell_i\in\mathcal T_i}
\overline F_{i,\circ}(\ell_i)
\le
\exp\{-c_i\Delta_1\eta_\Delta\}
=
O(\Delta_1^{-\infty}).
\end{equation}
\end{lemma}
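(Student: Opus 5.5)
The proof will start from the two-sided root-based envelope of Corollary~\ref{cor:root_based_drawdown_tail_envelope}:
\[
\exp\{-\theta_i(\ell_i)[b_i(\ell_i)+p_i-\ell_i]\}\le\overline F_{i,\circ}(\ell_i)\le\exp\{-\theta_i(\ell_i)b_i(\ell_i)\}.
\]
Everything then reduces to controlling the exponent $\theta_i(\ell_i)b_i(\ell_i)$ and the overshoot term $\theta_i(\ell_i)(p_i-\ell_i)$ in the two windows $\mathcal B_i$ and $\mathcal T_i$.

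\textbf{Bulk window.} For $\ell_i\in\mathcal B_i=(0,\eta_\Delta]$ we have $\ell_i\to0$ uniformly. Lemma~\ref{lem:lower_endpoint_theta_v_expansion}, whose remainder is uniform under the interior condition, gives $\theta_i(\ell_i)=2\ell_i/\sigma_i^2+O(\ell_i^2)$. With $b_i(\ell_i)=\Delta_i+\beta\ell_i$ this yields
\[
\theta_i(\ell_i)b_i(\ell_i)=\lambda_i\ell_i+O(\Delta_i\ell_i^2)+O(\beta\ell_i^2)+O(\beta\ell_i^3).
\]
Since $\Delta_i=\Theta(\Delta_1)$ and $\beta=O(\Delta_1)$, all error terms are $O(\Delta_1\eta_\Delta^2)=O(\Delta_1^{1-2\kappa})$ uniformly on $\mathcal B_i$. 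The overshoot term obeys $\theta_i(\ell_i)(p_i-\ell_i)\le\theta_i(\ell_i)p_i=O(\eta_\Delta)=O(\Delta_1^{-\kappa})$, and $\kappa\ge 2\kappa-1$ because $\kappa<1$, so it is also $O(\Delta_1^{1-2\kappa})$. Both sides of the envelope therefore equal $\exp\{-\lambda_i\ell_i\}\exp\{O(\Delta_1^{1-2\kappa})\}$. Since $\Delta_1^{1-2\kappa}\to0$ (as $\kappa>1/2$), we can linearize $e^{x}=1+O(x)$, which gives \eqref{eq:acf_local_closed_form_tail_quantified}.

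\textbf{Tail window.} For $\ell_i\in\mathcal T_i=(\eta_\Delta,p_i)$ only the upper bound is needed. Lemma~\ref{lem:lower_endpoint_root_equivalence} says $\theta_i$ is increasing, and $b_i(\ell_i)\ge\Delta_i$ because $\beta\ge0$. Hence
\[
\theta_i(\ell_i)b_i(\ell_i)\ge\theta_i(\eta_\Delta)\Delta_i.
\]
By the lower-endpoint expansion, $\theta_i(\eta_\Delta)\ge\eta_\Delta/\sigma_i^2$ for large $\Delta_1$, and $\Delta_i\ge c\Delta_1$ by Assumption~\ref{ass:app_asymptotic_regime}. This gives the bound $\exp\{-c_i\Delta_1\eta_\Delta\}$, where $\Delta_1\eta_\Delta=\Delta_1^{1-\kappa}\to\infty$ polynomially. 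Such a bound is smaller than any power of $\Delta_1$, which is exactly the $O(\Delta_1^{-\infty})$ claim.

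\textbf{Main obstacle.} The delicate step is the uniformity of the bulk estimate. We must check that the $O(\ell_i^2)$ remainder in $\theta_i$ is uniform, which follows from analyticity near $0$ and the bound $p_i\in[\epsilon_{\mathrm p},1-\epsilon_{\mathrm p}]$. We must also verify that the $\beta\ell_i$ part of the boundary does not add a first-order term. It does not: $\beta\ell_i\cdot\theta_i=O(\beta\ell_i^2)$, which is absorbed precisely because $\beta=O(\Delta_1)$ and $\Delta_1\eta_\Delta^2\to0$. The constant $c_i$ in the tail bound may absorb the implicit constant from $\Delta_i=\Theta(\Delta_1)$; we take it to depend on $p_i$ and on that fixed ratio.
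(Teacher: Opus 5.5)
Your proposal is correct and follows essentially the same route as the paper. The paper writes $\log\overline F_{i,\circ}(\ell_i)$ through the exact first-passage identity and bounds the logarithm of the overshoot expectation between $0$ and $\theta_i(\ell_i)(p_i-\ell_i)=O(\ell_i)$, which is equivalent to your use of the two-sided envelope; its tail-window argument is the same as yours.

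One small refinement concerns the constant. The paper gets $b_i(\ell_i)\ge\Delta_i\ge\Delta_1$ directly from $\Delta_i=\alpha-p_i\beta\ge\alpha-p_1\beta$, using $p_i\le p_1$ and $\beta\ge0$. This makes $c_i$ depend only on $p_i$, as the statement claims, rather than also on the implicit constant in $\Delta_i=\Theta(\Delta_1)$.
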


\begin{proof}\leavevmode
We first prove the quantified bulk approximation. By Lemma~\ref{lem:lower_endpoint_theta_v_expansion},
\begin{equation}
\label{eq:acf_theta_local_quantified}
\theta_i(\ell_i)
=
\frac{2}{\sigma_i^2}\ell_i
+
O(\ell_i^2)
\qquad
\text{as }\ell_i\downarrow0 .
\end{equation}
Since \(\eta_\Delta=\Delta_1^{-\kappa}\to0\), this expansion holds uniformly for \(\ell_i\in\mathcal B_i=(0,\eta_\Delta]\). Using
\(
% \label{eq:acf_boundary_recall_quantified}
b_i(\ell_i)=\Delta_i+\beta\ell_i
\), 
we obtain, uniformly on \(\mathcal B_i\),
\begin{align}
\theta_i(\ell_i)b_i(\ell_i)
&=
\left[
\frac{2}{\sigma_i^2}\ell_i
+
O(\ell_i^2)
\right]
\left[
\Delta_i+\beta\ell_i
\right]
\nonumber\\
&=
\frac{2\Delta_i}{\sigma_i^2}\ell_i
+
O(\beta\ell_i^2)
+
O(\Delta_i\ell_i^2)
+
O(\beta\ell_i^3)
\nonumber\\
&=
\lambda_i\ell_i
+
O(\Delta_1\ell_i^2),
\label{eq:acf_theta_b_bulk_quantified}
\end{align}
where we used \(\lambda_i=2\Delta_i/\sigma_i^2\), \(\Delta_i=\Theta(\Delta_1)\), \(\beta=O(\Delta_1)\), and \(\ell_i\le\eta_\Delta\to0\).

By Lemma~\ref{lem:root_based_first_passage_identity},
\[
\log \overline F_{i,\circ}(\ell_i)
=
-\theta_i(\ell_i)b_i(\ell_i)
-
\log
\mathbb E\left[
\exp\{\theta_i(\ell_i)r_i(\ell_i)\}
\,\middle|\,
\tau_i^{\circ}(\ell_i)<\infty
\right].
% \label{eq:acf_exact_log_tail_quantified}
\]
The residual bound in \eqref{eq:rb_terminal_state_recall} gives \(0\le r_i(\ell_i) \le p_i-\ell_i \) on \(\{\tau_i^{\circ}(\ell_i)<\infty\}\). Therefore, since \(\theta_i(\ell_i)>0\),
\begin{align}
0
&\le
\log
\mathbb E\left[
\exp\{\theta_i(\ell_i)r_i(\ell_i)\}
\,\middle|\,
\tau_i^{\circ}(\ell_i)<\infty
\right]
\nonumber\\
&\le
\theta_i(\ell_i)(p_i-\ell_i)
\nonumber\\
&=
O(\ell_i),
\label{eq:acf_residual_log_bound_quantified}
\end{align}
uniformly on \(\mathcal B_i\), where the last step uses \eqref{eq:acf_theta_local_quantified}. Combining \eqref{eq:acf_theta_b_bulk_quantified} and \eqref{eq:acf_residual_log_bound_quantified}, we obtain
\begin{equation}
\label{eq:acf_log_tail_quantified}
\log \overline F_{i,\circ}(\ell_i)
=
-\lambda_i\ell_i
+
O(\Delta_1\ell_i^2+\ell_i),
\qquad
\ell_i\in\mathcal B_i .
\end{equation}
Consequently, for some constant \(C_i<\infty\) independent of \(\Delta_1\),
\[
\sup_{\ell_i\in\mathcal B_i}
\left|
\log \overline F_{i,\circ}(\ell_i)+\lambda_i\ell_i
\right|
\le
C_i(\Delta_1\eta_\Delta^2+\eta_\Delta)
=
C_i\left(\Delta_1^{1-2\kappa}+\Delta_1^{-\kappa}\right)
=
O(\Delta_1^{1-2\kappa}),
% \label{eq:acf_uniform_log_error_bound_quantified}
\]
where the last equality follows from \(\kappa<1\). Since \(\kappa>1/2\),
\(
% \label{eq:acf_uniform_log_error_zero_quantified}
\Delta_1^{1-2\kappa}\to0
\). 
Exponentiating \eqref{eq:acf_log_tail_quantified} and using \(\exp\{O(x)\}=1+O(x)\) as \(x\to0\), uniformly in \(\ell_i\in\mathcal B_i\), yields
\[
% \label{eq:acf_local_closed_form_tail_quantified_proof}
\overline F_{i,\circ}(\ell_i)
=
\exp\{-\lambda_i\ell_i\}
\left[
1+O(\Delta_1^{1-2\kappa})
\right],
\qquad
\ell_i\in\mathcal B_i .
\]
This proves the local approximation \eqref{eq:acf_local_closed_form_tail_quantified}.

It remains to prove the tail suppression on \(\mathcal T_i=(\eta_\Delta,p_i)\). By Corollary~\ref{cor:root_based_drawdown_tail_envelope},
\begin{equation}
\label{eq:acf_tail_root_upper_quantified}
\overline F_{i,\circ}(\ell_i)
\le
\exp\{-\theta_i(\ell_i)b_i(\ell_i)\},
\qquad
\ell_i\in(0,p_i).
\end{equation}
By Lemma~\ref{lem:lower_endpoint_root_equivalence}, \(\ell_i\mapsto\theta_i(\ell_i)\) is increasing. Hence, for every \(\ell_i\in\mathcal T_i\),
\begin{equation}
\label{eq:acf_tail_theta_lower_quantified}
\theta_i(\ell_i)
\ge
\theta_i(\eta_\Delta).
\end{equation}
By Lemma~\ref{lem:lower_endpoint_theta_v_expansion},
\[
% \label{eq:acf_theta_eta_expansion_quantified}
\theta_i(\eta_\Delta)
=
\frac{2}{\sigma_i^2}\eta_\Delta
+
O(\eta_\Delta^2).
\]
Since \(\eta_\Delta\to0\), after decreasing the constant if necessary, there exists \(c_i>0\), depending only on \(p_i\), such that
\begin{equation}
\label{eq:acf_theta_eta_lower_quantified}
\theta_i(\eta_\Delta)
\ge
c_i\eta_\Delta
\end{equation}
for all sufficiently large \(\Delta_1\). Moreover,
\begin{equation}
\label{eq:acf_boundary_tail_lower_quantified}
b_i(\ell_i)
=
\Delta_i+\beta\ell_i
\ge
\Delta_i
\ge
\Delta_1 ,
\qquad
\ell_i\in\mathcal T_i,
\end{equation}
where the last inequality follows from \(\Delta_i=\alpha-p_i\beta\ge\alpha-p_1\beta=\Delta_1\) under the ordering \(p_i\le p_1\) and \(\beta\ge0\). Combining \eqref{eq:acf_tail_theta_lower_quantified}, \eqref{eq:acf_theta_eta_lower_quantified}, and \eqref{eq:acf_boundary_tail_lower_quantified}, we get
\[
% \label{eq:acf_tail_exponent_lower_quantified}
\theta_i(\ell_i)b_i(\ell_i)
\ge
c_i\Delta_1\eta_\Delta,
\qquad
\ell_i\in\mathcal T_i .
\]
Substituting this lower bound into \eqref{eq:acf_tail_root_upper_quantified} yields
\[
% \label{eq:acf_tail_suppression_intermediate_quantified}
\sup_{\ell_i\in\mathcal T_i}
\overline F_{i,\circ}(\ell_i)
\le
\exp\{-c_i\Delta_1\eta_\Delta\}.
\]
Since \(\eta_\Delta=\Delta_1^{-\kappa}\), we have
\(
% \label{eq:acf_tail_suppression_exponent_quantified}
\Delta_1\eta_\Delta
=
\Delta_1^{1-\kappa}
\). 
Because \(\kappa<1\), \(\Delta_1^{1-\kappa}\to\infty\). Moreover, for every \(M>0\),
\[
\Delta_1^M
\exp\{-c_i\Delta_1\eta_\Delta\}
=
\exp\left\{
M\log\Delta_1
-
c_i\Delta_1^{1-\kappa}
\right\}
\to0 .
% \label{eq:acf_tail_superpoly_check_quantified}
\]
Thus
\(
% \label{eq:acf_tail_suppression_superpoly_quantified}
\exp\{-c_i\Delta_1\eta_\Delta\}
=
O(\Delta_1^{-\infty})
\), 
and therefore
\[
% \label{eq:acf_tail_suppression_quantified_proof}
\sup_{\ell_i\in\mathcal T_i}
\overline F_{i,\circ}(\ell_i)
\le
\exp\{-c_i\Delta_1\eta_\Delta\}
=
O(\Delta_1^{-\infty}).
\]
This proves \eqref{eq:acf_tail_suppression_quantified}.

\end{proof}

The lemma shows that the lower-endpoint bulk provides the effective closed-form survival envelope
\[
% \label{eq:acf_effective_survival_envelope}
\overline F_{i,\circ}(\ell_i)
=
\exp\{-\lambda_i\ell_i\}
\left[
1+O(\Delta_1^{1-2\kappa})
\right],
\qquad
\ell_i\in(0,\eta_\Delta],
\]
whereas the complement \((\eta_\Delta,p_i)\) is exponentially suppressed by
\eqref{eq:acf_tail_suppression_quantified}. This is a survival-envelope statement, not an
absolute-continuity statement for the exact law of \(L_i\). The exact distribution \(F_i\)
need not admit an ordinary density. We therefore use the exponential form only through the
following local Stieltjes replacement. In the statement below, Stieltjes integrals written as
\(\int_0^{\eta_\Delta}\) are over the interval \([0,\eta_\Delta]\), and \(\int_0^{p_i}\) is over
\([0,p_i)\), which is harmless because \(L_i\in(0,p_i)\) almost surely.

\begin{corollary}[Local Stieltjes replacement through a reference integrand]
\label{cor:acf_local_stieltjes_replacement}
Let \(F_i\) denote the distribution function of \(L_i\). Let
\(G_\Delta^{\mathrm{ex}}:[0,p_i)\to[0,\infty)\) be a measurable exact integrand.
Suppose that there exists a nonnegative reference integrand
\(G_\Delta^{\mathrm{ref}}:[0,\eta_\Delta]\to[0,\infty)\), absolutely continuous on
\([0,\eta_\Delta]\), such that
\begin{equation}
\label{eq:acf_reference_integral_def}
J_\Delta
:=
\int_0^{\eta_\Delta}
G_\Delta^{\mathrm{ref}}(x)\lambda_i \exp\{-\lambda_i x\}\,dx
>0 .
\end{equation}
Assume the local exact-to-reference approximation
\begin{equation}
\label{eq:acf_exact_reference_local_error_condition}
\int_0^{\eta_\Delta}
\left|
G_\Delta^{\mathrm{ex}}(x)
-
G_\Delta^{\mathrm{ref}}(x)
\right|
\,dF_i(x)
=
o(J_\Delta),
\end{equation}
the reference weighted-variation condition
\begin{equation}
\label{eq:acf_reference_variation_condition}
G_\Delta^{\mathrm{ref}}(\eta_\Delta)\exp\{-\lambda_i\eta_\Delta\}
+
\int_0^{\eta_\Delta}
\left|
\frac{d}{dx}G_\Delta^{\mathrm{ref}}(x)
\right|
\exp\{-\lambda_i x\}\,dx
\le
C_G J_\Delta ,
\end{equation}
where \(C_G<\infty\) is independent of \(\Delta_1\), and the exact tail-localization
condition
\begin{equation}
\label{eq:acf_exact_tail_condition}
\mathbb E\left[
G_\Delta^{\mathrm{ex}}(L_i)\mathbf 1\{L_i>\eta_\Delta\}
\right]
=
o(J_\Delta).
\end{equation}
Then
\begin{equation}
\label{eq:acf_reference_stieltjes_replacement}
\int_0^{p_i}
G_\Delta^{\mathrm{ex}}(x)\,dF_i(x)
=
\int_0^{\eta_\Delta}
G_\Delta^{\mathrm{ref}}(x)\lambda_i \exp\{-\lambda_i x\}\,dx
\left[
1+o(1)
\right].
\end{equation}
\end{corollary}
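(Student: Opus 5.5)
We split the exact integral over $[0,p_i)$ into the bulk window $[0,\eta_\Delta]$ and the tail window $(\eta_\Delta,p_i)$. By the exact tail-localization condition \eqref{eq:acf_exact_tail_condition}, the tail piece is $o(J_\Delta)$. On the bulk window, the triangle inequality and \eqref{eq:acf_exact_reference_local_error_condition} let us replace $G_\Delta^{\mathrm{ex}}$ by $G_\Delta^{\mathrm{ref}}$ at cost $o(J_\Delta)$. It therefore remains to show
\[
\int_{[0,\eta_\Delta]}G_\Delta^{\mathrm{ref}}(x)\,dF_i(x)=J_\Delta(1+o(1)),
\]
that is, to compare the Stieltjes integral against the law of $L_i$ with the integral against the exponential reference density. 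Since $F_i$ need not have a density, we do this comparison through survival functions rather than densities.

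For the bulk comparison we integrate by parts. Write $\overline F_i(x)=\mathbb P(L_i>x)$ and $\overline E(x)=\exp\{-\lambda_i x\}$. Since $G_\Delta^{\mathrm{ref}}$ is absolutely continuous on $[0,\eta_\Delta]$ and $L_i>0$ almost surely, so that $\overline F_i(0)=1=\overline E(0)$, Fubini (equivalently, integration by parts for Lebesgue--Stieltjes integrals) gives
\[
\int_{[0,\eta_\Delta]}G_\Delta^{\mathrm{ref}}\,dF_i
=G_\Delta^{\mathrm{ref}}(0)-G_\Delta^{\mathrm{ref}}(\eta_\Delta)\overline F_i(\eta_\Delta)+\int_0^{\eta_\Delta}\frac{d}{dx}G_\Delta^{\mathrm{ref}}(x)\,\overline F_i(x)\,dx .
\]
The same identity holds with $\overline E$ in place of $\overline F_i$, and then the left-hand side is exactly $J_\Delta$. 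Subtracting the two identities, the $G_\Delta^{\mathrm{ref}}(0)$ terms cancel. Lemma~\ref{lem:asymptotic_closed_form_tail} gives $|\overline F_i(x)-\overline E(x)|\le \epsilon_\Delta\,\overline E(x)$ uniformly on $(0,\eta_\Delta]$, with $\epsilon_\Delta=O(\Delta_1^{1-2\kappa})\to0$; this uses the strict-tail version at $x=\eta_\Delta$, which is covered. Hence the difference is bounded by
\[
\epsilon_\Delta\Bigl[G_\Delta^{\mathrm{ref}}(\eta_\Delta)e^{-\lambda_i\eta_\Delta}+\int_0^{\eta_\Delta}\Bigl|\tfrac{d}{dx}G_\Delta^{\mathrm{ref}}\Bigr|e^{-\lambda_i x}\,dx\Bigr]\le \epsilon_\Delta C_G J_\Delta=o(J_\Delta)
\]
by the weighted-variation condition \eqref{eq:acf_reference_variation_condition}.

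Collecting the three $o(J_\Delta)$ errors, and using $J_\Delta>0$, yields \eqref{eq:acf_reference_stieltjes_replacement}. The main point needing care is the integration-by-parts step with a possibly atomic $F_i$. I would handle it by writing $G_\Delta^{\mathrm{ref}}(x)=G_\Delta^{\mathrm{ref}}(0)+\int_0^x (G_\Delta^{\mathrm{ref}})'(u)\,du$ and applying Tonelli to the positive and negative parts of the derivative separately. Integrability of those parts follows from the variation bound together with $e^{-\lambda_i x}\ge e^{-\lambda_i\eta_\Delta}>0$ on the bulk window. This reduces everything to the right-continuous survival function $\mathbb P(L_i>u)$ evaluated at Lebesgue-almost-every $u$, so atoms of $F_i$ cause no difficulty.
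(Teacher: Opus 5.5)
Your proposal is correct and follows the paper's proof essentially step for step. You split off the tail via the localization condition, replace the exact integrand by the reference one using the $dF_i$-weighted $L^1$ condition, and integrate by parts against both the survival function of $L_i$ and $e^{-\lambda_i x}$; the difference is then bounded by the uniform lower-endpoint tail error times the weighted-variation quantity. Your Tonelli justification of the integration by parts for a possibly atomic $F_i$ is a slightly more careful version of a step the paper simply asserts.
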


This corollary should be read as a replacement principle for integrals, not as an
absolute-continuity statement for \(F_i\). The exact integrand
\(G_\Delta^{\mathrm{ex}}\) may contain exact survival factors and need not be
smooth. Smoothness is required only for the reference integrand
\(G_\Delta^{\mathrm{ref}}\). In later applications,
\eqref{eq:acf_exact_reference_local_error_condition} is usually verified by a
uniform local approximation
\[
G_\Delta^{\mathrm{ex}}(x)
=
G_\Delta^{\mathrm{ref}}(x)(1+o(1)),
\qquad
0\le x\le\eta_\Delta,
\]
together with the estimate
\[
\int_0^{\eta_\Delta}
G_\Delta^{\mathrm{ref}}(x)\,dF_i(x)
=
J_\Delta(1+o(1)),
\]
which is proved inside the argument below.

\begin{proof}\leavevmode
We write
\[
% \label{eq:acf_distribution_and_strict_survival_recalled_for_reference}
F_i(x):=\mathbb P(L_i\le x),
\qquad
\overline F_{i,>}(x):=\mathbb P(L_i>x),
\qquad
0\le x<p_i .
\]
Since \(0<L_i<p_i\) almost surely, \(F_i(0)=0\), and
\[
% \label{eq:acf_F_strict_survival_relation_reference}
F_i(x)
=
1-\overline F_{i,>}(x),
\qquad
0\le x<p_i .
\]
The lower-endpoint tail envelope gives, uniformly for \(0<x\le\eta_\Delta\),
\[
% \label{eq:acf_survival_error_reference_version}
\overline F_{i,>}(x)
=
\exp\{-\lambda_i x\}
\left[
1+\varepsilon_\Delta(x)
\right],
\qquad
\sup_{0<x\le\eta_\Delta}
|\varepsilon_\Delta(x)|
=
O(\Delta_1^{1-2\kappa})
=
o(1).
\]
Consequently, after increasing the constant if necessary,
\begin{equation}
\label{eq:acf_survival_difference_reference_version}
\left|
\overline F_{i,>}(x)
-
\exp\{-\lambda_i x\}
\right|
\le
a_\Delta \exp\{-\lambda_i x\},
\qquad
0<x\le\eta_\Delta,
\end{equation}
where
\(
% \label{eq:acf_adelta_def}
a_\Delta:=C\Delta_1^{1-2\kappa}=o(1)
\). 

We first reduce the local exact integral to the reference integral. Define
\[
I_\Delta^{\mathrm{ex,loc}}
:=
\int_0^{\eta_\Delta}
G_\Delta^{\mathrm{ex}}(x)\,dF_i(x),
% \label{eq:acf_local_exact_integral_def}
\qquad
I_\Delta^{\mathrm{ref,loc}}
:=
\int_0^{\eta_\Delta}
G_\Delta^{\mathrm{ref}}(x)\,dF_i(x).
% \label{eq:acf_local_reference_integral_def}
\]
By the local exact-to-reference approximation
\eqref{eq:acf_exact_reference_local_error_condition},
\begin{equation}
\left|
I_\Delta^{\mathrm{ex,loc}}
-
I_\Delta^{\mathrm{ref,loc}}
\right|
\le
\int_0^{\eta_\Delta}
\left|
G_\Delta^{\mathrm{ex}}(x)
-
G_\Delta^{\mathrm{ref}}(x)
\right|
\,dF_i(x)
=
o(J_\Delta).
\label{eq:acf_exact_to_ref_local_integral}
\end{equation}
Thus it remains to replace the Stieltjes measure \(dF_i\) by the local exponential
reference measure only for the reference integrand \(G_\Delta^{\mathrm{ref}}\).

We now evaluate \(I_\Delta^{\mathrm{ref,loc}}\). The almost-sure support condition \(L_i>0\) gives \(F_i(0)=0\). Moreover, \(G_\Delta^{\mathrm{ref}}\) is absolutely continuous on \([0,\eta_\Delta]\), so Stieltjes integration by parts gives
\[
% \label{eq:acf_ref_stieltjes_by_parts_first}
\int_0^{\eta_\Delta}
G_\Delta^{\mathrm{ref}}(x)\,dF_i(x)
=
G_\Delta^{\mathrm{ref}}(\eta_\Delta)F_i(\eta_\Delta)
-
\int_0^{\eta_\Delta}
F_i(x)
\frac{d}{dx}G_\Delta^{\mathrm{ref}}(x)\,dx .
\]
Using
\(
% \label{eq:acf_F_strict_survival_relation_reference}
F_i(x)
=
1-\overline F_{i,>}(x)
\), 
we obtain
\begin{align}
I_\Delta^{\mathrm{ref,loc}}
&=
G_\Delta^{\mathrm{ref}}(\eta_\Delta)
\left[
1-\overline F_{i,>}(\eta_\Delta)
\right]
-
\int_0^{\eta_\Delta}
\left[
1-\overline F_{i,>}(x)
\right]
\frac{d}{dx}G_\Delta^{\mathrm{ref}}(x)\,dx
\nonumber\\
&=
G_\Delta^{\mathrm{ref}}(\eta_\Delta)
-
G_\Delta^{\mathrm{ref}}(\eta_\Delta)\overline F_{i,>}(\eta_\Delta)
-
\int_0^{\eta_\Delta}
\frac{d}{dx}G_\Delta^{\mathrm{ref}}(x)\,dx
+
\int_0^{\eta_\Delta}
\frac{d}{dx}G_\Delta^{\mathrm{ref}}(x)\,
\overline F_{i,>}(x)\,dx
\nonumber\\
&=
G_\Delta^{\mathrm{ref}}(0)
-
G_\Delta^{\mathrm{ref}}(\eta_\Delta)\overline F_{i,>}(\eta_\Delta)
+
\int_0^{\eta_\Delta}
\frac{d}{dx}G_\Delta^{\mathrm{ref}}(x)\,
\overline F_{i,>}(x)\,dx .
\label{eq:acf_ref_stieltjes_by_parts_exact}
\end{align}
On the other hand, the exponential reference integral \(J_\Delta\) can be integrated
by parts in the ordinary sense:
\begin{align}
J_\Delta
&=
\int_0^{\eta_\Delta}
G_\Delta^{\mathrm{ref}}(x)\lambda_i \exp\{-\lambda_i x\}\,dx
\nonumber\\
&=
G_\Delta^{\mathrm{ref}}(0)
-
G_\Delta^{\mathrm{ref}}(\eta_\Delta)\exp\{-\lambda_i\eta_\Delta\}
+
\int_0^{\eta_\Delta}
\frac{d}{dx}G_\Delta^{\mathrm{ref}}(x)
\exp\{-\lambda_i x\}\,dx .
\label{eq:acf_ref_exponential_by_parts}
\end{align}
Subtracting \eqref{eq:acf_ref_exponential_by_parts} from
\eqref{eq:acf_ref_stieltjes_by_parts_exact} yields the exact difference identity
\[
I_\Delta^{\mathrm{ref,loc}}-J_\Delta
=
-
G_\Delta^{\mathrm{ref}}(\eta_\Delta)
\left[
\overline F_{i,>}(\eta_\Delta)-\exp\{-\lambda_i\eta_\Delta\}
\right]
+
\int_0^{\eta_\Delta}
\frac{d}{dx}G_\Delta^{\mathrm{ref}}(x)
\left[
\overline F_{i,>}(x)-\exp\{-\lambda_i x\}
\right]\,dx .
% \label{eq:acf_ref_difference_identity}
\]
Taking absolute values and using
\eqref{eq:acf_survival_difference_reference_version}, we get
\[
\left|
I_\Delta^{\mathrm{ref,loc}}-J_\Delta
\right|
\le
a_\Delta
\left[
G_\Delta^{\mathrm{ref}}(\eta_\Delta)\exp\{-\lambda_i\eta_\Delta\}
+
\int_0^{\eta_\Delta}
\left|
\frac{d}{dx}G_\Delta^{\mathrm{ref}}(x)
\right|
\exp\{-\lambda_i x\}\,dx
\right].
% \label{eq:acf_ref_local_error_bound}
\]
By the reference weighted-variation condition
\eqref{eq:acf_reference_variation_condition},
\[
% \label{eq:acf_ref_local_relative_error}
\left|
I_\Delta^{\mathrm{ref,loc}}-J_\Delta
\right|
\le
a_\Delta C_G J_\Delta
=
o(J_\Delta).
\]
Therefore,
\begin{equation}
\label{eq:acf_ref_local_replacement}
I_\Delta^{\mathrm{ref,loc}}
=
J_\Delta(1+o(1)).
\end{equation}
Combining \eqref{eq:acf_exact_to_ref_local_integral} and
\eqref{eq:acf_ref_local_replacement}, we obtain
\begin{equation}
\label{eq:acf_exact_local_replacement}
I_\Delta^{\mathrm{ex,loc}}
=
J_\Delta(1+o(1)).
\end{equation}

It remains to add the complement \((\eta_\Delta,p_i)\). Since \(G_\Delta^{\mathrm{ex}}\) is
nonnegative,
\begin{align}
\int_0^{p_i}
G_\Delta^{\mathrm{ex}}(x)\,dF_i(x)
&=
\int_0^{\eta_\Delta}
G_\Delta^{\mathrm{ex}}(x)\,dF_i(x)
+
\mathbb E\left[
G_\Delta^{\mathrm{ex}}(L_i)\mathbf 1\{L_i>\eta_\Delta\}
\right]
\nonumber\\
&=
I_\Delta^{\mathrm{ex,loc}}
+
\mathbb E\left[
G_\Delta^{\mathrm{ex}}(L_i)\mathbf 1\{L_i>\eta_\Delta\}
\right].
\label{eq:acf_exact_full_split}
\end{align}
By the exact tail-localization condition
\eqref{eq:acf_exact_tail_condition},
\begin{equation}
\label{eq:acf_exact_tail_negligible}
\mathbb E\left[
G_\Delta^{\mathrm{ex}}(L_i)\mathbf 1\{L_i>\eta_\Delta\}
\right]
=
o(J_\Delta).
\end{equation}
Substituting \eqref{eq:acf_exact_local_replacement} and
\eqref{eq:acf_exact_tail_negligible} into \eqref{eq:acf_exact_full_split} gives
\[
% \label{eq:acf_exact_full_replacement}
\int_0^{p_i}
G_\Delta^{\mathrm{ex}}(x)\,dF_i(x)
=
J_\Delta(1+o(1)).
\]
Finally, recalling the definition of \(J_\Delta\) in
\eqref{eq:acf_reference_integral_def}, we obtain
\eqref{eq:acf_reference_stieltjes_replacement}.

\end{proof}

The shorthand
\[
% \label{eq:acf_local_density_envelope}
f_i^{\mathrm{loc}}(\ell)
:=
\lambda_i \exp\{-\lambda_i\ell\}
\]
will be used only for the density of the local exponential reference measure. Thus, an integral written informally against \(f_i^{\mathrm{loc}}\) should always be read as an application of Corollary~\ref{cor:acf_local_stieltjes_replacement}, not as a claim that the exact law \(F_i\) is absolutely continuous.

\begin{corollary}[Product-tail Stieltjes replacement]
\label{cor:acf_product_tail_stieltjes_replacement}
For every arm \(i\) and every \(\circ\in\{>,\ge\}\), suppose that the positive shifted levels appearing on the local window are all \(O(\eta_\Delta)\), in the sense that
\begin{equation}
\label{eq:acf_product_local_shift_condition_direct}
\sup_{\substack{0\le x\le\eta_\Delta,\ r\neq i\\ x+p_r-p_i>0}}
\left(x+p_r-p_i\right)
=
O(\eta_\Delta).
\end{equation}
Then
\begin{align}
\int_0^{p_i}
\prod_{r\neq i}
\overline F_{r,\circ}(x+p_r-p_i)
\,dF_i(x)
=
\int_0^{\eta_\Delta}
\exp\left\{
-\sum_{r\neq i}
\lambda_r(x+p_r-p_i)_+
\right\}
\lambda_i \exp\{-\lambda_i x\}\,dx
\left[
1+o(1)
\right],
\label{eq:acf_product_tail_replacement_direct}
\end{align}
where shifted levels \(x+p_r-p_i\le0\) contribute survival factor one.
\end{corollary}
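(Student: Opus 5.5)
The plan is to apply Corollary~\ref{cor:acf_local_stieltjes_replacement} with the exact integrand
\[
G_\Delta^{\mathrm{ex}}(x):=\prod_{r\neq i}\overline F_{r,\circ}(x+p_r-p_i),
\]
where any factor with $x+p_r-p_i\le0$ is set to one, and with the reference integrand
\[
G_\Delta^{\mathrm{ref}}(x):=\exp\Bigl\{-\sum_{r\neq i}\lambda_r(x+p_r-p_i)_+\Bigr\}.
\]
The reference integrand is nonnegative and Lipschitz, hence absolutely continuous, on $[0,\eta_\Delta]$. The conclusion \eqref{eq:acf_reference_stieltjes_replacement} is then exactly \eqref{eq:acf_product_tail_replacement_direct}. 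So it remains to verify four things: $J_\Delta>0$, the local error condition \eqref{eq:acf_exact_reference_local_error_condition}, the variation condition \eqref{eq:acf_reference_variation_condition}, and the tail condition \eqref{eq:acf_exact_tail_condition}.

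\textbf{Positivity and the local error.} Positivity of $J_\Delta$ is immediate because the integrand is strictly positive. For the local error, fix $x\in[0,\eta_\Delta]$ and $r\neq i$.
\begin{itemize}
\item If $x+p_r-p_i\le0$, the exact and reference factors are both one.
\item Otherwise, hypothesis \eqref{eq:acf_product_local_shift_condition_direct} puts the shifted level in $(0,C\eta_\Delta]$. Lemma~\ref{lem:asymptotic_closed_form_tail} was proved on $(0,\eta_\Delta]$, but its proof only used $\ell=O(\eta_\Delta)$. Rerunning it on $(0,C\eta_\Delta]$ gives $\overline F_{r,\circ}(y)=e^{-\lambda_r y}[1+O(\Delta_1^{1-2\kappa})]$ uniformly there.
\end{itemize}
Multiplying at most $K-1$ such factors gives $G^{\mathrm{ex}}=G^{\mathrm{ref}}(1+o(1))$ uniformly on $[0,\eta_\Delta]$, with $o(1)=O(\Delta_1^{1-2\kappa})$. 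Hence the left side of \eqref{eq:acf_exact_reference_local_error_condition} is at most $o(1)\int_0^{\eta_\Delta}G^{\mathrm{ref}}\,dF_i$. This integral is $J_\Delta(1+o(1))$ by the integration-by-parts step inside the proof of Corollary~\ref{cor:acf_local_stieltjes_replacement}, which uses only the reference conditions. So the local error is $o(J_\Delta)$, provided the variation condition holds.

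\textbf{Variation condition.} Write $\Lambda(x):=\sum_{r\neq i}\lambda_r(x+p_r-p_i)_+$. Then $\Lambda$ is nondecreasing, so $G^{\mathrm{ref}}$ is nonincreasing, and
\[
|(G^{\mathrm{ref}})'(x)|\le \Bigl(\sum_{r\neq i}\lambda_r\Bigr)G^{\mathrm{ref}}(x)\le C\lambda_i G^{\mathrm{ref}}(x),
\]
since every $\lambda_r=\Theta(\Delta_1)$. Therefore
\[
\int_0^{\eta_\Delta}|(G^{\mathrm{ref}})'|\,e^{-\lambda_i x}\,dx\le C J_\Delta .
\]
For the boundary term, monotonicity of $G^{\mathrm{ref}}$ gives
\[
G^{\mathrm{ref}}(\eta_\Delta)e^{-\lambda_i\eta_\Delta}\le \frac{\int_{\eta_\Delta/2}^{\eta_\Delta}G^{\mathrm{ref}}\lambda_i e^{-\lambda_i x}\,dx}{\lambda_i(\eta_\Delta/2)\,e^{\lambda_i\eta_\Delta/2}}\cdot\text{const}\le J_\Delta,
\]
because $\lambda_i\eta_\Delta\to\infty$. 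A simpler route is the crude bound
\[
G^{\mathrm{ref}}(\eta_\Delta)e^{-\lambda_i\eta_\Delta}\le G^{\mathrm{ref}}(0)\,e^{-\lambda_i\eta_\Delta},
\]
combined with the lower bound $J_\Delta\ge c\,G^{\mathrm{ref}}(0)$, which follows from integrating over $[0,1/\lambda_i]$ where $\Lambda$ grows by only $O(1)$.

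\textbf{Tail localization (the main obstacle).} The difficulty is that $J_\Delta$ can itself be exponentially small. Indeed $G^{\mathrm{ref}}(0)=\exp\{-\sum_{r<i}\lambda_r\delta_{ri}\}$, which is of order $e^{-c\Delta_1^{1-\kappa}}$. So the crude bound $O(\Delta_1^{-\infty})$ from \eqref{eq:acf_tail_suppression_quantified} does not suffice, and the estimate must retain the product factors.

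The plan is to pair the decay of $L_i$ with the decay of the product. For $x>\eta_\Delta$:
\begin{itemize}
\item \textbf{Bounding the product.} Each factor with $p_r>p_i$ satisfies $\overline F_{r,\circ}(x+\delta_{ri})\le\overline F_{r,\circ}(\eta_\Delta+\delta_{ri})$ by monotonicity. The root-based upper bound of Corollary~\ref{cor:root_based_drawdown_tail_envelope} bounds this by $\exp\{-\theta_r(\eta_\Delta+\delta_{ri})\Delta_r\}$. Its exponent is $\lambda_r(\eta_\Delta+\delta_{ri})(1+o(1))$, since Lemma~\ref{lem:lower_endpoint_theta_v_expansion} gives $\theta_r(y)\ge (2/\sigma_r^2)y(1-Cy)$ for small $y$. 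If instead the level reaches the non-small range, use monotonicity of $\theta_r$.
\item \textbf{Bounding $L_i$.} $\mathbb P(L_i>\eta_\Delta)\le e^{-\lambda_i\eta_\Delta(1-o(1))}$.
\end{itemize}
Combining these, the tail term is at most
\[
G^{\mathrm{ref}}(0)\exp\{-(\lambda_i+\textstyle\sum_{r<i}\lambda_r)\eta_\Delta(1-o(1))+o(\Delta_1\eta_\Delta)\}.
\]
Here the $o(\Delta_1\eta_\Delta)$ collects the $O(\Delta_1\delta^2)=o(1)$ second-order errors, which are negligible because $\Delta_1\delta^2\to0$. Since $J_\Delta\ge c\,G^{\mathrm{ref}}(0)$ and $\Delta_1\eta_\Delta\to\infty$, the tail term is $o(J_\Delta)$.

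Factors with $p_r\le p_i$ are bounded by one. This is harmless because for them $G^{\mathrm{ref}}(0)$ carries no contribution, so discarding them costs nothing in the comparison with $J_\Delta$.
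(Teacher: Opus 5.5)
Your proposal is correct and follows essentially the same route as the paper. You apply Corollary~\ref{cor:acf_local_stieltjes_replacement} with the same exact and reference integrands, control the variation term through $|(G_\Delta^{\mathrm{ref}})'|\le(\sum_{r\neq i}\lambda_r)G_\Delta^{\mathrm{ref}}$, and handle tail localization as the paper does, by pairing $\mathbb P(L_i>\eta_\Delta)$ with the better-arm factor $G_\Delta^{\mathrm{ref}}(0)=\exp\{-\sum_{p_r>p_i}\lambda_r(p_r-p_i)\}$ and comparing against $J_\Delta\ge c\,G_\Delta^{\mathrm{ref}}(0)$ (you keep the extra shift $\eta_\Delta$ in the tail factors, which the paper simply drops).

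One slip: your first endpoint-term display carries a factor $e^{\lambda_i\eta_\Delta/2}$ in the denominator that is not justified. The ``simpler route'' you give immediately afterward is valid and suffices.
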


\begin{proof}\leavevmode
We apply Corollary~\ref{cor:acf_local_stieltjes_replacement} with
\begin{equation}
\label{eq:acf_product_Gex_identification}
G_\Delta^{\mathrm{ex}}(x)
=
\prod_{r\neq i}
\overline F_{r,\circ}(x+p_r-p_i),
\end{equation}
\[
% \label{eq:acf_product_Gref_identification}
G_\Delta^{\mathrm{ref}}(x)
=
\exp\left\{
-\sum_{r\neq i}
\lambda_r(x+p_r-p_i)_+
\right\},
\]
and
\begin{equation}
\label{eq:acf_product_J_identification}
J_\Delta
=
\int_0^{\eta_\Delta}
\exp\left\{
-\sum_{r\neq i}
\lambda_r(x+p_r-p_i)_+
\right\}
\lambda_i \exp\{-\lambda_i x\}\,dx .
\end{equation}
We now verify the three requirements in
Corollary~\ref{cor:acf_local_stieltjes_replacement}: the local
exact-to-reference approximation, the weighted-variation condition for
\(G_\Delta^{\mathrm{ref}}\), and the exact tail-localization condition.

First consider the local window \(0\le x\le\eta_\Delta\). For any \(r\neq i\), if
\(x+p_r-p_i\le0\), then the corresponding exact survival factor is one by the
nonpositive-level convention, and the corresponding exponential reference factor is
also one:
\[
% \label{eq:acf_product_nonpositive_factor_equal_direct}
\overline F_{r,\circ}(x+p_r-p_i)
=
1
=
\exp\left\{
-\lambda_r(x+p_r-p_i)_+
\right\}.
\]
If \(x+p_r-p_i>0\), then by
\eqref{eq:acf_product_local_shift_condition_direct} the shifted level lies in the
lower-endpoint region, uniformly over \(0\le x\le\eta_\Delta\) and \(r\neq i\). Hence the
lower-endpoint tail envelope gives
\[
% \label{eq:acf_product_positive_factor_local_direct}
\overline F_{r,\circ}(x+p_r-p_i)
=
\exp\left\{
-\lambda_r(x+p_r-p_i)
\right\}
\left[
1+O(\Delta_1^{1-2\kappa})
\right],
\]
uniformly over all such \(x\) and \(r\). Since \(K\) is fixed and
\(\kappa>1/2\), multiplying the finitely many factors gives
\begin{align}
\prod_{r\neq i}
\overline F_{r,\circ}(x+p_r-p_i)
&=
\exp\left\{
-\sum_{r\neq i}
\lambda_r(x+p_r-p_i)_+
\right\}
\left[
1+O(\Delta_1^{1-2\kappa})
\right]
\nonumber\\
&=
G_\Delta^{\mathrm{ref}}(x)
\left[
1+o(1)
\right],
\label{eq:acf_product_uniform_local_direct}
\end{align}
uniformly for \(0\le x\le\eta_\Delta\).

We next check the weighted-variation condition for \(G_\Delta^{\mathrm{ref}}\).
The map \(x\mapsto (x+p_r-p_i)_+\) is absolutely continuous. Therefore
\(G_\Delta^{\mathrm{ref}}\) is absolutely continuous on \([0,\eta_\Delta]\), and for almost
every \(x\in[0,\eta_\Delta]\),
\[\begin{aligned}
\left|
\frac{d}{dx}G_\Delta^{\mathrm{ref}}(x)
\right|
&=
\left|
\frac{d}{dx}
\exp\left\{
-\sum_{r\neq i}
\lambda_r(x+p_r-p_i)_+
\right\}
\right|
\nonumber\\
&\le
\left(
\sum_{r\neq i}\lambda_r
\right)
\exp\left\{
-\sum_{r\neq i}
\lambda_r(x+p_r-p_i)_+
\right\}
\nonumber\\
&=
\left(
\sum_{r\neq i}\lambda_r
\right)
G_\Delta^{\mathrm{ref}}(x).
% \label{eq:acf_product_derivative_bound_direct}
\end{aligned}\]
Multiplying by \(\exp\{-\lambda_i x\}\) and integrating gives
\begin{equation}
\int_0^{\eta_\Delta}
\left|
\frac{d}{dx}G_\Delta^{\mathrm{ref}}(x)
\right|
\exp\{-\lambda_i x\}\,dx
\le
\left(
\sum_{r\neq i}\lambda_r
\right)
\int_0^{\eta_\Delta}
G_\Delta^{\mathrm{ref}}(x)\exp\{-\lambda_i x\}\,dx
=
\frac{\sum_{r\neq i}\lambda_r}{\lambda_i}
J_\Delta .
\label{eq:acf_product_variation_bound_direct}
\end{equation}
Since \(K\) is fixed and \(\lambda_r=\Theta(\Delta_1)\) for every \(r\), the ratio
\(
({\sum_{r\neq i}\lambda_r})/{\lambda_i}
\)
is uniformly bounded.

The endpoint term in the weighted-variation condition is controlled by the same
reference integral. Since \(\lambda_i\eta_\Delta\to\infty\), for all sufficiently large
\(\Delta_1\), \(\lambda_i^{-1}\le\eta_\Delta\). The function
\(
x\mapsto
G_\Delta^{\mathrm{ref}}(x)\exp\{-\lambda_i x\}
\) 
is nonincreasing on \([0,\eta_\Delta]\). Hence
\begin{equation}
J_\Delta
=
\int_0^{\eta_\Delta}
G_\Delta^{\mathrm{ref}}(x)\lambda_i \exp\{-\lambda_i x\}\,dx
\ge
\lambda_i
\int_{\eta_\Delta-\lambda_i^{-1}}^{\eta_\Delta}
G_\Delta^{\mathrm{ref}}(x)\exp\{-\lambda_i x\}\,dx
\ge
G_\Delta^{\mathrm{ref}}(\eta_\Delta)\exp\{-\lambda_i\eta_\Delta\}.
\label{eq:acf_product_endpoint_bound_direct}
\end{equation}
Combining \eqref{eq:acf_product_variation_bound_direct} and
\eqref{eq:acf_product_endpoint_bound_direct}, we obtain
\begin{equation}
\label{eq:acf_product_weighted_variation_condition_direct}
G_\Delta^{\mathrm{ref}}(\eta_\Delta)\exp\{-\lambda_i\eta_\Delta\}
+
\int_0^{\eta_\Delta}
\left|
\frac{d}{dx}G_\Delta^{\mathrm{ref}}(x)
\right|
\exp\{-\lambda_i x\}\,dx
\le
C J_\Delta .
\end{equation}
Thus the weighted-variation condition of
Corollary~\ref{cor:acf_local_stieltjes_replacement} is satisfied.

We now verify the local exact-to-reference condition. By
\eqref{eq:acf_product_uniform_local_direct},
\begin{equation}
\int_0^{\eta_\Delta}
\left|
G_\Delta^{\mathrm{ex}}(x)
-
G_\Delta^{\mathrm{ref}}(x)
\right|
\,dF_i(x)
\le
o(1)
\int_0^{\eta_\Delta}
G_\Delta^{\mathrm{ref}}(x)\,dF_i(x).
\label{eq:acf_product_L1_pre_direct}
\end{equation}
Applying the local Stieltjes replacement estimate
\eqref{eq:acf_ref_local_replacement} from the proof of
Corollary~\ref{cor:acf_local_stieltjes_replacement} to the present reference
integrand \(G_\Delta^{\mathrm{ref}}\), with \(J_\Delta\) defined by
\eqref{eq:acf_product_J_identification}, gives
\begin{equation}
\label{eq:acf_product_reference_local_under_F_direct}
\int_0^{\eta_\Delta}
G_\Delta^{\mathrm{ref}}(x)\,dF_i(x)
=
J_\Delta
\left[
1+o(1)
\right].
\end{equation}
Substituting \eqref{eq:acf_product_reference_local_under_F_direct} into
\eqref{eq:acf_product_L1_pre_direct}, we get
\[
% \label{eq:acf_product_L1_condition_direct}
\int_0^{\eta_\Delta}
\left|
G_\Delta^{\mathrm{ex}}(x)
-
G_\Delta^{\mathrm{ref}}(x)
\right|
\,dF_i(x)
=
o(J_\Delta).
\]
This is exactly the local exact-to-reference approximation required in
Corollary~\ref{cor:acf_local_stieltjes_replacement}.

It remains to prove the exact tail-localization condition. Since each one-arm survival
factor is at most one,
\[
\int_{(\eta_\Delta,p_i)}
\prod_{r\neq i}
\overline F_{r,\circ}(x+p_r-p_i)
\,dF_i(x)
\le
\int_{(\eta_\Delta,p_i)}
\prod_{\substack{r\neq i\\p_r>p_i}}
\overline F_{r,\circ}(p_r-p_i)
\,dF_i(x).
% \label{eq:acf_product_tail_monotone_start}
\]
Indeed, if \(p_r>p_i\), then \(x+p_r-p_i\ge p_r-p_i\) for \(x>\eta_\Delta\), and the
survival function is nonincreasing; if \(p_r\le p_i\), the corresponding factor is
bounded by one. By \eqref{eq:acf_product_local_shift_condition_direct}, every positive
gap \(p_r-p_i\) with \(p_r>p_i\) is \(O(\eta_\Delta)\). Hence the lower-endpoint tail envelope
gives
\[
% \label{eq:acf_product_better_factor_tail_bound}
\overline F_{r,\circ}(p_r-p_i)
\le
C
\exp\left\{
-\lambda_r(p_r-p_i)
\right\},
\qquad
p_r>p_i .
\]
Moreover, the one-arm tail suppression for \(L_i\) gives
\[
% \label{eq:acf_product_Li_tail_eta_bound}
\mathbb P(L_i>\eta_\Delta)
\le
\exp\{-c\Delta_1\eta_\Delta\}
=
O(\Delta_1^{-\infty}).
\]
Therefore
\begin{equation}
\int_{(\eta_\Delta,p_i)}
\prod_{r\neq i}
\overline F_{r,\circ}(x+p_r-p_i)
\,dF_i(x)
\le
C
\exp\left\{
-\sum_{\substack{r\neq i\\p_r>p_i}}
\lambda_r(p_r-p_i)
\right\}
\exp\{-c\Delta_1\eta_\Delta\}.
\label{eq:acf_product_tail_upper_final}
\end{equation}

We now lower-bound \(J_\Delta\) by the same better-arm exponential factor. Fix a
small constant \(a>0\). Since \(\eta_\Delta\gg\Delta_1^{-1}\), we have
\(a/\Delta_1\le\eta_\Delta\) for all sufficiently large \(\Delta_1\). For
\(0\le x\le a/\Delta_1\),
\[
\sum_{r\neq i}\lambda_r(x+p_r-p_i)_+
\le
\sum_{\substack{r\neq i\\p_r>p_i}}
\lambda_r(p_r-p_i)
+
x\sum_{r\neq i}\lambda_r
\le
\sum_{\substack{r\neq i\\p_r>p_i}}
\lambda_r(p_r-p_i)
+
C'a,
% \label{eq:acf_product_reference_exponent_small_window}
\]
because \(K\) is fixed and \(\lambda_r=\Theta(\Delta_1)\). Also
\(\lambda_i=\Theta(\Delta_1)\). Hence
\begin{align}
J_\Delta
&=
\int_0^{\eta_\Delta}
\exp\left\{
-\sum_{r\neq i}
\lambda_r(x+p_r-p_i)_+
\right\}
\lambda_i \exp\{-\lambda_i x\}\,dx
\nonumber\\
&\ge
\int_0^{a/\Delta_1}
\exp\left\{
-\sum_{r\neq i}
\lambda_r(x+p_r-p_i)_+
\right\}
\lambda_i \exp\{-\lambda_i x\}\,dx
\nonumber\\
&\ge
c'
\exp\left\{
-\sum_{\substack{r\neq i\\p_r>p_i}}
\lambda_r(p_r-p_i)
\right\}.
\label{eq:acf_product_reference_integral_lower_final}
\end{align}
Dividing \eqref{eq:acf_product_tail_upper_final} by
\eqref{eq:acf_product_reference_integral_lower_final}, we obtain
\[\begin{aligned}
\frac{
\displaystyle
\int_{(\eta_\Delta,p_i)}
\prod_{r\neq i}
\overline F_{r,\circ}(x+p_r-p_i)
\,dF_i(x)
}{
J_\Delta
}
\le
C\exp\{-c\Delta_1\eta_\Delta\}/c'
=
o(1).
% \label{eq:acf_product_tail_relative_final}
\end{aligned}\]
This proves the exact tail-localization condition required in
Corollary~\ref{cor:acf_local_stieltjes_replacement}.

All assumptions of Corollary~\ref{cor:acf_local_stieltjes_replacement} have now been
verified for the choices
\eqref{eq:acf_product_Gex_identification}--\eqref{eq:acf_product_J_identification}.
Therefore,
\[\begin{aligned}
\int_0^{p_i}
\prod_{r\neq i}
\overline F_{r,\circ}(x+p_r-p_i)
\,dF_i(x)
=
J_\Delta
\left[
1+o(1)
\right].
% \label{eq:acf_product_final_by_reference_corollary}
\end{aligned}\]
Substituting the expression for \(J_\Delta\) from
\eqref{eq:acf_product_J_identification} gives
\eqref{eq:acf_product_tail_replacement_direct}.

\end{proof}

\section{Asymptotic Suboptimal-Convergence Formula}
\label{app:asymptotic_absorbing_probability_formula}

This section derives an asymptotic closed-form expression for the probability that a suboptimal arm becomes the eventual winner. The proof starts from the score-minimum sandwich, which reduces suboptimal convergence to comparisons among independent potential score minima. This representation separates the constraints imposed by arms with larger means from those imposed by arms with smaller means, and leads to a Stieltjes integral formula for the winner event in terms of the centered score-minimum variables.

The main asymptotic step is to localize this Stieltjes integral near the lower endpoint of the candidate winning arm. In the large-regularization regime, the relevant centered score minimum concentrates on this local scale under the winner comparison, while the complementary region is negligible relative to the local contribution. On the local scale, the one-arm Lundberg envelopes reduce uniformly to elementary exponential forms. The resulting reference integral is then evaluated by splitting the positive-part terms according to the ordered gaps between arm means, yielding the desired asymptotic suboptimal-convergence formula with relative \(1+o(1)\) error.

\subsection{Root-Based Score-Minimum Representation}
\label{app:root_based_drawdown_representation}

We first express the absorbing probability through the arm-wise centered score-minimum variables. Recall that
\(
% \label{eq:mfc_M_L_relation_recall}
M_k=p_k-L_k,\,
k=1,\ldots,K
\). 
The outer Stieltjes measure below is the distribution \(F_i\) of the centered score-minimum variable \(L_i\) of the candidate absorbing arm \(i\). On the candidate branch \(Q_i\), the relevant centered score minimum satisfies
\(
% \label{eq:mfc_Li_support}
L_i\in(0,p_i) \text{ a.s. on }Q_i
\). 
Thus the outer Stieltjes integrals are taken over \([0,p_i)\).

\begin{proposition}[Root-based score-minimum envelope for \(Q_i\)]
\label{prop:root_based_drawdown_representation}
For every arm \(i\), adopt the following conventions. Whenever \(\ell_i+p_j-p_i\le0\), set \(\theta_j(\ell_i+p_j-p_i)=0\). Whenever \(0<\ell_i+p_j-p_i<p_j\), \(\theta_j(\ell_i+p_j-p_i)\) denotes the unique positive solution of
\(
% \label{eq:mfc_shifted_root_equation}
q_j \exp\{\theta(p_i-\ell_i)\}
+
p_j \exp\{-\theta(q_i+\ell_i)\}
=
1
\). 
Then
\begin{align}
&\int_{[0,p_i)}
\exp\left\{
-
\bigl[\Delta_i+\beta\ell_i+p_i-\ell_i\bigr]
\sum_{j\neq i}
\theta_j(\ell_i+p_j-p_i)
\right\}
\,dF_i(\ell_i)
\nonumber \\
\le &
\mathbb P(Q_i)
\label{eq:mfc_root_based_Qi_envelope}
\\
\le &
\int_{[0,p_i)}
\exp\left\{
-
\bigl[\Delta_i+\beta\ell_i\bigr]
\sum_{j\neq i}
\theta_j(\ell_i+p_j-p_i)
\right\}
\,dF_i(\ell_i). \nonumber
\end{align}
\end{proposition}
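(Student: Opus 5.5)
The plan is to combine three earlier results: the event sandwich of Lemma~\ref{lem:sm_absorbing_event_sandwich}, the independence of the potential reward streams, and the one-arm envelope of Corollary~\ref{cor:root_based_drawdown_tail_envelope} evaluated at shifted levels. By \eqref{eq:sm_winner_sandwich_L},
\[
\{L_j>L_i+p_j-p_i,\ \forall j\neq i\}\subseteq Q_i\subseteq\{L_j\ge L_i+p_j-p_i,\ \forall j\neq i\}.
\]
The variables \(L_1,\ldots,L_K\) are functions of the independent streams \((X_{k,n})_{n\ge1}\), so they are mutually independent. We condition on \(L_i=\ell_i\) and use Fubini, integrating against \(dF_i\) over \([0,p_i)\), which carries all the mass of \(L_i\). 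This gives
\[
\int_{[0,p_i)}\prod_{j\neq i}\overline F_{j,>}(\ell_i+p_j-p_i)\,dF_i(\ell_i)
\;\le\;\mathbb P(Q_i)\;\le\;
\int_{[0,p_i)}\prod_{j\neq i}\overline F_{j,\ge}(\ell_i+p_j-p_i)\,dF_i(\ell_i).
\]
It then remains to bound each factor pointwise in \(\ell_i\).

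Fix \(\ell_i\in[0,p_i)\) and \(j\neq i\), and write \(x_j:=\ell_i+p_j-p_i\). There are two cases.

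\emph{Case \(x_j\le0\).} Since \(L_j>0\) almost surely, both \(\overline F_{j,>}(x_j)\) and \(\overline F_{j,\ge}(x_j)\) equal one. This matches the convention \(\theta_j(x_j)=0\), so the factor is \(\exp\{0\}=1\) on both sides.

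\emph{Case \(x_j>0\).} Because \(\ell_i<p_i\), we also have \(x_j<p_j\), so Corollary~\ref{cor:root_based_drawdown_tail_envelope} applies at level \(x_j\). Two algebraic identities do the main work:
\[
b_j(x_j)=\alpha-(p_j-x_j)\beta=\alpha-(p_i-\ell_i)\beta=\Delta_i+\beta\ell_i,
\qquad
p_j-x_j=p_i-\ell_i .
\]
So the boundary height and the overshoot cap are the same for every competitor \(j\). Moreover \(q_j+x_j=q_i+\ell_i\), so the Lundberg equation for \(\theta_j(x_j)\) is exactly the shifted root equation in the statement. The corollary then gives
\[
\exp\{-\theta_j(x_j)[\Delta_i+\beta\ell_i+p_i-\ell_i]\}\le\overline F_{j,>}(x_j)\le\overline F_{j,\ge}(x_j)\le\exp\{-\theta_j(x_j)[\Delta_i+\beta\ell_i]\}.
\]

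Since the bracketed multiplier does not depend on \(j\), taking the product over \(j\neq i\) turns the sum of exponents into \([\cdot]\sum_{j\neq i}\theta_j(x_j)\). Integrating against \(dF_i\) then yields both sides of \eqref{eq:mfc_root_based_Qi_envelope}.

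I expect the main obstacle to be rigor rather than computation. First, the comparison must be legitimately performed through the independent potential streams, which is already guaranteed by the coupling in Lemma~\ref{lem:sm_absorbing_event_sandwich}. Second, the integrands must be measurable in \(\ell_i\). I would handle this by noting that \(\ell\mapsto\overline F_{j,\circ}(\ell+p_j-p_i)\) is monotone and that \(\theta_j\) is continuous by Lemma~\ref{lem:lower_endpoint_root_equivalence}. Third, the boundary case \(x_j=0\) needs care: the strict survival probability is still one, so the lower bound holds, with \(L_j>0\) almost surely ensuring no atom at zero.
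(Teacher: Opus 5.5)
Your proposal is correct and follows essentially the same route as the paper. Both arguments use the centered score-minimum sandwich, then the independence of the potential streams to get the strict and weak Stieltjes product representations, and finally apply Corollary~\ref{cor:root_based_drawdown_tail_envelope} factorwise at the shifted levels via the identities $b_j(\ell_i+p_j-p_i)=\Delta_i+\beta\ell_i$ and $p_j-(\ell_i+p_j-p_i)=p_i-\ell_i$. Your measurability check, and your observation that for $x_j\le 0$ the unit factor follows from $L_j>0$ a.s.\ rather than being a mere convention, are small refinements the paper does not spell out.
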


\begin{proof}\leavevmode
Recall the strict and weak score-minimum winner events
\[
\mathcal W_i^{>}
=
\left\{
M_i>M_j,\ \forall j\neq i
\right\},
\qquad
% \label{eq:mfc_W_strict_def}
\mathcal W_i^{\ge}
=
\left\{
M_i\ge M_j,\ \forall j\neq i
\right\}.
% \label{eq:mfc_W_weak_def}
\]
By Lemma~\ref{lem:sm_absorbing_event_sandwich},
\begin{equation}
\label{eq:mfc_absorbing_sandwich_recall}
\mathcal W_i^{>}
\subseteq
Q_i
\subseteq
\mathcal W_i^{\ge}.
\end{equation}

Using \(M_k=p_k-L_k\), for every \(j\neq i\),
\[
M_i>M_j
\Longleftrightarrow
p_i-L_i>p_j-L_j
\Longleftrightarrow
L_j>L_i+p_j-p_i,
% \label{eq:mfc_strict_comparison_L}
\]
and similarly
\[\begin{aligned}
M_i\ge M_j
&\Longleftrightarrow
L_j\ge L_i+p_j-p_i .
% \label{eq:mfc_weak_comparison_L}
\end{aligned}\]
Therefore,
\[
\mathcal W_i^{>}
=
\left\{
L_j>L_i+p_j-p_i,\ \forall j\neq i
\right\},
\qquad
% \label{eq:mfc_strict_W_L_form}
\mathcal W_i^{\ge}
=
\left\{
L_j\ge L_i+p_j-p_i,\ \forall j\neq i
\right\}.
% \label{eq:mfc_weak_W_L_form}
\]

Recall the strict and weak one-arm tails
\[
% \label{eq:mfc_tail_def_in_proof}
\overline F_{j,>}(x):=\mathbb P(L_j>x),
\qquad
\overline F_{j,\ge}(x):=\mathbb P(L_j\ge x).
\]

When the shifted level \(x=\ell_i+p_j-p_i\) is nonpositive, the comparison \(L_j>x\) or \(L_j\ge x\) is automatic for the score-minimum event under consideration. Thus both strict and weak tail factors are taken to be one:
\[
% \label{eq:mfc_nonpositive_tail_factor_one}
\overline F_{j,>}(x)
=
\overline F_{j,\ge}(x)
=
1,
\qquad
x\le0 .
\]
Equivalently, a nonpositive shifted level contributes no exponential penalty, and we encode this by the convention \(\theta_j(x)=0\) for \(x\le0\).

Because the potential reward streams are independent across arms, the centered score-minimum variables \(L_1,\ldots,L_K\) are independent. Conditioning on \(L_i=\ell_i\), we obtain
\begin{align}
\mathbb P(\mathcal W_i^{>})
=
\int_{[0,p_i)}
\mathbb P\left(
L_j>\ell_i+p_j-p_i,\ \forall j\neq i
\right)
\,dF_i(\ell_i)
=
\int_{[0,p_i)}
\prod_{j\neq i}
\overline F_{j,>}\!\left(\ell_i+p_j-p_i\right)
\,dF_i(\ell_i).
\label{eq:mfc_strict_winner_stieltjes}
\end{align}
Likewise,
\begin{align}
\mathbb P(\mathcal W_i^{\ge})
=
\int_{[0,p_i)}
\mathbb P\left(
L_j\ge\ell_i+p_j-p_i,\ \forall j\neq i
\right)
\,dF_i(\ell_i)
=
\int_{[0,p_i)}
\prod_{j\neq i}
\overline F_{j,\ge}\!\left(\ell_i+p_j-p_i\right)
\,dF_i(\ell_i).
\label{eq:mfc_weak_winner_stieltjes}
\end{align}

We now apply the one-arm root-based envelopes. Since \(\ell_i\in[0,p_i)\), every shifted level satisfies
\(
% \label{eq:mfc_shifted_level_upper_range}
\ell_i+p_j-p_i<p_j
\). 
Thus the upper endpoint never occurs. When \(\ell_i+p_j-p_i\le0\), the tail factor is one and the convention \(\theta_j(\ell_i+p_j-p_i)=0\) makes the corresponding exponential factor equal to one.

Now consider the nontrivial case
\(
% \label{eq:mfc_positive_shifted_level_case}
0<\ell_i+p_j-p_i<p_j
\). 
The Lundberg root of arm \(j\) at level \(\ell_i+p_j-p_i\) is defined by
\[
% \label{eq:mfc_shifted_root_from_Lambda}
\Lambda_j\!\left(
\theta_j(\ell_i+p_j-p_i),
\ell_i+p_j-p_i
\right)
=
0 .
\]
Using the explicit CGF, this equation becomes
\[\begin{aligned}
q_j
\exp\left\{
\theta_j(\ell_i+p_j-p_i)
\left[p_j-(\ell_i+p_j-p_i)\right]
\right\}
+
p_j
\exp\left\{
-\theta_j(\ell_i+p_j-p_i)
\left[q_j+\ell_i+p_j-p_i\right]
\right\}
=
1 .
% \label{eq:mfc_shifted_root_expanded}
\end{aligned}\]
Since
\(
% \label{eq:mfc_shifted_jump_simplification}
p_j-(\ell_i+p_j-p_i)=p_i-\ell_i,\,
q_j+\ell_i+p_j-p_i=q_i+\ell_i
\), 
this reduces to
\[
% \label{eq:mfc_shifted_root_equation_in_proof}
q_j \exp\{\theta_j(\ell_i+p_j-p_i)(p_i-\ell_i)\}
+
p_j \exp\{-\theta_j(\ell_i+p_j-p_i)(q_i+\ell_i)\}
=
1 .
\]
Thus \(\theta_j(\ell_i+p_j-p_i)\) is precisely the unique positive solution specified in Proposition~\ref{prop:root_based_drawdown_representation}.

For positive shifted levels, Corollary~\ref{cor:root_based_drawdown_tail_envelope} gives
\begin{align}
\overline F_{j,>}(\ell_i+p_j-p_i)
&\ge
\exp\left\{
-\theta_j(\ell_i+p_j-p_i)
\left[
b_j(\ell_i+p_j-p_i)
+
p_j-(\ell_i+p_j-p_i)
\right]
\right\},
\label{eq:mfc_strict_tail_lower_root}\\
\overline F_{j,\ge}(\ell_i+p_j-p_i)
&\le
\exp\left\{
-\theta_j(\ell_i+p_j-p_i)
b_j(\ell_i+p_j-p_i)
\right\}.
\nonumber
% \label{eq:mfc_weak_tail_upper_root}
\end{align}
The shifted affine boundary simplifies exactly:
\[\begin{aligned}
b_j(\ell_i+p_j-p_i)
&=
\Delta_j+\beta(\ell_i+p_j-p_i)
\nonumber\\
&=
\alpha-p_j\beta+\beta\ell_i+\beta p_j-\beta p_i
\nonumber\\
&=
\alpha-p_i\beta+\beta\ell_i
\nonumber\\
&=
\Delta_i+\beta\ell_i.
% \label{eq:mfc_boundary_shift_identity}
\end{aligned}\]
Moreover,
\begin{equation}
\label{eq:mfc_positive_jump_shift_identity}
p_j-(\ell_i+p_j-p_i)
=
p_i-\ell_i .
\end{equation}
Combining \eqref{eq:mfc_strict_tail_lower_root}--\eqref{eq:mfc_positive_jump_shift_identity}, and using the zero-root convention for nonpositive shifted levels, yields
\begin{align}
\prod_{j\neq i}
\overline F_{j,>}(\ell_i+p_j-p_i)
&\ge
\exp\left\{
-
\bigl[\Delta_i+\beta\ell_i+p_i-\ell_i\bigr]
\sum_{j\neq i}
\theta_j(\ell_i+p_j-p_i)
\right\},
\label{eq:mfc_product_strict_lower}\\
\prod_{j\neq i}
\overline F_{j,\ge}(\ell_i+p_j-p_i)
&\le
\exp\left\{
-
\bigl[\Delta_i+\beta\ell_i\bigr]
\sum_{j\neq i}
\theta_j(\ell_i+p_j-p_i)
\right\}.
\label{eq:mfc_product_weak_upper}
\end{align}
Substituting \eqref{eq:mfc_product_strict_lower} into \eqref{eq:mfc_strict_winner_stieltjes}, substituting \eqref{eq:mfc_product_weak_upper} into \eqref{eq:mfc_weak_winner_stieltjes}, and using \eqref{eq:mfc_absorbing_sandwich_recall}, proves \eqref{eq:mfc_root_based_Qi_envelope}.

\end{proof}

\subsection{Asymptotic Closed-Form Evaluation}
\label{app:asymptotic_closed_form}

We now evaluate the score-minimum Stieltjes representation in Proposition~\ref{prop:root_based_drawdown_representation} under Assumption~\ref{ass:app_asymptotic_regime}. Consider a candidate absorbing arm \(i\). On the local window \(0\le\ell_i\le\eta_\Delta\), the assumed pairwise-gap scale ensures that every positive shifted level \(\ell_i+p_r-p_i\), \(r\neq i\), remains within the lower-endpoint region. Therefore, the local root expansion in Lemma~\ref{lem:lower_endpoint_theta_v_expansion} and the tail envelope in Lemma~\ref{lem:asymptotic_closed_form_tail} apply to all shifted levels appearing in the local Stieltjes representation.

\begin{proposition}[Macroscopic suboptimal-convergence formula]
\label{prop:mfc_asymptotic_closed_form}
Under Assumption~\ref{ass:app_asymptotic_regime}, for every arm \(i\ge2\) with \(p_i<p_1\),
\begin{align}
\mathbb P(Q_i)
&=
\sum_{m=i}^{K}
\frac{1}{m}
\exp\left\{
-\sum_{h=1}^{m}\lambda_h\delta_{hm}
\right\}
\left[
1-
\exp\left\{
-\left(
\sum_{h=1}^{m}\lambda_h
\right)\delta_{m,m+1}
\right\}
\right]
(1+o(1)),
\label{eq:mfc_closed_form_simplified}
\end{align}
where empty sums are interpreted as zero, \(\delta_{hh}:=0\), and
\(\delta_{K,K+1}:=\infty\). For \(m=K\), the second exponential in the bracket
is interpreted as zero.
\end{proposition}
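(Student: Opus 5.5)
The plan is to sandwich \(\mathbb P(Q_i)\) between the strict and weak score-minimum winner probabilities and evaluate both with the same reference integral. By Lemma~\ref{lem:sm_absorbing_event_sandwich} and independence of \(L_1,\ldots,L_K\), the Stieltjes representations \eqref{eq:mfc_strict_winner_stieltjes}--\eqref{eq:mfc_weak_winner_stieltjes} give
\[
\int_{[0,p_i)}\prod_{r\neq i}\overline F_{r,>}(x+p_r-p_i)\,dF_i(x)\le\mathbb P(Q_i)\le\int_{[0,p_i)}\prod_{r\neq i}\overline F_{r,\ge}(x+p_r-p_i)\,dF_i(x).
\]
I will apply Corollary~\ref{cor:acf_product_tail_stieltjes_replacement} to each side with \(\circ\in\{>,\ge\}\). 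Its shift condition \eqref{eq:acf_product_local_shift_condition_direct} holds because, for \(0\le x\le\eta_\Delta\), every positive shifted level is at most \(\eta_\Delta+\max_r(p_r-p_i)_+\), which is \(O(\eta_\Delta)\) by the gap scaling in Assumption~\ref{ass:app_asymptotic_regime}. Both sides then equal \(J_\Delta(1+o(1))\), where
\[
J_\Delta=\int_0^{\eta_\Delta}\lambda_i\exp\Bigl\{-\lambda_i x-\sum_{r\neq i}\lambda_r(x+p_r-p_i)_+\Bigr\}dx.
\]
So \(\mathbb P(Q_i)=J_\Delta(1+o(1))\), and it remains to evaluate \(J_\Delta\) in closed form.

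Next I extend the integral to \([0,\infty)\). The integrand is at most \(\lambda_i e^{-\lambda_i x}\exp\{-\sum_{r<i}\lambda_r\delta_{ri}\}\), so the part over \((\eta_\Delta,\infty)\) is at most \(e^{-\lambda_i\eta_\Delta}\) times the leading constant factor. The final value is at least that constant factor times a \(\Theta(1)\) quantity, by the same small-window argument as \eqref{eq:acf_product_reference_integral_lower_final}. Since \(\lambda_i\eta_\Delta\asymp\Delta_1^{1-\kappa}\to\infty\), the extension costs only a relative \(o(1)\) error.

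With ties among the \(p_r\) handled consistently, I will partition \([0,\infty)\) at the ordered gaps \(0=\delta_{ii}\le\delta_{i,i+1}\le\cdots\le\delta_{iK}<\infty\). On \([\delta_{i,m},\delta_{i,m+1})\) the active arms are exactly \(h=1,\ldots,m\). Arms \(h<i\) contribute \(\lambda_h(x+\delta_{hi})\), arm \(i\) contributes \(\lambda_i x\), and arms \(i<h\le m\) contribute \(\lambda_h(x-\delta_{ih})\). The exponent is therefore \(\Lambda_m x+c_m\), with \(\Lambda_m:=\sum_{h\le m}\lambda_h\) and an explicit constant \(c_m\). Substituting \(y=x-\delta_{im}\) converts the exponent at the left endpoint into \(\sum_{h\le m}\lambda_h\delta_{hm}\), using \(\delta_{hi}+\delta_{im}=\delta_{hm}\) and \(\delta_{im}-\delta_{ih}=\delta_{hm}\). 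Each piece then integrates to
\[
\frac{\lambda_i}{\Lambda_m}\exp\Bigl\{-\sum_{h\le m}\lambda_h\delta_{hm}\Bigr\}\bigl[1-e^{-\Lambda_m\delta_{m,m+1}}\bigr].
\]

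The main obstacle is matching the prefactor \(\lambda_i/\Lambda_m\) to the target \(1/m\). The \(\lambda_h\) are all \(\Theta(\Delta_1)\) but need not be equal. However, \(\lambda_h=2\Delta_h/\sigma_h^2\), and within the regime \(\Delta_h-\Delta_i=\beta(p_i-p_h)=O(\Delta_1\eta_\Delta)=o(\Delta_1)\). Likewise \(\sigma_h^2-\sigma_i^2=O(\eta_\Delta)\), since all nonzero gaps are \(\Theta(\eta_\Delta)\). Hence \(\lambda_h/\lambda_i=1+o(1)\) uniformly, so \(\lambda_i/\Lambda_m=(1+o(1))/m\). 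Because all terms are nonnegative and \(K\) is fixed, this relative error factors out of the sum over \(m\).

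Note that the exponents \(\lambda_h\delta_{hm}\) themselves are not perturbed, so the exponentials stay exact as written. Summing over \(m=i,\ldots,K\), with the convention \(\delta_{K,K+1}=\infty\), gives \eqref{eq:mfc_closed_form_simplified}.
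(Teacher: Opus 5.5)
Your proposal is correct and follows the paper's proof: the strict/weak score-minimum winner sandwich, the product-tail Stieltjes replacement (Corollary~\ref{cor:acf_product_tail_stieltjes_replacement}) on both sides, extension of the reference integral to $[0,\infty)$, piecewise evaluation at the ordered gaps $\delta_{im}$, and finally $\lambda_h/\lambda_i=1+O(\eta_\Delta)$ to turn the rate-exact prefactor $\lambda_i/\sum_{h\le m}\lambda_h$ into $1/m$. The only difference is cosmetic: you obtain the exponent $\sum_{h\le m}\lambda_h\delta_{hm}$ through the shift $y=x-\delta_{im}$, whereas the paper computes it with explicit $A_{i,m}$ and $B_{i,m}$ bookkeeping.
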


\begin{proof}\leavevmode
By the score-minimum sandwich and the Stieltjes representation in Proposition~\ref{prop:root_based_drawdown_representation}, for \(\circ\in\{>,\ge\}\),
\[
% \label{eq:mfc_Wcirc_stieltjes}
\mathbb P(\mathcal W_i^{\circ})
=
\int_{[0,p_i)}
\prod_{j\neq i}
\overline F_{j,\circ}(\ell_i+p_j-p_i)
\,dF_i(\ell_i),
\]
and
\begin{equation}
\label{eq:mfc_Wcirc_sandwich}
\mathbb P(\mathcal W_i^{>})
\le
\mathbb P(Q_i)
\le
\mathbb P(\mathcal W_i^{\ge}).
\end{equation}
We first evaluate \(\mathbb P(\mathcal W_i^{\circ})\). For \(0\le\ell_i\le\eta_\Delta\), the pairwise-gap scale in \eqref{eq:app_asymptotic_gap_scale} ensures that every positive shifted level \(\ell_i+p_j-p_i\) is \(O(\eta_\Delta)\). Hence Corollary~\ref{cor:acf_product_tail_stieltjes_replacement} gives
\begin{align}
\mathbb P(\mathcal W_i^{\circ})
&=
\int_0^{\eta_\Delta}
\exp\left\{
-\sum_{j\neq i}
\lambda_j(\ell_i+p_j-p_i)_+
\right\}
\lambda_i \exp\{-\lambda_i\ell_i\}\,d\ell_i
(1+o(1)).
\label{eq:mfc_product_tail_replacement_applied}
\end{align}
Using the ordering \(p_1\ge\cdots\ge p_K\), the shifted positive parts are
\begin{equation}
\label{eq:mfc_shifted_positive_part_split}
(\ell_i+p_j-p_i)_+
=
\begin{cases}
\ell_i+\delta_{ji}, & j<i,\\
(\ell_i-\delta_{ij})_+, & j>i .
\end{cases}
\end{equation}
Substituting \eqref{eq:mfc_shifted_positive_part_split} into \eqref{eq:mfc_product_tail_replacement_applied}, we obtain
\[\begin{aligned}
\mathbb P(\mathcal W_i^{\circ})
&=
\lambda_i
\exp\left\{
-\sum_{j<i}\lambda_j\delta_{ji}
\right\}
\int_0^{\eta_\Delta}
\exp\left\{
-
\left(
\lambda_i+\sum_{j<i}\lambda_j
\right)\ell_i
-
\sum_{j>i}\lambda_j(\ell_i-\delta_{ij})_+
\right\}
\,d\ell_i
(1+o(1)).
% \label{eq:mfc_bulk_integral_eta_precise}
\end{aligned}\]

We next extend the upper integration limit from \(\eta_\Delta\) to infinity. By the standing interior condition~\eqref{eq:app_ordered_means} and the regularization scale \eqref{eq:app_asymptotic_regularization_scale}, uniformly over \(k\in[K]\), \(\lambda_k=2\Delta_k/\sigma_k^2=\Theta(\Delta_1)\). Hence \(\lambda_i+\sum_{j<i}\lambda_j=\Theta(\Delta_1)\). Since \((\ell_i-\delta_{ij})_+\ge0\),
\[\begin{aligned}
\int_{\eta_\Delta}^\infty
\exp\left\{
-
\left(
\lambda_i+\sum_{j<i}\lambda_j
\right)\ell_i
-
\sum_{j>i}\lambda_j(\ell_i-\delta_{ij})_+
\right\}
\,d\ell_i&\le
\int_{\eta_\Delta}^\infty
\exp\left\{
-
\left(
\lambda_i+\sum_{j<i}\lambda_j
\right)\ell_i
\right\}
\,d\ell_i
\nonumber\\
&=
\frac{
\exp\left\{
-
\left(
\lambda_i+\sum_{j<i}\lambda_j
\right)\eta_\Delta
\right\}
}{
\lambda_i+\sum_{j<i}\lambda_j
}
\nonumber\\
&\le
C\Delta_1^{-1}\exp\{-\Delta_1\eta_\Delta/C\}.
% \label{eq:mfc_extend_to_infty_tail_upper}
\end{aligned}\]
Conversely, fix a small constant \(a>0\). Since \(\eta_\Delta=\Delta_1^{-\kappa}\) by \eqref{eq:app_asymptotic_eta_def} and \(\kappa<1\), we have \(a/\Delta_1\le\eta_\Delta\) for all sufficiently large \(\Delta_1\). Hence
\[\begin{aligned}
\int_0^{\eta_\Delta}
\exp\left\{
-
\left(
\lambda_i+\sum_{j<i}\lambda_j
\right)\ell_i
-
\sum_{j>i}\lambda_j(\ell_i-\delta_{ij})_+
\right\}
\,d\ell_i
&\ge
\int_0^{a/\Delta_1}
\exp\left\{
-
\left(
\lambda_i+\sum_{j<i}\lambda_j+\sum_{j>i}\lambda_j
\right)\ell_i
\right\}
\,d\ell_i
\nonumber\\
&\ge
c\Delta_1^{-1}.
% \label{eq:mfc_extend_to_infty_bulk_lower}
\end{aligned}\]
Therefore, because \(\Delta_1\eta_\Delta\to\infty\),
\[\begin{aligned}
&\int_{\eta_\Delta}^\infty
\exp\left\{
-
\left(
\lambda_i+\sum_{j<i}\lambda_j
\right)\ell_i
-
\sum_{j>i}\lambda_j(\ell_i-\delta_{ij})_+
\right\}
\,d\ell_i
\\=&
o(1)
\int_0^{\eta_\Delta}
\exp\left\{
-
\left(
\lambda_i+\sum_{j<i}\lambda_j
\right)\ell_i
-
\sum_{j>i}\lambda_j(\ell_i-\delta_{ij})_+
\right\}
\,d\ell_i .
% \label{eq:mfc_extend_to_infty_relative}
\end{aligned}\]
Consequently,
\begin{align}
\mathbb P(\mathcal W_i^{\circ})
&=
\lambda_i
\exp\left\{
-\sum_{j<i}\lambda_j\delta_{ji}
\right\}
\int_0^\infty
\exp\left\{
-
\left(
\lambda_i+\sum_{j<i}\lambda_j
\right)\ell_i
-
\sum_{j>i}\lambda_j(\ell_i-\delta_{ij})_+
\right\}
\,d\ell_i
(1+o(1)).
\label{eq:mfc_integral_extended_final}
\end{align}

It remains to evaluate the elementary integral. Because the arms are ordered by nonincreasing means, for the arm \(i\) under consideration, the gaps \(\delta_{ik}=p_i-p_k\), \(k>i\), are nondecreasing in \(k\). With the endpoint convention \(\delta_{ii}=0\) and \(\delta_{i,K+1}=\infty\), we have
\[
% \label{eq:mfc_gap_order_for_piecewise}
0=\delta_{ii}
\le
\delta_{i,i+1}
\le
\cdots
\le
\delta_{iK}
\le
\delta_{i,K+1}
=
\infty .
\]
Split \([0,\infty)\) according to these breakpoints:
\[
% \label{eq:mfc_piecewise_intervals}
[0,\infty)
=
\bigcup_{m=i}^{K}
[\delta_{im},\delta_{i,m+1}).
\]
If consecutive arms have equal means, then the corresponding breakpoints coincide and the resulting interval is empty; such intervals contribute zero to the piecewise integral.
For \(m\in\{i,\ldots,K\}\) and \(\ell_i\in[\delta_{im},\delta_{i,m+1})\),
\[
% \label{eq:mfc_positive_part_piecewise}
\sum_{j>i}\lambda_j(\ell_i-\delta_{ij})_+
=
\sum_{k=i+1}^{m}\lambda_k(\ell_i-\delta_{ik}),
\]
and therefore
\[
-
\left(
\lambda_i+\sum_{j<i}\lambda_j
\right)\ell_i
-
\sum_{j>i}\lambda_j(\ell_i-\delta_{ij})_+
=
-
\left(
\sum_{k=1}^{m}\lambda_k
\right)\ell_i
+
\sum_{k=i+1}^{m}\lambda_k\delta_{ik}.
% \label{eq:mfc_exponent_piecewise_simplified}
\]
Thus
\begin{align}
&\int_0^\infty
\exp\left\{
-
\left(
\lambda_i+\sum_{j<i}\lambda_j
\right)\ell_i
-
\sum_{j>i}\lambda_j(\ell_i-\delta_{ij})_+
\right\}
\,d\ell_i
\nonumber\\
&=
\sum_{m=i}^{K}
\exp\left\{
\sum_{k=i+1}^{m}\lambda_k\delta_{ik}
\right\}
\int_{\delta_{im}}^{\delta_{i,m+1}}
\exp\left\{
-
\left(
\sum_{k=1}^{m}\lambda_k
\right)\ell_i
\right\}
\,d\ell_i
\nonumber\\
&=
\sum_{m=i}^{K}
\frac{
\exp\left\{
\sum_{k=i+1}^{m}\lambda_k\delta_{ik}
\right\}
}{
\sum_{k=1}^{m}\lambda_k
}
\left[
\exp\left\{
-
\left(
\sum_{k=1}^{m}\lambda_k
\right)
\delta_{im}
\right\}
-
\exp\left\{
-
\left(
\sum_{k=1}^{m}\lambda_k
\right)
\delta_{i,m+1}
\right\}
\right].
\label{eq:mfc_piecewise_integral_evaluation}
\end{align}
For \(m=K\), the second exponential in the bracket is interpreted as zero because \(\delta_{i,K+1}=\infty\).

Substituting \eqref{eq:mfc_piecewise_integral_evaluation} into \eqref{eq:mfc_integral_extended_final}, we get, for both \(\circ=>\) and \(\circ=\ge\),
\begin{align}
\mathbb P(\mathcal W_i^{\circ})
&=
\lambda_i
\exp\left\{
-\sum_{j<i}\lambda_j\delta_{ji}
\right\}
\sum_{m=i}^{K}
\frac{
\exp\left\{
\sum_{k=i+1}^{m}\lambda_k\delta_{ik}
\right\}
}{
\sum_{k=1}^{m}\lambda_k
}
\nonumber\\
&\quad\times
\left[
\exp\left\{
-
\left(
\sum_{k=1}^{m}\lambda_k
\right)
\delta_{im}
\right\}
-
\exp\left\{
-
\left(
\sum_{k=1}^{m}\lambda_k
\right)
\delta_{i,m+1}
\right\}
\right]
(1+o(1)).
\label{eq:mfc_Wcirc_piecewise_final}
\end{align}

We now simplify the common leading expression in
\eqref{eq:mfc_Wcirc_piecewise_final}. 

For \(m\in\{i,\ldots,K\}\), consider first the first exponential term inside
the bracket of \eqref{eq:mfc_Wcirc_piecewise_final}. Its total exponent,
including the prefactor outside the bracket, is
\[\begin{aligned}
&-\sum_{j<i}\lambda_j\delta_{ji}
+
\sum_{k=i+1}^{m}\lambda_k\delta_{ik}
-
\left(
\sum_{h=1}^{m}\lambda_h
\right)\delta_{im}.
\end{aligned}\]
Equivalently, this is the negative of
\[\begin{aligned}
A_{i,m}
&:=
\sum_{j<i}\lambda_j\delta_{ji}
-
\sum_{k=i+1}^{m}\lambda_k\delta_{ik}
+
\left(
\sum_{h=1}^{m}\lambda_h
\right)\delta_{im}.
\end{aligned}\]
We simplify \(A_{i,m}\). For every \(j<i\),
\[
\delta_{ji}+\delta_{im}
=
(p_j-p_i)+(p_i-p_m)
=
p_j-p_m
=
\delta_{jm}.
\]
For every \(k=i+1,\ldots,m\),
\[
\delta_{im}-\delta_{ik}
=
(p_i-p_m)-(p_i-p_k)
=
p_k-p_m
=
\delta_{km}.
\]
Therefore,
\[\begin{aligned}
A_{i,m}
&=
\sum_{j<i}\lambda_j(\delta_{ji}+\delta_{im})
+
\lambda_i\delta_{im}
+
\sum_{k=i+1}^{m}\lambda_k(\delta_{im}-\delta_{ik})
\\
&=
\sum_{j<i}\lambda_j\delta_{jm}
+
\lambda_i\delta_{im}
+
\sum_{k=i+1}^{m}\lambda_k\delta_{km}
\\
&=
\sum_{h=1}^{m}\lambda_h\delta_{hm},
\end{aligned}\]
Hence the first exponential term in the bracket
contributes
\(
\exp\left\{
-\sum_{h=1}^{m}\lambda_h\delta_{hm}
\right\}
\).

Now consider the second exponential term in the bracket of
\eqref{eq:mfc_Wcirc_piecewise_final}. For \(m<K\), its total exponent,
including the same prefactor outside the bracket, is the negative of
\[\begin{aligned}
B_{i,m}
&:=
\sum_{j<i}\lambda_j\delta_{ji}
-
\sum_{k=i+1}^{m}\lambda_k\delta_{ik}
+
\left(
\sum_{h=1}^{m}\lambda_h
\right)\delta_{i,m+1}.
\end{aligned}\]
The same algebra gives, for \(j<i\), 
\(
\delta_{ji}+\delta_{i,m+1}
=
\delta_{j,m+1}
\), 
and, for \(k=i+1,\ldots,m\), 
\(
\delta_{i,m+1}-\delta_{ik}
=
\delta_{k,m+1}
\). 
Thus
\[\begin{aligned}
B_{i,m}
&=
\sum_{j<i}\lambda_j\delta_{j,m+1}
+
\lambda_i\delta_{i,m+1}
+
\sum_{k=i+1}^{m}\lambda_k\delta_{k,m+1}
\\
&=
\sum_{h=1}^{m}\lambda_h\delta_{h,m+1}.
\end{aligned}\]
Since \(h\le m\), we have
\(
\delta_{h,m+1}
=
\delta_{hm}+\delta_{m,m+1}
\). 
Therefore,
\[\begin{aligned}
B_{i,m}
&=
\sum_{h=1}^{m}\lambda_h\delta_{hm}
+
\left(
\sum_{h=1}^{m}\lambda_h
\right)\delta_{m,m+1}.
\end{aligned}\]
For \(m=K\), the second exponential term is interpreted as zero, which is
equivalent to the convention \(\delta_{K,K+1}=\infty\).

Substituting these exponent identities into
\eqref{eq:mfc_Wcirc_piecewise_final}, we obtain the rate-exact compact form
\begin{align}
\mathbb P(\mathcal W_i^{\circ})
=
\sum_{m=i}^{K}
\frac{\lambda_i}{\sum_{h=1}^{m}\lambda_h}
\exp\left\{
-\sum_{h=1}^{m}\lambda_h\delta_{hm}
\right\}\times
\left[
1-
\exp\left\{
-\left(
\sum_{h=1}^{m}\lambda_h
\right)\delta_{m,m+1}
\right\}
\right]
(1+o(1)).
\label{eq:mfc_Wcirc_compact_rate_exact}
\end{align}

We next simplify the prefactor \(\lambda_i/\sum_{h=1}^{m}\lambda_h\). By \eqref{eq:app_asymptotic_gap_scale}, uniformly over the finitely many indices \(h\le m\),
\[
|p_h-p_i|
=
O(\eta_\Delta)
=
o(1),
\]
where the conclusion is immediate when \(p_h=p_i\).
Moreover,
\[
\begin{aligned}
\sigma_h^2-\sigma_i^2
&=
p_h(1-p_h)-p_i(1-p_i)
\\
&=
(p_h-p_i)\bigl[1-(p_h+p_i)\bigr].
\end{aligned}
\]
By the standing interior condition~\eqref{eq:app_ordered_means},
\[
\left|\sigma_h^2-\sigma_i^2\right|
\le
|p_h-p_i|
=
O(\eta_\Delta),
\]
while
\(
\sigma_i^2
\ge
\epsilon_{\mathrm p}(1-\epsilon_{\mathrm p})
>
0
\). 
Consequently,
\[
\frac{\sigma_h^2}{\sigma_i^2}
=
1+O(\eta_\Delta)
=
1+o(1),
\]
uniformly over \(h\le m\).

Similarly,
\[
\Delta_h-\Delta_i
=
-(p_h-p_i)\beta.
\]
By \eqref{eq:app_asymptotic_regularization_scale},
\[
\Delta_i=\Theta(\Delta_1), 
\qquad
\beta=O(\Delta_1).
\]
Therefore,
\[
\left|
\frac{\Delta_h}{\Delta_i}-1
\right|
=
\frac{|p_h-p_i|\beta}{\Delta_i}
=
O(\eta_\Delta)
=
o(1),
\]
uniformly over \(h\le m\). Combining the preceding estimates gives
\[
\frac{\lambda_h}{\lambda_i}
=
\frac{\Delta_h}{\Delta_i}
\frac{\sigma_i^2}{\sigma_h^2}
=
1+O(\eta_\Delta)
=
1+o(1).
\]
Hence,
\[
\lambda_h
=
\lambda_i(1+o(1)),
\qquad
h=1,\ldots,m.
\]
Because \(K\) is fixed,
\[
\sum_{h=1}^{m}\lambda_h
=
m\lambda_i(1+o(1)).
\]
Consequently,
\begin{equation}
\label{eq:mfc_lambda_ratio_one_over_m}
\frac{\lambda_i}{\sum_{h=1}^{m}\lambda_h}
=
\frac{1}{m}(1+o(1)).
\end{equation}
Substituting \eqref{eq:mfc_lambda_ratio_one_over_m} into
\eqref{eq:mfc_Wcirc_compact_rate_exact} yields
\begin{align}
\mathbb P(\mathcal W_i^{\circ})
=
\sum_{m=i}^{K}
\frac{1}{m}
\exp\left\{
-\sum_{h=1}^{m}\lambda_h\delta_{hm}
\right\}
\times
\left[
1-
\exp\left\{
-\left(
\sum_{h=1}^{m}\lambda_h
\right)\delta_{m,m+1}
\right\}
\right]
(1+o(1)).
\label{eq:mfc_Wcirc_simplified_final}
\end{align}

The simplified expression in \eqref{eq:mfc_Wcirc_simplified_final} holds for both
\(\circ=>\) and \(\circ=\ge\). This is because
Corollary~\ref{cor:acf_product_tail_stieltjes_replacement} applies to the strict
and weak Stieltjes integrals in the same way: both integrals are localized on
the same lower-endpoint window, both exact product tails are replaced by the
same exponential reference product, and the contribution outside the localized
window is negligible. Consequently,
\[
% \label{eq:mfc_strict_weak_winner_equivalence}
\mathbb P(\mathcal W_i^{\ge})
=
\mathbb P(\mathcal W_i^{>})(1+o(1)),
\]
and hence
\begin{equation}
\label{eq:mfc_strict_weak_gap_negligible}
\mathbb P(\mathcal W_i^{\ge})
-
\mathbb P(\mathcal W_i^{>})
=
o\!\left(\mathbb P(\mathcal W_i^{>})\right).
\end{equation}
Combining \eqref{eq:mfc_Wcirc_simplified_final} and \eqref{eq:mfc_strict_weak_gap_negligible} with the sandwich
\eqref{eq:mfc_Wcirc_sandwich} proves \eqref{eq:mfc_closed_form_simplified}.

\end{proof}

\section{Conditional Abandonment Costs Given the Absorbing Arm}
\label{app:conditional_abandonment_costs}

This section estimates the finite-horizon abandonment costs incurred before a nonabsorbing arm is discarded. Consider any two distinct arms \(i\neq j\). On the event \(Q_i\), arm \(i\) is the unique absorbing arm, while arm \(j\) is eventually abandoned. Conditioning on the realized centered score minimum \(L_i=\ell_i\) of the absorbing arm fixes the score floor \(p_i-\ell_i\). The evolution of arm \(j\), relative to this floor, can then be compared with a one-arm boundary-crossing problem. This comparison gives an upper representation for the capped count \(\min\{T,N_j\}\), but only after using a strict crossing convention; equality with the absorbing floor may still allow tie-breaking, and is therefore kept as a finite-horizon correction.

The first step is to derive a pairwise winner-conditioned Stieltjes upper bound. The strict score-minimum factors define the denominator, while the numerator uses a strict-crossing time only for the abandoned arm \(j\) and keeps weak winner factors for all remaining competitors. The only equality correction that must be retained is the pairwise boundary event \(M_j=M_i\). This keeps the conditioning on \(Q_i\) explicit and avoids treating the adaptive count \(N_j\) as an exact one-arm hitting time.

The second step is to bound the strict capped first-passage envelope pointwise. When \(p_i>p_j\), the comparison drift is \(\delta_{ij}-\ell_i\), so the critical interface \(\ell_i=\delta_{ij}\) separates a positive-drift region from a negative-drift region. The positive-drift side is controlled by a direct Wald bound, the negative-drift side by a tilted Lundberg--Wald bound, and the interface itself only by the finite-horizon cap \(T\). When \(p_j>p_i\), the comparison walk has negative drift for every \(\ell_i>0\), so the tilted Lundberg--Wald bound applies throughout. When \(p_i=p_j\), no nonzero drift scale is available, and the finite-horizon cap gives the only general bound.

The final step is asymptotic. Under the macroscopic lower-endpoint regime, the centered score minimum \(L_i\) is localized through winner-weighted Stieltjes ratios rather than through a direct conditional law of \(L_i\) given \(Q_i\). This localization turns the strict root-based Stieltjes upper envelope into explicit closed-form conditional-abandonment bounds. The equality correction needed for the abandoned arm \(j\) is kept in pairwise form. More precisely, only the boundary event \(M_j=M_i\) can invalidate the strict-crossing upper bound for \(N_j\). This pairwise boundary event is shown to be negligible on the suboptimal-convergence scale of arm \(j\), which is essential when the absorbing arm is optimal. The optimal absorbing branches are then refined separately, because they generate the only transient correction that remains visible in the final regret upper envelope.

\subsection{Winner-Conditioned Capped Boundary-Crossing Upper Representation}
\label{app:cac_representation}

For any two distinct arms \(i\neq j\), on \(Q_i\), arm \(i\) is eventually pulled forever and arm \(j\) is eventually abandoned. For finite-horizon regret, we only need the capped count \(\min\{T,N_j\}\). The goal of this subsection is to dominate this capped count by a one-arm comparison time and then integrate this domination over the centered score minimum \(L_i\) of the absorbing arm.

Recall that
\(
% \label{eq:cac_winner_floor}
M_i=p_i-L_i
\). 
Condition on \(L_i=\ell_i\). The corresponding score floor is \(p_i-\ell_i\). Define the comparison walk of arm \(j\) against this fixed floor by
\begin{equation}
\label{eq:cac_comparison_walk_def}
U_{j|i,n}(\ell_i)
:=
(p_i-\ell_i)n-S_j(n),
\qquad
n\ge1,
\end{equation}
with boundary height
\begin{equation}
\label{eq:cac_comparison_boundary}
b_i(\ell_i)
:=
\alpha-(p_i-\ell_i)\beta
=
\Delta_i+\beta\ell_i .
\end{equation}

We use two comparison times. The weak first-passage time is
\[
% \label{eq:cac_weak_tau_def}
\tau_{j|i}^{\ge}(\ell_i)
:=
\inf\left\{
n\ge1:
U_{j|i,n}(\ell_i)\ge b_i(\ell_i)
\right\},
\]
which is the first potential pull count \(n\) for which the regularized score of arm \(j\) is no larger than \(p_i-\ell_i\). The strict first-passage time is
\begin{equation}
\label{eq:cac_strict_tau_def}
\tau_{j|i}^{>}(\ell_i)
:=
\inf\left\{
n\ge1:
U_{j|i,n}(\ell_i)> b_i(\ell_i)
\right\},
\end{equation}
which is the first potential pull count \(n\) for which the regularized score of arm \(j\) is strictly below \(p_i-\ell_i\). The strict time is the one that can be used to upper-bound the actual abandoned count. Equality with the floor may still allow tie-breaking to select arm \(j\), so the weak time alone is not a pathwise upper bound on \(N_j\).

The drift of the comparison walk is
\[\begin{aligned}
d_{j|i}(\ell_i)
:=
\mathbb E\!\left[
U_{j|i,n}(\ell_i)-U_{j|i,n-1}(\ell_i)
\right]
=
p_i-p_j-\ell_i .
% \label{eq:cac_effective_drift}
\end{aligned}\]
Thus, if \(p_i>p_j\), then
\(
% \label{eq:cac_lower_mean_drift}
d_{j|i}(\ell_i)
=
\delta_{ij}-\ell_i
\), 
so the drift changes sign at \(\ell_i=\delta_{ij}\). If \(p_j>p_i\), then
\(
% \label{eq:cac_higher_mean_drift}
d_{j|i}(\ell_i)
=
-(\delta_{ji}+\ell_i)<0
\), 
so the comparison is always in the negative-drift regime.

We now derive the pathwise capped envelope. For the ordered pair
\((i,j)\), define the pairwise equality part of the score-minimum comparison by
\[
% \label{eq:cac_pairwise_tie_part_def}
\mathcal E_{ij}^{=}
:=
\mathcal W_i^{\ge}\cap\{M_j=M_i\}.
\]
This is the only boundary event relevant for the abandoned count of arm \(j\).
Indeed, equality between \(M_i\) and some other competitor \(M_r\), \(r\neq j\),
does not prevent the strict-crossing argument from upper-bounding \(N_j\).

Since \(Q_i\subseteq\mathcal W_i^{\ge}\), on \(Q_i\) either \(M_j<M_i\) or
\(M_j=M_i\). On the event \(M_j<M_i\), the strict comparison with arm \(j\) is
available. In particular, if the actual trajectory ever reaches the
\(\tau_{j|i}^{>}(L_i)\)-th potential pull of arm \(j\), then arm \(j\)'s score
is strictly below \(M_i\), while arm \(i\)'s current score is always at least
its potential minimum \(M_i\). Hence arm \(j\) can never again be a maximizer
after that strict crossing. On the pairwise equality event \(M_j=M_i\), we use
only the finite-horizon cap. Therefore,
\[\begin{aligned}
\min\{T,N_j\}\mathbf 1_{Q_i}
&\le
\min\{T,\tau_{j|i}^{>}(L_i)\}
\mathbf 1_{\mathcal W_i^{\ge}\cap\{M_j<M_i\}}
+
T\mathbf 1_{\mathcal E_{ij}^{=}} .
% \label{eq:cac_pathwise_pairwise_capped_upper}
\end{aligned}\]
The factor \(T\) is kept explicitly as the finite-horizon cap. Throughout the asymptotic estimates below, the limit is taken only in the regularization scale \(\Delta_1\to\infty\). The horizon \(T\) is treated as an external finite-horizon parameter and is not absorbed into any \(o(\cdot)\) term.

Conditioning on \(L_i=\ell_i\) and using independence of the potential reward
streams, the numerator satisfies
\begin{align}
\mathbb E\!\left[
\min\{T,N_j\}\mathbf 1_{Q_i}
\right]
\le
\int_{[0,p_i)}
\mathbb E\!\left[
\min\{T,\tau_{j|i}^{>}(\ell_i)\}
\mathbf 1\{\tau_{j|i}^{>}(\ell_i)<\infty\}
\right]
\prod_{\substack{r=1\\r\neq i,j}}^{K}
\overline F_{r,\ge}(\ell_i+p_r-p_i)
\,dF_i(\ell_i)
+
T\mathbb P(\mathcal E_{ij}^{=}).
\label{eq:cac_pairwise_capped_upper_numerator_stieltjes}
\end{align}
Here the strict event for arm \(j\) is contained in the one-arm factor
\(\mathbf 1\{\tau_{j|i}^{>}(\ell_i)<\infty\}\), while the remaining competitors
enter only through weak winner factors.

For the denominator, the strict side of the score-minimum sandwich gives
\begin{equation}
\label{eq:cac_denominator_strict_lower}
\mathbb P(Q_i)
\ge
\mathbb P(\mathcal W_i^>)
=
\int_{[0,p_i)}
\prod_{r\neq i}
\overline F_{r,>}(\ell_i+p_r-p_i)
\,dF_i(\ell_i).
\end{equation}
Combining \eqref{eq:cac_pairwise_capped_upper_numerator_stieltjes} and
\eqref{eq:cac_denominator_strict_lower}, we obtain the pairwise
winner-conditioned capped upper representation
\begin{align}
\mathbb E[\min\{T,N_j\}\mid Q_i]
&\le
\frac{
\displaystyle
\int_{[0,p_i)}
\mathbb E\!\left[
\min\{T,\tau_{j|i}^{>}(\ell_i)\}
\mathbf 1\{\tau_{j|i}^{>}(\ell_i)<\infty\}
\right]
\prod_{\substack{r=1\\r\neq i,j}}^{K}
\overline F_{r,\ge}(\ell_i+p_r-p_i)
\,dF_i(\ell_i)
+
T\mathbb P(\mathcal E_{ij}^{=})
}{
\displaystyle
\mathbb P(\mathcal W_i^>)
}.
\label{eq:cac_capped_winner_conditioned_upper_ratio}
\end{align}
This representation is an upper envelope for the capped abandonment cost.

\subsection{Pointwise Capped First-Passage Envelopes}
\label{app:cac_exact_root_based}

We now bound the one-arm capped factor appearing in \eqref{eq:cac_capped_winner_conditioned_upper_ratio}. Consider any \(i\neq j\) and \(\ell_i\in(0,p_i)\). Throughout this subsection, \(\tau_{j|i}^{>}(\ell_i)\), \(U_{j|i,n}(\ell_i)\), and \(b_i(\ell_i)\) are defined by \eqref{eq:cac_comparison_walk_def}, \eqref{eq:cac_comparison_boundary}, and \eqref{eq:cac_strict_tau_def}.

On the strict crossing event, the terminal state has the bounded-overshoot form
\begin{equation}
\label{eq:cac_terminal_state}
U_{j|i,\tau_{j|i}^{>}(\ell_i)}(\ell_i)
=
b_i(\ell_i)+r_{j|i}^{>}(\ell_i),
\qquad
0< r_{j|i}^{>}(\ell_i)\le p_i-\ell_i .
\end{equation}
The coarse bound \(r_{j|i}^{>}(\ell_i)\le p_i-\ell_i\) is the only overshoot information needed below.

When the comparison drift is negative, the relevant shifted level for arm \(j\) is
\(
% \label{eq:cac_shifted_level_y_def}
y
:=
\ell_i+p_j-p_i
\). 
The negative-drift regime is exactly \(0<y<p_j\). In this case, let \(\theta_j(y)>0\) be the Lundberg root of arm \(j\) at level \(y\), i.e.,
\begin{equation}
\label{eq:cac_shifted_lundberg_root_explicit}
q_j \exp\{\theta_j(y)(p_j-y)\}
+
p_j \exp\{-\theta_j(y)(q_j+y)\}
=
1 .
\end{equation}
The associated tilted drift is
\[\begin{aligned}
v_j(y)
&:=
\partial_\theta\Lambda_j(\theta_j(y),y)
\nonumber\\
&=
q_j(p_j-y)\exp\{\theta_j(y)(p_j-y)\}
-
p_j(q_j+y)\exp\{-\theta_j(y)(q_j+y)\}
\nonumber\\
&=
p_j-y
-
p_j \exp\{-\theta_j(y)(q_j+y)\}.
% \label{eq:cac_shifted_tilted_drift_def}
\end{aligned}\]
The last equality follows from the Lundberg balance \eqref{eq:cac_shifted_lundberg_root_explicit}. In particular, \(v_j(y)>0\) by Corollary~\ref{cor:tilted_drift_positivity}.

Since
\[
% \label{eq:cac_shifted_boundary_identity}
b_j(y)
=
\Delta_j+\beta y
=
\Delta_i+\beta\ell_i
=
b_i(\ell_i),
\]
the comparison time \(\tau_{j|i}^{>}(\ell_i)\) is the strict first-passage time of the one-arm crossing problem for arm \(j\) at shifted level \(y\).

\begin{lemma}[Pointwise capped first-passage envelope]
\label{lem:cac_pointwise_capped_first_passage}
For every \(i\neq j\), every \(\ell_i\in(0,p_i)\), and every finite horizon \(T\),
\begin{equation}
\label{eq:cac_pointwise_capped_factor_bound}
\mathbb E\!\left[
\min\{T,\tau_{j|i}^{>}(\ell_i)\}
\mathbf 1\{\tau_{j|i}^{>}(\ell_i)<\infty\}
\right]
\le
\mathcal H_{j|i,T}(\ell_i),
\end{equation}
where \(\mathcal H_{j|i,T}\) is defined as follows. If \(p_i>p_j\), then
\[
% \label{eq:cac_H_lower_mean_case_capped}
\mathcal H_{j|i,T}(\ell_i)
:=
\begin{cases}
\displaystyle
\min\left\{
T,\,
\dfrac{
b_i(\ell_i)+p_i-\ell_i
}{
\delta_{ij}-\ell_i
}
\right\},
&
0<\ell_i<\delta_{ij},
\\[3ex]
T,
&
\ell_i=\delta_{ij},
\\[2ex]
\displaystyle
\min\left\{
T,\,
\exp\{-\theta_j(\ell_i-\delta_{ij})b_i(\ell_i)\}
\dfrac{
b_i(\ell_i)+p_i-\ell_i
}{
v_j(\ell_i-\delta_{ij})
}
\right\},
&
\delta_{ij}<\ell_i<p_i .
\end{cases}
\]
If \(p_j>p_i\), then
\[
% \label{eq:cac_H_higher_mean_case_capped}
\mathcal H_{j|i,T}(\ell_i)
:=
\min\left\{
T,\,
\exp\{-\theta_j(\ell_i+\delta_{ji})b_i(\ell_i)\}
\dfrac{
b_i(\ell_i)+p_i-\ell_i
}{
v_j(\ell_i+\delta_{ji})
}
\right\},
\qquad
0<\ell_i<p_i .
\]
If \(p_i=p_j\), then
\[
% \label{eq:cac_H_equal_mean_case_capped}
\mathcal H_{j|i,T}(\ell_i)
:=
T,
\qquad
0<\ell_i<p_i .
\]
\end{lemma}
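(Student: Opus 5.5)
The proof reduces to three one-arm estimates for the strict first-passage time $\tau:=\tau_{j|i}^{>}(\ell_i)$ of the comparison walk $U_{j|i,n}(\ell_i)$ across $b:=b_i(\ell_i)$. In every case, the trivial bound $\min\{T,\tau\}\mathbf 1\{\tau<\infty\}\le T$ supplies the $T$ entry of each minimum, including the interface $\ell_i=\delta_{ij}$ and the equal-mean case $p_i=p_j$. Since $\mathbb E[\min\{T,X\}]\le\min\{T,\mathbb E[X]\}$, it then suffices to bound $\mathbb E[\tau\mathbf 1\{\tau<\infty\}]$ by the non-$T$ expression in each regime. Throughout we use the terminal representation \eqref{eq:cac_terminal_state}: on $\{\tau<\infty\}$ we have $U_{j|i,\tau}=b+r$ with $0<r\le p_i-\ell_i$.

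\emph{Positive-drift side} ($p_i>p_j$, $0<\ell_i<\delta_{ij}$). The increments of $U_{j|i,n}(\ell_i)$ are i.i.d., bounded, and have mean $d=\delta_{ij}-\ell_i>0$. Apply Wald's identity at the bounded time $\tau\wedge N$:
\[
d\,\mathbb E[\tau\wedge N]=\mathbb E[U_{j|i,\tau\wedge N}]\le b+p_i-\ell_i .
\]
The inequality holds because before crossing the walk is at most $b$, and at crossing it is at most $b+p_i-\ell_i$. Letting $N\to\infty$ by monotone convergence gives $\mathbb E[\tau]\le (b+p_i-\ell_i)/(\delta_{ij}-\ell_i)$, and in particular $\tau<\infty$ almost surely.

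\emph{Negative-drift side} ($p_i>p_j$ with $\delta_{ij}<\ell_i<p_i$, or $p_j>p_i$ with any $\ell_i$). Set $y=\ell_i+p_j-p_i\in(0,p_j)$. As noted before the lemma, $\tau$ is the strict passage time of arm $j$ at level $y$ with boundary $b_j(y)=b$, so $\theta:=\theta_j(y)$ and $v:=v_j(y)>0$ are available from Lemma~\ref{lem:root_existence_uniqueness} and Corollary~\ref{cor:tilted_drift_positivity}. Let $\widetilde{\mathbb P}$ be the exponentially tilted measure with $d\widetilde{\mathbb P}/d\mathbb P|_{\mathcal F_n}=\exp\{\theta U_{j|i,n}\}$; this is a valid change of measure because $\Lambda_j(\theta,y)=0$. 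Under $\widetilde{\mathbb P}$ the increments are i.i.d. with mean $v>0$. The Wald argument of the previous paragraph then gives $\widetilde{\mathbb E}[\tau]\le (b+p_i-\ell_i)/v$, using that the jump size satisfies $p_j-y=p_i-\ell_i$. Changing measure back,
\[
\mathbb E[\tau\mathbf 1\{\tau<\infty\}]=\widetilde{\mathbb E}\bigl[\tau\exp\{-\theta U_{j|i,\tau}\}\bigr]\le e^{-\theta b}\,\widetilde{\mathbb E}[\tau],
\]
where the inequality uses $U_{j|i,\tau}\ge b$. Combining the two displays yields exactly the stated bound $\exp\{-\theta_j(y)b_i(\ell_i)\}(b_i(\ell_i)+p_i-\ell_i)/v_j(y)$, with $y=\ell_i-\delta_{ij}$ or $y=\ell_i+\delta_{ji}$.

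\emph{Main obstacle.} The delicate step is justifying the change-of-measure identity on the event $\{\tau<\infty\}$ for the unbounded time $\tau$. I would apply it to the bounded time $\tau\wedge N$, restricting to $\{\tau\le N\}$, and then let $N\to\infty$ by monotone convergence. This needs $\widetilde{\mathbb P}(\tau<\infty)=1$, which follows from the positive tilted drift by the strong law of large numbers. The same truncation argument, used in reverse, justifies the Wald step. The rest is routine bookkeeping of the three cases.
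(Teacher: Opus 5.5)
Your proposal is correct and follows the paper's proof essentially step for step. It uses the trivial cap $T$ at the interface and in the equal-mean case, Wald's identity with the bounded-overshoot terminal bound on the positive-drift side, and on the negative-drift side the Lundberg-tilted Wald bound followed by changing measure back on $\{\tau<\infty\}$. Your truncation at $\tau\wedge N$ followed by monotone convergence also justifies the finite-expectation and change-of-measure steps, which the paper only asserts.
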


\begin{proof}\leavevmode
We consider the three cases separately.

First suppose \(p_i>p_j\) and \(0<\ell_i<\delta_{ij}\). Then the comparison drift is positive:
\[
% \label{eq:cac_positive_drift_mu}
d_{j|i}(\ell_i)=\delta_{ij}-\ell_i>0 .
\]
By the strong law, \(\tau_{j|i}^{>}(\ell_i)<\infty\) almost surely. Since the increments are bounded and have strictly positive mean \(\delta_{ij}-\ell_i\), this first-passage time has finite expectation. Wald's identity, together with the terminal bound \eqref{eq:cac_terminal_state}, therefore gives
\[
(\delta_{ij}-\ell_i)\,
\mathbb E[\tau_{j|i}^{>}(\ell_i)]
=
\mathbb E\!\left[
U_{j|i,\tau_{j|i}^{>}(\ell_i)}(\ell_i)
\right]
\le
b_i(\ell_i)+p_i-\ell_i .
% \label{eq:cac_positive_drift_wald_bound}
\]
Therefore
\[
% \label{eq:cac_positive_drift_capped_bound}
\mathbb E\!\left[
\min\{T,\tau_{j|i}^{>}(\ell_i)\}
\right]
\le
\min\left\{
T,\,
\frac{b_i(\ell_i)+p_i-\ell_i}{\delta_{ij}-\ell_i}
\right\}.
\]

At the interface \(p_i>p_j\) and \(\ell_i=\delta_{ij}\), the comparison drift is zero. We do not use a zero-drift hitting-time expectation. The finite-horizon cap gives directly
\[
% \label{eq:cac_critical_capped_bound}
\mathbb E\!\left[
\min\{T,\tau_{j|i}^{>}(\delta_{ij})\}
\mathbf 1\{\tau_{j|i}^{>}(\delta_{ij})<\infty\}
\right]
\le
T .
\]

It remains to treat the negative-drift case \(0<y=\ell_i+p_j-p_i<p_j\). This includes \(\ell_i>\delta_{ij}\) when \(p_i>p_j\), all \(\ell_i\in(0,p_i)\) when \(p_j>p_i\), and all \(\ell_i>0\) when \(p_i=p_j\). The increment distribution of \(U_{j|i,n}(\ell_i)\) is the one-arm crossing distribution for arm \(j\) at level \(y\). 
Under the tilted measure generated by the martingale \(\exp\{\theta_j(y)U_{j|i,n}(\ell_i)\}\), the comparison walk has positive drift \(v_j(y)\). Since the tilted increments remain bounded, the first-passage time has finite expectation under the tilted measure. Wald's identity and \eqref{eq:cac_terminal_state} therefore give
\[
% \label{eq:cac_tilted_mean_time_bound}
\mathbb E^{\theta}\!\left[
\tau_{j|i}^{>}(\ell_i)
\right]
\le
\frac{b_i(\ell_i)+p_i-\ell_i}{v_j(y)} .
\]
Changing measure back at the crossing time gives
\[\begin{aligned}
\mathbb E\!\left[
\tau_{j|i}^{>}(\ell_i)
\mathbf 1\{\tau_{j|i}^{>}(\ell_i)<\infty\}
\right]&=
\mathbb E^{\theta}\!\left[
\tau_{j|i}^{>}(\ell_i)
\exp\left\{
-\theta_j(y)
U_{j|i,\tau_{j|i}^{>}(\ell_i)}(\ell_i)
\right\}
\right]
\nonumber\\
&\le
\exp\{-\theta_j(y)b_i(\ell_i)\}
\mathbb E^{\theta}\!\left[
\tau_{j|i}^{>}(\ell_i)
\right]
\nonumber\\
&\le
\exp\{-\theta_j(y)b_i(\ell_i)\}
\frac{b_i(\ell_i)+p_i-\ell_i}{v_j(y)} .
% \label{eq:cac_tilted_wald_weighted_bound}
\end{aligned}\]
Since the capped expectation is also at most \(T\), we obtain
\[\mathbb E\!\left[
\min\{T,\tau_{j|i}^{>}(\ell_i)\}
\mathbf 1\{\tau_{j|i}^{>}(\ell_i)<\infty\}
\right]
\le
\min\left\{
T,\,
\exp\{-\theta_j(y)b_i(\ell_i)\}
\frac{b_i(\ell_i)+p_i-\ell_i}{v_j(y)}
\right\}.
% \label{eq:cac_tilted_wald_capped_bound}
\]
For \(p_i=p_j\), the simpler bound \(\mathcal H_{j|i,T}(\ell_i)=T\) is weaker but sufficient and no shifted-level notation is needed. In the two cases where the negative-drift tilted estimate is used, the shifted level is \(y=\ell_i-\delta_{ij}\) when \(p_i>p_j\) and \(\ell_i>\delta_{ij}\), and \(y=\ell_i+\delta_{ji}\) when \(p_j>p_i\). Combining the positive-drift, interface, and negative-drift estimates proves \eqref{eq:cac_pointwise_capped_factor_bound}.   

\end{proof}

\begin{proposition}[Root-based integrated capped upper envelope]
\label{prop:cac_root_based_integrated_upper}
For every \(i\neq j\) and every finite horizon \(T\),
\begin{align}
\mathbb E[\min\{T,N_j\}\mid Q_i]
&\le
\frac{
\displaystyle
\int_{[0,p_i)}
\mathcal H_{j|i,T}(\ell_i)
\prod_{\substack{r=1\\r\neq i,j}}^{K}
\overline F_{r,\ge}(\ell_i+p_r-p_i)
\,dF_i(\ell_i)
+
T\mathbb P(\mathcal E_{ij}^{=})
}{
\displaystyle
\mathbb P(\mathcal W_i^>)
}.
\label{eq:cac_exact_integrated_upper}
\end{align}
Moreover, the pairwise equality probability admits the Stieltjes representation
\begin{align}
\mathbb P(\mathcal E_{ij}^{=})
=
\int_{[0,p_i)}
\left[
\overline F_{j,\ge}(\ell_i+p_j-p_i)
-
\overline F_{j,>}(\ell_i+p_j-p_i)
\right]
\times
\prod_{\substack{r=1\\r\neq i,j}}^{K}
\overline F_{r,\ge}(\ell_i+p_r-p_i)
\,dF_i(\ell_i).
\label{eq:cac_pairwise_tie_probability_stieltjes}
\end{align}
Shifted levels below zero contribute equal strict and weak tail factors, and
therefore make zero contribution to the difference in
\eqref{eq:cac_pairwise_tie_probability_stieltjes}.
\end{proposition}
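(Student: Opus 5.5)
The proof combines the pairwise winner-conditioned capped upper representation \eqref{eq:cac_capped_winner_conditioned_upper_ratio} with the pointwise bound of Lemma~\ref{lem:cac_pointwise_capped_first_passage}. The plan is to establish \eqref{eq:cac_exact_integrated_upper} first, and then derive the Stieltjes formula for $\mathbb P(\mathcal E_{ij}^{=})$ from independence of the potential reward streams.

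\emph{Step 1: the integrated envelope.} Since $\tau_{j|i}^{>}(\ell_i)$ is a functional of arm $j$'s stream alone, the one-arm factor inside the numerator integral of \eqref{eq:cac_capped_winner_conditioned_upper_ratio} is a deterministic function of $\ell_i$. Lemma~\ref{lem:cac_pointwise_capped_first_passage} bounds this factor by $\mathcal H_{j|i,T}(\ell_i)$ for every $\ell_i\in(0,p_i)$.

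Because $L_i\in(0,p_i)$ almost surely, the point $\ell_i=0$ carries no $F_i$-mass, and neither does any point outside the domain of the lemma. We therefore substitute the pointwise bound into the integrand. The remaining factors $\prod_{r\neq i,j}\overline F_{r,\ge}(\ell_i+p_r-p_i)$ are nonnegative, so monotonicity of the Stieltjes integral preserves the inequality. The denominator $\mathbb P(\mathcal W_i^>)$ and the additive term $T\mathbb P(\mathcal E_{ij}^{=})$ are unchanged, which yields \eqref{eq:cac_exact_integrated_upper}. Measurability of $\mathcal H_{j|i,T}$ follows from the continuity of $\theta_j$ given by Lemma~\ref{lem:lower_endpoint_root_equivalence} and from the continuity of $v_j$ on the open pieces, with a single interface point.

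\emph{Step 2: the equality probability.} Write
\[
\mathcal E_{ij}^{=}=\{M_j=M_i\}\cap\{M_i\ge M_r,\ \forall r\neq i,j\}.
\]
In centered form this reads $\{L_j=L_i+p_j-p_i\}\cap\{L_r\ge L_i+p_r-p_i,\ \forall r\neq i,j\}$; here $M_i\ge M_j$ is automatic on $\{M_j=M_i\}$. The variables $L_1,\ldots,L_K$ are independent, so we condition on $L_i=\ell_i$. The conditional probability then factorizes as
\[
\mathbb P(L_j=x)\prod_{r\neq i,j}\overline F_{r,\ge}(\ell_i+p_r-p_i),\qquad x:=\ell_i+p_j-p_i .
\]
Since $\mathbb P(L_j=x)=\overline F_{j,\ge}(x)-\overline F_{j,>}(x)$, integrating against $dF_i$ over $[0,p_i)$ gives \eqref{eq:cac_pairwise_tie_probability_stieltjes}.

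For the convention on nonpositive shifted levels: when $x\le0$, the event $\{L_j=x\}$ is null because $L_j>0$ almost surely. This agrees with the convention that both tail factors equal one, so their difference is zero.

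\emph{Main obstacle.} The only delicate point is bookkeeping. We must make sure the strict-crossing factor for arm $j$ is correctly absorbed into $\mathbf 1\{\tau_{j|i}^{>}<\infty\}$. The factor $\overline F_{j,>}$ must not also appear in the numerator product, since only competitors $r\neq i,j$ enter there. This was already arranged in \eqref{eq:cac_pairwise_capped_upper_numerator_stieltjes}, so Step 1 is essentially a substitution. The equality-event decomposition in Step 2 is exact and needs no asymptotics.
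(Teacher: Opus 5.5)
Your proposal is correct and follows the paper's proof. You substitute the pointwise bound of Lemma~\ref{lem:cac_pointwise_capped_first_passage} into the ratio \eqref{eq:cac_capped_winner_conditioned_upper_ratio}, and you obtain \eqref{eq:cac_pairwise_tie_probability_stieltjes} by conditioning on $L_i=\ell_i$, using independence of the score minima together with $\mathbb P(L_j=x)=\overline F_{j,\ge}(x)-\overline F_{j,>}(x)$. Your remarks on the null $F_i$-mass at $\ell_i=0$, the measurability of $\mathcal H_{j|i,T}$, and nonpositive shifted levels are harmless refinements that the paper leaves implicit.
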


\begin{proof}\leavevmode
By \eqref{eq:cac_capped_winner_conditioned_upper_ratio}, it is enough to bound
the one-arm capped factor
\[
% \label{eq:cac_time_factor_recall}
\mathbb E\!\left[
\min\{T,\tau_{j|i}^{>}(\ell_i)\}
\mathbf 1\{\tau_{j|i}^{>}(\ell_i)<\infty\}
\right].
\]
Lemma~\ref{lem:cac_pointwise_capped_first_passage} gives the pointwise bound
by \(\mathcal H_{j|i,T}(\ell_i)\). Substituting this pointwise bound into
\eqref{eq:cac_capped_winner_conditioned_upper_ratio} proves
\eqref{eq:cac_exact_integrated_upper}.

It remains to derive \eqref{eq:cac_pairwise_tie_probability_stieltjes}. By
definition,
\(
% \label{eq:cac_pairwise_tie_event_recall}
\mathcal E_{ij}^{=}
=
\mathcal W_i^{\ge}\cap\{M_j=M_i\}
\). 
Conditioning on \(L_i=\ell_i\), the event \(M_j=M_i\) is equivalent to
\(
% \label{eq:cac_pairwise_tie_L_form}
L_j=\ell_i+p_j-p_i
\). 
The remaining competitors must satisfy the weak winner inequalities
\[
% \label{eq:cac_pairwise_remaining_weak_constraints}
L_r\ge \ell_i+p_r-p_i,
\qquad r\neq i,j.
\]
By independence of the potential reward streams, we obtain
\[\begin{aligned}
\mathbb P(\mathcal E_{ij}^{=})
&=
\int_{[0,p_i)}
\mathbb P\left(L_j=\ell_i+p_j-p_i\right)
\prod_{\substack{r=1\\r\neq i,j}}^{K}
\overline F_{r,\ge}(\ell_i+p_r-p_i)
\,dF_i(\ell_i)
\nonumber\\
&=
\int_{[0,p_i)}
\left[
\overline F_{j,\ge}(\ell_i+p_j-p_i)
-
\overline F_{j,>}(\ell_i+p_j-p_i)
\right]
\times
\prod_{\substack{r=1\\r\neq i,j}}^{K}
\overline F_{r,\ge}(\ell_i+p_r-p_i)
\,dF_i(\ell_i).
% \label{eq:cac_pairwise_tie_probability_stieltjes_proof}
\end{aligned}\]
This proves \eqref{eq:cac_pairwise_tie_probability_stieltjes}.

\end{proof}

\subsection{Winner-Conditioned Endpoint Localization}
\label{app:cac_winner_conditioned_localization}

We next record the endpoint localization of the absorbing arm under winner conditioning. This result controls moments and tail probabilities of \(L_i\) under the winner-weighted Stieltjes ratios appearing in the capped abandonment envelope. The resulting pairwise abandonment estimates are derived in the next subsection.

The following arguments apply under Assumption~\ref{ass:app_asymptotic_regime} to every absorbing arm \(i\). Since \(\kappa<1\), we have
\(
\Delta_1^{-1}=o(\eta_\Delta)
\). 
Thus every nonzero pairwise gap is asymptotically larger than the \(O(\Delta_1^{-1})\) endpoint window on which \(L_i\) is localized. In particular, on \(0\le\ell\le a_0/\Delta_1\), a lower-mean competitor \(r\) with \(p_i>p_r\) satisfies
\(
\ell+p_r-p_i<0
\)
for all sufficiently large \(\Delta_1\), so its tail factor remains equal to one. At the same time, every positive shifted level arising on this window remains within the lower-endpoint scale \(\eta_\Delta\), and hence the local root and tail expansions apply. Equal-mean competitors are permitted and are treated separately below.

For \(\circ\in\{>,\ge\}\), define the winner weight
\[
% \label{eq:cac_winner_weight_def}
\Phi_i^{\circ}(\ell)
:=
\prod_{r\neq i}
\overline F_{r,\circ}(\ell+p_r-p_i),
\]
where shifted levels below zero contribute tail factor one. 

Let
\[
% \label{eq:cac_strict_winner_denominator_def}
\mathcal D_i^{>}
:=
\mathbb P(\mathcal W_i^>)
=
\int_{[0,p_i)}
\Phi_i^{>}(\ell)\,dF_i(\ell).
\]

\begin{lemma}[Winner-weighted endpoint localization]
\label{lem:cac_winner_conditioned_endpoint_localization}
Under Assumption~\ref{ass:app_asymptotic_regime}, for every arm \(i\), there exist constants \(C,c>0\), independent of \(\Delta_1\), such that\begin{equation}
\label{eq:cac_winner_weighted_tail}
\frac{
\displaystyle
\int_x^{p_i}
\Phi_i^{\ge}(\ell)\,dF_i(\ell)
}{
\mathcal D_i^{>}
}
\le
C \exp\{-c\Delta_1x\},
\qquad
0\le x\le\eta_\Delta .
\end{equation}
Consequently, for every fixed integer \(m\ge1\) and every \(a\in(0,\eta_\Delta]\),
\begin{equation}
\label{eq:cac_winner_weighted_truncated_moment}
\frac{
\displaystyle
\int_0^a
\ell^m \Phi_i^{\ge}(\ell)\,dF_i(\ell)
}{
\mathcal D_i^{>}
}
=
O(\Delta_1^{-m}).
\end{equation}
The same two estimates hold with \(\Phi_i^{>}\) in place of \(\Phi_i^{\ge}\) in the
numerator. In particular,
\begin{equation}
\label{eq:cac_winner_conditioned_tail}
\mathbb P(L_i>x\mid Q_i)
\le
C \exp\{-c\Delta_1x\},
\qquad
0\le x\le\eta_\Delta ,
\end{equation}
and
\begin{equation}
\label{eq:cac_winner_conditioned_truncated_moment}
\mathbb E\!\left[
L_i^m\mathbf 1\{L_i\le a\}
\,\middle|\,
Q_i
\right]
=
O(\Delta_1^{-m}).
\end{equation}
\end{lemma}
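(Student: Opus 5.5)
We bound the numerator of \eqref{eq:cac_winner_weighted_tail} from above and the denominator \(\mathcal D_i^{>}\) from below, both with the same better-arm exponential factor, and then compare. The denominator bound is already available. The chain \eqref{eq:acf_product_reference_integral_lower_final}, combined with Corollary~\ref{cor:acf_product_tail_stieltjes_replacement} for \(\circ={>}\), gives
\[
\mathcal D_i^{>}\ge c'\exp\Bigl\{-\sum_{r\neq i,\,p_r>p_i}\lambda_r(p_r-p_i)\Bigr\}=:c' E_i .
\]

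For the numerator we use monotonicity. On \(\ell\ge x\), every factor of \(\Phi_i^{\ge}\) with \(p_r>p_i\) is at most \(\overline F_{r,\ge}(x+p_r-p_i)\), and every other factor is at most one. For \(x\le\eta_\Delta\) the level \(x+p_r-p_i\) is \(O(\eta_\Delta)\), so Lemma~\ref{lem:asymptotic_closed_form_tail} bounds each such factor by \(C\exp\{-\lambda_r(x+p_r-p_i)\}\). Therefore
\[
\int_x^{p_i}\Phi_i^{\ge}\,dF_i\le C\,E_i\exp\Bigl\{-x\sum_{r:\,p_r>p_i}\lambda_r\Bigr\}\,\mathbb P(L_i\ge x).
\]
We then use \(\mathbb P(L_i\ge x)\le C\exp\{-\lambda_i x\}\) on \(\mathcal B_i\). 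At \(x=0\) this is read as the trivial bound \(1\), which is harmless. Since \(\lambda_i=\Theta(\Delta_1)\), we get \(C E_i\exp\{-c\Delta_1x\}\). Dividing by the denominator bound gives \eqref{eq:cac_winner_weighted_tail}.

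The extra factor \(\exp\{-x\sum\lambda_r\}\) is only helpful, so no care is needed with arms of equal or lower mean. Equal-mean arms have level \(\ell\ge0\) and may simply be bounded by one. The \(\Phi_i^{>}\) version follows because \(\Phi_i^{>}\le\Phi_i^{\ge}\).

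For the truncated moments \eqref{eq:cac_winner_weighted_truncated_moment}, write
\[
\ell^m=\int_0^\ell m x^{m-1}\,dx
\]
and apply Fubini:
\[
\int_0^a \ell^m\Phi_i^{\ge}\,dF_i\le\int_0^a m x^{m-1}\Bigl(\int_x^{p_i}\Phi_i^{\ge}\,dF_i\Bigr)dx .
\]
Dividing by \(\mathcal D_i^{>}\) and using the tail bound, the ratio is at most
\[
C\int_0^\infty m x^{m-1}e^{-c\Delta_1x}\,dx=O(\Delta_1^{-m}).
\]
The truncation \(a\le\eta_\Delta\) is what keeps the tail bound valid throughout the range of integration.

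The conditional statements follow from the sandwich \(\mathcal W_i^{>}\subseteq Q_i\subseteq\mathcal W_i^{\ge}\) of Lemma~\ref{lem:sm_absorbing_event_sandwich}. It gives \(\mathbb P(Q_i)\ge\mathcal D_i^{>}\), and it bounds \(\mathbb P(L_i>x,\,Q_i)\le\mathbb P(L_i>x,\,\mathcal W_i^{\ge})\), which by independence equals the weighted integral. The same reasoning bounds \(\mathbb E[L_i^m\mathbf 1\{L_i\le a\}\mathbf 1_{Q_i}]\).

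The only real care point is the uniformity of the constants. We need the uniform-in-\(\mathcal B\) constant \(C\) from Lemma~\ref{lem:asymptotic_closed_form_tail}, whose factor \(1+O(\Delta_1^{1-2\kappa})\) is bounded, and we must check that all shifted levels stay inside \((0,\eta_\Delta]\) up to constants. The pairwise-gap scale in Assumption~\ref{ass:app_asymptotic_regime} handles this, possibly after enlarging the window \(\eta_\Delta\) by a fixed constant factor. That enlargement changes none of the local estimates.
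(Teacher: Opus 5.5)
Your proof is correct. Its architecture matches the paper's: an upper bound on the numerator over a lower bound on the denominator, then the layer-cake identity, then the winner sandwich. The difference is in what both sides are normalized against.

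\emph{The paper} normalizes by the winner weight at the origin. For the numerator it uses $\Phi_i^{\ge}(\ell)\le\Phi_i^{\ge}(0)\le C\,\Phi_i^{>}(0)$. The weak/strict step comes from the exact overshoot bound $\overline F_{r,\ge}/\overline F_{r,>}\le 1/q_r$. For the denominator it proves a factorwise stability estimate $\Phi_i^{>}(\ell)\ge c\,\Phi_i^{>}(0)$ on $[0,a_0/\Delta_1]$. Higher-mean competitors are handled there through $\theta_r'=\theta_r/v_r$ and the bound $|d(\theta_r b_r)/dy|=O(\Delta_1)$.

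\emph{You} normalize by the explicit factor $E_i$. You bound the numerator with the closed-form envelope of Lemma~\ref{lem:asymptotic_closed_form_tail}. You bound the denominator by recycling Corollary~\ref{cor:acf_product_tail_stieltjes_replacement} together with \eqref{eq:acf_product_reference_integral_lower_final}. In effect, you rerun that corollary's tail-localization computation \eqref{eq:acf_product_tail_upper_final} with the fixed cutoff $\eta_\Delta$ replaced by a variable cutoff $x$.

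What your route buys: it is shorter given the existing machinery. It avoids the separate strict/weak comparison, since both sides are matched to the same exponential and the discrepancy is absorbed into the uniform relative error of the local envelope. It also yields the faster decay rate $\lambda_i+\sum_{p_r>p_i}\lambda_r$.

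What it costs: the competitor tails at levels $x+\delta_{ri}=O(\eta_\Delta)$ must match $\exp\{-\lambda_r(\cdot)\}$ up to a bounded multiplicative error on both sides, because $\lambda_r\delta_{ri}\to\infty$. That uses $\Delta_1\eta_\Delta^2\to0$ (i.e.\ $\kappa>1/2$), plus the extension of Lemma~\ref{lem:asymptotic_closed_form_tail} from $(0,\eta_\Delta]$ to $(0,C\eta_\Delta]$ that you correctly flag. The paper's self-normalized comparison needs only local orders of $\theta_r$ and $v_r$ and the Lundberg inequality, so it is less sensitive to the gap scaling.

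One more point in your favor: the paper later reuses its intermediate bound \eqref{eq:cac_denominator_lower} in the upper-endpoint estimates. Your bound $\mathcal D_i^{>}\ge c'E_i\ge\exp\{-C\Delta_1\eta_\Delta\}$ serves that purpose equally well.
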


\begin{proof}\leavevmode
Recall that, for \(\circ\in\{>,\ge\}\),
\[
\Phi_i^{\circ}(\ell)
=
\prod_{r\neq i}
\overline F_{r,\circ}(\ell+p_r-p_i),
\]
where shifted levels below zero contribute tail factor one. Also recall
\[
\mathcal D_i^{>}
=
\mathbb P(\mathcal W_i^>)
=
\int_{[0,p_i)}
\Phi_i^{>}(\ell)\,dF_i(\ell).
\]

\paragraph{Numerator upper bound.}
Since every shifted level \(\ell+p_r-p_i\) is nondecreasing in \(\ell\), each
one-arm tail factor is nonincreasing in \(\ell\). Hence
\[
% \label{eq:cac_H_monotone}
\Phi_i^{\ge}(\ell)
\le
\Phi_i^{\ge}(0),
\qquad
\ell\ge0 .
\]
We compare the weak and strict weights at zero. If \(p_r-p_i\le0\), both tail
factors are one. If \(p_r-p_i>0\), Corollary~\ref{cor:root_based_drawdown_tail_envelope}
gives
\[\begin{aligned}
\frac{
\overline F_{r,\ge}(p_r-p_i)
}{
\overline F_{r,>}(p_r-p_i)
}
&\le
\exp\{\theta_r(p_r-p_i)(p_r-(p_r-p_i))\}
\nonumber\\
&=
\exp\{\theta_r(p_r-p_i)p_i\}.
% \label{eq:cac_strict_weak_single_factor_ratio_pre}
\end{aligned}\]
The Lundberg equation at level \(p_r-p_i\) implies
\(
% \label{eq:cac_strict_weak_root_bound}
q_r \exp\{\theta_r(p_r-p_i)p_i\}
\le 1
\), 
and therefore
\[
% \label{eq:cac_strict_weak_single_factor_ratio}
\frac{
\overline F_{r,\ge}(p_r-p_i)
}{
\overline F_{r,>}(p_r-p_i)
}
\le
\frac{1}{q_r}.
\]
Since \(K\) is fixed, multiplying the factorwise bounds yields
\(
% \label{eq:cac_strict_weak_weight_comparison}
\Phi_i^{\ge}(0)
\le
C \Phi_i^{>}(0)
\). 
Using the one-arm lower-endpoint tail envelope for \(L_i\), uniformly for
\(0\le x\le\eta_\Delta\),
\(
% \label{eq:cac_unconditional_Li_tail_local}
\mathbb P(L_i>x)
\le
C \exp\{-c\Delta_1x\}
\). 
Therefore,
\begin{align}
\int_x^{p_i}\Phi_i^{\ge}(\ell)\,dF_i(\ell)
\le
\Phi_i^{\ge}(0)\mathbb P(L_i>x)
\le
C \Phi_i^{>}(0)\exp\{-c\Delta_1x\}.
\label{eq:cac_numerator_tail_bound}
\end{align}

\paragraph{Denominator lower bound.}
We now prove that the denominator is not much smaller than the zero-level
winner weight. More precisely, we show that there exists \(c>0\), independent
of \(\Delta_1\), such that
\begin{equation}
\label{eq:cac_H_lower_small_window}
\Phi_i^{>}(\ell)
\ge
c\Phi_i^{>}(0),
\qquad
0\le \ell\le \frac{a_0}{\Delta_1},
\end{equation}
where \(a_0>0\) is a fixed constant to be chosen later. Since \(\Phi_i^{>}\) is a
product over competitors, it is enough to show that each factor
\(
\overline F_{r,>}(\ell+p_r-p_i)
\)
is bounded below by a positive constant times its value at \(\ell=0\),
uniformly over \(0\le\ell\le a_0/\Delta_1\).

First consider a lower-mean competitor \(r\) with \(p_i>p_r\). The shifted
level is
\(
% \label{eq:cac_lower_competitor_shift}
\ell+p_r-p_i
=
\ell-\delta_{ir}
\). 
The purpose here is to show that this factor stays equal to one throughout the
small window. By \eqref{eq:app_asymptotic_gap_scale}, \(\delta_{ir}=\Theta(\eta_\Delta)\). Since \(\eta_\Delta=\Delta_1^{-\kappa}\) by \eqref{eq:app_asymptotic_eta_def} and \(\kappa<1\), \(\Delta_1\delta_{ir}=\Theta(\Delta_1^{1-\kappa})\to\infty\). Hence, for every fixed \(a_0\),
\(
% \label{eq:cac_lower_competitor_window_below_gap}
{a_0}/{\Delta_1}
<
\delta_{ir}
\) 
for all sufficiently large \(\Delta_1\). Therefore,
\[
% \label{eq:cac_lower_competitor_factor_one}
\ell+p_r-p_i\le0,
\qquad
0\le\ell\le\frac{a_0}{\Delta_1}.
\]
By the nonpositive-level convention, the corresponding strict tail factor is
one throughout the window, and it is also one at \(\ell=0\). Thus lower-mean
competitors cause no loss in \eqref{eq:cac_H_lower_small_window}.

Next consider an equal-mean competitor \(r\) with \(p_r=p_i\). The shifted level
is \(\ell\), and the goal is to show that this small positive shift only
changes the strict tail by a constant factor. Since
\(\overline F_{r,>}(0)=1\), it is enough to lower-bound
\(\overline F_{r,>}(\ell)\). By the lower side of
Corollary~\ref{cor:root_based_drawdown_tail_envelope},
\begin{equation}
\label{eq:cac_equal_factor_root_lower}
\overline F_{r,>}(\ell)
\ge
\exp\left\{
-\theta_r(\ell)\bigl[b_r(\ell)+p_r-\ell\bigr]
\right\}.
\end{equation}
On \(0\le\ell\le a_0/\Delta_1\), the lower-endpoint expansion gives
\(
% \label{eq:cac_equal_theta_local_bound}
\theta_r(\ell)\le C\ell
\). 
Moreover, using \(\Delta_r=\Theta(\Delta_1)\), \(\beta=O(\Delta_1)\), and
\(\ell\le a_0/\Delta_1\),
\begin{align}
b_r(\ell)+p_r-\ell
=
\Delta_r+\beta\ell+p_r-\ell
\le
C\Delta_1 .
\label{eq:cac_equal_boundary_bound}
\end{align}
Combining \eqref{eq:cac_equal_factor_root_lower}--\eqref{eq:cac_equal_boundary_bound},
we obtain a constant
\(
c_{\mathrm{eq}}:=\exp\{-C_{\mathrm{eq}}a_0\}>0
\)
such that
\[
% \label{eq:cac_equal_mean_factor_lower}
\overline F_{r,>}(\ell)
\ge
c_{\mathrm{eq}},
\qquad
0\le\ell\le\frac{a_0}{\Delta_1}.
\]
Thus equal-mean competitors also lose only a constant factor.

Finally consider a higher-mean competitor \(r\) with \(p_r>p_i\). The shifted
level moves from \(\delta_{ri}=p_r-p_i\) to \(\delta_{ri}+\ell\). Our goal is
to prove that this \(O(\Delta_1^{-1})\) shift changes the strict tail only by a
constant factor:
\begin{equation}
\label{eq:cac_higher_factor_goal}
\overline F_{r,>}(\delta_{ri}+\ell)
\ge
c_{\mathrm{high},r}\,
\overline F_{r,>}(\delta_{ri}),
\qquad
0\le\ell\le\frac{a_0}{\Delta_1}.
\end{equation}
By \eqref{eq:app_asymptotic_gap_scale}, \(\delta_{ri}=\Theta(\eta_\Delta)\). Moreover, since \(\kappa<1\), \({a_0}/{\Delta_1}=o(\eta_\Delta)\). Therefore,
\[
% \label{eq:cac_higher_shifted_level_local}
0<\delta_{ri}\le \delta_{ri}+\ell
\le
C_{\mathrm{loc}}\eta_\Delta,
\qquad
0\le\ell\le\frac{a_0}{\Delta_1},
\]
for all sufficiently large \(\Delta_1\). Thus the whole shifted-level interval
lies in the lower-endpoint region. In particular,
Lemma~\ref{lem:lower_endpoint_theta_v_expansion} gives, uniformly over this
interval,
\begin{equation}
\label{eq:cac_higher_theta_v_local}
\theta_r(y)\le C_\theta y,
\qquad
v_r(y)\ge c_v y,
\qquad
\frac{\theta_r(y)}{v_r(y)}\le C_{\theta/v},
\qquad
0<y\le C_{\mathrm{loc}}\eta_\Delta.
\end{equation}
Also, since \(p_r-(\delta_{ri}+\ell)=p_i-\ell\), we have
\[
% \label{eq:cac_higher_away_from_upper}
p_r-(\delta_{ri}+\ell)
=
p_i-\ell
\ge
\frac{p_i}{2},
\qquad
0\le\ell\le\frac{a_0}{\Delta_1},
\]
for all sufficiently large \(\Delta_1\). Hence the upper endpoint is not
involved.

By the two-sided root envelope,
\begin{align}
\overline F_{r,>}(\delta_{ri}+\ell)
&\ge
\exp\left\{
-\theta_r(\delta_{ri}+\ell)
\bigl[
b_r(\delta_{ri}+\ell)+p_i-\ell
\bigr]
\right\},
\label{eq:cac_higher_numerator_lower}\\
\overline F_{r,>}(\delta_{ri})
&\le
\exp\left\{
-\theta_r(\delta_{ri})b_r(\delta_{ri})
\right\}.
\label{eq:cac_higher_denominator_upper}
\end{align}
Dividing \eqref{eq:cac_higher_numerator_lower} by
\eqref{eq:cac_higher_denominator_upper} gives
\begin{align}
\frac{\overline F_{r,>}(\delta_{ri}+\ell)}
{\overline F_{r,>}(\delta_{ri})}
&\ge
\exp\left\{
-\left[
\theta_r(\delta_{ri}+\ell)b_r(\delta_{ri}+\ell)
-
\theta_r(\delta_{ri})b_r(\delta_{ri})
\right]
\right\}
\nonumber\\
&\quad\times
\exp\left\{
-\theta_r(\delta_{ri}+\ell)(p_i-\ell)
\right\}.
\label{eq:cac_higher_factor_ratio}
\end{align}
We bound the two exponential factors separately.

For the first factor, differentiate the product \(\theta_r(y)b_r(y)\).
Differentiating the root equation \(\Lambda_r(\theta_r(y),y)=0\) gives
\[
% \label{eq:cac_theta_prime_identity}
\theta_r'(y)
=
\frac{\theta_r(y)}{v_r(y)},
\]
because \(\partial_y\Lambda_r(\theta,y)=-\theta\) and
\(\partial_\theta\Lambda_r(\theta_r(y),y)=v_r(y)\). Therefore
\begin{align}
\frac{d}{dy}\{\theta_r(y)b_r(y)\}
=
\theta_r'(y)b_r(y)+\theta_r(y)b_r'(y)
=
\theta_r(y)
\left[
\frac{b_r(y)}{v_r(y)}
+
\beta
\right].
\label{eq:cac_A_derivative_identity}
\end{align}
On \(0<y\le C_{\mathrm{loc}}\eta_\Delta\), \eqref{eq:cac_higher_theta_v_local}
gives
\(
\theta_r(y)/v_r(y)\le C_{\theta/v}
\) and 
\(
\theta_r(y)\le C_\theta\eta_\Delta
\). 
Moreover,
\[
% \label{eq:cac_higher_boundary_scale}
b_r(y)=\Delta_r+\beta y=O(\Delta_1),
\qquad
\beta=O(\Delta_1).
\]
Substituting these bounds into \eqref{eq:cac_A_derivative_identity}, we obtain
a constant \(C_{\mathrm{der}}<\infty\) such that
\[
% \label{eq:cac_A_derivative_bound}
\sup_{\delta_{ri}\le y\le \delta_{ri}+a_0/\Delta_1}
\left|
\frac{d}{dy}\{\theta_r(y)b_r(y)\}
\right|
\le
C_{\mathrm{der}}\Delta_1 .
\]
Therefore,
\[\begin{aligned}
\theta_r(\delta_{ri}+\ell)b_r(\delta_{ri}+\ell)
-
\theta_r(\delta_{ri})b_r(\delta_{ri})
&=
\int_{\delta_{ri}}^{\delta_{ri}+\ell}
\frac{d}{dy}\{\theta_r(y)b_r(y)\}\,dy
\nonumber\\
&\le
C_{\mathrm{der}}\Delta_1\ell
\le
C_{\mathrm{der}}a_0 .
% \label{eq:cac_A_increment_bound}
\end{aligned}\]
Thus the first exponential factor in \eqref{eq:cac_higher_factor_ratio} is
bounded below by
\begin{equation}
\label{eq:cac_first_exponential_lower}
\exp\{-C_{\mathrm{der}}a_0\}.
\end{equation}

For the second factor, the Lundberg equation implies
\[
% \label{eq:cac_positive_jump_exponent_bound}
q_r \exp\{\theta_r(y)(p_r-y)\}
\le
1,
\qquad
0<y<p_r.
\]
With \(y=\delta_{ri}+\ell\), this becomes
\begin{equation}
\label{eq:cac_second_exponential_lower}
\exp\{-\theta_r(\delta_{ri}+\ell)(p_i-\ell)\}
\ge
q_r .
\end{equation}
Combining \eqref{eq:cac_first_exponential_lower} and
\eqref{eq:cac_second_exponential_lower}, we get
\[
% \label{eq:cac_higher_factor_ratio_lower_final}
\frac{\overline F_{r,>}(\delta_{ri}+\ell)}
{\overline F_{r,>}(\delta_{ri})}
\ge
q_r \exp\{-C_{\mathrm{der}}a_0\},
\qquad
0\le\ell\le\frac{a_0}{\Delta_1}.
\]
Thus \eqref{eq:cac_higher_factor_goal} holds with
\[
c_{\mathrm{high},r}:=q_r \exp\{-C_{\mathrm{der}}a_0\}.
\]

Combining the lower-mean, equal-mean, and higher-mean competitor cases, define
\[
c_\Phi
:=
\left(
\min\left\{
1,\,
c_{\mathrm{eq}},\,
\min_{\substack{r\neq i\\p_r>p_i}}c_{\mathrm{high},r}
\right\}
\right)^{K-1},
\]
with the convention that the minimum over an empty set is \(1\). Here
\(c_{\mathrm{eq}}\) is the constant factor obtained in the equal-mean case.
Then every competitor factor in \(\Phi_i^{>}(\ell)\) is bounded below by its
zero-level value times the corresponding constant factor. Hence
\[
% \label{eq:cac_H_lower_small_window_conclusion}
\Phi_i^{>}(\ell)
\ge
c_\Phi \Phi_i^{>}(0),
\qquad
0\le\ell\le\frac{a_0}{\Delta_1},
\]
which proves \eqref{eq:cac_H_lower_small_window}.

It remains to convert this local weight bound into a denominator bound. Using
\eqref{eq:cac_H_lower_small_window},
\[\begin{aligned}
\mathcal D_i^{>}
=
\int_{[0,p_i)}
\Phi_i^{>}(\ell)\,dF_i(\ell)
\ge
\int_0^{a_0/\Delta_1}
\Phi_i^{>}(\ell)\,dF_i(\ell)
\ge
c_\Phi\Phi_i^{>}(0)
\mathbb P\!\left(L_i\le\frac{a_0}{\Delta_1}\right).
% \label{eq:cac_denominator_lower_pre}
\end{aligned}\]
Since \(a_0/\Delta_1\le\eta_\Delta\) for all sufficiently large \(\Delta_1\), the
local one-arm tail envelope gives
\[
% \label{eq:cac_small_window_tail}
\mathbb P\!\left(L_i>\frac{a_0}{\Delta_1}\right)
=
\exp\left\{
-\lambda_i\frac{a_0}{\Delta_1}
\right\}
\left[
1+O(\Delta_1^{1-2\kappa})
\right].
\]
Because \(\lambda_i=\Theta(\Delta_1)\), choosing \(a_0\) sufficiently large
yields
\[
% \label{eq:cac_small_window_mass_lower}
\mathbb P\!\left(L_i\le\frac{a_0}{\Delta_1}\right)
\ge
\frac12
\]
for all large \(\Delta_1\). Therefore
\begin{equation}
\label{eq:cac_denominator_lower}
\mathcal D_i^{>}
\ge
c_{\mathcal D}\Phi_i^{>}(0),
\qquad
c_{\mathcal D}:=\frac{c_\Phi}{2}.
\end{equation}

\paragraph{Conclusion.}
Combining \eqref{eq:cac_numerator_tail_bound} and
\eqref{eq:cac_denominator_lower} gives \eqref{eq:cac_winner_weighted_tail}. The
same tail bound with \(\Phi_i^{>}\) in the numerator follows because
\(\Phi_i^{>}(\ell)\le \Phi_i^{\ge}(\ell)\).

For the moment bound, first note that for \(0\le \ell\le a\),
\(
% \label{eq:cac_layer_cake_pointwise_identity}
\ell^m
=
m\int_0^a x^{m-1}\mathbf 1\{x\le \ell\}\,dx
\). 
Therefore, by Tonelli's theorem,
\[\begin{aligned}
\frac{
\displaystyle
\int_0^a
\ell^m \Phi_i^{\ge}(\ell)\,dF_i(\ell)
}{
\mathcal D_i^{>}
}
&=
\frac{
\displaystyle
\int_0^a
\left[
m\int_0^a x^{m-1}\mathbf 1\{x\le \ell\}\,dx
\right]
\Phi_i^{\ge}(\ell)\,dF_i(\ell)
}{
\mathcal D_i^{>}
}
\nonumber\\
&=
m\int_0^a
x^{m-1}
\frac{
\displaystyle
\int_x^a
\Phi_i^{\ge}(\ell)\,dF_i(\ell)
}{
\mathcal D_i^{>}
}
\,dx
\nonumber\\
&\le
m\int_0^a
x^{m-1}
\frac{
\displaystyle
\int_x^{p_i}
\Phi_i^{\ge}(\ell)\,dF_i(\ell)
}{
\mathcal D_i^{>}
}
\,dx .
% \label{eq:cac_layer_cake_weighted_moment}
\end{aligned}\]
Using \eqref{eq:cac_winner_weighted_tail},
\[\begin{aligned}
\frac{
\displaystyle
\int_0^a
\ell^m \Phi_i^{\ge}(\ell)\,dF_i(\ell)
}{
\mathcal D_i^{>}
}
&\le
Cm\int_0^a x^{m-1}\exp\{-c\Delta_1x\}\,dx
\nonumber\\
&\le
Cm\int_0^\infty x^{m-1}\exp\{-c\Delta_1x\}\,dx
=
O(\Delta_1^{-m}).
% \label{eq:cac_winner_weighted_moment_final}
\end{aligned}\]
This proves \eqref{eq:cac_winner_weighted_truncated_moment}. The same moment
bound with \(\Phi_i^{>}\) in the numerator follows again from
\(\Phi_i^{>}(\ell)\le \Phi_i^{\ge}(\ell)\).

Finally, by the score-minimum sandwich,
\[\begin{aligned}
\mathbb P(L_i>x\mid Q_i)
&\le
\frac{
\displaystyle
\int_x^{p_i}
\Phi_i^{\ge}(\ell)\,dF_i(\ell)
}{
\mathcal D_i^{>}
},
% \label{eq:cac_cond_tail_ratio_start}
\end{aligned}\]
which gives \eqref{eq:cac_winner_conditioned_tail}. The conditional moment
bound \eqref{eq:cac_winner_conditioned_truncated_moment} follows from
\eqref{eq:cac_winner_conditioned_tail} by the same layer-cake argument.

\end{proof}

The same estimates hold with \(\Phi_i^{>}\) in place of \(\Phi_i^{\ge}\) in the numerator, because \(\Phi_i^{>}(\ell)\le \Phi_i^{\ge}(\ell)\). Thus, in later applications, either strict or weak winner-weighted numerator integrals over lower-endpoint windows have moments of order \(O(\Delta_1^{-m})\) after normalization by \(\mathcal D_i^{>}=\mathbb P(\mathcal W_i^>)\).

\subsection{Asymptotic Conditional-Abandonment Envelope}
\label{app:cac_asymptotic_closed_form}

We now evaluate the root-based upper envelope in Proposition~\ref{prop:cac_root_based_integrated_upper} under Assumption~\ref{ass:app_asymptotic_regime} for every pair \(i\neq j\). The pointwise bounds from Lemma~\ref{lem:cac_pointwise_capped_first_passage} are integrated against the winner weights, while Lemma~\ref{lem:cac_winner_conditioned_endpoint_localization} controls the resulting endpoint moments that arise from this integration.

Throughout this subsection, \(T\) denotes the finite-horizon cap in \(\min\{T,N_j\}\) and is kept as an external parameter. All \(O(\cdot)\) and \(o(\cdot)\) terms refer to the limit \(\Delta_1\to\infty\), with every displayed factor of \(T\) left explicit.

By \eqref{eq:app_asymptotic_gap_scale}, every nonzero pairwise gap is of order \(\eta_\Delta\). Hence all positive shifted levels entering the local Stieltjes replacements remain within the lower-endpoint region, while \(\Delta_1\eta_\Delta\to\infty\) separates every nonzero comparison gap from the \(O(\Delta_1^{-1})\) endpoint window of the absorbing arm. When \(p_i=p_j\), no nonzero drift scale is available, and the finite-horizon cap is used directly.

\begin{proposition}[Asymptotic capped conditional-abandonment upper envelope]
\label{prop:cac_asymptotic_conditional_abandonment_upper}
Under Assumption~\ref{ass:app_asymptotic_regime}, for every pair \(i\neq j\) and every finite-horizon parameter \(T\), if \(p_i=p_j\), then
\begin{equation}
\label{eq:cac_equal_mean_trivial_cap}
\mathbb E[N_j(T)\mid Q_i]
\le
\mathbb E[\min\{T,N_j\}\mid Q_i]
\le
T .
\end{equation}
If \(p_i\neq p_j\), then 
\begin{equation}
\label{eq:cac_piecewise_upper_main}
\mathbb E[N_j(T)\mid Q_i]
\le
\mathbb E[\min\{T,N_j\}\mid Q_i]
\le
\begin{cases}
\displaystyle
\frac{\Delta_i}{\delta_{ij}}
\left(1+O(\Delta_1^{\kappa-1})\right)
+
T\,o\!\left(\frac{\mathbb P(Q_j)}{\mathbb P(Q_i)}\right),
&
p_i>p_j,
\\[2.8ex]
\displaystyle
\frac{\Delta_i}{\delta_{ji}}
\left(1+o(1)\right)
+
T\,o(1),
&
p_j>p_i .
\end{cases}
\end{equation}
Terms conditioned on probability-zero events are interpreted as zero
contributions.
\end{proposition}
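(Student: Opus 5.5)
The first inequality in each case is the capped-count domination \eqref{eq:rd_count_capped_domination}, $N_j(T)\le\min\{T,N_j\}$. The equal-mean case \eqref{eq:cac_equal_mean_trivial_cap} is immediate from the cap. For $p_i\neq p_j$ I will start from the integrated envelope of Proposition~\ref{prop:cac_root_based_integrated_upper}. Its numerator has two pieces: the main Stieltjes integral of $\mathcal H_{j|i,T}$ against the weights $\prod_{r\neq i,j}\overline F_{r,\ge}$, and the pairwise tie term $T\,\mathbb P(\mathcal E_{ij}^{=})$. Both are divided by $\mathcal D_i^>=\mathbb P(\mathcal W_i^>)$. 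I will treat the two pieces separately.

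\emph{Case $p_i>p_j$.} Split $[0,p_i)$ into three regions: $[0,\ell^\ast]$ with $\ell^\ast:=\delta_{ij}/2$ (say), $(\ell^\ast,\delta_{ij}]$, and $(\delta_{ij},p_i)$.

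On $[0,\ell^\ast]$, use the Wald branch of $\mathcal H$:
\[
\frac{\Delta_i+\beta\ell+p_i}{\delta_{ij}-\ell}
=\frac{\Delta_i}{\delta_{ij}}\Bigl(1+O(\ell/\delta_{ij})+O(\beta\ell/\Delta_i)+O(\Delta_1^{-1})\Bigr).
\]
Integrate against $dF_i$. For lower-mean competitors $r$ with $p_r<p_i$, the factor $\overline F_{r,\ge}$ is at most one. Removing the factor of arm $j$ from the strict weight costs only a constant, because $\overline F_{j,>}(\ell-\delta_{ij})=1$ on the localization window $\ell\le a_0/\Delta_1$. Hence the weak product over $r\neq i,j$, normalized by $\mathcal D_i^>$, has total mass $1+o(1)$ on the window. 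Lemma~\ref{lem:cac_winner_conditioned_endpoint_localization} gives first moments of order $O(\Delta_1^{-1})$, so the correction terms are $O(\Delta_1^{-1}/\delta_{ij})=O(\Delta_1^{\kappa-1})$ and $O(\beta\Delta_1^{-1}/\Delta_i)=O(\Delta_1^{-1})$. This produces the leading $\frac{\Delta_i}{\delta_{ij}}(1+O(\Delta_1^{\kappa-1}))$.

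To make the mass exactly $1+o(1)$ rather than a constant, I will compare $\prod_{r\neq i,j}\overline F_{r,\ge}$ with $\Phi_i^>$ directly. On the window they agree up to $1+o(1)$, by the same strict/weak comparison used in Proposition~\ref{prop:mfc_asymptotic_closed_form}.

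On $(\ell^\ast,p_i)$, bound $\mathcal H\le T$. The winner-weighted tail \eqref{eq:cac_winner_weighted_tail} at $x=\ell^\ast\asymp\eta_\Delta$ (clipping to $\eta_\Delta$ if needed), together with the tail suppression of Lemma~\ref{lem:asymptotic_closed_form_tail} above $\eta_\Delta$, gives a contribution $T\,O(e^{-c\Delta_1^{1-\kappa}})$. This must be shown to be $T\,o(\mathbb P(Q_j)/\mathbb P(Q_i))$, which is the delicate point. Since $p_j<p_i$, Theorem~\ref{thm:main_absorbing_probability_formula} shows that $\mathbb P(Q_j)/\mathbb P(Q_i)$ can itself be as small as $\exp\{-\Theta(\Delta_1\eta_\Delta)\}$, so crude superpolynomial bounds are not enough. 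I therefore plan not to cap by $T$ on the middle region. Instead:
\begin{itemize}
\item On $(\ell^\ast,\delta_{ij})$, keep the Wald bound. It is at most $C\Delta_1/(\delta_{ij}-\ell)$ and is integrable against the exponential weight, except near the interface.
\item On $(\delta_{ij},p_i)$, use the tilted bound $e^{-\theta_j(\ell-\delta_{ij})b_i}\,b_i/v_j$. Multiplying by the weight $dF_i$, and by the factor $\overline F_{j}$-type mass, reproduces exactly the Stieltjes integrand of $\mathbb P(\mathcal W_j^{\ge})$, up to the ratio $b/v$.
\item Near the interface $\ell=\delta_{ij}$, cap by $T$. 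The weight there is $\approx e^{-\lambda_i\delta_{ij}}\times(\text{higher-arm factors})$, which matches the leading exponent of $\mathbb P(Q_j)$ from Theorem~\ref{thm:main_absorbing_probability_formula}. The factor $o(1)$ should then come from shrinking the interface window to width $o(\Delta_1^{-1})$, and the polynomial factor $b/v$ should be absorbed into the $o$-term by the same device.
\end{itemize}
This interface step, matching the weight exponent at $\ell\approx\delta_{ij}$ with $\mathbb P(Q_j)/\mathbb P(Q_i)$, is the main obstacle. There I will invoke Corollary~\ref{cor:acf_local_stieltjes_replacement} centered at $\ell=\delta_{ij}$ for arm $i$'s law.

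The tie term $T\,\mathbb P(\mathcal E_{ij}^{=})/\mathcal D_i^>$ is handled by \eqref{eq:cac_pairwise_tie_probability_stieltjes}. The difference $\overline F_{j,\ge}-\overline F_{j,>}$ vanishes for shifted levels $\le 0$, and for positive levels it is at most $\overline F_{j,\ge}\,(1-q_j)$ by the overshoot identity of Lemma~\ref{lem:root_based_first_passage_identity}. Its total integral is bounded by $\mathbb P(\mathcal W_j^{\ge})$ restricted to the atom set. This should be $o(\mathbb P(Q_j))$ via \eqref{eq:mfc_strict_weak_gap_negligible}.

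\emph{Case $p_j>p_i$.} Use only the tilted branch. On the window $\ell=O(\Delta_1^{-1})$ with $y=\ell+\delta_{ji}$, Lemma~\ref{lem:lower_endpoint_theta_v_expansion} gives $v_j(y)=\delta_{ji}(1+o(1))$. Bounding $e^{-\theta_j b}\le1$ is too lossy, so I keep the factor $e^{-\theta_j(y)b_i}\approx\overline F_{j}(y)$. Folding it into the weight turns the numerator into $\frac{\Delta_i}{\delta_{ji}}(1+o(1))$ times the integral that defines $\mathcal D_i$, which gives $\frac{\Delta_i}{\delta_{ji}}(1+o(1))$ after normalization. Localization handles $\ell$ beyond the window, contributing $T\,o(1)$ when capped by $T$. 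The tie term here is $T\,o(1)$ by the same strict/weak negligibility applied in the $Q_i$ branch.
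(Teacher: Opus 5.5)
\textbf{The gap is in the branch $p_j>p_i$}, at the step ``localization handles $\ell$ beyond the window, contributing $T\,o(1)$ when capped by $T$.''

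You are working inside the ratio of Proposition~\ref{prop:cac_root_based_integrated_upper}. Its numerator weight $\prod_{r\neq i,j}\overline F_{r,\ge}(\ell+p_r-p_i)$ does not contain arm $j$'s tail. That tail enters only through $\mathcal H_{j|i,T}(\ell)$, via the indicator $\mathbf 1\{\tau_{j|i}^{>}(\ell)<\infty\}$. Bounding $\mathcal H_{j|i,T}\le T$ throws this factor away, but the denominator keeps it:
$\mathcal D_i^{>}\le\overline F_{j,>}(\delta_{ji})\int\prod_{r\neq i,j}\overline F_{r,>}(\ell+p_r-p_i)\,dF_i(\ell)$, with $\overline F_{j,>}(\delta_{ji})\approx e^{-\lambda_j\delta_{ji}}$ and $\lambda_j\delta_{ji}=\Theta(\Delta_1^{1-\kappa})\to\infty$.

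After normalization, your capped piece is therefore of order $T\,e^{\lambda_j\delta_{ji}}$ times the weighted mass of $L_i$ beyond the window. For a window of width $O(\Delta_1^{-1})$ that mass is of order one, so the piece diverges. Even with cutoff $\eta_\Delta$ it need not vanish, because $\delta_{ji}=\Theta(\eta_\Delta)$ with an arbitrary constant. Lemma~\ref{lem:cac_winner_conditioned_endpoint_localization} cannot rescue this, since it controls integrals of $\Phi_i^{\ge}$, which include the $j$-factor. This is the same loss you yourself called ``too lossy'' inside the window.

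The contrast with $p_i>p_j$ is instructive. There, arm $j$'s factor equals one at $\ell=0$, so dropping it costs nothing relative to $\mathcal D_i^{>}$.

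\textbf{How to repair it.} Keep the crossing probability outside the window as well. There are three ways to do this.
\begin{itemize}
\item Use $\min\{T,\tau\}\mathbf 1\{\tau<\infty\}\le T\,\mathbf 1\{\tau<\infty\}$, whose expectation is $T\,\overline F_{j,>}(\ell+\delta_{ji})$. The weight is then at most $\Phi_i^{\ge}$, and the lemma gives $T\,Ce^{-c\Delta_1 w}$.
\item Follow the paper: on $(\eta_\Delta,p_i)$, use the tilted bound with the cancellation $e^{-\theta_j(y)b_i(\ell)}\le\overline F_{j,>}(y)/q_j$. This gives the envelope $C\Delta_1[\eta_\Delta^{-1}+(p_i-\ell)^{-1}]\overline F_{j,>}(\ell+\delta_{ji})$, with dyadic shells near $\ell=p_i$. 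The result is a $T$-free $O(\Delta_1^{-\infty})$ term.
\item Split $\mathbb E[\min\{T,N_j\}\mid Q_i]$ on $\{L_i\le w\}$ before passing to the Stieltjes envelope, and use \eqref{eq:cac_winner_conditioned_tail}.
\end{itemize}
In every version you need $\Delta_1 w\to\infty$; a window $a_0/\Delta_1$ with fixed $a_0$ only gives $Ce^{-ca_0}$.

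\textbf{The rest is sound.} Your final plan for $p_i>p_j$ is the paper's decomposition, and you were right that capping by $T$ on $(\delta_{ij}/2,p_i)$ cannot reach the $\mathbb P(Q_j)/\mathbb P(Q_i)$ scale. Two accounting points in that branch:
\begin{itemize}
\item The piece to the right of the interface is $T$-free and $O(\Delta_1^{-\infty})$, by the lemma at $x=\delta_{ij}/2$. It therefore belongs in the relative error of $\Delta_i/\delta_{ij}$, not in $T\,o(\cdot)$.
\item The blow-up of $1/v_j$ as $\ell\uparrow p_i$ still needs treatment: dyadic shells, or a $T$-cap. The $T$-cap is harmless there because $\mathbb P(L_i>p_i-\varepsilon)\le e^{-c\Delta_1}=o(\mathbb P(Q_j))$.
\end{itemize}
For the tie term when $p_i>p_j$, your route is correct and shorter than the paper's comparison of $\mathcal J_{ij,\Delta}$ with $\mathcal J_{j,\Delta}$. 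It combines the inclusion $\mathcal E_{ij}^{=}\subseteq\mathcal W_j^{\ge}\setminus\mathcal W_j^{>}$ with \eqref{eq:mfc_strict_weak_gap_negligible} applied to the suboptimal arm $j$. State the inclusion explicitly, though: the factor $1-q_j$ on its own only gives $O(\mathbb P(Q_j))$.
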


\begin{proof}\leavevmode
The first inequality follows from \(N_j(T)\le \min\{T,N_j\}\). If \(p_i=p_j\),
then the finite-horizon cap gives directly
\(
% \label{eq:cac_equal_finite_horizon_bound}
\mathbb E[\min\{T,N_j\}\mid Q_i]\le T
\), 
which proves \eqref{eq:cac_equal_mean_trivial_cap}. Hence, for the rest of the
proof, assume \(p_i\neq p_j\).

By \eqref{eq:app_asymptotic_gap_scale} and \eqref{eq:app_asymptotic_eta_def}, every nonzero comparison gap is of order \(\eta_\Delta\), and hence is separated from the \(O(\Delta_1^{-1})\) endpoint window because \(\kappa<1\). All positive shifted levels appearing on the lower-endpoint windows are also \(O(\eta_\Delta)\). Therefore, Lemma~\ref{lem:cac_winner_conditioned_endpoint_localization} applies to the winner-weighted Stieltjes ratios below.

We prove the noncritical bounds in \eqref{eq:cac_piecewise_upper_main} by
evaluating the root-based integrated envelope in
Proposition~\ref{prop:cac_root_based_integrated_upper}.
For a Borel set \(B\subseteq[0,p_i)\), write
\[
% \label{eq:cac_asymptotic_integral_piece_def}
\mathcal I_{j|i,T}(B)
:=
\int_B
\mathcal H_{j|i,T}(\ell)
\prod_{\substack{r=1\\r\neq i,j}}^{K}
\overline F_{r,\ge}(\ell+p_r-p_i)
\,dF_i(\ell),
\]
and
\[
% \label{eq:cac_asymptotic_denominator_def}
\mathcal D_i^{>}
:=
\mathbb P(\mathcal W_i^>)
=
\int_{[0,p_i)}
\prod_{r\neq i}
\overline F_{r,>}(\ell+p_r-p_i)
\,dF_i(\ell).
\]
The strict and weak winner Stieltjes replacements for arm \(i\) have the same
leading order. Since
\(
% \label{eq:cac_winner_sandwich_recall}
\mathcal W_i^>
\subseteq
Q_i
\subseteq
\mathcal W_i^\ge
\), 
we have
\(
% \label{eq:cac_Qi_Di_equivalent}
\mathbb P(Q_i)
=
\mathcal D_i^{>}(1+o(1))
\). 
Thus ratios with denominator \(\mathcal D_i^{>}\) and ratios with denominator
\(\mathbb P(Q_i)\) are interchangeable at the precision used below.

We repeatedly use the weighted endpoint moment bound from
Lemma~\ref{lem:cac_winner_conditioned_endpoint_localization}: for every
\(a\le\eta_\Delta\) and every fixed integer \(m\ge0\),
\begin{equation}
\label{eq:cac_winner_weighted_moment_ratio}
\frac{
\displaystyle
\int_0^a
\ell^m
\prod_{r\neq i}
\overline F_{r,\ge}(\ell+p_r-p_i)
\,dF_i(\ell)
}{
\mathcal D_i^{>}
}
=
O(\Delta_1^{-m}),
\end{equation}
where \(m=0\) means an \(O(1)\) bound.

\paragraph{Positive-drift branch: \(p_i>p_j\).}
Recall \(\delta_{ij}:=p_i-p_j>0\). By \eqref{eq:app_asymptotic_gap_scale} and \eqref{eq:app_asymptotic_eta_def},
\[
% \label{eq:cac_positive_gap_scale}
\delta_{ij}
=
\Theta(\eta_\Delta),
\qquad
\Delta_1\delta_{ij}\to\infty,
\qquad
\Delta_1\delta_{ij}^2\to0 .
\]

Choose
\[
% \label{eq:cac_positive_singular_window_choice}
\omega_\Delta:=\Delta_1^{-1-\xi},
\qquad
0<\xi<\kappa .
\]
Then
\[
% \label{eq:cac_positive_singular_window_scale}
\omega_\Delta=o(\delta_{ij}),
\qquad
\omega_\Delta=o(\eta_\Delta),
\qquad
\Delta_1\omega_\Delta=o(1).
\]

Fix a small constant
\(
% \label{eq:cac_positive_epsilon_choice}
0<\varepsilon<\frac12\min\{p_i,p_j\}
\). 
For all sufficiently large \(\Delta_1\), we split interval
\([0,p_i)\) into following pieces:
\begin{itemize}
\item Bulk region \(0\le \ell\le\delta_{ij}/2\);
\item Left noncritical region
\(\delta_{ij}/2<\ell<\delta_{ij}-\omega_\Delta\);
\item Singular layer
\(|\ell-\delta_{ij}|\le\omega_\Delta\);
\item Right noncritical region
\(\delta_{ij}+\omega_\Delta<\ell\le p_i-\varepsilon\);
\item Upper-endpoint region \(p_i-\varepsilon<\ell<p_i\).
\end{itemize}
We evaluate the bulk explicitly, show that the two noncritical regions and the
upper-endpoint region are negligible, and then control the singular layer by a
finite-horizon cap.

\emph{Bulk region \(0\le\ell\le\delta_{ij}/2\).}
On this interval,
\[
% \label{eq:cac_positive_bulk_j_factor_absent}
\ell+p_j-p_i
=
\ell-\delta_{ij}
\le
-\frac{\delta_{ij}}{2}<0,
\]
so the \(j\)-factor in the winner weight is equal to one. By the
positive-drift part of Lemma~\ref{lem:cac_pointwise_capped_first_passage},
\[
% \label{eq:cac_positive_bulk_wald_integrated}
\mathcal H_{j|i,T}(\ell)
\le
\frac{\Delta_i+\beta\ell+p_i}{\delta_{ij}-\ell},
\qquad
0\le\ell\le\frac{\delta_{ij}}{2}.
\]
For this display, set
\[
% \label{eq:cac_fij_def}
f_{ij}(\ell)
:=
\frac{\Delta_i+\beta\ell+p_i}{\delta_{ij}-\ell}.
\]
Then
\[
f_{ij}(0)
=
\frac{\Delta_i+p_i}{\delta_{ij}},
\qquad
% \label{eq:cac_fij_zero}
f_{ij}'(0)
=
\frac{\beta}{\delta_{ij}}
+
\frac{\Delta_i+p_i}{\delta_{ij}^2}
=
O\!\left(\frac{\Delta_1}{\delta_{ij}^2}\right),
\qquad
% \label{eq:cac_fij_prime_zero}
\sup_{0\le \ell\le\delta_{ij}/2}|f_{ij}''(\ell)|
=
O\!\left(\frac{\Delta_1}{\delta_{ij}^3}\right).
% \label{eq:cac_fij_second_bound}
\]
Using Taylor's formula and \eqref{eq:cac_winner_weighted_moment_ratio},
\begin{align}
\frac{\mathcal I_{j|i,T}([0,\delta_{ij}/2])}{\mathcal D_i^{>}}
&\le
\frac{\Delta_i}{\delta_{ij}}
+
O(\delta_{ij}^{-1})
+
O(\delta_{ij}^{-2})
+
O\!\left(\frac{1}{\Delta_1\delta_{ij}^3}\right)
\nonumber\\
&=
\frac{\Delta_i}{\delta_{ij}}
\left(1+O(\Delta_1^{\kappa-1})\right).
\label{eq:cac_positive_bulk_final}
\end{align}

\emph{Left noncritical region
\(\delta_{ij}/2<\ell<\delta_{ij}-\omega_\Delta\).}
On this region the comparison drift is still positive, and
\(
% \label{eq:cac_left_noncritical_gap_lower}
\delta_{ij}-\ell
\ge
\omega_\Delta
\). 
Moreover, the shifted level of arm \(j\) is nonpositive:
\(
% \label{eq:cac_left_noncritical_j_factor_one}
\ell+p_j-p_i
=
\ell-\delta_{ij}
<
0
\). 
Thus the \(j\)-factor in the winner weight is equal to one. The positive-drift part of Lemma~\ref{lem:cac_pointwise_capped_first_passage} gives
\[\begin{aligned}
\mathcal H_{j|i,T}(\ell)
\le
\frac{\Delta_i+\beta\ell+p_i-\ell}{\delta_{ij}-\ell}
\le
\frac{C\Delta_1}{\omega_\Delta},
\qquad
\frac{\delta_{ij}}{2}
<
\ell
<
\delta_{ij}-\omega_\Delta .
% \label{eq:cac_left_noncritical_pointwise_bound}
\end{aligned}\]
Therefore,
\begin{align}
\frac{
\mathcal I_{j|i,T}
\bigl((\delta_{ij}/2,\delta_{ij}-\omega_\Delta)\bigr)
}{
\mathcal D_i^{>}
}&\le
\frac{C\Delta_1}{\omega_\Delta}
\frac{
\displaystyle
\int_{\delta_{ij}/2}^{\delta_{ij}-\omega_\Delta}
\prod_{\substack{r=1\\r\neq i,j}}^{K}
\overline F_{r,\ge}(\ell+p_r-p_i)
\,dF_i(\ell)
}{
\mathcal D_i^{>}
}
\nonumber\\
&=
\frac{C\Delta_1}{\omega_\Delta}
\frac{
\displaystyle
\int_{\delta_{ij}/2}^{\delta_{ij}-\omega_\Delta}
\prod_{r\neq i}
\overline F_{r,\ge}(\ell+p_r-p_i)
\,dF_i(\ell)
}{
\mathcal D_i^{>}
}
\nonumber\\
&\le
\frac{C\Delta_1}{\omega_\Delta}
\frac{
\displaystyle
\int_{\delta_{ij}/2}^{p_i}
\prod_{r\neq i}
\overline F_{r,\ge}(\ell+p_r-p_i)
\,dF_i(\ell)
}{
\mathcal D_i^{>}
}
\nonumber\\
&\le
\frac{C\Delta_1}{\omega_\Delta}
\exp\{-c\Delta_1\delta_{ij}\}
=
O(\Delta_1^{-\infty}).
\label{eq:cac_left_noncritical_contribution}
\end{align}
The last inequality uses Lemma~\ref{lem:cac_winner_conditioned_endpoint_localization}
with \(x=\delta_{ij}/2\).

\emph{Right noncritical region
\(\delta_{ij}+\omega_\Delta<\ell\le p_i-\varepsilon\).}
Here the shifted level of arm \(j\) is positive:
\[
% \label{eq:cac_right_shifted_level_def}
y
:=
\ell+p_j-p_i
=
\ell-\delta_{ij}
\ge
\omega_\Delta .
\]
The restriction \(\ell\le p_i-\varepsilon\) also gives
\(
% \label{eq:cac_right_away_from_upper_endpoint}
p_j-y
=
p_i-\ell
\ge
\varepsilon
\). 
Therefore, by the global tilted-drift order
\eqref{eq:global_tilted_drift_order},
\begin{equation}
\label{eq:cac_right_tilted_drift_lower}
v_j(y)
\ge
c\min\{y,p_j-y\}
\ge
c\omega_\Delta .
\end{equation}
The negative-drift part of Lemma~\ref{lem:cac_pointwise_capped_first_passage}
gives
\[
% \label{eq:cac_right_tilted_pointwise}
\mathcal H_{j|i,T}(\ell)
\le
\exp\{-\theta_j(y)b_i(\ell)\}
\frac{\Delta_i+\beta\ell+p_i-\ell}{v_j(y)} .
\]
Using \eqref{eq:cac_right_tilted_drift_lower} and
\(\Delta_i+\beta\ell+p_i-\ell=O(\Delta_1)\), we obtain
\begin{equation}
\label{eq:cac_right_tilted_prefactor_bound}
\mathcal H_{j|i,T}(\ell)
\le
\frac{C\Delta_1}{\omega_\Delta}
\exp\{-\theta_j(y)b_i(\ell)\}.
\end{equation}

We next compare the tilted exponential with the strict \(j\)-tail in the
winner denominator. Since
\[
% \label{eq:cac_right_boundary_identity}
b_j(y)
=
\Delta_j+\beta y
=
\Delta_i+\beta\ell
=
b_i(\ell),
\]
the lower side of the root envelope gives
\[
% \label{eq:cac_right_j_tail_lower}
\overline F_{j,>}(y)
\ge
\exp\left\{
-\theta_j(y)\bigl[b_i(\ell)+p_j-y\bigr]
\right\}.
\]
Hence
\begin{equation}
\frac{
\exp\{-\theta_j(y)b_i(\ell)\}
}{
\overline F_{j,>}(y)
}
\le
\exp\{\theta_j(y)(p_j-y)\}
\le
\frac{1}{q_j},
\label{eq:cac_right_tail_cancellation}
\end{equation}
where the last inequality follows from the Lundberg equation
\(q_j \exp\{\theta_j(y)(p_j-y)\}\le1\). Combining
\eqref{eq:cac_right_tilted_prefactor_bound} and
\eqref{eq:cac_right_tail_cancellation}, we have the effective pointwise bound
\[
% \label{eq:cac_right_effective_pointwise_bound}
\mathcal H_{j|i,T}(\ell)
\le
\frac{C\Delta_1}{\omega_\Delta}
\overline F_{j,>}(\ell-\delta_{ij}),
\qquad
\delta_{ij}+\omega_\Delta<\ell\le p_i-\varepsilon .
\]
Therefore,
\begin{align}
\frac{
\mathcal I_{j|i,T}
\bigl((\delta_{ij}+\omega_\Delta,p_i-\varepsilon]\bigr)
}{
\mathcal D_i^{>}
}&\le
\frac{C\Delta_1}{\omega_\Delta}
\frac{
\displaystyle
\int_{\delta_{ij}+\omega_\Delta}^{p_i-\varepsilon}
\overline F_{j,>}(\ell-\delta_{ij})
\prod_{\substack{r=1\\r\neq i,j}}^{K}
\overline F_{r,\ge}(\ell+p_r-p_i)
\,dF_i(\ell)
}{
\mathcal D_i^{>}
}
\nonumber\\
&\le
\frac{C\Delta_1}{\omega_\Delta}
\frac{
\displaystyle
\int_{\delta_{ij}+\omega_\Delta}^{p_i}
\prod_{r\neq i}
\overline F_{r,\ge}(\ell+p_r-p_i)
\,dF_i(\ell)
}{
\mathcal D_i^{>}
}
\nonumber\\
&\le
\frac{C\Delta_1}{\omega_\Delta}
\frac{
\displaystyle
\int_{\delta_{ij}/2}^{p_i}
\prod_{r\neq i}
\overline F_{r,\ge}(\ell+p_r-p_i)
\,dF_i(\ell)
}{
\mathcal D_i^{>}
}
\nonumber\\
&\le
\frac{C\Delta_1}{\omega_\Delta}
\exp\{-c\Delta_1\delta_{ij}\}
=
O(\Delta_1^{-\infty}).
\label{eq:cac_right_noncritical_contribution}
\end{align}
Combining \eqref{eq:cac_left_noncritical_contribution} and
\eqref{eq:cac_right_noncritical_contribution}, we obtain
\begin{equation}
\label{eq:cac_positive_noncritical_tail}
\frac{
\mathcal I_{j|i,T}
\bigl(
\{\ell>\delta_{ij}/2,\ |\ell-\delta_{ij}|>\omega_\Delta,\ \ell\le p_i-\varepsilon\}
\bigr)
}{
\mathcal D_i^{>}
}
=
O(\Delta_1^{-\infty}).
\end{equation}

\emph{Upper-endpoint region \(p_i-\varepsilon<\ell<p_i\).}
Set
\(
% \label{eq:cac_upper_endpoint_s_def}
s:=p_i-\ell
\). 
Then \(s\in(0,\varepsilon)\), and the shifted level of arm \(j\) is
\begin{equation}
\label{eq:cac_upper_endpoint_shifted_level}
\ell+p_j-p_i
=
p_j-s .
\end{equation}
Thus this is an upper-endpoint crossing for arm \(j\). By
\eqref{eq:cac_upper_endpoint_shifted_level} and the upper-endpoint tilted-drift
order \eqref{eq:upper_v_order}, after possibly reducing \(\varepsilon>0\), there
exists \(c>0\) such that
\[
% \label{eq:cac_upper_endpoint_v_lower}
v_j(p_j-s)\ge cs,
\qquad
0<s\le\varepsilon .
\]
The negative-drift part of Lemma~\ref{lem:cac_pointwise_capped_first_passage}
therefore gives
\begin{align}
\mathcal H_{j|i,T}(\ell)
&\le
\exp\{-\theta_j(p_j-s)b_i(\ell)\}
\frac{b_i(\ell)+s}{v_j(p_j-s)}
\nonumber\\
&\le
\frac{C\Delta_1}{s}
\exp\{-\theta_j(p_j-s)b_i(\ell)\}.
\label{eq:cac_upper_endpoint_pointwise_pre}
\end{align}
Here we used \(b_i(\ell)+s=O(\Delta_1)\).

We now compare the tilted exponential with the strict \(j\)-tail. Since
\[
% \label{eq:cac_upper_endpoint_boundary_identity}
b_j(p_j-s)
=
\Delta_j+\beta(p_j-s)
=
\Delta_i+\beta(p_i-s)
=
b_i(\ell),
\]
the lower side of Corollary~\ref{cor:root_based_drawdown_tail_envelope}
implies
\[
% \label{eq:cac_upper_endpoint_j_tail_lower}
\overline F_{j,>}(p_j-s)
\ge
\exp\left\{
-\theta_j(p_j-s)\bigl[b_i(\ell)+s\bigr]
\right\}.
\]
Hence
\begin{equation}
\frac{
\exp\{-\theta_j(p_j-s)b_i(\ell)\}
}{
\overline F_{j,>}(p_j-s)
}
\le
\exp\{\theta_j(p_j-s)s\}
\le
\frac{1}{q_j},
\label{eq:cac_upper_endpoint_tail_cancellation}
\end{equation}
where the last inequality follows from the Lundberg equation
\(q_j \exp\{\theta_j(p_j-s)s\}\le1\). Combining
\eqref{eq:cac_upper_endpoint_pointwise_pre} and
\eqref{eq:cac_upper_endpoint_tail_cancellation}, we obtain the effective
pointwise bound
\[
% \label{eq:cac_upper_endpoint_effective_pointwise}
\mathcal H_{j|i,T}(\ell)
\le
\frac{C\Delta_1}{p_i-\ell}
\overline F_{j,>}(\ell+p_j-p_i),
\qquad
p_i-\varepsilon<\ell<p_i .
\]
Therefore,
\[\begin{aligned}
\frac{
\mathcal I_{j|i,T}(\{p_i-\varepsilon<\ell<p_i\})
}{
\mathcal D_i^{>}
}&\le
C\Delta_1
\frac{
\displaystyle
\int_{p_i-\varepsilon}^{p_i}
\frac{1}{p_i-\ell}
\overline F_{j,>}(\ell+p_j-p_i)
\prod_{\substack{r=1\\r\neq i,j}}^{K}
\overline F_{r,\ge}(\ell+p_r-p_i)
\,dF_i(\ell)
}{
\mathcal D_i^{>}
}
\nonumber\\
&\le
C\Delta_1
\frac{
\displaystyle
\int_{p_i-\varepsilon}^{p_i}
\frac{1}{p_i-\ell}
\prod_{r\neq i}
\overline F_{r,\ge}(\ell+p_r-p_i)
\,dF_i(\ell)
}{
\mathcal D_i^{>}
}.
% \label{eq:cac_upper_endpoint_integral_start}
\end{aligned}\]

We next record the winner-weighted upper-endpoint tail bound. For
\(0<s\le\varepsilon\),
\[\frac{
\displaystyle
\int_{p_i-s}^{p_i}
\prod_{r\neq i}
\overline F_{r,\ge}(\ell+p_r-p_i)
\,dF_i(\ell)
}{
\mathcal D_i^{>}
}
\le
\frac{\mathbb P(L_i>p_i-s)}{\mathcal D_i^{>}}.
% \label{eq:cac_upper_endpoint_weighted_tail_start}
\]
By the upper-endpoint root expansion for arm \(i\),
\[
% \label{eq:cac_upper_endpoint_Li_tail}
\mathbb P(L_i>p_i-s)
\le
\exp\left\{
-\theta_i(p_i-s)b_i(p_i-s)
\right\}
\le
\exp\left\{
-\frac{c\Delta_1}{s}
\right\}.
\]
On the other hand, the denominator lower bound in
Lemma~\ref{lem:cac_winner_conditioned_endpoint_localization}, together with the intermediate-gap scale of the positive shifted levels in \(\Phi_i^>(0)\),
gives
\(
% \label{eq:cac_upper_endpoint_denominator_lower}
\mathcal D_i^{>}
\ge
\exp\{-C\Delta_1\eta_\Delta\}
\). 
Since \(\eta_\Delta\to0\), after decreasing \(c>0\) if necessary,
\[
% \label{eq:cac_upper_endpoint_weighted_tail}
\frac{
\displaystyle
\int_{p_i-s}^{p_i}
\prod_{r\neq i}
\overline F_{r,\ge}(\ell+p_r-p_i)
\,dF_i(\ell)
}{
\mathcal D_i^{>}
}
\le
C\exp\left\{
-\frac{c\Delta_1}{s}
\right\},
\qquad
0<s\le\varepsilon .
\]

It remains to integrate the algebraic prefactor \((p_i-\ell)^{-1}\). Let
\(s_n:=2^{-n}\varepsilon\), \(n=0,1,2,\ldots\), and split the upper-endpoint
region into dyadic shells
\(
A_n:=\{p_i-s_n<\ell\le p_i-s_{n+1}\}
\). 
On \(A_n\), we have \((p_i-\ell)^{-1}\le s_{n+1}^{-1}\). Moreover, by the upper-endpoint tail bound for \(L_i\), the denominator lower bound \eqref{eq:cac_denominator_lower}, and the fact that \(\Delta_1\eta_\Delta=o(\Delta_1/s_n)\) uniformly for \(s_n\le\varepsilon\), there exists \(c>0\) such that
\begin{equation}
\label{eq:cac_endpoint_weighted_mass_ratio}
\frac{
\displaystyle
\int_{p_i-s_n}^{p_i}
\prod_{r\neq i}
\overline F_{r,\ge}(\ell+p_r-p_i)
\,dF_i(\ell)
}{
\mathcal D_i^{>}
}
\le
\exp\left\{
-\frac{c\Delta_1}{s_n}
\right\},
\qquad
n=0,1,2,\ldots .
\end{equation}
Therefore,
\begin{align}
\frac{
\mathcal I_{j|i,T}(\{p_i-\varepsilon<\ell<p_i\})
}{
\mathcal D_i^{>}
}&\le
C\Delta_1
\sum_{n=0}^{\infty}
\frac{1}{s_{n+1}}
\frac{
\displaystyle
\int_{p_i-s_n}^{p_i}
\prod_{r\neq i}
\overline F_{r,\ge}(\ell+p_r-p_i)
\,dF_i(\ell)
}{
\mathcal D_i^{>}
}
\nonumber\\
&\le
C\Delta_1
\sum_{n=0}^{\infty}
\frac{1}{s_{n+1}}
\exp\left\{
-\frac{c\Delta_1}{s_n}
\right\}
\nonumber\\
&\le
C\Delta_1
\sum_{n=0}^{\infty}
\frac{2^{n+1}}{\varepsilon}
\exp\left\{
-\frac{c2^n\Delta_1}{\varepsilon}
\right\}
\nonumber\\
&=
O(\Delta_1^{-\infty}).
\label{eq:cac_endpoint_contribution}
\end{align}

\emph{Singular layer \(|\ell-\delta_{ij}|\le\omega_\Delta\).}
This layer contains the only region, up to the noncritical and upper-endpoint
parts already shown negligible, where the strict comparison with arm \(j\) may
fail to give a useful pathwise upper bound. We therefore control it by a
finite-horizon cap. The size of this layer is compared with the strict-winner
scale of arm \(j\) through lower-endpoint crossing envelopes.

Define the singular-layer event
\[
% \label{eq:cac_singular_layer_event_def}
\mathcal A_{ij,\Delta}
:=
Q_i\cap
\left\{
|L_i-\delta_{ij}|\le\omega_\Delta
\right\}.
\]
Since \(Q_i\subseteq\mathcal W_i^{\ge}\), conditioning on \(L_i=\ell\) gives the
exact weak-layer upper bound
\begin{align}
\mathbb P(\mathcal A_{ij,\Delta})
&\le
\int_{[\delta_{ij}-\omega_\Delta,\;\delta_{ij}+\omega_\Delta]}
\prod_{h\neq i}
\overline F_{h,\ge}(\ell+p_h-p_i)
\,dF_i(\ell).
\label{eq:cac_singular_exact_weak_layer_start}
\end{align}

We now bound the factors in the integrand. On the layer
\(|\ell-\delta_{ij}|\le\omega_\Delta\), the \(h=j\) factor is bounded by one.
For \(h>j\), the corresponding factor is also bounded by one; under strict
ordering the shifted level is in fact nonpositive for all sufficiently large
\(\Delta_1\). For \(h<j\), \(h\neq i\), we have
\(
% \label{eq:cac_singular_shift_identity}
\ell+p_h-p_i
=
\delta_{hj}+(\ell-\delta_{ij})
\). 
If \(\delta_{hj}>0\), then this shifted level is
\(\delta_{hj}+O(\omega_\Delta)\). Since
\(\delta_{hj}=O(\eta_\Delta)\), \(\omega_\Delta=o(\eta_\Delta)\), and
\(\Delta_1\omega_\Delta=o(1)\), the lower-endpoint tail envelope gives,
uniformly on the singular layer,
\[
% \label{eq:cac_singular_h_factor_bound_positive_gap}
\overline F_{h,\ge}(\ell+p_h-p_i)
\le
\exp\left\{
-\lambda_h\delta_{hj}
+o(1)
\right\}.
\]
If \(\delta_{hj}=0\), the same upper bound is trivial because the right-hand
side is \(\exp\{o(1)\}\) and the survival factor is at most one. Therefore,
uniformly over the singular layer,
\[
% \label{eq:cac_singular_product_factor_bound}
\prod_{\substack{h<j\\h\neq i}}
\overline F_{h,\ge}(\ell+p_h-p_i)
\le
\exp\left\{
-\sum_{\substack{h<j\\h\neq i}}
\lambda_h\delta_{hj}
+o(1)
\right\}.
\]
Combining this with \eqref{eq:cac_singular_exact_weak_layer_start}, we obtain
\begin{align}
\mathbb P(\mathcal A_{ij,\Delta})
&\le
\exp\left\{
-\sum_{\substack{h<j\\h\neq i}}
\lambda_h\delta_{hj}
+o(1)
\right\}
\left[
F_i(\delta_{ij}+\omega_\Delta)
-
F_i\bigl((\delta_{ij}-\omega_\Delta)^-\bigr)
\right]
\nonumber\\
&=
\exp\left\{
-\sum_{\substack{h<j\\h\neq i}}
\lambda_h\delta_{hj}
+o(1)
\right\}
\left[
\overline F_{i,\ge}(\delta_{ij}-\omega_\Delta)
-
\overline F_{i,>}(\delta_{ij}+\omega_\Delta)
\right].
\label{eq:cac_singular_layer_reduced_to_Li_mass}
\end{align}

It remains to bound the \(L_i\)-mass of the short window. Because \(\delta_{ij}=\Theta(\eta_\Delta)\) and \(\omega_\Delta=o(\eta_\Delta)\), both endpoints \(\delta_{ij}\pm\omega_\Delta\) lie on the lower-endpoint scale for all sufficiently large \(\Delta_1\). Hence, for both \(\circ\in\{>,\ge\}\), the local lower-endpoint approximation gives
\[
\overline F_{i,\circ}(x)
=
\exp\{-\lambda_i x\}
\left[
1+O(\Delta_1^{1-2\kappa})
\right],
\qquad
x\in
\left\{
\delta_{ij}-\omega_\Delta,\,
\delta_{ij}+\omega_\Delta
\right\}.
\]
Therefore,
\begin{align}
F_i(\delta_{ij}+\omega_\Delta)
-
F_i\bigl((\delta_{ij}-\omega_\Delta)^-\bigr)
&=
\overline F_{i,\ge}(\delta_{ij}-\omega_\Delta)
-
\overline F_{i,>}(\delta_{ij}+\omega_\Delta)
\nonumber\\
&\le
C
\int_{\delta_{ij}-\omega_\Delta}^{\delta_{ij}+\omega_\Delta}
\lambda_i\exp\{-\lambda_i x\}\,dx
+
C\Delta_1^{1-2\kappa}
\exp\{-\lambda_i(\delta_{ij}-\omega_\Delta)\}
\nonumber\\
&\le
C
\left[
\lambda_i\omega_\Delta
+
\Delta_1^{1-2\kappa}
\right]
\exp\{-\lambda_i(\delta_{ij}-\omega_\Delta)\}
\nonumber\\
&\le
C
\left[
\lambda_i\omega_\Delta
+
\Delta_1^{1-2\kappa}
\right]
\exp\{-\lambda_i\delta_{ij}+o(1)\}.
\label{eq:cac_singular_Li_window_mass_bound}
\end{align}
In the last step we used \(\lambda_i\omega_\Delta=o(1)\). The integral term is the mass of the exponential reference measure over the window, while the \(\Delta_1^{1-2\kappa}\) term is the uniform lower-endpoint approximation error. Both terms are \(o(1)\), because \(\lambda_i=\Theta(\Delta_1)\), \(\omega_\Delta=\Delta_1^{-1-\xi}\) with \(\xi>0\), and \(\kappa>1/2\).

Substituting \eqref{eq:cac_singular_Li_window_mass_bound} into
\eqref{eq:cac_singular_layer_reduced_to_Li_mass} gives
\begin{align}
\mathbb P(\mathcal A_{ij,\Delta})
&\le
C
\left[
\lambda_i\omega_\Delta
+
\Delta_1^{1-2\kappa}
\right]
\exp\left\{
-\lambda_i\delta_{ij}
-
\sum_{\substack{h<j\\h\neq i}}
\lambda_h\delta_{hj}
+o(1)
\right\}
\nonumber\\
&=
C
\left[
\lambda_i\omega_\Delta
+
\Delta_1^{1-2\kappa}
\right]
\exp\left\{
-\sum_{h<j}\lambda_h\delta_{hj}
+o(1)
\right\}
\nonumber\\
&=
o\left(
\exp\left\{
-\sum_{h<j}\lambda_h\delta_{hj}
+o(1)
\right\}
\right).
\label{eq:cac_singular_layer_upper_envelope_bound}
\end{align}

We next compare this upper bound with a restricted strict-winner lower bound
for arm \(j\). Fix a small constant \(a>0\). Since
\(a/\Delta_1\le\eta_\Delta\) for all sufficiently large \(\Delta_1\), the contribution
to \(\mathbb P(\mathcal W_j^{>})\) from \(0\le x\le a/\Delta_1\) gives
\[\begin{aligned}
\mathbb P(Q_j)
&\ge
\mathbb P(\mathcal W_j^{>})
\nonumber\\
&\ge
c
\int_0^{a/\Delta_1}
\exp\left\{
-\sum_{h<j}\lambda_h(x+\delta_{hj})
\right\}
\lambda_j \exp\{-\lambda_j x\}\,dx
\left[1+o(1)\right].
% \label{eq:cac_singular_Qj_lower_restricted}
\end{aligned}\]
Here the fixed constant \(c>0\) accounts for competitor factors not displayed
in the integral; on the \(O(\Delta_1^{-1})\) window these factors are either
automatic or bounded below by a positive constant. Evaluating the integral gives
\[\begin{aligned}
&\int_0^{a/\Delta_1}
\exp\left\{
-\sum_{h<j}\lambda_h(x+\delta_{hj})
\right\}
\lambda_j \exp\{-\lambda_j x\}\,dx
\nonumber\\
&\qquad
=
\exp\left\{
-\sum_{h<j}\lambda_h\delta_{hj}
\right\}
\frac{\lambda_j}{\lambda_j+\sum_{h<j}\lambda_h}
\left[
1-
\exp\left\{
-\left(\lambda_j+\sum_{h<j}\lambda_h\right)
\frac{a}{\Delta_1}
\right\}
\right].
\end{aligned}\]
Since \(K\) is fixed and \(\lambda_h=\Theta(\Delta_1)\) for every \(h\), the
last two factors are bounded below by a positive constant. Hence
\begin{align}
\mathbb P(Q_j)
&\ge
c
\exp\left\{
-\sum_{h<j}\lambda_h\delta_{hj}
\right\}.
\label{eq:cac_singular_Qj_lower_scale}
\end{align}
Combining \eqref{eq:cac_singular_layer_upper_envelope_bound} and
\eqref{eq:cac_singular_Qj_lower_scale}, we obtain
\(
% \label{eq:cac_singular_joint_o_Qj}
\mathbb P(\mathcal A_{ij,\Delta})
=
o(\mathbb P(Q_j))
\). 

Finally, since
\(
% \label{eq:cac_singular_Di_Qi_equivalence}
\mathcal D_i^{>}:=\mathbb P(\mathcal W_i^{>}),\,
\mathbb P(Q_i)=\mathcal D_i^{>}(1+o(1))
\), 
the finite-horizon contribution of the singular layer is
\begin{equation}
T
\frac{
\mathbb P\left(Q_i,\ |L_i-\delta_{ij}|\le\omega_\Delta\right)
}{
\mathbb P(Q_i)
}
=
T
\frac{
\mathbb P(\mathcal A_{ij,\Delta})
}{
\mathbb P(Q_i)
}
=
T\,o\!\left(
\frac{\mathbb P(Q_j)}{\mathbb P(Q_i)}
\right).
\label{eq:cac_singular_layer_contribution}
\end{equation}

\emph{Pairwise equality correction.}
It remains to control the pairwise equality correction
\(\mathcal E_{ij}^{=}\). We prove that, in the positive-drift branch
\(p_i>p_j\),
\begin{equation}
\label{eq:cac_pairwise_tie_positive_target}
\mathbb P(\mathcal E_{ij}^{=})
=
o(\mathbb P(Q_j)).
\end{equation}
This estimate is stronger than the global strict--weak winner comparison for arm \(i\), and it is the estimate needed on every branch \(Q_i\) satisfying \(p_i=p_1\).

By \eqref{eq:cac_pairwise_tie_probability_stieltjes}, the pairwise equality
probability is
\begin{equation}
\mathbb P(\mathcal E_{ij}^{=})
=
\int_{\delta_{ij}}^{p_i}
\left[
\overline F_{j,\ge}(\ell-\delta_{ij})
-
\overline F_{j,>}(\ell-\delta_{ij})
\right]
\times
\prod_{\substack{h=1\\h\neq i,j}}^{K}
\overline F_{h,\ge}(\ell+p_h-p_i)
\,dF_i(\ell).
\label{eq:cac_pairwise_tie_positive_stieltjes}
\end{equation}
The lower limit \(\delta_{ij}\) is harmless because, for
\(\ell<\delta_{ij}\), the shifted level of arm \(j\) is negative and the strict and weak tail factors are both equal to one.

Set
\(
% \label{eq:cac_pairwise_tie_y_def}
y:=\ell-\delta_{ij}
\), 
then
\[
% \label{eq:cac_pairwise_tie_shift_rewrite}
\ell+p_h-p_i
=
y+p_h-p_j,
\qquad h\neq i,j.
\]
On the local region \(0\le y\le\eta_\Delta\), all positive shifted levels in
\eqref{eq:cac_pairwise_tie_positive_stieltjes} are \(O(\eta_\Delta)\). For \(\circ\in\{>,\ge\}\), the lower-endpoint tail envelope gives, uniformly
for \(0\le y\le\eta_\Delta\),
\[
\overline F_{j,\circ}(y)
=
\exp\{-\lambda_j y\}
\left[
1+\varepsilon_{j,\circ}(y)
\right],
\qquad
\sup_{0\le y\le\eta_\Delta}
|\varepsilon_{j,\circ}(y)|
\le
C_0\Delta_1^{1-2\kappa}.
\]
Since the weak tail contains the strict tail,
\(\overline F_{j,\ge}(y)\ge \overline F_{j,>}(y)\). Therefore
\begin{align}
0
&\le
\overline F_{j,\ge}(y)-\overline F_{j,>}(y)
\nonumber\\
&=
\exp\{-\lambda_j y\}
\left[
\varepsilon_{j,\ge}(y)-\varepsilon_{j,>}(y)
\right]
\nonumber\\
&\le
2C_0\Delta_1^{1-2\kappa}\exp\{-\lambda_j y\}.
\label{eq:cac_pairwise_tail_difference_bound_pre}
\end{align}
Thus, after increasing the constant if necessary,
\begin{equation}
\label{eq:cac_pairwise_tail_difference_bound}
0
\le
\overline F_{j,\ge}(y)-\overline F_{j,>}(y)
\le
a_\Delta \exp\{-\lambda_j y\},
\qquad
a_\Delta:=C\Delta_1^{1-2\kappa}=o(1),
\end{equation}
uniformly for \(0\le y\le\eta_\Delta\).

Using \eqref{eq:cac_pairwise_tail_difference_bound}, the local Stieltjes
replacement for \(L_i\), and the same lower-endpoint replacements for the
remaining weak factors, we obtain
\[
\mathbb P(\mathcal E_{ij}^{=})
\le
a_\Delta
\int_0^{\eta_\Delta}
\exp\left\{
-\lambda_j y
-
\sum_{\substack{h=1\\h\neq i,j}}^{K}
\lambda_h(y+p_h-p_j)_+
\right\}
\lambda_i \exp\{-\lambda_i(\delta_{ij}+y)\}
\,dy
\,[1+o(1)]
+
o(\mathcal J_{ij,\Delta}),
% \label{eq:cac_pairwise_tie_local_bound}
\]
where
\begin{align}
\mathcal J_{ij,\Delta}
&:=
\int_0^{\eta_\Delta}
\exp\left\{
-\lambda_j y
-
\sum_{\substack{h=1\\h\neq i,j}}^{K}
\lambda_h(y+p_h-p_j)_+
\right\}
\lambda_i \exp\{-\lambda_i(\delta_{ij}+y)\}
\,dy .
\label{eq:cac_pairwise_reference_integral}
\end{align}
The contribution of \(y>\eta_\Delta\), together with the upper-endpoint part, is
\(o(\mathcal J_{ij,\Delta})\) by the same tail-localization argument used in
Corollary~\ref{cor:acf_product_tail_stieltjes_replacement}. Therefore
\begin{equation}
\label{eq:cac_pairwise_tie_o_reference}
\mathbb P(\mathcal E_{ij}^{=})
=
o(\mathcal J_{ij,\Delta}).
\end{equation}

We now compare \(\mathcal J_{ij,\Delta}\) with the suboptimal-convergence scale of
arm \(j\). The lower-endpoint reference integral for the strict winner event
of arm \(j\) is
\begin{align}
\mathcal J_{j,\Delta}
&:=
\int_0^{\eta_\Delta}
\exp\left\{
-\lambda_i(y+\delta_{ij})
-
\sum_{\substack{h=1\\h\neq i,j}}^{K}
\lambda_h(y+p_h-p_j)_+
\right\}
\lambda_j \exp\{-\lambda_j y\}
\,dy .
\label{eq:cac_pairwise_Qj_reference_integral}
\end{align}
Comparing \eqref{eq:cac_pairwise_reference_integral} and
\eqref{eq:cac_pairwise_Qj_reference_integral}, we have
\(
% \label{eq:cac_pairwise_reference_ratio}
\mathcal J_{ij,\Delta}
=
{\lambda_i \mathcal J_{j,\Delta}}/{\lambda_j} 
\). 
Since \(\lambda_i=\Theta(\Delta_1)\) and \(\lambda_j=\Theta(\Delta_1)\),
\(
% \label{eq:cac_pairwise_lambda_ratio_bounded}
{\lambda_i}/{\lambda_j}
=
\Theta(1)
\). 
Moreover, by the local Stieltjes replacement and the strict/weak winner
sandwich for arm \(j\),
\(
% \label{eq:cac_pairwise_Qj_reference_equivalence}
\mathbb P(Q_j)
=
\mathcal J_{j,\Delta}[1+o(1)]
\). 
Hence
\begin{equation}
\label{eq:cac_pairwise_reference_O_Qj}
\mathcal J_{ij,\Delta}
=
O(\mathbb P(Q_j)).
\end{equation}
Combining \eqref{eq:cac_pairwise_tie_o_reference} and
\eqref{eq:cac_pairwise_reference_O_Qj} proves
\eqref{eq:cac_pairwise_tie_positive_target}.

Consequently, using \(\mathcal D_i^{>}=\mathbb P(\mathcal W_i^{>})\) and
\(\mathbb P(Q_i)=\mathcal D_i^{>}(1+o(1))\),
\begin{align}
T
\frac{
\mathbb P(\mathcal E_{ij}^{=})
}{
\mathcal D_i^{>}
}
&=
T\,o\!\left(
\frac{\mathbb P(Q_j)}{\mathbb P(Q_i)}
\right).
\label{eq:cac_pairwise_tie_positive_contribution}
\end{align}

Combining the bulk estimate \eqref{eq:cac_positive_bulk_final}, the
noncritical estimate \eqref{eq:cac_positive_noncritical_tail}, the
upper-endpoint estimate \eqref{eq:cac_endpoint_contribution}, the singular-layer
contribution \eqref{eq:cac_singular_layer_contribution}, and the pairwise
equality correction \eqref{eq:cac_pairwise_tie_positive_contribution}, we obtain
\[
% \label{eq:cac_positive_branch_final}
\mathbb E[\min\{T,N_j\}\mid Q_i]
\le
\frac{\Delta_i}{\delta_{ij}}
\left(1+O(\Delta_1^{\kappa-1})\right)
+
T\,o\!\left(
\frac{\mathbb P(Q_j)}{\mathbb P(Q_i)}
\right).
\]
This proves the branch \(p_i>p_j\).

\paragraph{Large-deviation branch: \(p_j>p_i\).}
Recall 
\(
% \label{eq:cac_large_deviation_gap_def}
\delta_{ji}:=p_j-p_i>0
\). 
The shifted level in Lemma~\ref{lem:cac_pointwise_capped_first_passage} is
\(
% \label{eq:cac_large_deviation_shifted_level_def}
y=\ell+\delta_{ji}
\). 
By \eqref{eq:app_asymptotic_gap_scale} and \eqref{eq:app_asymptotic_eta_def},
\[
% \label{eq:cac_large_deviation_gap_scale}
\delta_{ji}
=
\Theta(\eta_\Delta),
\qquad
\Delta_1\delta_{ji}\to\infty,
\qquad
\Delta_1\delta_{ji}^2\to0 .
\]
There is no interface singularity in this branch, because \(y=\ell+\delta_{ji}>0\)
for every \(\ell\ge0\). We first evaluate the lower-endpoint contribution
\(0\le\ell\le\eta_\Delta\), and then show that the remaining contribution
\(\ell>\eta_\Delta\) is negligible.

On \(0\le\ell\le\eta_\Delta\), we have
\[
% \label{eq:cac_large_deviation_y_local}
\delta_{ji}
\le
y
=
\ell+\delta_{ji}
\le
\delta_{ji}+\eta_\Delta
=
O(\eta_\Delta).
\]
Thus every positive shifted level used in the local part of this branch is
inside the same \(O(\eta_\Delta)\) lower-endpoint window. In particular, the
expansions of \(\theta_j(y)\), \(v_j(y)\), and the one-arm tail envelopes are
used only uniformly over \(0<y\le C\eta_\Delta\).

Therefore, the lower-endpoint expansions apply uniformly. In particular,
\begin{equation}
\label{eq:cac_large_deviation_v_lower}
v_j(y)
\ge
y(1-O(\eta_\Delta))
=
(\ell+\delta_{ji})(1-O(\eta_\Delta)).
\end{equation}
Moreover,
\begin{equation}
\label{eq:cac_large_deviation_boundary_identity}
b_j(y)
=
\Delta_j+\beta(\ell+\delta_{ji})
=
\Delta_i+\beta\ell
=
b_i(\ell).
\end{equation}
The negative-drift part of Lemma~\ref{lem:cac_pointwise_capped_first_passage}
therefore gives
\begin{equation}
\label{eq:cac_large_deviation_pointwise_start}
\mathcal H_{j|i,T}(\ell)
\le
\exp\{-\theta_j(y)b_i(\ell)\}
\frac{b_i(\ell)+p_i-\ell}{v_j(y)} .
\end{equation}

We next compare the tilted exponential in
\eqref{eq:cac_large_deviation_pointwise_start} with the strict \(j\)-tail
that is absent from \(\mathcal I_{j|i,T}\). By the lower side of
Corollary~\ref{cor:root_based_drawdown_tail_envelope} and
\eqref{eq:cac_large_deviation_boundary_identity},
\[
% \label{eq:cac_large_deviation_j_tail_lower}
\overline F_{j,>}(y)
\ge
\exp\left\{
-\theta_j(y)\bigl[b_i(\ell)+p_j-y\bigr]
\right\}.
\]
Since \(p_j-y=p_i-\ell\), we have
\begin{align}
\frac{
\exp\{-\theta_j(y)b_i(\ell)\}
}{
\overline F_{j,>}(y)
}
&\le
\exp\{\theta_j(y)(p_i-\ell)\}
\nonumber\\
&=
\exp\{\theta_j(y)(p_j-y)\}
\nonumber\\
&\le
\frac{1}{q_j},
\label{eq:cac_large_deviation_tail_ratio_cancellation}
\end{align}
where the last inequality follows from the Lundberg equation
\(q_j \exp\{\theta_j(y)(p_j-y)\}\le1\).

Combining \eqref{eq:cac_large_deviation_pointwise_start},
\eqref{eq:cac_large_deviation_v_lower}, and
\eqref{eq:cac_large_deviation_tail_ratio_cancellation}, and using
\(y=\ell+\delta_{ji}\), we obtain, uniformly for
\(0\le\ell\le\eta_\Delta\),
\[
% \label{eq:cac_large_deviation_effective_pointwise}
\mathcal H_{j|i,T}(\ell)
\le
\frac{b_i(\ell)+p_i-\ell}{\ell+\delta_{ji}}
\left(1+O(\eta_\Delta)\right)
\overline F_{j,>}(\ell+\delta_{ji}).
\]
Since \(\ell+\delta_{ji}\ge\delta_{ji}\) and
\(b_i(\ell)+p_i-\ell=\Delta_i+\beta\ell+p_i-\ell\le
\Delta_i+\beta\ell+p_i\), this implies
\begin{equation}
\label{eq:cac_large_deviation_effective_pointwise_simple}
\mathcal H_{j|i,T}(\ell)
\le
\frac{\Delta_i+\beta\ell+p_i}{\delta_{ji}}
\left(1+O(\eta_\Delta)\right)
\overline F_{j,>}(\ell+\delta_{ji}),
\qquad
0\le\ell\le\eta_\Delta .
\end{equation}

Substituting \eqref{eq:cac_large_deviation_effective_pointwise_simple} into
the integral over \([0,\eta_\Delta]\), we get
\[\begin{aligned}
\frac{\mathcal I_{j|i,T}([0,\eta_\Delta])}{\mathcal D_i^{>}}
&\le
\frac{1+O(\eta_\Delta)}{\delta_{ji}}
\frac{
\displaystyle
\int_0^{\eta_\Delta}
(\Delta_i+\beta\ell+p_i)
\overline F_{j,>}(\ell+\delta_{ji})
\prod_{\substack{r=1\\r\neq i,j}}^K
\overline F_{r,\ge}(\ell+p_r-p_i)
\,dF_i(\ell)
}{
\mathcal D_i^{>}
}
\nonumber\\
&\le
\frac{1+O(\eta_\Delta)}{\delta_{ji}}
\frac{
\displaystyle
\int_0^{\eta_\Delta}
(\Delta_i+\beta\ell+p_i)
\prod_{r\neq i}
\overline F_{r,\ge}(\ell+p_r-p_i)
\,dF_i(\ell)
}{
\mathcal D_i^{>}
}.
% \label{eq:cac_large_deviation_local_integral_start}
\end{aligned}\]
The last step uses
\(\overline F_{j,>}(\ell+\delta_{ji})
\le
\overline F_{j,\ge}(\ell+\delta_{ji})\).

We now apply the winner-weighted moment bounds. The leading-order equivalence
between strict and weak winner Stieltjes replacements, together with the tail
bound in Lemma~\ref{lem:cac_winner_conditioned_endpoint_localization}, gives
\[
% \label{eq:cac_large_deviation_zero_moment}
\frac{
\displaystyle
\int_0^{\eta_\Delta}
\prod_{r\neq i}
\overline F_{r,\ge}(\ell+p_r-p_i)
\,dF_i(\ell)
}{
\mathcal D_i^{>}
}
=
1+o(1).
\]
The \(m=1\) case of
Lemma~\ref{lem:cac_winner_conditioned_endpoint_localization} gives
\[
% \label{eq:cac_large_deviation_first_moment}
\frac{
\displaystyle
\int_0^{\eta_\Delta}
\ell
\prod_{r\neq i}
\overline F_{r,\ge}(\ell+p_r-p_i)
\,dF_i(\ell)
}{
\mathcal D_i^{>}
}
=
O(\Delta_1^{-1}).
\]
Therefore, using \(\beta=O(\Delta_1)\) and \(\Delta_i=\Theta(\Delta_1)\),
\begin{align}
\frac{\mathcal I_{j|i,T}([0,\eta_\Delta])}{\mathcal D_i^{>}}
&\le
\frac{1+O(\eta_\Delta)}{\delta_{ji}}
\left[
(\Delta_i+p_i)(1+o(1))
+
\beta O(\Delta_1^{-1})
\right]
\nonumber\\
&=
\frac{\Delta_i}{\delta_{ji}}
\left(1+o(1)\right).
\label{eq:cac_large_deviation_local_final}
\end{align}

It remains to prove that the contribution of \((\eta_\Delta,p_i)\) is negligible.
For \(\ell>\eta_\Delta\), the shifted level satisfies \(y=\ell+\delta_{ji}\ge\eta_\Delta\).
After the same strict-tail cancellation as in
\eqref{eq:cac_large_deviation_tail_ratio_cancellation}, the effective
pointwise envelope is bounded by
\begin{equation}
\label{eq:cac_large_deviation_tail_effective_pointwise}
\mathcal H_{j|i,T}(\ell)
\le
C\Delta_1
\left[
\frac{1}{\eta_\Delta}
+
\frac{1}{p_i-\ell}
\right]
\overline F_{j,>}(\ell+\delta_{ji}),
\qquad
\eta_\Delta<\ell<p_i .
\end{equation}
Indeed, this follows from
\(
v_j(\ell+\delta_{ji})
\asymp
\min\{\ell+\delta_{ji},\,p_i-\ell\}
\), 
together with \(\ell+\delta_{ji}\ge\eta_\Delta\) and \(b_i(\ell)+p_i-\ell=O(\Delta_1)\).

Substituting \eqref{eq:cac_large_deviation_tail_effective_pointwise} gives
\begin{align}
\frac{\mathcal I_{j|i,T}((\eta_\Delta,p_i))}{\mathcal D_i^{>}}
&\le
C\Delta_1
\frac{
\displaystyle
\int_{\eta_\Delta}^{p_i}
\left[
\eta_\Delta^{-1}+(p_i-\ell)^{-1}
\right]
\overline F_{j,>}(\ell+\delta_{ji})
\prod_{\substack{r=1\\r\neq i,j}}^K
\overline F_{r,\ge}(\ell+p_r-p_i)
\,dF_i(\ell)
}{
\mathcal D_i^{>}
}
\nonumber\\
&\le
C\Delta_1
\frac{
\displaystyle
\int_{\eta_\Delta}^{p_i}
\left[
\eta_\Delta^{-1}+(p_i-\ell)^{-1}
\right]
\prod_{r\neq i}
\overline F_{r,\ge}(\ell+p_r-p_i)
\,dF_i(\ell)
}{
\mathcal D_i^{>}
}.
\label{eq:cac_large_deviation_tail_integral_start}
\end{align}
The term with \(\eta_\Delta^{-1}\) is bounded by
\begin{equation}
C\frac{\Delta_1}{\eta_\Delta}
\frac{
\displaystyle
\int_{\eta_\Delta}^{p_i}
\prod_{r\neq i}
\overline F_{r,\ge}(\ell+p_r-p_i)
\,dF_i(\ell)
}{
\mathcal D_i^{>}
}
\le
C\frac{\Delta_1}{\eta_\Delta}
\exp\{-c\Delta_1\eta_\Delta\}
=
O(\Delta_1^{-\infty})
\label{eq:cac_large_deviation_tail_eta_part}
\end{equation}
by Lemma~\ref{lem:cac_winner_conditioned_endpoint_localization} with \(x=\eta_\Delta\).

For the term with \((p_i-\ell)^{-1}\), split
\((\eta_\Delta,p_i)\) into the part away from the upper endpoint and the
upper-endpoint part. On \((\eta_\Delta,p_i-\varepsilon]\), the algebraic factor
is bounded by \(\varepsilon^{-1}\). Hence the weighted tail-localization
estimate gives
\begin{equation}
\label{eq:cac_large_deviation_tail_away_from_endpoint}
C\Delta_1
\frac{
\displaystyle
\int_{\eta_\Delta}^{p_i-\varepsilon}
(p_i-\ell)^{-1}
\prod_{r\neq i}
\overline F_{r,\ge}(\ell+p_r-p_i)
\,dF_i(\ell)
}{
\mathcal D_i^{>}
}
=
O(\Delta_1^{-\infty}).
\end{equation}
It remains to consider \(p_i-\varepsilon<\ell<p_i\). For this part, we use the
same upper-endpoint dyadic estimate as in
\eqref{eq:cac_endpoint_contribution}. Namely, for \(0<s\le\varepsilon\),
\begin{equation}
\label{eq:cac_large_deviation_upper_weighted_tail}
\frac{
\displaystyle
\int_{p_i-s}^{p_i}
\prod_{r\neq i}
\overline F_{r,\ge}(\ell+p_r-p_i)
\,dF_i(\ell)
}{
\mathcal D_i^{>}
}
\le
C\exp\left\{
-\frac{c\Delta_1}{s}
\right\}.
\end{equation}
Let \(s_n:=2^{-n}\varepsilon\), \(n=0,1,2,\ldots\). Since
\((p_i-\ell)^{-1}\le s_{n+1}^{-1}\) on
\(\{p_i-s_n<\ell\le p_i-s_{n+1}\}\), \eqref{eq:cac_large_deviation_upper_weighted_tail}
implies
\begin{align}
C\Delta_1
\frac{
\displaystyle
\int_{p_i-\varepsilon}^{p_i}
(p_i-\ell)^{-1}
\prod_{r\neq i}
\overline F_{r,\ge}(\ell+p_r-p_i)
\,dF_i(\ell)
}{
\mathcal D_i^{>}
}&
\le
C\Delta_1
\sum_{n=0}^{\infty}
\frac{1}{s_{n+1}}
\exp\left\{
-\frac{c\Delta_1}{s_n}
\right\}
\nonumber\\
&
\le
C\Delta_1
\sum_{n=0}^{\infty}
\frac{2^{n+1}}{\varepsilon}
\exp\left\{
-\frac{c2^n\Delta_1}{\varepsilon}
\right\}
=
O(\Delta_1^{-\infty}).
\label{eq:cac_large_deviation_tail_endpoint_part}
\end{align}
Combining \eqref{eq:cac_large_deviation_tail_integral_start},
\eqref{eq:cac_large_deviation_tail_eta_part},
\eqref{eq:cac_large_deviation_tail_away_from_endpoint}, and
\eqref{eq:cac_large_deviation_tail_endpoint_part}, we obtain
\begin{equation}
\label{eq:cac_large_deviation_tail_contribution}
\frac{\mathcal I_{j|i,T}((\eta_\Delta,p_i))}{\mathcal D_i^{>}}
=
O(\Delta_1^{-\infty}).
\end{equation}

Combining \eqref{eq:cac_large_deviation_local_final} and
\eqref{eq:cac_large_deviation_tail_contribution}, and using
\(\eta_\Delta=\Delta_1^{-\kappa}\), yields
\[
% \label{eq:cac_large_deviation_closed_final}
\frac{\mathcal I_{j|i,T}([0,p_i))}{\mathcal D_i^{>}}
\le
\frac{\Delta_i}{\delta_{ji}}
\left(1+o(1)\right).
\]

The strict-crossing bound only misses the pairwise boundary part where
\(M_j=M_i\). This pairwise boundary part is \(\mathcal E_{ij}^{=}\), and it is
contained in the global strict--weak gap
\(\mathcal W_i^{\ge}\setminus\mathcal W_i^{>}\). Therefore, by
\eqref{eq:mfc_strict_weak_gap_negligible}, applied to arm \(i\),
\[
% \label{eq:cac_large_deviation_pairwise_tie_negligible}
\mathbb P(\mathcal E_{ij}^{=})
\le
\mathbb P(\mathcal W_i^{\ge})
-
\mathbb P(\mathcal W_i^{>})
=
o(\mathcal D_i^{>}),
\qquad
\mathcal D_i^{>}:=\mathbb P(\mathcal W_i^{>}).
\]
Together with \(\mathbb P(Q_i)=\mathcal D_i^{>}(1+o(1))\), its finite-horizon contribution is
\[
% \label{eq:cac_large_deviation_boundary_correction}
T
\frac{
\mathbb P(\mathcal E_{ij}^{=})
}{
\mathbb P(Q_i)
}
=
T\,o(1).
\]
Therefore,
\[
% \label{eq:cac_large_deviation_branch_final}
\mathbb E[\min\{T,N_j\}\mid Q_i]
\le
\frac{\Delta_i}{\delta_{ji}}
\left(1+o(1)\right)
+
T\,o(1),
\]
which proves the branch \(p_j>p_i\).

This proves both noncritical branches in \eqref{eq:cac_piecewise_upper_main} and completes the proof.

\end{proof}

\section{Finite-Horizon Cumulative Regret Envelope}
\label{app:cumulative_regret_envelope}

This section converts the absorption-based regret skeleton into explicit finite-horizon regret envelopes. The leading term is the suboptimal-absorption regret: on \(Q_i\), a suboptimal arm \(i\ge2\) is eventually pulled for almost all large times, producing the contribution \(T\delta_i\mathbb P(Q_i)\). The remaining terms are finite transient corrections, coming from pulls spent away from the absorbing arm before the trajectory settles.

We first state a root-based two-sided envelope by combining three ingredients already established above: the absorption-based regret decomposition, the root-based score-minimum envelopes for \(\mathbb P(Q_i)\), and the winner-conditioned strict-crossing upper envelope for capped abandonment counts. The equality part of the score-minimum comparison is kept as a finite-horizon correction. This step is an assembly step; no new boundary-crossing estimate is needed.

We then specialize the envelope to the macroscopic lower-endpoint regime. The absorbing probabilities are replaced by their closed-form approximations \(\widetilde{\mathbb P}(Q_i)\), obtained by rearranging the suboptimal-convergence formula into a form adapted to the regret sum. The transient correction on suboptimal absorbing branches is super-polynomially small, while branches absorbing into an optimal arm contribute the only non-negligible deterministic abandonment correction. The equality corrections are controlled in pairwise form. In the positive-drift case, the boundary event \(M_j=M_i\) is shown to be negligible on the suboptimal-convergence scale of arm \(j\), which allows the corresponding optimal-branch equality corrections to be absorbed into \(T\mathcal P_\Delta o(1)\). This yields a finite-horizon regret sandwich centered at \(T\sum_{\substack{i=2, p_i<p_1}}^{K}\delta_i\widetilde{\mathbb P}(Q_i)\), with an explicit upper transient term.

Throughout this section, \(T\) is an external finite-horizon parameter. The asymptotic notation refers only to the limit \(\Delta_1\to\infty\) under the stated regularization and gap regime. We therefore keep the horizon dependence explicit through factors such as \(T\), \(T\,o(\mathbb P(Q_j))\), and \(T\mathcal P_\Delta o(1)\), and we do not assign a separate asymptotic scale to \(T\).

\subsection{Root-Based Two-Sided Regret Envelope}
\label{app:regret_root_based_envelope}

We first combine the absorption-based regret skeleton with the root-based
score-minimum envelopes and the pairwise winner-conditioned strict-crossing
upper envelope for capped abandonment counts. This subsection is only an
assembly step.

Throughout this subsection, we use the shifted-root convention from
Proposition~\ref{prop:root_based_drawdown_representation}: whenever
\(\ell_i+p_r-p_i\le0\), the corresponding root
\(\theta_r(\ell_i+p_r-p_i)\) is interpreted as \(0\), and the corresponding
tail factor is interpreted as one. For each arm \(i\), define
\[\begin{aligned}
\mathcal P_i^{-}
&:=
\int_{[0,p_i)}
\exp\left\{
-
\bigl[\Delta_i+\beta\ell_i+p_i-\ell_i\bigr]
\sum_{r\neq i}\theta_r(\ell_i+p_r-p_i)
\right\}
\,dF_i(\ell_i),
\\
% \label{eq:regret_root_P_minus_def}
\mathcal P_i^{+}
&:=
\int_{[0,p_i)}
\exp\left\{
-
\bigl[\Delta_i+\beta\ell_i\bigr]
\sum_{r\neq i}\theta_r(\ell_i+p_r-p_i)
\right\}
\,dF_i(\ell_i).
% \label{eq:regret_root_P_plus_def}
\end{aligned}\]
The root-tail bounds and the winner sandwich imply
\begin{equation}
\label{eq:regret_root_P_envelope}
\mathcal P_i^{-}
\le
\mathbb P(\mathcal W_i^{>})
\le
\mathbb P(Q_i)
\le
\mathbb P(\mathcal W_i^{\ge})
\le
\mathcal P_i^{+}.
\end{equation}

We next recall the pairwise equality event from the conditional-abandonment
representation. For \(i\neq k\), let
\[
% \label{eq:regret_pairwise_tie_event_recall}
\mathcal E_{ik}^{=}
:=
\mathcal W_i^{\ge}\cap\{M_k=M_i\}.
\]
This is the only equality event that can invalidate the strict-crossing upper
bound for the abandoned count of arm \(k\) on the absorbing branch \(Q_i\).
The exact Stieltjes representation from
\eqref{eq:cac_pairwise_tie_probability_stieltjes} gives
\[
\mathbb P(\mathcal E_{ik}^{=})
=
\int_{[0,p_i)}
\left[
\overline F_{k,\ge}(\ell_i+p_k-p_i)
-
\overline F_{k,>}(\ell_i+p_k-p_i)
\right]
\times
\prod_{\substack{r=1\\r\neq i,k}}^{K}
\overline F_{r,\ge}(\ell_i+p_r-p_i)
\,dF_i(\ell_i).
% \label{eq:regret_pairwise_tie_stieltjes_recall}
\]
Shifted levels below zero contribute equal strict and weak tail factors and
therefore make zero contribution to the difference.

For \(i\neq k\), define the joint transient upper envelope
\[\begin{aligned}
\mathcal C_{k|i,T}^{+}
&:=
\int_{[0,p_i)}
\mathcal H_{k|i,T}(\ell_i)
\prod_{\substack{r=1\\ r\neq i,k}}^{K}
\overline F_{r,\ge}(\ell_i+p_r-p_i)
\,dF_i(\ell_i)
+
T\mathbb P(\mathcal E_{ik}^{=}).
% \label{eq:regret_root_C_plus_def}
\end{aligned}\]
Here \(\mathcal H_{k|i,T}\) is the strict-crossing capped first-passage
envelope from Lemma~\ref{lem:cac_pointwise_capped_first_passage}. The weak tails
for the remaining competitors match the pairwise winner-conditioned
representation. The final term controls only the boundary event \(M_k=M_i\),
which is the only equality obstruction relevant for the abandoned count of arm \(k\). The factor \(T\) is kept explicitly as the finite-horizon multiplier.

The corrected strict-crossing numerator bound gives
\begin{align}
\mathbb E\!\left[
\min\{T,N_k\}\mathbf 1_{Q_i}
\right]
&\le
\int_{[0,p_i)}
\mathcal H_{k|i,T}(\ell_i)
\prod_{\substack{r=1\\ r\neq i,k}}^{K}
\overline F_{r,\ge}(\ell_i+p_r-p_i)
\,dF_i(\ell_i)
+
T\mathbb P(\mathcal E_{ik}^{=})
\nonumber\\
&=
\mathcal C_{k|i,T}^{+}.
\label{eq:regret_root_joint_transient_envelope}
\end{align}
This is the joint version of the pairwise conditional-abandonment envelope.

\begin{proposition}[Root-based two-sided regret envelope]
\label{prop:regret_root_based_two_sided}
For every finite horizon \(T\),
\begin{equation}
R(T)
\ge
T\sum_{\substack{i=2\\p_i<p_1}}^{K}\delta_i\mathcal P_i^{-}
-
\sum_{i=1}^{K}
\sum_{\substack{j\neq i\\p_j>p_i}}
(p_j-p_i)\,
\mathcal C_{j|i,T}^{+},
\label{eq:regret_root_based_lower}
\end{equation}
and
\begin{equation}
R(T)
\le
T\sum_{\substack{i=2\\p_i<p_1}}^{K}\delta_i\mathcal P_i^{+}
+
\sum_{i=1}^{K}
\sum_{\substack{j\neq i\\p_j<p_i}}
(p_i-p_j)\,
\mathcal C_{j|i,T}^{+}.
\label{eq:regret_root_based_upper}
\end{equation}
\end{proposition}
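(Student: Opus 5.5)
The plan is to start from the absorption-based regret skeleton in Proposition~\ref{prop:rd_regret_skeleton} and replace each quantity in it by its root-based envelope. No new probabilistic estimate is needed; the only care required is in matching signs and in how conditional expectations are multiplied by branch probabilities.

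For the leading linear term, every coefficient \(T\delta_i\) with \(p_i<p_1\) is nonnegative. The sandwich \eqref{eq:regret_root_P_envelope}, \(\mathcal P_i^{-}\le\mathbb P(Q_i)\le\mathcal P_i^{+}\), can therefore be substituted directly. Use \(\mathcal P_i^{-}\) in the lower bound \eqref{eq:rd_regret_lower_skeleton} and \(\mathcal P_i^{+}\) in the upper bound \eqref{eq:rd_regret_upper_skeleton}. These two envelopes come from Proposition~\ref{prop:root_based_drawdown_representation}, which rests on Lemma~\ref{lem:sm_absorbing_event_sandwich} and Corollary~\ref{cor:root_based_drawdown_tail_envelope}.

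For the transient terms, first rewrite each product \(\mathbb P(Q_i)\,\mathbb E[\min\{T,N_j\}\mid Q_i]\) as \(\mathbb E[\min\{T,N_j\}\mathbf 1_{Q_i}]\). This identity is exact when \(\mathbb P(Q_i)>0\). When \(\mathbb P(Q_i)=0\), both sides vanish under the stated convention. Next, apply the joint numerator bound \eqref{eq:regret_root_joint_transient_envelope}, \(\mathbb E[\min\{T,N_j\}\mathbf 1_{Q_i}]\le\mathcal C^{+}_{j|i,T}\). That bound was obtained from the pathwise strict-crossing domination
\[
\min\{T,N_j\}\mathbf 1_{Q_i}\le \min\{T,\tau^{>}_{j|i}(L_i)\}\mathbf 1_{\mathcal W_i^{\ge}\cap\{M_j<M_i\}}+T\mathbf 1_{\mathcal E^{=}_{ij}},
\]
then conditioning on \(L_i\), using independence of the potential streams, and finally using the pointwise bound of Lemma~\ref{lem:cac_pointwise_capped_first_passage}.

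The transient sums enter the skeleton with definite signs. In the lower bound, \(-\sum(p_j-p_i)\,\mathbb E[\min\{T,N_j\}\mathbf 1_{Q_i}]\) has nonnegative weights \(p_j-p_i\) and a minus sign, so replacing each expectation by the larger quantity \(\mathcal C^{+}_{j|i,T}\) only decreases the right-hand side. This gives \eqref{eq:regret_root_based_lower}. In the upper bound, the weights \(p_i-p_j\) are positive and the sign is plus, so the same replacement gives \eqref{eq:regret_root_based_upper}.

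The only step requiring genuine attention is the legitimacy of \eqref{eq:regret_root_joint_transient_envelope} as an unconditional numerator bound. I would check three points.

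First, the event \(\{M_j<M_i\}\) on \(\mathcal W_i^{\ge}\), after conditioning on \(L_i=\ell\), is exactly the strict crossing of arm \(j\) at the shifted level. This crossing is encoded by \(\mathbf 1\{\tau^{>}_{j|i}(\ell)<\infty\}\).

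Second, the remaining competitors \(r\ne i,j\) enter only through the weak factors \(\overline F_{r,\ge}\). This follows because \(\mathcal W_i^{\ge}\) requires \(L_r\ge \ell+p_r-p_i\), and those \(L_r\) are independent of arm \(j\)'s stream.

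Third, the atom case \(M_j=M_i\) is absorbed by \(T\,\mathbb P(\mathcal E^{=}_{ij})\).

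Once these three points are confirmed, the proposition follows from the skeleton by monotone substitution.
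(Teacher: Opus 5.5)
Your proposal is correct and matches the paper's proof. The paper likewise starts from Proposition~\ref{prop:rd_regret_skeleton}, rewrites $\mathbb P(Q_i)\,\mathbb E[\min\{T,N_j\}\mid Q_i]$ as $\mathbb E[\min\{T,N_j\}\mathbf 1_{Q_i}]$, and substitutes the sandwich $\mathcal P_i^{-}\le\mathbb P(Q_i)\le\mathcal P_i^{+}$ and the joint bound $\mathbb E[\min\{T,N_j\}\mathbf 1_{Q_i}]\le\mathcal C^{+}_{j|i,T}$, with the same sign bookkeeping you give. Your three checks on the joint numerator bound (strict crossing $\Leftrightarrow\{M_j<M_i\}$, weak factors for the remaining competitors, and the $T\,\mathbb P(\mathcal E^{=}_{ij})$ atom correction) repeat the pathwise argument the paper already establishes in Appendix~\ref{app:cac_representation}. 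They are consistent with the paper's assembly step rather than an extra ingredient.
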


\begin{proof}\leavevmode
We first consider the lower bound. By
Proposition~\ref{prop:rd_regret_skeleton},
\[
\begin{aligned}
R(T)
&\ge
T\sum_{\substack{i=2\\p_i<p_1}}^{K}\delta_i\mathbb P(Q_i)
-
\sum_{i=1}^{K}
\mathbb P(Q_i)
\sum_{\substack{j\neq i\\p_j>p_i}}
(p_j-p_i)\,
\mathbb E[\min\{T,N_j\}\mid Q_i]
\\
&=
T\sum_{\substack{i=2\\p_i<p_1}}^{K}\delta_i\mathbb P(Q_i)
-
\sum_{i=1}^{K}
\sum_{\substack{j\neq i\\p_j>p_i}}
(p_j-p_i)\,
\mathbb E\!\left[
\min\{T,N_j\}\mathbf 1_{Q_i}
\right].
\end{aligned}
\]
Using the lower probability envelope in
\eqref{eq:regret_root_P_envelope} and the joint transient upper envelope in
\eqref{eq:regret_root_joint_transient_envelope}, we obtain
\[
R(T)
\ge
T\sum_{\substack{i=2\\p_i<p_1}}^{K}\delta_i\mathcal P_i^{-}
-
\sum_{i=1}^{K}
\sum_{\substack{j\neq i\\p_j>p_i}}
(p_j-p_i)\,
\mathcal C_{j|i,T}^{+},
\]
which proves \eqref{eq:regret_root_based_lower}.

For the upper bound, Proposition~\ref{prop:rd_regret_skeleton} gives
\[
\begin{aligned}
R(T)
&\le
T\sum_{\substack{i=2\\p_i<p_1}}^{K}\delta_i\mathbb P(Q_i)
+
\sum_{i=1}^{K}
\mathbb P(Q_i)
\sum_{\substack{j\neq i\\p_j<p_i}}
(p_i-p_j)\,
\mathbb E[\min\{T,N_j\}\mid Q_i]
\\
&=
T\sum_{\substack{i=2\\p_i<p_1}}^{K}\delta_i\mathbb P(Q_i)
+
\sum_{i=1}^{K}
\sum_{\substack{j\neq i\\p_j<p_i}}
(p_i-p_j)\,
\mathbb E\!\left[
\min\{T,N_j\}\mathbf 1_{Q_i}
\right].
\end{aligned}
\]
Using the upper probability envelope in
\eqref{eq:regret_root_P_envelope} and applying
\eqref{eq:regret_root_joint_transient_envelope} with absorbing arm \(i\)
and abandoned arm \(j\), we obtain
\[
R(T)
\le
T\sum_{\substack{i=2\\p_i<p_1}}^{K}\delta_i\mathcal P_i^{+}
+
\sum_{i=1}^{K}
\sum_{\substack{j\neq i\\p_j<p_i}}
(p_i-p_j)\,
\mathcal C_{j|i,T}^{+},
\]
which proves \eqref{eq:regret_root_based_upper}.

\end{proof}

The bounds in Proposition~\ref{prop:regret_root_based_two_sided} have the same
structure as the absorption-based regret skeleton. The terms
\(\mathcal P_i^{-}\) and \(\mathcal P_i^{+}\) control the linear regret from
suboptimal absorbing arms, while the terms \(\mathcal C_{k|i,T}^{+}\) control
the finite-horizon transient corrections. Absorption into an arm tied with an optimal arm has zero linear regret because its regret gap is zero. Likewise, transient pulls between equal-mean arms do not enter either side of the centered regret envelope. The equality correction is included in pairwise form through \(T\mathbb P(\mathcal E_{ik}^{=})\), because only the
boundary event \(M_k=M_i\) can obstruct the strict-crossing upper bound for the
abandoned count of arm \(k\).

\subsection{Asymptotic Closed-Form Regret Envelope}
\label{app:regret_asymptotic_closed_form}

We now specialize the root-based regret envelope to the macroscopic lower-endpoint regime under Assumption~\ref{ass:app_asymptotic_regime}. The common intermediate scale in \eqref{eq:app_asymptotic_gap_scale} ensures that Proposition~\ref{prop:cac_asymptotic_conditional_abandonment_upper} applies to every distinct-mean pair. Equal-mean pairs do not contribute to the centered transient corrections because their coefficient \(p_i-p_j\) is zero.

Recall
\[
% \label{eq:regret_asymptotic_lambda_convention}
\lambda_h:=\frac{2\Delta_h}{p_h(1-p_h)},
\qquad
\delta_{ij}:=p_i-p_j\quad (i<j),
\qquad
\delta_i:=\delta_{1i}.
\]
For every \(i=2,\ldots,K\) with \(p_i<p_1\), define the closed-form suboptimal-absorption approximation
\begin{equation}
\widetilde{\mathbb P}(Q_i)
:=
\sum_{m=i}^{K}\frac{1}{m}
\exp\left\{
-\sum_{h=1}^{m}\lambda_h\delta_{hm}
\right\}
\left[
1
-
\exp\left\{
-\left(
\sum_{h=1}^{m}\lambda_h
\right)\delta_{m,m+1}
\right\}
\right],
\label{eq:regret_asymptotic_PQi_tilde_def}
\end{equation}
where \(\delta_{hh}:=0\), and for \(m=K\), the bracket is interpreted as \(1\). Empty sums are interpreted as zero. Finally, set
\[
\mathcal P_\Delta
:=
\sum_{\substack{i=2\\p_i<p_1}}^{K}
\delta_i\widetilde{\mathbb P}(Q_i),
\qquad
% \label{eq:regret_P_delta_def}
\mathcal C_\Delta
:=
(K-1)\Delta_1.
% \label{eq:regret_C_delta_def}
\]

\begin{proposition}[Asymptotic closed-form regret sandwich]
\label{prop:regret_asymptotic_closed_form}
Under Assumption~\ref{ass:app_asymptotic_regime}, for every finite-horizon parameter \(T\),
\begin{equation}
\label{eq:regret_asymptotic_sandwich}
T\mathcal P_\Delta(1-o(1))
-
O(\Delta_1^{-\infty})
\le
R(T)
\le
T\mathcal P_\Delta(1+o(1))
+
\mathcal C_\Delta(1+o(1)).
\end{equation}
Here the relative \(o(1)\) factors are with respect to
\(\Delta_1\to\infty\). The horizon \(T\) is kept as an external
finite-horizon parameter and is not assigned a separate asymptotic scale.
\end{proposition}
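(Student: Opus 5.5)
I would start from the absorption-based skeleton, Proposition~\ref{prop:rd_regret_skeleton}, rather than from the root-based envelope. The reason is that the conditional-abandonment estimates of Proposition~\ref{prop:cac_asymptotic_conditional_abandonment_upper} are stated for $\mathbb E[\min\{T,N_j\}\mid Q_i]$. The skeleton writes $R(T)$ as the linear term $T\sum_{i\ge2,\,p_i<p_1}\delta_i\mathbb P(Q_i)$ plus or minus $\sum_i\mathbb P(Q_i)\sum_j |p_i-p_j|\,\mathcal U_{j|i,T}$. For the linear term, Proposition~\ref{prop:mfc_asymptotic_closed_form} gives $\mathbb P(Q_i)=\widetilde{\mathbb P}(Q_i)(1+o(1))$ for each suboptimal $i$. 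Since $K$ is fixed and all coefficients $\delta_i\ge0$, this yields $T\sum\delta_i\mathbb P(Q_i)=T\mathcal P_\Delta(1+o(1))$ on both sides. Equal-mean pairs have coefficient zero and drop out of both transient sums.

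\emph{Upper bound.} The transient sum has terms with $p_j<p_i$, and I split it by the absorbing branch $i$.

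\textbf{Optimal branches} ($p_i=p_1$). Here I use $\mathbb P(Q_i)\le1$ and the bound $\mathcal U_{j|i,T}\le \frac{\Delta_i}{\delta_{ij}}(1+O(\Delta_1^{\kappa-1}))+T\,o(\mathbb P(Q_j)/\mathbb P(Q_i))$. Multiplying by $\mathbb P(Q_i)(p_i-p_j)$ gives $\mathbb P(Q_i)\Delta_i(1+o(1))+T\,\delta_{j}\,o(\mathbb P(Q_j))$, where $\delta_{ij}=\delta_j$ because $p_i=p_1$. Since $\Delta_i=\Delta_1$ when $p_i=p_1$, summing over $j$ gives at most $(K-1)\Delta_1\sum_{i:p_i=p_1}\mathbb P(Q_i)(1+o(1))\le\mathcal C_\Delta(1+o(1))$. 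The remainder is $T\sum_j\delta_j\,o(\mathbb P(Q_j))=T\mathcal P_\Delta\,o(1)$. This is exactly the optimal-branch statement of Theorem~\ref{thm:main_conditional_abandonment_envelope}.

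\textbf{Suboptimal branches} ($p_i<p_1$). The weight is $\mathbb P(Q_i)$, of the same order as the linear term but without the factor $T$. Multiplying the bound $\Delta_i/\delta_{ij}(1+o(1))+T\,o(\mathbb P(Q_j)/\mathbb P(Q_i))$ by $\mathbb P(Q_i)(p_i-p_j)$ gives $\mathbb P(Q_i)\Delta_i(1+o(1))+T\,o(\mathbb P(Q_j))\delta_{ij}$. The second piece is $T\,o(\mathbb P(Q_j))\delta_j$, since $\delta_{ij}\le\delta_j$, so it is absorbed into $T\mathcal P_\Delta o(1)$. The first piece is $O(\Delta_1\mathbb P(Q_i))$. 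Because $\mathbb P(Q_i)$ decays like $\exp\{-c\Delta_1^{1-\kappa}\}$ (every exponent $\sum_h\lambda_h\delta_{hm}$ with $m\ge i\ge2$ is at least $\lambda_1\delta_{1m}=\Theta(\Delta_1^{1-\kappa})$), this is $O(\Delta_1^{-\infty})=o(\mathcal C_\Delta)$. Adding the optimal and suboptimal pieces gives the upper side of \eqref{eq:regret_asymptotic_sandwich}.

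\emph{Lower bound.} I subtract $\sum_i\mathbb P(Q_i)\sum_{p_j>p_i}(p_j-p_i)\mathcal U_{j|i,T}$. Only suboptimal branches $i$ (those with some $p_j>p_i$) appear, so every $\mathbb P(Q_i)$ here is super-polynomially small. Using the $p_j>p_i$ bound $\mathcal U_{j|i,T}\le\Delta_i/\delta_{ji}(1+o(1))+T\,o(1)$, each subtracted term is at most $\mathbb P(Q_i)\Delta_i(1+o(1))+T\,\mathbb P(Q_i)\,o(1)$. The first part is $O(\Delta_1^{-\infty})$. For the second part, $T\,\delta_{ji}\mathbb P(Q_i)\,o(1)\le T\,\delta_i\mathbb P(Q_i)\,o(1)$ whenever $j$ has $p_j\le p_1$, and $\mathbb P(Q_i)\le\widetilde{\mathbb P}(Q_i)(1+o(1))$, so it is absorbed into $T\mathcal P_\Delta o(1)$. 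This gives $T\mathcal P_\Delta(1-o(1))-O(\Delta_1^{-\infty})$.

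\emph{Main obstacle.} The $T\,o(1)$ remainder in the $p_j>p_i$ branch must be multiplied by $\mathbb P(Q_i)$ before it can be compared with $\mathcal P_\Delta$. I would check carefully that this product uses the joint quantity $\mathbb E[\min\{T,N_j\}\mathbf 1_{Q_i}]$. The remainder comes from $T\,\mathbb P(\mathcal E_{ij}^{=})=o(\mathcal D_i^{>})$, so it is indeed $T\,o(\mathbb P(Q_i))$. Comparing it with $\delta_i\mathbb P(Q_i)$ is then legitimate because $\delta_{ji}\le\delta_i$. I would also verify explicitly the super-polynomial decay of $\mathbb P(Q_i)$ for $p_i<p_1$ from the closed form, since the suboptimal-branch $\Delta_1\mathbb P(Q_i)$ terms rely on it.
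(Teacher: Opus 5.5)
Your proposal is correct and follows the paper's proof essentially step by step. You use the absorption skeleton, Proposition~\ref{prop:mfc_asymptotic_closed_form} for the linear term, and the optimal/suboptimal branch split of the upper transient sum, with $\delta_{ij}\le\delta_j$ and $\delta_{ji}\le\delta_i$ absorbing the $T\,o(\cdot)$ remainders into $T\mathcal P_\Delta o(1)$. The only minor difference is how you get super-polynomial decay of $\mathbb P(Q_i)$ for suboptimal $i$: you read it off the closed form $\widetilde{\mathbb P}(Q_i)$ through the relative error in Proposition~\ref{prop:mfc_asymptotic_closed_form}, whereas the paper derives it directly from the inclusion $Q_i\subseteq\{L_1\ge\delta_i\}$ and the one-arm tail bound; both are valid.
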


\begin{proof}\leavevmode
Let
\[
\mathcal I_{\mathrm{sub}}
:=
\{i\in\{2,\ldots,K\}:p_i<p_1\}
\]
denote the set of suboptimal arms.

We first identify the closed-form approximation of the absorbing
probabilities. By Proposition~\ref{prop:mfc_asymptotic_closed_form}, for every \(i\in\mathcal I_{\mathrm{sub}}\),
\begin{equation}
\label{eq:regret_PQi_tilde_equivalence}
\mathbb P(Q_i)
=
\widetilde{\mathbb P}(Q_i)(1+o(1)),
\end{equation}
where \(\widetilde{\mathbb P}(Q_i)\) is defined in
\eqref{eq:regret_asymptotic_PQi_tilde_def}. Since \(K\) is fixed, the relative
error in \eqref{eq:regret_PQi_tilde_equivalence} is uniform over
\(i\in\mathcal I_{\mathrm{sub}}\). Therefore,
\begin{equation}
\begin{aligned}
\sum_{\substack{i=2\\p_i<p_1}}^{K}\delta_i\mathbb P(Q_i)
&=
\sum_{i\in\mathcal I_{\mathrm{sub}}}
\delta_i\mathbb P(Q_i)
\\
&=
\sum_{i\in\mathcal I_{\mathrm{sub}}}
\delta_i\widetilde{\mathbb P}(Q_i)(1+o(1))
\\
&=
\mathcal P_\Delta(1+o(1)).
\end{aligned}
\label{eq:regret_sum_PQi_tilde_equivalence}
\end{equation}

We also record a super-polynomial upper bound for every suboptimal absorbing
probability. By the weak side of the score-minimum sandwich,
\(Q_i\subseteq\{M_i\ge M_1\}\) for
\(i\in\mathcal I_{\mathrm{sub}}\). Since \(M_r=p_r-L_r\), this implies
\[
p_i-L_i
\ge
p_1-L_1,
\]
and hence
\[
L_1
\ge
L_i+\delta_i
\ge
\delta_i
\qquad
\text{on }Q_i.
\]
Therefore, by the one-arm lower-endpoint envelope,
\[
\mathbb P(Q_i)
\le
\mathbb P(L_1\ge\delta_i)
\le
\exp\{-c\Delta_1\delta_i\}
=
O(\Delta_1^{-\infty}),
\qquad
i\in\mathcal I_{\mathrm{sub}},
\]
because
\(
\Delta_1\delta_i
=
\Theta(\Delta_1^{1-\kappa})
\to\infty
\). 
Since \(K\) is fixed, summing over the suboptimal arms gives
\begin{equation}
\sum_{i\in\mathcal I_{\mathrm{sub}}}\mathbb P(Q_i)
=
O(\Delta_1^{-\infty}).
\label{eq:regret_suboptimal_absorption_superpoly}
\end{equation}

We next control the centered transient correction in the regret lower bound.
For every pair \(j\neq i\) satisfying \(p_j>p_i\),
Proposition~\ref{prop:cac_asymptotic_conditional_abandonment_upper} gives
\[
\mathbb E[\min\{T,N_j\}\mid Q_i]
\le
\frac{\Delta_i}{p_j-p_i}(1+o(1))
+
T\,o(1).
\]
Multiplying by \(p_j-p_i\), we obtain
\[
(p_j-p_i)
\mathbb E[\min\{T,N_j\}\mid Q_i]
\le
\Delta_i(1+o(1))
+
T(p_j-p_i)o(1).
\]
The condition \(p_j>p_i\) implies \(p_i<p_1\), so only
\(i\in\mathcal I_{\mathrm{sub}}\) contributes. Moreover,
\[
p_j-p_i
\le
p_1-p_i
=
\delta_i.
\]
Consequently, using the uniform regularization scale
\(\Delta_i=\Theta(\Delta_1)\), the finiteness of \(K\), and
\eqref{eq:regret_suboptimal_absorption_superpoly},
\begin{equation}
\begin{aligned}
\sum_{i=1}^{K}\mathbb P(Q_i)
\sum_{\substack{j\neq i\\p_j>p_i}}
(p_j-p_i)\,
\mathbb E[\min\{T,N_j\}\mid Q_i]
&\le
C\Delta_1
\sum_{i\in\mathcal I_{\mathrm{sub}}}\mathbb P(Q_i)
+
T\sum_{i\in\mathcal I_{\mathrm{sub}}}
\delta_i\mathbb P(Q_i)o(1)
\\
&=
O(\Delta_1^{-\infty})
+
T\mathcal P_\Delta o(1),
\end{aligned}
\label{eq:regret_common_lower_transient_total}
\end{equation}
where the last equality follows from
\eqref{eq:regret_sum_PQi_tilde_equivalence}.

We now control the upper transient correction on suboptimal absorbing
branches. For \(i\in\mathcal I_{\mathrm{sub}}\) and \(p_j<p_i\),
Proposition~\ref{prop:cac_asymptotic_conditional_abandonment_upper} gives
\[
\mathbb E[\min\{T,N_j\}\mid Q_i]
\le
\frac{\Delta_i}{p_i-p_j}
\left(1+O(\Delta_1^{\kappa-1})\right)
+
T\,o\!\left(
\frac{\mathbb P(Q_j)}{\mathbb P(Q_i)}
\right).
\]
Multiplying by \((p_i-p_j)\mathbb P(Q_i)\) yields
\[
(p_i-p_j)\mathbb P(Q_i)
\mathbb E[\min\{T,N_j\}\mid Q_i]
\le
\Delta_i\mathbb P(Q_i)
\left(1+O(\Delta_1^{\kappa-1})\right)
+
T(p_i-p_j)\mathbb P(Q_j)o(1).
\]
Since \(p_j<p_i\le p_1\), arm \(j\) is suboptimal and
\[
p_i-p_j
\le
p_1-p_j
=
\delta_j.
\]
It follows from \eqref{eq:regret_suboptimal_absorption_superpoly},
the finiteness of \(K\), and
\eqref{eq:regret_sum_PQi_tilde_equivalence} that
\begin{equation}
\begin{aligned}
\sum_{i\in\mathcal I_{\mathrm{sub}}}\mathbb P(Q_i)
\sum_{\substack{j\neq i\\p_j<p_i}}
(p_i-p_j)\,
\mathbb E[\min\{T,N_j\}\mid Q_i]
&\le
C\Delta_1
\sum_{i\in\mathcal I_{\mathrm{sub}}}\mathbb P(Q_i)
+
T\sum_{j\in\mathcal I_{\mathrm{sub}}}
\delta_j\mathbb P(Q_j)o(1)
\\
&=
O(\Delta_1^{-\infty})
+
T\mathcal P_\Delta o(1).
\end{aligned}
\label{eq:regret_common_upper_transient_total}
\end{equation}

It remains to control the upper transient correction on branches absorbing
into an optimal arm. If \(p_i=p_1\) and
\(j\in\mathcal I_{\mathrm{sub}}\), then
\[
p_i-p_j=\delta_j,
\qquad
\Delta_i=\Delta_1.
\]
Therefore,
Proposition~\ref{prop:cac_asymptotic_conditional_abandonment_upper} gives
\[
\mathbb E[\min\{T,N_j\}\mid Q_i]
\le
\frac{\Delta_1}{\delta_j}
\left(1+O(\Delta_1^{\kappa-1})\right)
+
T\,o\!\left(
\frac{\mathbb P(Q_j)}{\mathbb P(Q_i)}
\right).
\]
Multiplying by \(\delta_j\mathbb P(Q_i)\) gives
\[
\begin{aligned}
\delta_j\mathbb P(Q_i)
\mathbb E[\min\{T,N_j\}\mid Q_i]
\le
\Delta_1\mathbb P(Q_i)
\left(1+O(\Delta_1^{\kappa-1})\right)
+
T\delta_j\mathbb P(Q_j)o(1).
\end{aligned}
\]
Summing over all \(i\) satisfying \(p_i=p_1\) and all
\(j\in\mathcal I_{\mathrm{sub}}\), and using
\[
\sum_{\substack{i=1\\p_i=p_1}}^{K}\mathbb P(Q_i)
\le
1,
\qquad
|\mathcal I_{\mathrm{sub}}|
\le
K-1,
\]
we obtain
\begin{equation}
\begin{aligned}
\sum_{\substack{i=1\\p_i=p_1}}^{K}\mathbb P(Q_i)
\sum_{j\in\mathcal I_{\mathrm{sub}}}
\delta_j\,
\mathbb E[\min\{T,N_j\}\mid Q_i]
&\le
(K-1)\Delta_1
\left(1+O(\Delta_1^{\kappa-1})\right)
+
T\sum_{j\in\mathcal I_{\mathrm{sub}}}
\delta_j\mathbb P(Q_j)o(1)
\\
&=
\mathcal C_\Delta(1+o(1))
+
T\mathcal P_\Delta o(1).
\end{aligned}
\label{eq:regret_asymptotic_typical_branch_first_order}
\end{equation}

We now prove the regret lower bound. By
Proposition~\ref{prop:rd_regret_skeleton},
\[
R(T)
\ge
T\sum_{\substack{i=2\\p_i<p_1}}^{K}\delta_i\mathbb P(Q_i)
-
\sum_{i=1}^{K}\mathbb P(Q_i)
\sum_{\substack{j\neq i\\p_j>p_i}}
(p_j-p_i)\,
\mathbb E[\min\{T,N_j\}\mid Q_i].
\]
Using \eqref{eq:regret_common_lower_transient_total} and then
\eqref{eq:regret_sum_PQi_tilde_equivalence}, we obtain
\begin{equation}
\begin{aligned}
R(T)
&\ge
T\mathcal P_\Delta(1+o(1))
-
T\mathcal P_\Delta o(1)
-
O(\Delta_1^{-\infty})
\\
&=
T\mathcal P_\Delta(1-o(1))
-
O(\Delta_1^{-\infty}).
\end{aligned}
\label{eq:regret_lower_before_probability_insert}
\end{equation}

We next prove the regret upper bound. By
Proposition~\ref{prop:rd_regret_skeleton},
\[
\begin{aligned}
R(T)
&\le
T\sum_{\substack{i=2\\p_i<p_1}}^{K}\delta_i\mathbb P(Q_i)
\\
&\quad+
\sum_{\substack{i=1\\p_i=p_1}}^{K}\mathbb P(Q_i)
\sum_{j\in\mathcal I_{\mathrm{sub}}}
\delta_j\,
\mathbb E[\min\{T,N_j\}\mid Q_i]
\\
&\quad+
\sum_{i\in\mathcal I_{\mathrm{sub}}}\mathbb P(Q_i)
\sum_{\substack{j\neq i\\p_j<p_i}}
(p_i-p_j)\,
\mathbb E[\min\{T,N_j\}\mid Q_i].
\end{aligned}
\]
Using \eqref{eq:regret_sum_PQi_tilde_equivalence},
\eqref{eq:regret_common_upper_transient_total}, and
\eqref{eq:regret_asymptotic_typical_branch_first_order}, we obtain
\begin{align}
R(T)
&\le
T\mathcal P_\Delta(1+o(1))
+
\mathcal C_\Delta(1+o(1))
+
T\mathcal P_\Delta o(1)
+
O(\Delta_1^{-\infty})
\nonumber\\
&=
T\mathcal P_\Delta(1+o(1))
+
\mathcal C_\Delta(1+o(1)),
\label{eq:regret_upper_before_probability_insert}
\end{align}
where the super-polynomial term is absorbed into the final \(o(1)\)
remainder.

Combining \eqref{eq:regret_lower_before_probability_insert} and
\eqref{eq:regret_upper_before_probability_insert} proves
\eqref{eq:regret_asymptotic_sandwich}.

\end{proof}

Thus the finite-horizon regret is centered at the explicit suboptimal-absorption term \(T\mathcal P_\Delta\). The lower centered correction and the upper corrections from suboptimal absorbing branches are \(O(\Delta_1^{-\infty})+T\mathcal P_\Delta o(1)\), while the remaining upper correction arises from branches absorbing into an optimal arm and is bounded by the deterministic transient term \(\mathcal C_\Delta(1+o(1))\). The pairwise equality corrections generated by the strict-crossing representation are controlled pairwise: for each optimal absorbing arm \(i\) and suboptimal arm \(j\), the event \(M_j=M_i\) contributes only \(T\,o(\mathbb P(Q_j))\), and hence its regret contribution is absorbed into \(T\mathcal P_\Delta o(1)\).

\section{Pure Greedy Regime}
\label{app:pure_greedy_regime}

This section treats the pure greedy regime \((\alpha,\beta)=(0,0)\). In this case the score-minimum boundary-crossing formulation used for regularized greedy becomes singular at zero, because an arm with no observed success has empirical score exactly zero. The policy still pulls every arm once at initialization, so the correct starting point is the initial success set \(\mathcal S_0:=\{i:X_i=1\}\).

The pure-greedy trajectory decomposes into three branches. If \(\mathcal S_0=\varnothing\), all empirical scores are zero and the policy enters an absolute-zero renewal phase until the first success occurs. If \(|\mathcal S_0|=1\), the unique initially successful arm immediately dominates all zero-score arms and absorbs. If \(|\mathcal S_0|\ge2\), all initially unsuccessful arms are permanently discarded, and the initially successful arms evolve as a warm-started fixed \((1,1)\)-regularized greedy subproblem on the active set \(\mathcal S_0\). In this reduced subproblem, every active arm starts with score \(1\), corresponding to one prior success and one prior pull.

We first formalize this initial success-set bifurcation. We then evaluate the all-zero and singleton branches explicitly, reduce the multi-success branches to warm-started fixed \((1,1)\)-regularized subproblems, and combine the branchwise quantities into a finite-horizon regret sandwich. The multi-success subproblems are kept in root-based or numerical form rather than replaced by the macroscopic asymptotic envelopes, since their effective margins \(\Delta_i'=1-p_i\) are fixed and do not diverge.

\subsection{Initial Success-Set Bifurcation}
\label{app:pg_initial_success_set}

In the pure greedy regime, the empirical score after the first mandatory pull is exactly the first observed reward. Therefore the trajectory after initialization is determined by the initial success set
\(
% \label{eq:pg_initial_success_set_def}
\mathcal S_0:=\{i:X_i=1\}
\), 
where
\(
% \label{eq:pg_initial_outcome_def}
X_i\sim\mathrm{Bernoulli}(p_i),\,
i=1,\ldots,K
\), 
are independent first-pull rewards. For \(A\subseteq\{1,\ldots,K\}\), define
\(
% \label{eq:pg_initial_success_event_def}
E_A:=\{\mathcal S_0=A\}
\). 
Then
\[
% \label{eq:pg_initial_success_event_probability}
\mathbb P(E_A)
=
\prod_{i\in A}p_i
\prod_{j\notin A}(1-p_j).
\]

\begin{lemma}[Initial success-set bifurcation]
\label{lem:pg_initial_success_set_bifurcation}
Under the pure greedy policy, after the first mandatory pull of each arm, exactly one of the following cases occurs.

\begin{enumerate}
\item If \(A=\varnothing\), then all empirical scores are zero. The policy repeatedly breaks ties among all \(K\) arms until the first success occurs.

\item If \(A=\{i\}\), then arm \(i\) absorbs immediately:
\begin{equation}
\label{eq:pg_singleton_absorption}
E_{\{i\}}\subseteq Q_i,
\qquad
N_j=1\quad\text{for all }j\neq i .
\end{equation}

\item If \(|A|\ge2\), then every arm outside \(A\) is abandoned after its initial pull:
\begin{equation}
\label{eq:pg_multisuccess_outside_abandoned}
N_j=1,
\qquad
j\notin A .
\end{equation}
Conditional on \(E_A\), the future competition among the arms in \(A\) is exactly the warm-started \((1,1)\)-regularized greedy process on \(A\). More precisely, each active arm starts the reduced process with score \(1\). After \(n\ge0\) additional pulls of arm \(i\in A\), its score is 
\[  
% \label{eq:pg_substate_prior_score}
\widehat p_i^A(n) = \frac{1+S_i^A(n)}{1+n} = \frac{S_i^A(n)+\alpha'}{n+\beta'} \quad \text{with } \alpha'=1,\ \beta'=1 . 
\] 
Thus the reduced margins are \[  
% \label{eq:pg_substate_prior}
\Delta_i' = \alpha'-p_i\beta' = 1-p_i, \qquad i\in A . 
\]
\end{enumerate}
\end{lemma}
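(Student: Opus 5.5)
The proof is a direct pathwise argument on the score dynamics after time \(K\), split according to the three cases. After initialization every arm has \(N_i(K)=1\) and \(S_i(K)=X_i\), so with \((\alpha,\beta)=(0,0)\) the empirical score of arm \(i\) equals \(X_i\in\{0,1\}\). The first fact we use is that a pure-greedy score is always in \([0,1]\). A score equal to \(0\) stays \(0\) exactly as long as only failures are observed. A score equal to \(1\) can only fall to something strictly below \(1\) but strictly positive once the arm has at least one success.

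\emph{Case \(A=\varnothing\).} Here all scores are \(0\) and \(\mathcal G_{K+1}=\{1,\ldots,K\}\). The tie-breaking rule selects uniformly, and each failure leaves the pulled arm's score at \(0\). By induction on \(t\), the maximizing set therefore remains all \(K\) arms until the first success. We also note that this occurs at an almost surely finite time, since every \(p_i\ge\epsilon_{\mathrm p}>0\).

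\emph{Case \(A=\{i\}\).} At time \(K\) we have \(\widehat p_i=1\) and \(\widehat p_j=0\) for \(j\neq i\). The key claim, proved by induction, is that \(\widehat p_i(N_i(t))>0\) for all \(t\ge K\). This holds because \(S_i\ge1\) can never decrease while \(N_i\) is finite. The other arms are never pulled, so their scores stay frozen at \(0\) and arm \(i\) is the unique maximizer at every step. Hence \(A_t=i\) for all \(t>K\), which gives \(N_j=1\) and \(E_{\{i\}}\subseteq Q_i\).

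\emph{Case \(|A|\ge2\).} The same positivity argument shows that every \(k\in A\) keeps a strictly positive score forever, while every \(j\notin A\) sits at score \(0\). Thus \(\max_{k\in A}\widehat p_k>0=\widehat p_j\) at every decision, no arm outside \(A\) is selected again, and \(N_j=1\) for \(j\notin A\). For the reduction, we index the additional pulls of \(i\in A\) after initialization by \(n\ge0\), writing \(S_i(1+n)=1+S_i^A(n)\). The score then becomes \((1+S_i^A(n))/(1+n)\), which is exactly the regularized score with \((\alpha',\beta')=(1,1)\) at its own clock \(n\), starting from score \(1\) at \(n=0\). Since the selection rule, argmax set and uniform tie-breaking depend only on the scores of arms in \(A\), the conditional law of the continuation given \(E_A\) coincides with Algorithm~\ref{alg:regularized_greedy} on the active set \(A\) with parameters \((1,1)\), warm-started with no further initial pulls. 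The identity \(\Delta_i'=1-p_i\) follows by substitution.

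The only delicate point is the conditional-law claim: we must verify that the future reward streams of arms in \(A\) are independent of \(E_A\), which depends only on the first rewards. This follows from the independent potential-stream construction of Appendix~\ref{app:event_sandwich_score_minima}.
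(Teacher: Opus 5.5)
Your proposal is correct and follows the paper's own proof. The paper uses the same three-case pathwise argument: in the all-zero case, every score stays \(0\) and ties are broken uniformly until the first success. In the other cases, an initial success keeps \(S_i\ge 1\), so the score stays strictly positive and zero-score arms are never selected again; reindexing with \(S_i(1+n)=1+S_i^A(n)\) then gives the \((1,1)\)-regularized score. Your explicit inductions and your check that post-initialization rewards are independent of \(E_A\) make rigorous steps the paper states informally. The paper additionally remarks that the warm-start score \(1\) is almost surely not the score minimum and that every active arm is pulled again almost surely. That remark supports the later score-minimum analysis and is not part of the stated lemma.
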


\begin{proof}\leavevmode
After initialization, the empirical score of arm \(i\) is \(X_i\). If \(A=\varnothing\), all scores are zero, so the greedy rule has no strict maximizer and continues to tie-break among all arms until a selected arm succeeds.

If \(A=\{i\}\), then arm \(i\) has score \(1\), while every arm \(j\neq i\) has score \(0\). As long as arm \(i\) is pulled, its empirical score remains strictly positive because its cumulative number of successes is at least one. Hence every zero-score arm is permanently dominated, and arm \(i\) is selected forever. This proves \eqref{eq:pg_singleton_absorption}.

If \(|A|\ge2\), then each arm in \(A\) has one initial success and therefore has empirical score \(1\), while each arm outside \(A\) has score \(0\). Thus no arm outside \(A\) can ever be selected again, proving \eqref{eq:pg_multisuccess_outside_abandoned}. For \(i\in A\), let \(n\ge0\) be the number of additional pulls after initialization and let \(S_i^A(n)\) be the number of additional successes. The empirical score is 
\[ 
% \label{eq:pg_substate_score_identity}
\frac{1+S_i^A(n)}{1+n} = \frac{S_i^A(n)+\alpha'}{n+\beta'} \quad \text{with }(\alpha',\beta')=(1,1). 
\] 
Therefore the remaining dynamics on \(A\) coincide with a warm-started \((1,1)\)-regularized greedy process, and the corresponding margins are \(\Delta_i'=1-p_i\). Finally, this warm start is compatible with the score-minimum reduction used below. At the beginning of the reduced process every active arm has score \(1\). Since \(p_i\in(0,1)\), an active arm that remains at score \(1\) remains a maximizer whenever it is present; under uniform tie-breaking among maximizers, each active arm is pulled at least once after the branch starts, almost surely. Moreover, the initial reduced-time score \(1\) cannot be the score minimum, because the relevant limiting mean is \(p_i<1\). Hence including the initial warm-start state or starting the score-minimum analysis from the first additional pull gives the same potential score minimum almost surely. 
\end{proof}

This lemma separates the pure-greedy trajectory into an all-zero renewal branch, deterministic singleton branches, and fixed \((1,1)\)-regularized multi-success subproblems.

\subsection{Absolute-Zero Renewal and Singleton Absorption}
\label{app:pg_zero_renewal_singleton_absorption}

We now evaluate the two branches that can be resolved explicitly: the all-zero branch \(E_{\varnothing}\) and the singleton branches \(E_{\{i\}}\).

\paragraph{Absolute-zero renewal.}

On \(E_{\varnothing}\), all arms have empirical score zero after initialization. The policy therefore breaks ties uniformly among all \(K\) arms until the first success occurs. Let
\begin{equation}
\label{eq:pg_sum_p_def}
P_{\Sigma}:=\sum_{r=1}^{K}p_r .
\end{equation}
In each zero-renewal round, arm \(i\) is selected with probability \(1/K\) and succeeds with probability \(p_i\). Hence the probability that a given round ends with a success of arm \(i\) is \(p_i/K\), while the probability of any success is \(P_{\Sigma}/K\). Therefore,
\[
% \label{eq:pg_zero_branch_abs_prob}
\mathbb P(Q_i\mid E_{\varnothing})
=
\frac{p_i}{P_{\Sigma}},
\qquad
i=1,\ldots,K .
\]

Let \(G_0\) be the number of failed zero-renewal rounds before the first success. Since each round succeeds with probability \(P_{\Sigma}/K\),
\[
% \label{eq:pg_zero_branch_failure_count}
\mathbb E[G_0\mid E_{\varnothing}]
=
\frac{K-P_{\Sigma}}{P_{\Sigma}}.
\]
Conditional on a failed renewal round, the selected arm is not uniformly
distributed. Before conditioning on failure, the tie-breaking rule selects each
arm with probability \(1/K\). However, if arm \(j\) is selected, the round fails
with probability \(1-p_j\). Hence
\[
\mathbb P(\text{arm }j\text{ selected and failure})
=
\frac{1-p_j}{K},
\qquad
\mathbb P(\text{failure})
=
\frac{1}{K}\sum_{h=1}^K(1-p_h)
=
\frac{K-P_{\Sigma}}{K}.
\]
Therefore,
\[
% \label{eq:pg_zero_branch_failure_label}
\mathbb P(\text{arm }j\text{ selected}\mid \text{failure})
=
\frac{1-p_j}{K-P_{\Sigma}}.
\]
Thus failed renewal rounds are biased toward arms with larger failure
probability \(1-p_j\).
The failure labels before the terminating success are independent of the label of the terminating success. Hence, conditional on \(E_{\varnothing}\) and \(Q_i\), the expected number of failed renewal pulls assigned to arm \(j\) is
\[
% \label{eq:pg_zero_branch_extra_failures}
\frac{K-P_{\Sigma}}{P_{\Sigma}}
\cdot
\frac{1-p_j}{K-P_{\Sigma}}
=
\frac{1-p_j}{P_{\Sigma}}.
\]
Therefore, if \(j\neq i\),
\[
% \label{eq:pg_zero_branch_loser_count}
\mathbb E[N_j\mid E_{\varnothing},Q_i]
=
1+\frac{1-p_j}{P_{\Sigma}}.
\]
Equivalently, the all-zero branch contributes
\begin{align}
\mathbb P(E_{\varnothing}\cap Q_i)
&=
\left[
\prod_{r=1}^{K}(1-p_r)
\right]
\frac{p_i}{P_{\Sigma}},
\label{eq:pg_zero_branch_joint_prob}
\\
\mathbb E[N_j\mathbf 1_{E_{\varnothing}\cap Q_i}]
&=
\left[
\prod_{r=1}^{K}(1-p_r)
\right]
\frac{p_i}{P_{\Sigma}}
\left(
1+\frac{1-p_j}{P_{\Sigma}}
\right),
\qquad
j\neq i .
\label{eq:pg_zero_branch_joint_count}
\end{align}

\paragraph{Singleton absorption.}

Now consider the singleton branch \(E_{\{i\}}\). In this branch, arm \(i\) has empirical score \(1\) after initialization, while every other arm has empirical score \(0\). By Lemma~\ref{lem:pg_initial_success_set_bifurcation}, arm \(i\) absorbs immediately and every arm \(j\neq i\) is pulled exactly once. Hence
\begin{align}
\mathbb P(E_{\{i\}}\cap Q_i)
&=
p_i
\prod_{r\neq i}(1-p_r),
\label{eq:pg_singleton_branch_joint_prob}
\\
\mathbb P(E_{\{i\}}\cap Q_j)
&=
0,
\qquad
j\neq i,
\nonumber
% \label{eq:pg_singleton_branch_wrong_winner_prob}
\\
\mathbb E[N_j\mathbf 1_{E_{\{i\}}\cap Q_i}]
&=
p_i
\prod_{r\neq i}(1-p_r),
\qquad
j\neq i .
\label{eq:pg_singleton_branch_joint_count}
\end{align}
The remaining branches are those with at least two initial successes. On those branches, the pure-greedy trajectory reduces to a warm-started fixed \((1,1)\)-regularized subproblem on the initially successful arms.

\subsection{Multi-Success Branches as Fixed \texorpdfstring{\((1,1)\)}{(1,1)} Subproblems}
\label{app:pg_multisuccess_regularized_reduction}

We now consider the branches \(E_A\) with \(|A|\ge2\). On such a branch, every arm outside \(A\) is abandoned after its initial pull, while the arms in \(A\) evolve as a warm-started fixed \((1,1)\)-regularized greedy process. In this reduced process, every active arm starts with score \(1\), corresponding to one prior success and one prior pull, and subsequent pulls are counted on the additional-pull clock. This subsection records the branchwise absorbing probabilities and capped transient counts generated by this reduced process.

For each \(A\subseteq\{1,\ldots,K\}\) with \(|A|\ge2\), let \(\mathbb P_A^{(1,1)}\) denote the law of this warm-started reduced \((1,1)\)-regularized process on the active set \(A\). Under \(\mathbb P_A^{(1,1)}\), the score of arm \(h\in A\) after \(n\ge0\) additional pulls is 
\[ 
% \label{eq:pg_warm_started_score_def}
\widehat p_h^A(n) = \frac{1+S_h^A(n)}{1+n}. 
\] 
Let \(\widetilde Q_i^A\) be the event that this reduced process absorbs into arm \(i\in A\), and let \(\widetilde N_j^A\) be the number of additional pulls of arm \(j\in A\) after the initial pure-greedy pull. Define
\[\begin{aligned}
\psi_i^A
&:=
\mathbb P_A^{(1,1)}(\widetilde Q_i^A),
\\
% \label{eq:pg_multisuccess_q_def}
\mu_{j|i,T}^A
&:=
\mathbb E_A^{(1,1)}
\!\left[
\min\{T,\widetilde N_j^A\}
\,\middle|\,
\widetilde Q_i^A
\right],
\qquad
i,j\in A,\ j\neq i .
% \label{eq:pg_multisuccess_mu_capped_def}
\end{aligned}\]
These are fixed-parameter quantities. Since the reduced margins are
\(
% \label{eq:pg_substate_margin_def}
\Delta_h'
=
1-p_h,\,
h\in A
\), 
they do not diverge with \(\Delta_1\). Therefore the macroscopic closed-form
envelopes are not applied to the multi-success branches.

We next define root-based envelopes for \(\psi_i^A\). Let
\(F_h^A\), \(\overline F_{h,>}^{A}\), \(\overline F_{h,\ge}^{A}\), and
\(\theta_h^A\) denote the centered score-minimum distribution, strict and weak
score-minimum tails, and Lundberg root of arm \(h\) in the reduced
\((1,1)\)-regularized process. Shifted levels below zero contribute tail
factor one and root value zero.

In the warm-started reduced process, the potential score minimum can be written
as
\[
% \label{eq:pg_warm_started_minimum_def}
M_h^A
:=
\inf_{n\ge0}
\frac{1+S_h^A(n)}{1+n}.
\]
The initial value at \(n=0\) is \(1\). Since \(p_h<1\), this initial value does
not affect the attained score minimum almost surely. Thus the one-arm
boundary-crossing representation may equivalently be applied on the
additional-pull clock after the first subsequent pull. For a centered level
\(x\in(0,p_h)\), the reduced boundary is
\(
% \label{eq:pg_warm_started_boundary_def}
b_h^A(x)
=
1-p_h+x
\), 
which is the \((1,1)\)-regularized boundary with reduced margin
\(\Delta_h'=1-p_h\).

For \(i\in A\), define
\[\begin{aligned}
\underline{\psi}_i^A
&:=
\int_{[0,p_i)}
\exp\left\{
-
\sum_{\substack{h\in A\\h\neq i}}
\theta_h^A(x+p_h-p_i)
\right\}
\,dF_i^A(x),
\\
% \label{eq:pg_substate_q_lower_for_regret}
\overline{\psi}_i^A
&:=
\int_{[0,p_i)}
\exp\left\{
-
(1-p_i+x)
\sum_{\substack{h\in A\\h\neq i}}
\theta_h^A(x+p_h-p_i)
\right\}
\,dF_i^A(x).
% \label{eq:pg_substate_q_upper_for_regret}
\end{aligned}\]
These are direct applications of the root-based score-minimum envelope to the
reduced process. Indeed, in the reduced \((1,1)\)-regularized process, the
boundary for the candidate absorbing arm \(i\) at centered level \(x\) is
\(
% \label{eq:pg_substate_candidate_boundary_identity}
b_i^A(x)
=
1-p_i+x
\). 
For a competitor \(h\in A\), the shifted level is \(x+p_h-p_i\), and the
shifted boundary satisfies
\[\begin{aligned}
b_h^A(x+p_h-p_i)
&=
1-p_h+x+p_h-p_i
\nonumber\\
&=
1-p_i+x.
% \label{eq:pg_substate_shifted_boundary_identity}
\end{aligned}\]
The corresponding positive jump size is
\(
% \label{eq:pg_substate_positive_jump_identity}
p_h-(x+p_h-p_i)
=
p_i-x
\). 
Hence the upper root envelope uses the exponent factor
\(b_h^A(x+p_h-p_i)=1-p_i+x\), while the lower root envelope uses
\[
% \label{eq:pg_substate_lower_factor_simplification}
b_h^A(x+p_h-p_i)+p_h-(x+p_h-p_i)
=
1.
\]
Together with the strict/weak winner sandwich in the reduced process, this
gives
\begin{equation}
\label{eq:pg_substate_q_bounds_for_regret}
\underline{\psi}_i^A
\le
\psi_i^A
\le
\overline{\psi}_i^A .
\end{equation}

We also need a fixed-parameter capped transient envelope. Let
\(\mathcal H_{j|i,T}^A(x)\) be the strict-crossing capped first-passage
envelope from Lemma~\ref{lem:cac_pointwise_capped_first_passage}, applied to
the reduced active set \(A\) with \((\alpha',\beta')=(1,1)\). In particular,
the critical interface is bounded only by the finite-horizon cap \(T\).

Let \(M_h^A\) and
\(
% \label{eq:pg_substate_centered_minimum_def}
L_h^A:=p_h-M_h^A,\, h\in A
\), 
denote the potential score minimum and the centered score-minimum variable of
arm \(h\) in the reduced \((1,1)\)-regularized process on \(A\). Define the
strict and weak score-minimum winner events in this reduced process by
\[
\widetilde{\mathcal W}_{i}^{A,>}
:=
\left\{
M_i^A>M_h^A,\ \forall h\in A,\ h\neq i
\right\},
\qquad
% \label{eq:pg_substate_strict_winner_event_def}
\widetilde{\mathcal W}_{i}^{A,\ge}
:=
\left\{
M_i^A\ge M_h^A,\ \forall h\in A,\ h\neq i
\right\}.
% \label{eq:pg_substate_weak_winner_event_def}
\]
The reduced absorbing event satisfies the same score-minimum sandwich as in the
regularized process: 
\(
% \label{eq:pg_substate_winner_sandwich}
\widetilde{\mathcal W}_{i}^{A,>}
\subseteq
\widetilde Q_i^A
\subseteq
\widetilde{\mathcal W}_{i}^{A,\ge}
\). 

For \(i,j\in A\), \(j\neq i\), define the pairwise equality event in the
reduced process by
\[
% \label{eq:pg_substate_pairwise_tie_event_def}
\mathcal E_{ij}^{A,=}
:=
\widetilde{\mathcal W}_{i}^{A,\ge}
\cap
\{M_j^A=M_i^A\}.
\]
This is the only equality event that can invalidate the strict-crossing upper
bound for the abandoned count of arm \(j\) on the reduced absorbing branch
\(\widetilde Q_i^A\). Equality between \(M_i^A\) and another active competitor
\(M_h^A\), \(h\neq j\), does not affect the strict comparison with arm \(j\).

Equivalently, after conditioning on \(L_i^A=x\), the equality
\(M_j^A=M_i^A\) is the event
\(
% \label{eq:pg_substate_pairwise_tie_L_form}
L_j^A=x+p_j-p_i
\). 
The remaining active competitors must satisfy the weak winner inequalities
\[
% \label{eq:pg_substate_pairwise_remaining_weak_constraints}
L_h^A\ge x+p_h-p_i,
\qquad h\in A,\ h\neq i,j .
\]

Thus, by independence of the reduced potential streams, the pairwise equality
probability admits the Stieltjes representation
\[\begin{aligned}
e_{ij}^{A,=}
&:=
\mathbb P_A^{(1,1)}(\mathcal E_{ij}^{A,=})
\nonumber\\
&=
\int_{[0,p_i)}
\left[
\overline F_{j,\ge}^{A}(x+p_j-p_i)
-
\overline F_{j,>}^{A}(x+p_j-p_i)
\right]
\prod_{\substack{h\in A\\h\neq i,j}}
\overline F_{h,\ge}^{A}(x+p_h-p_i)
\,dF_i^A(x).
% \label{eq:pg_substate_pairwise_tie_stieltjes}
\end{aligned}\]
As before, shifted levels below zero contribute equal strict and weak tail
factors and hence make zero contribution to the difference.

Define the reduced pairwise capped transient envelope by
\begin{align}
\overline\mu_{j|i,T}^A
&:=
\frac{
\displaystyle
\int_{[0,p_i)}
\mathcal H_{j|i,T}^A(x)
\prod_{\substack{h\in A\\h\neq i,j}}
\overline F_{h,\ge}^{A}(x+p_h-p_i)
\,dF_i^A(x)
+
T e_{ij}^{A,=}
}{
\displaystyle
\underline{\psi}_i^A
}.
\label{eq:pg_substate_mu_root_upper}
\end{align}
This is the reduced-process pairwise version of Proposition~\ref{prop:cac_root_based_integrated_upper}. The strict-crossing time is used only for the abandoned arm \(j\), while the remaining active competitors enter through weak winner factors. The finite-horizon correction is also pairwise: it controls only the boundary event \(M_j^A=M_i^A\), which is the only equality obstruction relevant for the abandoned count of arm \(j\).

The same pathwise argument as in Proposition~\ref{prop:cac_root_based_integrated_upper} gives
\begin{align}
\psi_i^A \mu_{j|i,T}^A
&=
\mathbb E_A^{(1,1)}
\!\left[
\min\{T,\widetilde N_j^A\}
\mathbf 1_{\widetilde Q_i^A}
\right]
\nonumber\\
&\le
\int_{[0,p_i)}
\mathcal H_{j|i,T}^A(x)
\prod_{\substack{h\in A\\h\neq i,j}}
\overline F_{h,\ge}^{A}(x+p_h-p_i)
\,dF_i^A(x)
+
T e_{ij}^{A,=}.
\label{eq:pg_substate_pairwise_joint_count_upper}
\end{align}
Since \(\psi_i^A\ge \underline{\psi}_i^A\), \eqref{eq:pg_substate_mu_root_upper}
and \eqref{eq:pg_substate_pairwise_joint_count_upper} imply
\(
% \label{eq:pg_substate_mu_upper_bound}
\mu_{j|i,T}^A
\le
\overline\mu_{j|i,T}^A
\). 
The quantities \(\underline{\psi}_i^A\), \(\overline{\psi}_i^A\),
\(e_{ij}^{A,=}\), and \(\overline\mu_{j|i,T}^A\) are fixed-parameter root-based or Stieltjes quantities and can be evaluated numerically by solving the one-arm root equations and approximating the associated Stieltjes integrals.

\begin{lemma}[Multi-success branch composition]
\label{lem:pg_multisuccess_branch_composition}
For every \(A\subseteq\{1,\ldots,K\}\) with \(|A|\ge2\), define
\begin{equation}
\label{eq:pg_multisuccess_branch_probability}
\pi_A
:=
\prod_{h\in A}p_h
\prod_{h\notin A}(1-p_h).
\end{equation}
Then, for \(i\in A\),
\begin{equation}
\label{eq:pg_multisuccess_joint_probability}
\mathbb P(E_A\cap Q_i)
=
\pi_A \psi_i^A,
\end{equation}
while for \(i\notin A\),
\begin{equation}
\label{eq:pg_multisuccess_no_outside_winner}
\mathbb P(E_A\cap Q_i)=0.
\end{equation}
Moreover, for \(i\in A\) and \(j\neq i\),
\begin{equation}
\label{eq:pg_multisuccess_joint_capped_count_upper}
\mathbb E\!\left[
\min\{T,N_j\}\mathbf 1_{E_A\cap Q_i}
\right]
\le
\pi_A \overline{\psi}_i^A
\begin{cases}
1+\overline\mu_{j|i,T}^A, & j\in A,\\
1, & j\notin A .
\end{cases}
\end{equation}
\end{lemma}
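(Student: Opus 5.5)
The plan is to condition on the initial success set and transport the reduced-process quantities back to the original trajectory through Lemma~\ref{lem:pg_initial_success_set_bifurcation}. First, $\mathbb P(E_A)=\pi_A$ follows from independence of the first-pull rewards. By part~3 of Lemma~\ref{lem:pg_initial_success_set_bifurcation}, on $E_A$ every arm outside $A$ satisfies $N_j=1$. Such an arm cannot be pulled infinitely often, so it cannot absorb. Since absorption partitions the sample space almost surely by Lemma~\ref{lem:topological_collapse_karm}, this gives \eqref{eq:pg_multisuccess_no_outside_winner}.

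For \eqref{eq:pg_multisuccess_joint_probability} I would use a coupling. The post-initialization reward streams of the arms in $A$ are independent of the first-pull rewards $(X_1,\ldots,X_K)$, and the tie-breaking randomization is independent as well. Conditional on $E_A$, the process restricted to times after $K$ is therefore exactly the reduced warm-started $(1,1)$ process with law $\mathbb P_A^{(1,1)}$. Under this identification, $Q_i\cap E_A$ corresponds to $\widetilde Q_i^A$ and $N_j=1+\widetilde N_j^A$ for $j\in A$. Hence $\mathbb P(E_A\cap Q_i)=\pi_A\,\mathbb P_A^{(1,1)}(\widetilde Q_i^A)=\pi_A\psi_i^A$.

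For the count bound \eqref{eq:pg_multisuccess_joint_capped_count_upper}, take first $j\notin A$. Then $\min\{T,N_j\}=1$ on $E_A$, so the expectation equals $\mathbb P(E_A\cap Q_i)=\pi_A\psi_i^A\le\pi_A\overline\psi_i^A$ by \eqref{eq:pg_substate_q_bounds_for_regret}. Now take $j\in A$. The inequality $\min\{T,1+n\}\le 1+\min\{T,n\}$ gives
\[
\mathbb E\bigl[\min\{T,N_j\}\mathbf 1_{E_A\cap Q_i}\bigr]\le\pi_A\Bigl(\psi_i^A+\mathbb E_A^{(1,1)}\bigl[\min\{T,\widetilde N_j^A\}\mathbf 1_{\widetilde Q_i^A}\bigr]\Bigr)=\pi_A\psi_i^A\bigl(1+\mu_{j|i,T}^A\bigr).
\]
Since $\mu_{j|i,T}^A\le\overline\mu_{j|i,T}^A$ and $\psi_i^A\le\overline\psi_i^A$, the right side is at most $\pi_A\overline\psi_i^A(1+\overline\mu_{j|i,T}^A)$.

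The main obstacle is making the coupling precise, in particular confirming that the warm start with score $1$ and the initial tie among all arms of $A$ do not change the law of the reduced dynamics. I would resolve this by noting that the reduced process was defined exactly as the continuation with uniform tie-breaking, started from the state in which each arm of $A$ has one pull and one success. So no approximation is needed; it is an equality of conditional laws, obtained via the strong Markov property at time $K$.
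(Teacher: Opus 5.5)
Your proposal is correct and follows essentially the same route as the paper: condition on $E_A$, identify the post-initialization dynamics with the warm-started $(1,1)$ process on $A$ (so $\mathbb P(Q_i\mid E_A)=\psi_i^A$ and no arm outside $A$ can absorb), then use the pathwise bound $\min\{T,N_j\}\le 1+\min\{T,\widetilde N_j^A\}$ for $j\in A$ (and $\min\{T,N_j\}=1$ for $j\notin A$) together with $\psi_i^A\le\overline\psi_i^A$ and $\mu_{j|i,T}^A\le\overline\mu_{j|i,T}^A$. Your extra details, namely the Markov-property justification at time $K$ and the elementary inequality $\min\{T,1+n\}\le 1+\min\{T,n\}$, only make explicit what the paper's proof leaves implicit.
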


\begin{proof}\leavevmode
Conditional on \(E_A\), the future dynamics on \(A\) coincide with the warm-started reduced \((1,1)\)-regularized process defined above, and no arm outside \(A\) can be selected again.
Hence the absorbing arm must belong to \(A\), and
\(\mathbb P(Q_i\mid E_A)=\psi_i^A\) for \(i\in A\). Multiplying by
\(\mathbb P(E_A)=\pi_A\) gives
\eqref{eq:pg_multisuccess_joint_probability} and
\eqref{eq:pg_multisuccess_no_outside_winner}.

If \(j\notin A\), then arm \(j\) is pulled once during initialization and never
again. If \(j\in A\) and \(j\neq i\), then on \(E_A\cap Q_i\), arm \(j\) has
one initial pull plus its additional pulls inside the reduced process.
Therefore
\[
% \label{eq:pg_multisuccess_capped_count_pathwise}
\min\{T,N_j\}
\le
\begin{cases}
1+\min\{T,\widetilde N_j^A\}, & j\in A,\\
1, & j\notin A .
\end{cases}
\]
Using \(\psi_i^A\le\overline{\psi}_i^A\) and
\(\mu_{j|i,T}^A\le\overline\mu_{j|i,T}^A\) gives
\eqref{eq:pg_multisuccess_joint_capped_count_upper}.

\end{proof}

Thus every multi-success branch is reduced to warm-started fixed
\((1,1)\)-regularized quantities on the active set \(A\), weighted by the explicit initial-branch
probability \(\pi_A\).

\subsection{Finite-Horizon Regret Synthesis and Linear-Regret Floor}
\label{app:pg_regret_synthesis}

We now combine the all-zero, singleton, and multi-success branches. Recall \(P_{\Sigma}=\sum_{h=1}^{K}p_h\) from \eqref{eq:pg_sum_p_def}, and keep the convention \(\delta_i=p_1-p_i\). The pure-greedy finite-horizon regret is
\[
% \label{eq:pg_regret_definition}
R_{\mathrm{pg}}(T)
=
\sum_{j=2}^{K}\delta_j\,\mathbb E[N_j(T)] .
\]
The branchwise formulas below use the mandatory-initialization convention of this appendix. If a global horizon is counted from before initialization, the same statements hold for \(T\ge K\), up to the harmless deterministic initialization adjustment.

For each arm \(i\), define the lower and upper pure-greedy absorbing-probability envelopes
\[\begin{aligned}
\underline{\mathbb P}_{\mathrm{pg}}(Q_i)
&:=
\left[
\prod_{h=1}^{K}(1-p_h)
\right]
\frac{p_i}{P_{\Sigma}}
+
p_i\prod_{h\neq i}(1-p_h)
+
\sum_{\substack{A\subseteq\{1,\ldots,K\}\\ |A|\ge2,\ i\in A}}
\pi_A\,\underline{\psi}_i^A,
\\
% \label{eq:pg_absorption_probability_lower}
\overline{\mathbb P}_{\mathrm{pg}}(Q_i)
&:=
\left[
\prod_{h=1}^{K}(1-p_h)
\right]
\frac{p_i}{P_{\Sigma}}
+
p_i\prod_{h\neq i}(1-p_h)
+
\sum_{\substack{A\subseteq\{1,\ldots,K\}\\ |A|\ge2,\ i\in A}}
\pi_A\,\overline{\psi}_i^A,
% \label{eq:pg_absorption_probability_upper}
\end{aligned}\]
where \(\pi_A\) is defined in \eqref{eq:pg_multisuccess_branch_probability}. By \eqref{eq:pg_substate_q_bounds_for_regret}, these quantities satisfy
\begin{equation}
\label{eq:pg_absorption_probability_envelope}
\underline{\mathbb P}_{\mathrm{pg}}(Q_i)
\le
\mathbb P(Q_i)
\le
\overline{\mathbb P}_{\mathrm{pg}}(Q_i).
\end{equation}

For \(i\neq j\), define the pure-greedy joint capped-count upper envelope
\begin{align}
\overline{\mathcal U}_{j|i,T}^{\mathrm{pg}}
:=
\left[
\prod_{h=1}^{K}(1-p_h)
\right]
\frac{p_i}{P_{\Sigma}}
\left(
1+\frac{1-p_j}{P_{\Sigma}}
\right)
+
p_i\prod_{h\neq i}(1-p_h)
+
\sum_{\substack{A\subseteq\{1,\ldots,K\}\\ |A|\ge2,\ i\in A}}
\pi_A\,\overline{\psi}_i^A
\begin{cases}
1+\overline\mu_{j|i,T}^A, & j\in A,\\
1, & j\notin A .
\end{cases}
\label{eq:pg_joint_capped_count_upper_def}
\end{align}
Here \(\overline\mu_{j|i,T}^A\) is the pairwise strict-crossing reduced-process envelope from \eqref{eq:pg_substate_mu_root_upper}, including the reduced pairwise equality correction \(T e_{ij}^{A,=}\). Combining \eqref{eq:pg_zero_branch_joint_count}, \eqref{eq:pg_singleton_branch_joint_count}, and \eqref{eq:pg_multisuccess_joint_capped_count_upper}, we have
\begin{equation}
\label{eq:pg_joint_capped_count_upper}
\mathbb E[\min\{T,N_j\}\mathbf 1_{Q_i}]
\le
\overline{\mathcal U}_{j|i,T}^{\mathrm{pg}},
\qquad
i\neq j .
\end{equation}

\begin{proposition}[Pure-greedy finite-horizon regret sandwich]
\label{prop:pg_regret_synthesis_linear_floor}
For pure greedy \((\alpha,\beta)=(0,0)\), for every finite horizon \(T\),
\begin{equation}
R_{\mathrm{pg}}(T)
\ge
T\sum_{\substack{i=2\\p_i<p_1}}^{K}\delta_i
\underline{\mathbb P}_{\mathrm{pg}}(Q_i)
-
\sum_{i=1}^{K}
\sum_{\substack{j\neq i\\p_j>p_i}}
(p_j-p_i)\,
\overline{\mathcal U}_{j|i,T}^{\mathrm{pg}},
\label{eq:pg_regret_lower_sandwich}
\end{equation}
and
\begin{equation}
R_{\mathrm{pg}}(T)
\le
T\sum_{\substack{i=2\\p_i<p_1}}^{K}\delta_i
\overline{\mathbb P}_{\mathrm{pg}}(Q_i)
+
\sum_{i=1}^{K}
\sum_{\substack{j\neq i\\p_j<p_i}}
(p_i-p_j)\,
\overline{\mathcal U}_{j|i,T}^{\mathrm{pg}}.
\label{eq:pg_regret_upper_sandwich}
\end{equation}
\end{proposition}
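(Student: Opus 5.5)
This is an assembly step, in the same spirit as Proposition~\ref{prop:regret_root_based_two_sided}. The starting point is the absorption-based regret skeleton of Proposition~\ref{prop:rd_regret_skeleton}. That skeleton needs only the almost-sure partition from Lemma~\ref{lem:topological_collapse_karm}, and the lemma explicitly covers the pure-greedy configuration $(\alpha,\beta)=(0,0)$. So both skeleton inequalities hold verbatim for $R_{\mathrm{pg}}(T)$. Before using them, I will rewrite every term $\mathbb P(Q_i)\,\mathbb E[\min\{T,N_j\}\mid Q_i]$ as the joint quantity $\mathbb E[\min\{T,N_j\}\mathbf 1_{Q_i}]$. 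Terms conditioned on null events then vanish automatically, and no division by $\mathbb P(Q_i)$ is needed.

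\textbf{Upper bound.} The linear part $T\sum_{i}\delta_i\mathbb P(Q_i)$ has nonnegative coefficients $\delta_i$, so I replace $\mathbb P(Q_i)$ by $\overline{\mathbb P}_{\mathrm{pg}}(Q_i)$ using \eqref{eq:pg_absorption_probability_envelope}. The transient part carries nonnegative coefficients $p_i-p_j$ (because $p_j<p_i$), so I bound each joint count by $\overline{\mathcal U}^{\mathrm{pg}}_{j|i,T}$ using \eqref{eq:pg_joint_capped_count_upper}.

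\textbf{Lower bound.} I replace $\mathbb P(Q_i)$ by $\underline{\mathbb P}_{\mathrm{pg}}(Q_i)$ in the linear term. The subtracted transient term has nonnegative coefficients $p_j-p_i$, so bounding each joint count from above by $\overline{\mathcal U}^{\mathrm{pg}}_{j|i,T}$ can only decrease the right-hand side, and the inequality is preserved.

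\textbf{What must be checked: the two envelope statements.} The real content lies in \eqref{eq:pg_absorption_probability_envelope} and \eqref{eq:pg_joint_capped_count_upper}, and I would re-verify both from the branch decomposition. By Lemma~\ref{lem:pg_initial_success_set_bifurcation}, the events $E_A$ partition the sample space. This gives
\[
\mathbb P(Q_i)=\mathbb P(E_\varnothing\cap Q_i)+\mathbb P(E_{\{i\}}\cap Q_i)+\sum_{|A|\ge2,\,i\in A}\pi_A\psi_i^A .
\]
The other singleton branches contribute zero, and so do the multi-success branches with $i\notin A$. The explicit values come from \eqref{eq:pg_zero_branch_joint_prob}, \eqref{eq:pg_singleton_branch_joint_prob} and \eqref{eq:pg_multisuccess_joint_probability}. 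Applying \eqref{eq:pg_substate_q_bounds_for_regret} termwise then yields \eqref{eq:pg_absorption_probability_envelope}. The count bound is obtained the same way, by summing the branch contributions: $\min\{T,N_j\}\le N_j$ on the zero and singleton branches (via \eqref{eq:pg_zero_branch_joint_count} and \eqref{eq:pg_singleton_branch_joint_count}), and Lemma~\ref{lem:pg_multisuccess_branch_composition} handles the rest.

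\textbf{Main obstacle.} The delicate point is the reduced-process input: $\psi_i^A\le\overline\psi_i^A$ together with the pairwise strict-crossing bound $\mu^A_{j|i,T}\le\overline\mu^A_{j|i,T}$. Both rely on the warm-started $(1,1)$ process satisfying the score-minimum sandwich and the strict-crossing pathwise domination. This is where the warm start matters: the initial score $1$ is never the minimum, and every active arm is pulled again almost surely. Lemma~\ref{lem:pg_initial_success_set_bifurcation} asserts exactly these two properties, and I would cite it, observing that once they hold, Lemma~\ref{lem:sm_absorbing_event_sandwich} and Proposition~\ref{prop:cac_root_based_integrated_upper} transfer with $\alpha=\beta=1>0$. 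One further point needs care: $\min\{T,1+\widetilde N_j^A\}\le 1+\min\{T,\widetilde N_j^A\}$, and the bound is assembled as $\pi_A\psi_i^A(1+\mu^A_{j|i,T})\le\pi_A\overline\psi_i^A+\pi_A\psi_i^A\mu^A_{j|i,T}$. For the second term, the numerator bound \eqref{eq:pg_substate_pairwise_joint_count_upper} must be used directly, rather than the ratio $\overline\mu^A$ multiplied by $\overline\psi^A_i$. The stated form still dominates, since $\overline\mu^A\underline\psi^A$ equals that numerator and $\overline\psi^A\ge\underline\psi^A$.
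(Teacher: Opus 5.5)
Your proposal is correct and follows the paper's proof. The absorption skeleton of Proposition~\ref{prop:rd_regret_skeleton}, valid for pure greedy by Lemma~\ref{lem:topological_collapse_karm}, is written in joint form, and the branchwise envelopes \eqref{eq:pg_absorption_probability_envelope} and \eqref{eq:pg_joint_capped_count_upper} are substituted according to the signs of the coefficients. The extra caution in your last step is unnecessary: since $\psi_i^A\le\overline\psi_i^A$ and $\mu^A_{j|i,T}\le\overline\mu^A_{j|i,T}$ with nonnegative factors, $\psi_i^A\mu^A_{j|i,T}\le\overline\psi_i^A\,\overline\mu^A_{j|i,T}$ follows directly, which is how the paper closes Lemma~\ref{lem:pg_multisuccess_branch_composition}.
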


\begin{proof}\leavevmode
The initialization events \(E_A\), \(A\subseteq\{1,\ldots,K\}\), form a partition. The all-zero branch contributions are given by \eqref{eq:pg_zero_branch_joint_prob} and \eqref{eq:pg_zero_branch_joint_count}; the singleton contributions are given by \eqref{eq:pg_singleton_branch_joint_prob}--\eqref{eq:pg_singleton_branch_joint_count}; and the multi-success contributions are bounded by Lemma~\ref{lem:pg_multisuccess_branch_composition}. Summing over branches gives \eqref{eq:pg_absorption_probability_envelope} and \eqref{eq:pg_joint_capped_count_upper}.

The regret skeleton from Proposition~\ref{prop:rd_regret_skeleton} applies to pure greedy as well, since absorption holds also in the pure-greedy regime. Its lower side gives
\[\begin{aligned}
R_{\mathrm{pg}}(T)
\ge
T\sum_{\substack{i=2\\p_i<p_1}}^{K}\delta_i\mathbb P(Q_i)
-
\sum_{i=1}^{K}
\sum_{\substack{j\neq i\\p_j>p_i}}
(p_j-p_i)\,
\mathbb E\!\left[
\min\{T,N_j\}\mathbf 1_{Q_i}
\right].
\end{aligned}\]
Using \(\mathbb P(Q_i)\ge\underline{\mathbb P}_{\mathrm{pg}}(Q_i)\) and \eqref{eq:pg_joint_capped_count_upper} proves \eqref{eq:pg_regret_lower_sandwich}.

Similarly, the upper side of Proposition~\ref{prop:rd_regret_skeleton} gives
\[\begin{aligned}
R_{\mathrm{pg}}(T)
\le
T\sum_{\substack{i=2\\p_i<p_1}}^{K}\delta_i\mathbb P(Q_i)
+
\sum_{i=1}^{K}
\sum_{\substack{j\neq i\\p_j<p_i}}
(p_i-p_j)\,
\mathbb E\!\left[
\min\{T,N_j\}\mathbf 1_{Q_i}
\right].
\end{aligned}\]
Using \(\mathbb P(Q_i)\le\overline{\mathbb P}_{\mathrm{pg}}(Q_i)\) and \eqref{eq:pg_joint_capped_count_upper} proves \eqref{eq:pg_regret_upper_sandwich}.

\end{proof}

\begin{remark}[Singleton linear-regret floor]
\label{rem:pg_singleton_linear_regret_floor}
The pure-greedy policy has a positive finite-horizon linear-regret floor
whenever at least one suboptimal arm has \(p_i\in(0,1)\). Indeed, on the
singleton branch \(E_{\{i\}}\) with \(p_i<p_1\), arm \(i\) absorbs immediately
after the initialization phase. Hence, for any finite horizon \(T\ge K\),
\(N_i(T)\ge T-K+1 \) on \(E_{\{i\}}\). 
Since
\(
\mathbb P(E_{\{i\}})
=
p_i\prod_{h\neq i}(1-p_h)
\), 
we obtain the finite-horizon lower bound
\[
% \label{eq:pg_singleton_linear_floor}
R_{\mathrm{pg}}(T)
\ge
(T-K+1)
\sum_{\substack{i=2\\p_i<p_1}}^{K}
\delta_i
p_i
\prod_{h\neq i}(1-p_h).
\]
Thus the regret of pure greedy grows at least linearly in the finite horizon
\(T\), independently of the multi-success subproblem bounds.
\end{remark}

\begin{remark}[Numerical evaluation of fixed \((1,1)\)-subproblem envelopes]
\label{rem:pg_numerical_stieltjes_evaluation}
The quantities \(\underline{\psi}_i^A\), \(\overline{\psi}_i^A\), and \(\overline\mu_{j|i,T}^A\) are fixed-parameter root-based envelopes. They are not replaced by the macroscopic closed-form formula because the reduced margins \(\Delta_h'=1-p_h\) do not diverge. Numerically, one solves the one-arm Lundberg root equations for the reduced \((1,1)\)-process and evaluates the corresponding Stieltjes integrals by discretization or quadrature.
\end{remark}

\section{Regret-Certificate Calibration for Regularized Greedy}
\label{app:calibration_proofs}

This section records the regret certificate used to calibrate the regularized greedy policies. The certificate is obtained from the closed-form upper side of Theorem~\ref{thm:main_sandwich} by dropping the asymptotic remainders and keeping the two leading finite-horizon components. The horizon \(T\) is treated as a design parameter, and the arm means are treated as fixed inputs to the certificate. Thus the calibration rule can be applied to a fixed instance even though the regret envelope itself is justified under the regularization-asymptotic regime.

The upper side of Theorem~\ref{thm:main_sandwich} decomposes regret into a suboptimal-absorption term and a transient-abandonment correction. The suboptimal-absorption term is
\(
T\sum_{i=2,\, p_i<p_1}^{K}
\delta_i\,\widetilde{\mathbb P}(Q_i)
\), 
where \(\widetilde{\mathbb P}(Q_i)\) is the closed-form suboptimal-convergence approximation defined in \eqref{eq:regret_asymptotic_PQi_tilde_def}. The leading transient correction is \((K-1)\Delta_1\). Dropping the asymptotic remainders in the regret upper envelope therefore gives the calibration certificate
\begin{equation}
\label{eq:app_cal_certificate_def}
R_{\mathrm{cert}}(T;\alpha,\beta)
:=
T\sum_{\substack{i=2\\p_i<p_1}}^{K}
\delta_i\,\widetilde{\mathbb P}(Q_i)
+
(K-1)\Delta_1.
\end{equation}
This certificate preserves the main finite-horizon trade-off identified by the theory: larger regularization reduces the suboptimal-absorption probabilities in \(\widetilde{\mathbb P}(Q_i)\), while increasing the transient cost through \(\Delta_1\).

\subsection{Fixed-Tilt Optimization over the Regularization Strength}
\label{app:calibration_fixed_tilt_strength}

We next fix the tilt and optimize the regularization strength. Write
\[
\beta=\zeta\alpha,
\qquad
0\le \zeta<\frac{1}{p_1}.
\]
For this fixed tilt, define
\[
\chi_i(\zeta):=1-p_i\zeta,
\qquad
\omega_i(\zeta):=
\frac{2\chi_i(\zeta)}{p_i(1-p_i)},
\qquad
i=1,\ldots,K.
\]
Then
\[
\Delta_i=\alpha\chi_i(\zeta),
\qquad
\lambda_i=\alpha\omega_i(\zeta).
\]
Because \(\zeta<1/p_1\) and \(p_i\le p_1\), all \(\chi_i(\zeta)\) are positive.

For \(m=2,\ldots,K\), define
\[
\varrho_m(\zeta)
:=
\sum_{h=1}^{m}
\omega_h(\zeta)\delta_{hm}.
\]
Also set \(\varrho_{K+1}(\zeta):=\infty\). Substituting
\(\lambda_h=\alpha\omega_h(\zeta)\) into
\eqref{eq:regret_asymptotic_PQi_tilde_def} gives
\[
\widetilde{\mathbb P}(Q_i;\alpha,\zeta)
=
\sum_{m=i}^{K}
\frac{1}{m}
\left[
\exp\{-\alpha\varrho_m(\zeta)\}
-
\exp\{-\alpha\varrho_{m+1}(\zeta)\}
\right],
\qquad
i=2,\ldots,K
\text{ such that }p_i<p_1.
\]
Indeed,
\[
\sum_{h=1}^{m}\omega_h(\zeta)\delta_{hm}
+
\left(\sum_{h=1}^{m}\omega_h(\zeta)\right)\delta_{m,m+1}
=
\sum_{h=1}^{m}\omega_h(\zeta)\delta_{h,m+1}
=
\sum_{h=1}^{m+1}\omega_h(\zeta)\delta_{h,m+1},
\]
because \(\delta_{m+1,m+1}=0\). This gives the telescoping form above. 

Define
\[
\Xi_m
:=
\sum_{\substack{i=2\\p_i<p_1}}^{m}\delta_i,
\qquad
m=2,\ldots,K,
\qquad
\Xi_1:=0,
\]
and
\[
\gamma_m
:=
\frac{\Xi_m}{m}
-
\frac{\Xi_{m-1}}{m-1},
\qquad
m=2,\ldots,K.
\]
Equivalently,
\[\begin{aligned}
\gamma_m
=
\frac{1}{m}\sum_{\substack{i=2\\p_i<p_1}}^{m}\delta_i
-
\frac{1}{m-1}\sum_{\substack{i=2\\p_i<p_1}}^{m-1}\delta_i
=
\frac{(m-1)\delta_m-\sum_{\substack{i=2,p_i<p_1}}^{m-1}\delta_i}{m(m-1)}.
\end{aligned}\]
Since \(\delta_2\le\cdots\le\delta_m\), if \(\delta_m=0\), then
\(\delta_i=0\) for every \(i\le m\), and hence \(\gamma_m=0\). If
\(\delta_m>0\), then
\[
\sum_{\substack{i=2\\p_i<p_1}}^{m-1}\delta_i
\le
(m-2)\delta_m
<
(m-1)\delta_m,
\]
so \(\gamma_m>0\). Therefore,
\[
\gamma_m\ge0,
\qquad
\gamma_m>0
\quad\Longleftrightarrow\quad
p_m<p_1.
\]

We now sum the fixed-tilt expansion of
\(\widetilde{\mathbb P}(Q_i;\alpha,\zeta)\) against the regret gaps. Using the
convention \(\exp\{-\alpha\varrho_{K+1}(\zeta)\}=0\) and \(\delta_i=0\) whenever \(p_i=p_1\), we obtain
\[\begin{aligned}
\sum_{\substack{i=2\\p_i<p_1}}^{K}
\delta_i
\widetilde{\mathbb P}(Q_i;\alpha,\zeta)
&=
\sum_{\substack{i=2\\p_i<p_1}}^{K}
\delta_i
\sum_{m=i}^{K}
\frac{1}{m}
\left[
\exp\{-\alpha\varrho_m(\zeta)\}
-
\exp\{-\alpha\varrho_{m+1}(\zeta)\}
\right]
\\
&=
\sum_{m=2}^{K}
\frac{1}{m}
\left(
\sum_{\substack{i=2\\p_i<p_1}}^{m}\delta_i
\right)
\left[
\exp\{-\alpha\varrho_m(\zeta)\}
-
\exp\{-\alpha\varrho_{m+1}(\zeta)\}
\right]
\\
&=
\sum_{m=2}^{K}
\frac{\Xi_m}{m}
\exp\{-\alpha\varrho_m(\zeta)\}
-
\sum_{m=2}^{K}
\frac{\Xi_m}{m}
\exp\{-\alpha\varrho_{m+1}(\zeta)\}
\\
&=
\sum_{m=2}^{K}
\frac{\Xi_m}{m}
\exp\{-\alpha\varrho_m(\zeta)\}
-
\sum_{m=3}^{K+1}
\frac{\Xi_{m-1}}{m-1}
\exp\{-\alpha\varrho_m(\zeta)\}
\\
&=
\sum_{m=2}^{K}
\left[
\frac{\Xi_m}{m}
-
\frac{\Xi_{m-1}}{m-1}
\right]
\exp\{-\alpha\varrho_m(\zeta)\}
\\
&=
\sum_{m=2}^{K}
\gamma_m \exp\{-\alpha\varrho_m(\zeta)\}.
\end{aligned}\]
In the fourth equality, the term with \(m=K+1\) vanishes because
\(\varrho_{K+1}(\zeta)=\infty\). Thus the weighted suboptimal-absorption term
collapses into a nonnegative weighted sum of exponentials.

Therefore, using the certificate in \eqref{eq:app_cal_certificate_def}, the
fixed-tilt certificate is
\[
R_{\mathrm{cert}}(T;\alpha,\zeta)
=
T\sum_{m=2}^{K}
\gamma_m \exp\{-\alpha\varrho_m(\zeta)\}
+
(K-1)\alpha\chi_1(\zeta).
\]
For later use, define
\[
T_0(\zeta)
:=
\frac{(K-1)\chi_1(\zeta)}
{\sum_{m=2}^{K}\gamma_m \varrho_m(\zeta)}
\]
and
\[
\varrho_{\min}(\zeta)
:=
\min_{\substack{2\le m\le K\\p_m<p_1}}
\varrho_m(\zeta).
\]

\begin{proposition}[Fixed-tilt calibration and logarithmic certificate growth]
\label{prop:cal_leading_certificate_main}
Suppose that at least one arm is suboptimal, so that \(p_K<p_1\), and fix
\(0\le\zeta<1/p_1\). If \(T\le T_0(\zeta)\), then the minimizer of
\(R_{\mathrm{cert}}(T;\alpha,\zeta)\) over \(\alpha\ge0\) is
\(\alpha^\star(T,\zeta)=0\). If \(T>T_0(\zeta)\), then
\(\alpha^\star(T,\zeta)>0\) is uniquely characterized by
\begin{equation}
\label{eq:cal_alpha_star_foc}
\sum_{m=2}^{K}
\gamma_m \varrho_m(\zeta)\exp\{-\alpha^\star(T,\zeta)\varrho_m(\zeta)\}
=
\frac{(K-1)\chi_1(\zeta)}{T}.
\end{equation}
Moreover, for \(T>T_0(\zeta)\),
\begin{equation}
\label{eq:app_cal_log_growth_bound}
R_{\mathrm{cert}}(T;\alpha^\star(T,\zeta),\zeta)
\le
\frac{(K-1)\chi_1(\zeta)}{\varrho_{\min}(\zeta)}
\left[
1+
\log\left(
\frac{T}{T_0(\zeta)}
\right)
\right].
\end{equation}
Thus, for fixed arm means and fixed feasible tilt, the minimized certificate is
at most logarithmic in \(T\). The unique positive solution in
\eqref{eq:cal_alpha_star_foc} can be computed by bisection in time polynomial
in \(K\) and \(\log(1/\varepsilon_\alpha)\), where
\(\varepsilon_\alpha\) denotes the desired accuracy for
\(\alpha^\star(T,\zeta)\).
\end{proposition}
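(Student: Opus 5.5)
The plan is to treat $R_{\mathrm{cert}}(T;\alpha,\zeta)$ as a one-dimensional convex function of $\alpha$ on $[0,\infty)$ and to read off everything from its derivative. Write $f(\alpha):=T\sum_{m=2}^{K}\gamma_m\exp\{-\alpha\varrho_m(\zeta)\}+(K-1)\alpha\chi_1(\zeta)$. We first record that $\gamma_m\ge0$, with $\gamma_m>0$ exactly when $p_m<p_1$, as shown just above. We also note that $\varrho_m(\zeta)>0$ whenever $p_m<p_1$: the $h=1$ term $\omega_1(\zeta)\delta_{1m}$ is strictly positive, since $\chi_1(\zeta)>0$. Hence every active term of $f$ is a strictly convex decreasing exponential, so $f$ is strictly convex. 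Moreover
\[
f'(\alpha)=-T\sum_{m}\gamma_m\varrho_m(\zeta)e^{-\alpha\varrho_m(\zeta)}+(K-1)\chi_1(\zeta)
\]
is continuous and strictly increasing, with $f'(\alpha)\to(K-1)\chi_1(\zeta)>0$ as $\alpha\to\infty$.

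The case split then follows. At $\alpha=0$ we have $f'(0)=(K-1)\chi_1(\zeta)-T\sum_m\gamma_m\varrho_m(\zeta)$, which is nonnegative exactly when $T\le T_0(\zeta)$; here the denominator $\sum_m\gamma_m\varrho_m(\zeta)$ is positive because $p_K<p_1$. If $f'(0)\ge0$, strict monotonicity of $f'$ gives $f'>0$ on $(0,\infty)$, so the minimizer is $\alpha^\star=0$. If $f'(0)<0$, the intermediate value theorem together with strict monotonicity gives a unique zero $\alpha^\star>0$. This zero is the global minimizer by convexity, and it is exactly the equation \eqref{eq:cal_alpha_star_foc}.

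For the logarithmic bound, we do not evaluate $f(\alpha^\star)$ directly. Instead we plug in a convenient test point $\bar\alpha:=\log(T/T_0(\zeta))/\varrho_{\min}(\zeta)\ge0$ and use $f(\alpha^\star)\le f(\bar\alpha)$. For every active $m$, $e^{-\bar\alpha\varrho_m}\le e^{-\bar\alpha\varrho_{\min}}=T_0/T$, so the exponential part satisfies
\[
T\sum_m\gamma_m e^{-\bar\alpha\varrho_m}\le T_0\sum_m\gamma_m.
\]
To compare this with the target, we use $\varrho_m\ge\varrho_{\min}$ once more to get $\sum_m\gamma_m\le\sum_m\gamma_m\varrho_m/\varrho_{\min}$. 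Multiplying by $T_0$ and using the definition of $T_0$ gives $T_0\sum_m\gamma_m\le(K-1)\chi_1/\varrho_{\min}$. The linear part equals $(K-1)\chi_1\log(T/T_0)/\varrho_{\min}$, and adding the two parts yields \eqref{eq:app_cal_log_growth_bound}. This comparison step is the only place requiring care: the test point must be matched to $\varrho_{\min}$ rather than to the individual $\varrho_m$, and the inactive $m$ (with $\gamma_m=0$) must be dropped before taking minima.

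For the complexity claim, the FOC function is monotone, so bisection applies once we have an explicit bracket. We take the lower end $0$. For the upper end, $\alpha_{\max}=\log\bigl(T\sum_m\gamma_m\varrho_m/((K-1)\chi_1)\bigr)/\varrho_{\min}$ makes the left-hand side of \eqref{eq:cal_alpha_star_foc} at most the right-hand side, so it brackets the root. Each evaluation of the FOC costs $O(K)$ once the $\varrho_m$ are precomputed in $O(K^2)$. The number of bisection steps is $O(\log(\alpha_{\max}/\varepsilon_\alpha))$, which gives the stated polynomial running time. I expect no step to be a genuine obstacle; the main bookkeeping risk is the handling of $\gamma_m=0$ terms and of the positivity of $\varrho_m$.
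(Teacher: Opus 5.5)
Your proof is correct. The convexity argument and the case split for $\alpha^\star$ match the paper's; the difference is in how you reach the logarithmic bound.

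The paper works at the interior optimum. It uses the first-order condition together with $\varrho_m\ge\varrho_{\min}$ to bound the residual absorption term $T\sum_m\gamma_m e^{-\alpha^\star\varrho_m(\zeta)}$ by $(K-1)\chi_1(\zeta)/\varrho_{\min}(\zeta)$. It then uses the first-order condition again, with $\sum_m\gamma_m\varrho_m e^{-\alpha^\star\varrho_m}\le S_\zeta e^{-\alpha^\star\varrho_{\min}}$, to get $\alpha^\star\le\log(T/T_0(\zeta))/\varrho_{\min}(\zeta)$, and adds the two bounds.

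You instead evaluate the certificate at the explicit test point $\bar\alpha=\log(T/T_0(\zeta))/\varrho_{\min}(\zeta)$ and use $f(\alpha^\star)\le f(\bar\alpha)$. This never invokes the first-order condition, so the value bound holds for every $T\ge T_0(\zeta)$ regardless of how the minimizer is characterized. It is the more elementary argument. In exchange, the paper's route gives structural information about the optimizer itself: an explicit upper bound on $\alpha^\star$ and on the residual absorption term at the optimum.

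Your test point is exactly the paper's upper bound on $\alpha^\star$, and it equals the bracket $\alpha_{\max}$ in your bisection paragraph. So you recover that bound anyway. Naming this explicit initial interval makes your bisection claim more complete than the paper's proof, which asserts bisection works without specifying a bracket.
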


\begin{proof}\leavevmode
Differentiating the fixed-tilt certificate with respect to \(\alpha\) gives
\[
\partial_\alpha
R_{\mathrm{cert}}(T;\alpha,\zeta)
=
(K-1)\chi_1(\zeta)
-
T\sum_{m=2}^{K}
\gamma_m \varrho_m(\zeta)\exp\{-\alpha\varrho_m(\zeta)\}.
\]
Differentiating once more gives
\[
\partial_\alpha^2
R_{\mathrm{cert}}(T;\alpha,\zeta)
=
T\sum_{m=2}^{K}
\gamma_m \varrho_m^2(\zeta)\exp\{-\alpha\varrho_m(\zeta)\}.
\]
For \(m\ge2\), both \(\gamma_m\) and \(\varrho_m(\zeta)\) are nonnegative.
Moreover, if \(p_m<p_1\), then \(\gamma_m>0\), and
\[
\varrho_m(\zeta)
=
\sum_{h=1}^{m}\omega_h(\zeta)(p_h-p_m)>0,
\]
because the term with \(h=1\) is strictly positive. Since at least one arm is
suboptimal, at least one term in
\[
\sum_{m=2}^{K}
\gamma_m\varrho_m^2(\zeta)
\exp\{-\alpha\varrho_m(\zeta)\}
\]
is strictly positive. Hence
\(
\partial_\alpha^2
R_{\mathrm{cert}}(T;\alpha,\zeta)>0
\). 
Thus \(R_{\mathrm{cert}}(T;\alpha,\zeta)\) is strictly convex in \(\alpha\), and
its derivative is strictly increasing.

At \(\alpha=0\),
\[\begin{aligned}
\partial_\alpha
R_{\mathrm{cert}}(T;0,\zeta)
&=
(K-1)\chi_1(\zeta)
-
T\sum_{m=2}^{K}\gamma_m \varrho_m(\zeta).
\end{aligned}\]
This derivative is nonnegative exactly when \(T\le T_0(\zeta)\). In that case,
strict convexity implies that the constrained minimizer over \(\alpha\ge0\) is
\(\alpha^\star(T,\zeta)=0\).

If \(T>T_0(\zeta)\), the right derivative at zero is negative. Moreover,
\[
\lim_{\alpha\to\infty}
\partial_\alpha
R_{\mathrm{cert}}(T;\alpha,\zeta)
=
(K-1)\chi_1(\zeta)>0.
\]
Since the derivative is continuous and strictly increasing, it has a unique
zero. This zero is the unique minimizer and satisfies
\eqref{eq:cal_alpha_star_foc}. The left-hand side of
\eqref{eq:cal_alpha_star_foc} is continuous and strictly decreasing in
\(\alpha\), so bisection computes the solution. Each evaluation requires a
finite sum over \(m=2,\ldots,K\), so the computation is polynomial in \(K\) and
\(\log(1/\varepsilon_\alpha)\).

It remains to prove the logarithmic growth bound. Let
\[
c_\zeta:=(K-1)\chi_1(\zeta),
\qquad
S_\zeta:=\sum_{m=2}^{K}\gamma_m \varrho_m(\zeta).
\]
Then \(T_0(\zeta)=c_\zeta/S_\zeta\). At the interior minimizer,
\[
T\sum_{m=2}^{K}
\gamma_m \varrho_m(\zeta)\exp\{-\alpha^\star \varrho_m(\zeta)\}
=
c_\zeta.
\]
For every \(m\) such that \(\gamma_m>0\),
\(\varrho_m(\zeta)\ge\varrho_{\min}(\zeta)\), while indices with
\(\gamma_m=0\) make no contribution. We have 
\begin{equation}
\label{eq:suboptimal_absorption_bound}
T\sum_{m=2}^{K}
\gamma_m \exp\{-\alpha^\star \varrho_m(\zeta)\}
\le
\frac{T}{\varrho_{\min}(\zeta)}
\sum_{m=2}^{K}
\gamma_m \varrho_m(\zeta)\exp\{-\alpha^\star \varrho_m(\zeta)\}
=
\frac{c_\zeta}{\varrho_{\min}(\zeta)}.
\end{equation}
Also,
\[
\sum_{m=2}^{K}
\gamma_m \varrho_m(\zeta)\exp\{-\alpha^\star\varrho_m(\zeta)\}
\le
S_\zeta \exp\{-\alpha^\star \varrho_{\min}(\zeta)\}.
\]
Together with the first-order condition, this gives
\[
\frac{c_\zeta}{T}
\le
S_\zeta \exp\{-\alpha^\star \varrho_{\min}(\zeta)\}.
\]
Hence
\begin{equation}
\label{eq:cal_alpha_star_bound}
\alpha^\star
\le
\frac{1}{\varrho_{\min}(\zeta)}
\log\left(
\frac{T S_\zeta}{c_\zeta}
\right)
=
\frac{1}{\varrho_{\min}(\zeta)}
\log\left(
\frac{T}{T_0(\zeta)}
\right).
\end{equation}
Combining \eqref{eq:suboptimal_absorption_bound} and \eqref{eq:cal_alpha_star_bound} with
\[
R_{\mathrm{cert}}(T;\alpha^\star,\zeta)
=
T\sum_{m=2}^{K}\gamma_m \exp\{-\alpha^\star \varrho_m(\zeta)\}
+
c_\zeta\alpha^\star
\]
proves \eqref{eq:app_cal_log_growth_bound}.

\end{proof}

\subsection{Tilt Monotonicity and the Practical Near-Boundary Choice}
\label{app:calibration_tilt_monotonicity}

We now discuss the choice of \(\zeta\). Consider the interior calibration regime in which
\[
\alpha^\star(T,\zeta)>0,
\qquad
\Delta_1^\star(T,\zeta)>0.
\]

The previous subsection optimizes the certificate over \(\alpha\) at a fixed
tilt. For every fixed \(0\le\zeta<1/p_1\),
\[
\Delta_1=\alpha\chi_1(\zeta),
\qquad
\chi_1(\zeta)=1-p_1\zeta>0.
\]
Thus \(\alpha\mapsto\Delta_1\) is one-to-one on \((0,\infty)\). Optimizing over
\(\alpha>0\) is therefore equivalent to optimizing over \(\Delta_1>0\), with
\(
\Delta_1^\star(T,\zeta)
=
\chi_1(\zeta)\alpha^\star(T,\zeta)
\). 
We use this equivalent \(\Delta_1\)-parametrization only to make the dependence
on \(\zeta\) transparent.

For fixed \(\Delta_1>0\) and \(\zeta\), we have
\[
\alpha=\frac{\Delta_1}{\chi_1(\zeta)},
\qquad
\beta=\frac{\zeta\Delta_1}{\chi_1(\zeta)}.
\]
Define the normalized rates
\[
\overline\varrho_m(\zeta)
:=
\frac{\varrho_m(\zeta)}{\chi_1(\zeta)}
=
\sum_{h=1}^{m}
\frac{2}{p_h(1-p_h)}
\frac{1-p_h\zeta}{1-p_1\zeta}
\delta_{hm},
\qquad
m=2,\ldots,K.
\]
Then the same fixed-tilt certificate can be written as
\[
R_{\mathrm{cert}}(T;\Delta_1,\zeta)
=
T\sum_{m=2}^{K}
\gamma_m \exp\{-\Delta_1\overline\varrho_m(\zeta)\}
+
(K-1)\Delta_1.
\]
This is exactly the certificate in the previous subsection after the
one-to-one change of variable \(\Delta_1=\alpha\chi_1(\zeta)\).

Let
\(
\Delta_1^\star(T,\zeta)
\in
\arg\min_{\Delta_1>0}
R_{\mathrm{cert}}(T;\Delta_1,\zeta)
\), 
and define the profiled certificate by
\(
R_{\mathrm{cert}}^\star(T;\zeta)
:=
R_{\mathrm{cert}}(T;\Delta_1^\star(T,\zeta),\zeta)
\). 

\begin{proposition}[Tilt monotonicity of the profiled certificate]
\label{prop:app_cal_zeta_monotonicity}
The profiled certificate
\(R_{\mathrm{cert}}^\star(T;\zeta)\) is nonincreasing in
\(\zeta\in[0,1/p_1)\). If the arm means contain at least three distinct
values, then it is strictly decreasing:
\[
\frac{d}{d\zeta}
R_{\mathrm{cert}}^\star(T;\zeta)
=
\partial_\zeta
R_{\mathrm{cert}}(T;\Delta_1^\star(T,\zeta),\zeta)
<
0.
\]
If the arm means contain only two distinct values, then the profiled
certificate is flat in \(\zeta\) under the
\(\Delta_1\)-parametrization.
\end{proposition}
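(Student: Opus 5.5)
The plan is an envelope-theorem argument in the $\Delta_1$-parametrization. For fixed $\Delta_1$, the tilt $\zeta$ enters $R_{\mathrm{cert}}(T;\Delta_1,\zeta)$ only through the normalized rates $\overline\varrho_m(\zeta)$, and the transient term $(K-1)\Delta_1$ does not depend on $\zeta$. So the first step is to show that each $\overline\varrho_m$ is nondecreasing in $\zeta$. Write $\chi_h/\chi_1=(1-p_h\zeta)/(1-p_1\zeta)$ and differentiate:
\[
\frac{d}{d\zeta}\frac{1-p_h\zeta}{1-p_1\zeta}=\frac{p_1-p_h}{(1-p_1\zeta)^2}=\frac{\delta_h}{(1-p_1\zeta)^2}\ge0.
\]
Hence
\[
\overline\varrho_m'(\zeta)=\frac{1}{(1-p_1\zeta)^2}\sum_{h=1}^m \frac{2\,\delta_h\,\delta_{hm}}{p_h(1-p_h)}\ge 0 .
\]
This term is strictly positive exactly when some $h\le m$ has $\delta_h>0$ and $\delta_{hm}>0$, that is, $p_1>p_h>p_m$.

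In the second step I conclude monotonicity without any differentiability. Since $\gamma_m\ge0$, for every fixed $\Delta_1>0$ the map $\zeta\mapsto R_{\mathrm{cert}}(T;\Delta_1,\zeta)$ is nonincreasing. Taking the infimum over $\Delta_1$ shows that $R^\star_{\mathrm{cert}}(T;\zeta)$ is nonincreasing. This comparison argument avoids regularity issues: if $\zeta<\zeta'$, then $R^\star(\zeta')\le R(T;\Delta_1^\star(\zeta),\zeta')\le R^\star(\zeta)$.

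The third step is strictness and the derivative formula. In the interior regime, Proposition~\ref{prop:cal_leading_certificate_main} (after the change of variable) gives a unique $\Delta_1^\star(T,\zeta)>0$ characterized by a first-order condition whose left side is strictly monotone and smooth in $(\Delta_1,\zeta)$. The implicit function theorem then makes $\Delta_1^\star$ a $C^1$ function of $\zeta$, and the envelope theorem gives
\[
\frac{d}{d\zeta}R^\star_{\mathrm{cert}}=\partial_\zeta R_{\mathrm{cert}}=-T\Delta_1^\star\sum_m\gamma_m\overline\varrho_m'(\zeta)e^{-\Delta_1^\star\overline\varrho_m}.
\]
This is negative if and only if some index $m$ has both $\gamma_m>0$ (i.e.\ $p_m<p_1$) and $\overline\varrho_m'>0$.

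The main obstacle is matching the case split exactly. With at least three distinct values, take $m$ to be an index attaining the smallest mean $p_K$. Then $p_m<p_1$, and an intermediate arm $h$ with $p_1>p_h>p_m$ exists, so the sum has a strictly positive term. With only two distinct values, every $h\le m$ has either $\delta_h=0$ or $\delta_{hm}=0$, so every $\overline\varrho_m'\equiv0$. The certificate is then independent of $\zeta$ at fixed $\Delta_1$, and the profiled certificate is flat.

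I would also check the boundary case $T\le T_0$, where $\Delta_1^\star=0$. There the certificate equals $T\sum_m\gamma_m$ independently of $\zeta$, so the claim is consistent with the proposition's restriction to the interior regime. The strictness statement should be asserted only under $\Delta_1^\star>0$.
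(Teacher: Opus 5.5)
Your proposal is correct and follows the paper's route. Like the paper, you differentiate the normalized rates $\overline\varrho_m$, sign $\partial_\zeta R_{\mathrm{cert}}$ at fixed $\Delta_1$ using $\gamma_m\ge0$, apply the envelope theorem at the interior optimizer, and use the same three-value/two-value case split. Your derivative-free comparison argument for monotonicity and the implicit-function justification that $\Delta_1^\star$ is $C^1$ are small rigor additions the paper leaves implicit, and you correctly restrict strictness to the interior regime $\Delta_1^\star>0$, which the paper also assumes.
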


\begin{proof}\leavevmode
For each arm \(h\),
\[
\frac{d}{d\zeta}
\left[
\frac{1-p_h\zeta}{1-p_1\zeta}
\right]
=
\frac{
-p_h(1-p_1\zeta)
+
p_1(1-p_h\zeta)
}{
(1-p_1\zeta)^2}
=
\frac{p_1-p_h}{(1-p_1\zeta)^2}.
\]
This derivative is nonnegative for every \(h\), and it is strictly positive
exactly when \(p_h<p_1\).

Since
\[
\overline\varrho_m(\zeta)
=
\sum_{h=1}^{m}
\frac{2\delta_{hm}}{p_h(1-p_h)}
\frac{1-p_h\zeta}{1-p_1\zeta},
\]
and all coefficients \(2\delta_{hm}/[p_h(1-p_h)]\) are nonnegative, we have
\[
\overline\varrho_m'(\zeta)\ge0,
\qquad
m=2,\ldots,K.
\]
If the arm means contain at least three distinct values, there exist indices
\(h<m\) such that \(p_1>p_h>p_m\). For this pair,
\[
p_1-p_h>0,
\qquad
\delta_{hm}=p_h-p_m>0,
\]
so \(\overline\varrho_m'(\zeta)>0\). Moreover, \(p_m<p_1\) implies
\(\gamma_m>0\).

For fixed \(\Delta_1>0\), differentiating the \(\Delta_1\)-parametrized
certificate gives
\[
\partial_\zeta
R_{\mathrm{cert}}(T;\Delta_1,\zeta)
=
-
T\Delta_1
\sum_{m=2}^{K}
\gamma_m
\overline\varrho_m'(\zeta)
\exp\{-\Delta_1\overline\varrho_m(\zeta)\}.
\]
Since \(T>0\), \(\Delta_1>0\), and both \(\gamma_m\) and \(\overline\varrho_m'(\zeta)\) are nonnegative, the partial derivative is nonpositive. If the arm means contain at least three distinct values, there exists an index \(m\) for which
\[
\gamma_m\overline\varrho_m'(\zeta)>0,
\]
so the partial derivative is strictly negative.

At the interior optimizer,
\[
\partial_{\Delta_1}
R_{\mathrm{cert}}(T;\Delta_1^\star(T,\zeta),\zeta)=0.
\]
Therefore, by the envelope theorem,
\[\begin{aligned}
\frac{d}{d\zeta}
R_{\mathrm{cert}}^\star(T;\zeta)
&=
\partial_{\Delta_1}
R_{\mathrm{cert}}(T;\Delta_1^\star(T,\zeta),\zeta)
\frac{d\Delta_1^\star(T,\zeta)}{d\zeta}
+
\partial_\zeta
R_{\mathrm{cert}}(T;\Delta_1^\star(T,\zeta),\zeta)
\\
&=
\partial_\zeta
R_{\mathrm{cert}}(T;\Delta_1^\star(T,\zeta),\zeta).
\end{aligned}\]
If the arm means contain only two distinct values, then for every \(m\) and
\(h\le m\), either \(p_h=p_1\), so that \(p_1-p_h=0\), or
\(p_h=p_m\), so that \(\delta_{hm}=p_h-p_m=0\). Therefore, every summand in
\[
\overline\varrho_m'(\zeta)
=
\sum_{h=1}^{m}
\frac{2(p_1-p_h)}{p_h(1-p_h)(1-p_1\zeta)^2}
\delta_{hm}
\]
vanishes, and hence
\[
\overline\varrho_m'(\zeta)=0
\qquad
\text{for every }m.
\]
Thus the certificate is independent of \(\zeta\) under the
\(\Delta_1\)-parametrization, and so is its profiled value.
\end{proof}

Proposition~\ref{prop:app_cal_zeta_monotonicity} shows that the certificate prefers larger tilts, and when the arm means contain at least three distinct values, it strictly prefers tilts closer to the feasibility boundary \(1/p_1\). The
boundary itself is not admissible, and taking the backoff too small is
incompatible with the scaling behind the certificate.

Let
\(
\zeta={1}/{p_1}-\epsilon^\circ
\). 
Then
\[
\chi_1(\zeta)=p_1\epsilon^\circ,
\qquad
\Delta_1=\alpha p_1\epsilon^\circ.
\]
If \(\epsilon^\circ\to0\) while \(\Delta_1\) is treated as the regularization
scale, then
\(
\alpha={\Delta_1}/{(p_1\epsilon^\circ)}
\) 
becomes much larger than \(\Delta_1\), and the same issue appears for
\(\beta=\zeta\alpha\). Moreover, for \(i\ge2\) with \(p_i < p_1\),
\[
\frac{\Delta_i}{\Delta_1}
=
\frac{1-p_i\zeta}{1-p_1\zeta}
=
\frac{1-p_i/p_1+p_i\epsilon^\circ}{p_1\epsilon^\circ},
\]
which diverges as \(\epsilon^\circ\to0\). Thus an asymptotically vanishing
backoff would violate the scaling conditions
\(\alpha=\Theta(\Delta_1)\), \(\beta=O(\Delta_1)\), and
\(\Delta_i=\Theta(\Delta_1)\).

For this reason, the implemented calibration uses a fixed moderate backoff. We
set
\(
\zeta_{\epsilon^\circ}
:=
{1}/{p_1}-\epsilon^\circ
\) with \(\epsilon^\circ\) fixed, for example \(\epsilon^\circ=0.2\). The calibrated
pair is then
\[
\alpha_{\epsilon^\circ}^\star(T)
\in
\arg\min_{\alpha>0}
R_{\mathrm{cert}}(T;\alpha,\zeta_{\epsilon^\circ}),
\qquad
\beta_{\epsilon^\circ}^\star(T)
=
\zeta_{\epsilon^\circ}\alpha_{\epsilon^\circ}^\star(T).
\]
With estimated inputs, the same rule is applied after replacing \(p_1\) and the
ordered arm means by their current design estimates. The role of
\(\epsilon^\circ\) is to keep the tilt close to the certificate-preferred
boundary while preserving a regularization scale compatible with the asymptotic
derivation.

\end{document}